\documentclass[twoside,11pt]{article}
\usepackage{jair, theapa, rawfonts}

\jairheading{70}{2021}{1031-1116}{08/2020}{03/2021}
\ShortHeadings{Induction and Exploitation of Subgoal Automata for RL}{Furelos-Blanco, Law, Jonsson, Broda, \& Russo}
\firstpageno{1031}

%% our packages and def
\usepackage{amssymb}
\usepackage{amsmath}
\usepackage{amsthm}
\usepackage{mathtools}
\usepackage{marvosym}
\usepackage[marvosym]{tikzsymbols}
\usepackage{tikz}
\usetikzlibrary{automata,positioning}
\usetikzlibrary{matrix}
\usepackage[caption=false]{subfig}
\usepackage{enumerate}
\usepackage{multicol}
\usepackage{algorithm}
\usepackage[noend]{algpseudocode}
\usepackage{thm-restate}
\usepackage{booktabs}
\usepackage{pifont}
\usepackage{siunitx}
\usepackage{etoolbox}
\usepackage{rotating}
\usepackage{pgfplots}
\usepackage{url}
\usepackage{bm}

\algnewcommand{\algorithmicbreak}{\textbf{break}}

\newcommand{\cmark}{\ding{51}}%
\newcommand{\xmark}{\ding{55}}%

\DeclareMathOperator{\codeif}{\mathtt{:-} }
\mathchardef\mhyphen="2D % Define a "math hyphen"
\newcommand{\methodname}{ISA}

\newcommand{\methodnamefullhighlight}{\textbf{I}nduction of \textbf{S}ubgoal \textbf{A}utomata for Reinforcement Learning}

\theoremstyle{plain}
\newtheorem{definition}{Definition}[section]
\newtheorem{example}{Example}[section]
\newtheorem{theorem}{Theorem}[section]
\newtheorem{lemma}{Lemma}[section]
\newtheorem{proposition}{Proposition}[section]
\newtheorem{assumption}{Assumption}

\tikzset{%
	in place/.style={
		auto=false,
		fill=white,
		inner sep=2pt,
	},
}

\newcommand{\rulesep}{\unskip\ \vrule\ }

% plot colors
\definecolor{pblue}{HTML}{4477AA} 
\definecolor{pgreen}{HTML}{228833} 
\definecolor{pred}{HTML}{EE6677} 
\definecolor{pyellow}{HTML}{CCBB44} 
\definecolor{pcyan}{HTML}{66CCEE} 
\definecolor{pgray}{HTML}{BBBBBB}

% legends
\makeatletter
\newenvironment{customlegend}[1][]{%
	\begingroup
	% inits/clears the lists (which might be populated from previous
	% axes):
	\pgfplots@init@cleared@structures
	\pgfplotsset{#1}%
}{%
	% draws the legend:
	\pgfplots@createlegend
	\endgroup
}%

\def\addlegendimage{\pgfplots@addlegendimage}

\begin{document}

\title{Induction and Exploitation of Subgoal Automata for Reinforcement Learning}

\author{\name Daniel Furelos-Blanco \email d.furelos-blanco18@imperial.ac.uk\\
	\name Mark Law \email mark.law09@imperial.ac.uk \\
	\addr Department of Computing\\Imperial College London\\London, SW7 2AZ, United Kingdom
	\AND
	\name Anders Jonsson \email anders.jonsson@upf.edu \\
	\addr Department of Information and Communication Technologies \\ Universitat Pompeu Fabra\\Roc Boronat 138, 08018 Barcelona, Spain
	\AND
	\name Krysia Broda \email k.broda@imperial.ac.uk\\
	\name Alessandra Russo \email a.russo@imperial.ac.uk \\
	\addr Department of Computing\\Imperial College London\\London, SW7 2AZ, United Kingdom}

% For research notes, remove the comment character in the line below.
% \researchnote

\maketitle

\begin{abstract}
In this paper we present {\methodname}, an approach for learning and exploiting subgoals in episodic reinforcement learning (RL) tasks. {\methodname} interleaves reinforcement learning with the induction of a subgoal automaton, an automaton whose edges are labeled by the task's subgoals expressed as propositional logic formulas over a set of high-level events. A subgoal automaton also consists of two special states: a state indicating the successful completion of the task, and a state indicating that the task has finished without succeeding. A state-of-the-art inductive logic programming system is used to learn a subgoal automaton that covers the traces of high-level events observed by the RL agent. When the currently exploited automaton does not correctly recognize a trace, the automaton learner induces a new automaton that covers that trace. The interleaving process guarantees the induction of automata with the minimum number of states, and applies a symmetry breaking mechanism to shrink the search space whilst remaining complete. We evaluate {\methodname} in several grid-world and continuous state space problems using different RL algorithms that leverage the automaton structures. We provide an in-depth empirical analysis of the automaton learning performance in terms of the traces, the symmetry breaking and specific restrictions imposed on the final learnable automaton. For each class of RL problem, we show that the learned automata can be successfully exploited to learn policies that reach the goal, achieving an average reward comparable to the case where automata are not learned but handcrafted and given beforehand.
\end{abstract}

\section{Introduction}
Reinforcement learning (RL) is a family of algorithms for controlling an agent that acts in an environment with the purpose of maximizing some measure of cumulative reward it receives. These algorithms have played a key role in recent breakthroughs like human-level video game playing from raw sensory input \shortcite{MnihKSRVBGRFOPB15} and mastering complex board games \shortcite{Silver18}. However, despite of these impressive advancements, RL algorithms still struggle to discover and exploit the structure of a task. A possible way to represent these structures is through (temporal) abstractions.

Finite-state automata have been extensively used as a means for abstraction across different areas of Artificial Intelligence (AI), including the control of agents in robotics~\shortcite{Brooks89} and games~\shortcite{Buckland04}, as well as automated planning~\shortcite{BonetPG09,HuG11,Aguas0J18}. In the context of RL, automata have been used for multiple purposes, such as to represent abstract decision hierarchies~\shortcite{ParrR97,LeonettiIP12}, be used as memory in partially observable environments~\shortcite{MeuleauPKK99,IcarteWKVCM19}, or ease the interpretation of the policies encoded by a neural network~\shortcite{KoulFG19}. In particular, \shortciteA{IcarteKVM18} recently proposed reward machines (RMs), which are automata that play the role of reward functions while revealing the task's structure to the RL agent. This approach has drawn attention from the community and several works have recently attempted to learn RMs~\shortcite{IcarteWKVCM19,XuGAMNTW20} or similar kinds of automata~\shortcite{furelosblanco2020aaai,GaonB20}.

In this paper we propose {\methodname} (\methodnamefullhighlight), a method for learning and exploiting a minimal automaton that encodes the subgoals of an episodic goal-oriented task. Indeed, these automata are called \emph{subgoal automata} since each transition is labeled by a subgoal, which is a boolean formula over a set of high-level events that characterizes the task. A (possibly empty) set of high-level events is sensed by the agent at each step. Besides, subgoal automata have accepting and rejecting states. The former indicate the successful completion of the task, while the latter indicate that the task has been finished but without succeeding.
 
We represent subgoal automata using Answer Set Programming \shortcite<ASP,>{GelfondK14}, a logic programming language. A state-of-the-art \emph{inductive logic programming} (ILP) system for learning ASP programs, ILASP~\shortcite{ILASP_system}, is used to learn the automata. Specifically, given a set of traces of high-level events, and sufficiently large numbers of automaton states and allowed edges from one state to another, ILASP learns the transitions between states such that the traces are correctly recognized. For instance, all those traces achieving the task's goal must finish in the accepting state. To speed up automaton learning, we devise a \emph{symmetry breaking} mechanism that discards multiple equivalent automata in order to shrink the search space.

{\methodname} \emph{interleaves} automaton learning and reinforcement learning. The automaton learner is executed only when the RL agent finds a trace not correctly recognized by the current automaton. The interleaving scheme guarantees that the induced subgoal automaton is minimal (i.e., has the minimum number of states).

Importantly, subgoal automata address two types of abstraction: state abstraction and action abstraction \cite{Konidaris19,HoAGL19}. The set of automaton states is an abstraction of the original state space: they determine the level of completion of a given task. That is, they indicate which subgoals have been achieved and which remain to be achieved. Conversely, the subgoals labeling the edges can be seen as local objectives of abstract actions. The latter has been successfully addressed by hierarchical reinforcement learning \shortcite<HRL,>{BartoM03a}, which divides a single task into several subtasks that can be solved separately. We use HRL methods to exploit subgoal automata by defining one subtask per subgoal, as well as methods that exploit similar automaton structures like the aforementioned reward machines. All in all, abstractions in the form of subgoal automata can potentially:
\begin{enumerate}
	\item Make learning simpler since mastering a subtask should be easier than mastering the whole task.
	\item Allow for better exploration since the agent moves more quickly between abstract states (i.e., different levels of completion of the task).
	\item Allow for generalization between different tasks if they share common subtasks.
	\item Handle partial observability by acting as an external memory.
\end{enumerate}

We evaluate {\methodname} in several grid-world and continuous state space tasks. We show that a subgoal automaton can be simultaneously induced from interaction and exploited by different RL algorithms to learn a policy that achieves the task's goal. Importantly, the performance achieved by {\methodname} in the limit is comparable to that where a handcrafted automaton is given beforehand. Furthermore, we make a thorough analysis of how reinforcement learning affects automaton learning and vice versa.

The description of our approach previously appeared in a conference paper \shortcite{furelosblanco2020aaai}. Compared to the conference version, the present paper includes the following novel material:
\begin{itemize}
	\item A method for breaking symmetries in our automata that speeds up the automaton learning phase.
	\item An extensive experimental analysis of our interleaving method. The experiments include two new domains, one of which is characterized by a continuous state space and, thus, requires the use of function approximation techniques. Besides, we also evaluate a hierarchical RL algorithm that was not included in the conference version of the paper.
	\item A detailed description of recent related work, some of which was not yet available at the time of submission of the conference paper.
	\item A discussion on the limitations of our work and ideas to be developed in future work.
\end{itemize}

The paper is organized as follows. Section~\ref{sec:background} introduces the background of our work. The formalization of the main components of our approach (the tasks, the automata and the traces) is given in Section~\ref{sec:problem_formulation}. Section~\ref{sec:asp_representation_subgoal_automata} describes how subgoal automata and traces are represented using ASP, while Section~\ref{sec:learn_subgoal_automata_from_traces} explains how this class of automata can be learned from traces. In Section~\ref{sec:structural_properties} we introduce a set of constraints for checking and guaranteeing that a given automaton complies with specific structural properties (determinism and symmetry breaking constraints). Section~\ref{sec:interleaved_automata_learning_algorithm} details how automaton learning is interleaved with different RL algorithms that exploit the resulting automata. The effectiveness of our method across different tasks is evaluated in Section~\ref{sec:experiments}, followed by a discussion on related work in Section~\ref{sec:related_work}. Section~\ref{sec:conclusions} concludes the paper and suggests directions for future work.

\section{Background}
\label{sec:background}

In this section we briefly summarize the key background concepts of reinforcement learning and inductive learning of answer set programs, which are the main building blocks of our approach.

\subsection{Reinforcement Learning}
Reinforcement learning \shortcite<RL,>{SuttonB98} is a family of algorithms for learning to act in an unknown environment. Typically, this learning process is formulated as a \emph{Markov Decision Process (MDP)}, i.e., a tuple $\mathcal{M}=\langle S,A,p,r,\gamma\rangle$, where $S$ is a finite set of states, $A$ is a finite set of actions, $p:S\times A \to \Delta(S)$ is a transition probability function,\footnote{For any finite set $X$, $\Delta(X) = \{\mu \in \mathbb{R}^X \mid \sum_x\mu(x) = 1, \mu(x) \geq 0~(\forall x) \}$ is the probability simplex over $X$. } $r:S \times A \times S \to \mathbb{R}$ is a reward function, and $\gamma \in [0,1)$ is a discount factor. At time $t$, the agent observes state $s_t \in S$, executes action $a_t \in A$, transitions to the next state $s_{t+1} \sim p(\cdot|s_t, a_t)$ and receives reward $r(s_t, a_t, s_{t+1})$. We consider {\em episodic} MDPs that \emph{terminate} in a given set of terminal states, which can be either goal states or undesirable states (i.e.,~dead-ends). Let $S_T\subseteq S$ be the set of terminal states and $S_G\subseteq S_T$ the set of goal states. The aim is to find a \emph{policy} $\pi:S \to A$, a mapping from states to actions,\footnote{In this work we learn \emph{deterministic} policies, but it could be extended to support stochastic policies too.} that maximizes the expected sum of discounted reward (or \emph{return}), $R_t=\mathbb{E}[\sum_{k=t}^n\gamma^{k-t} r_k]$, where $n$ is the last step of the episode.

In model-free RL the transition probability function $p$ and reward function $r$ are unknown to the agent, and a policy is learned via interaction with the environment. Q-learning \shortcite{Watkins89} computes an \emph{action-value function} $Q(s,a) = \mathbb{E}[R_t|s_t=s, a_t=a]$ that estimates the return from each state-action pair when following an approximately optimal policy. In each iteration $t$ the estimates are updated as 
\begin{equation*}
Q(s_t,a_t) = Q(s_t,a_t) + \alpha \left( r(s_t,a_t,s_{t+1}) + \gamma\max_{a'}Q(s_{t+1},a') - Q(s_t,a_t)\right),
\end{equation*}
where $\alpha$ is a learning rate and $s_{t+1}$ is the state after applying $a_t$ in $s_t$. The term $r(s_t,a_t,s_{t+1}) + \gamma\max_{a'}Q(s_{t+1},a')$ is the \emph{target}, while the whole expression within parentheses is the \emph{Bellman error}. Usually, an $\epsilon$-greedy policy selects a random action with probability $\epsilon$ and the action maximizing $Q(s,a)$ otherwise. The policy is defined as the action that maximizes $Q(s,a)$ in each $s$.

In this work we focus on \emph{Partially Observable MDPs} \shortcite<POMDPs,>{KaelblingLC98}, which are MDPs where the agent cannot observe the real state of the environment. Since we later use the terms ``observable'' and ``observation'' to denote other concepts, we refer to unobserved states as \emph{latent}, and to observed states as \emph{visible}. Formally, a POMDP is a tuple $\mathcal{M}^\Sigma=\langle S,\Sigma,A,p,r,\gamma,\nu \rangle$, where $S,A,p,r$ and $\gamma$ are defined as for MDPs, $\Sigma$ is the set of visible states, and $\nu:S \to \Delta(\Sigma)$ is a mapping from latent states to probability distributions over visible states. In the \emph{episodic} setting, just like episodic MDPs, a POMDP includes a set of terminal states $S_T\subseteq S$ and a set of goal states $S_G\subseteq S_T$. We model the interaction between the agent and an episodic POMDP environment as follows. At time $t$, the latent state is $s_t\in S$, and the agent observes a tuple $\bm\sigma_t=\langle\sigma^\Sigma_t,\sigma^T_t,\sigma^G_t \rangle$, where $\sigma^\Sigma_t \in \Sigma$ is a visible state such that $\sigma^\Sigma_t \sim \nu(\cdot |s_t)$, $\sigma^T_t \in \lbrace \bot,\top \rbrace$ indicates whether $s_t$ is terminal (i.e., $\sigma_t^T=\mathbb{I}[s_t \in S_T]$),\footnote{The indicator function $\mathbb{I}(x)$ returns true ($\top$) if the condition $x$ holds, and false ($\bot$) otherwise.} and $\sigma^G_t \in \lbrace \bot, \top \rbrace$ indicates whether $s_t$ is a goal state (i.e., $\sigma_t^G=\mathbb{I}[s_t \in S_G]$).\footnote{Note that $\sigma^T$ is true whenever $\sigma^G$ is true since $S_G \subseteq S_T$ (i.e., all goal states are terminal).} If the state is non-terminal, the agent executes action $a_t\in A$, the environment transitions to the next latent state $s_{t+1} \sim p(\cdot|s_t,a_t)$, and the agent observes a new tuple $\bm\sigma_{t+1}$  and receives reward $r(s_t,a_t,s_{t+1})$. The policy $\pi:\Sigma^* \to A$ becomes a mapping from histories of visible states to actions. Note that the history is required since the Markov property might not hold for the states in $\Sigma$ and, similarly, the reward function becomes non-Markovian over $\Sigma$.

\paragraph{Options \shortcite{SuttonPS99}} address temporal abstraction in RL. Given an MDP $\mathcal{M}=\langle S,A,p,r,\gamma\rangle$, an \emph{option}  is a tuple $\omega=\langle I_\omega, \pi_\omega, \beta_\omega\rangle$ where $I_\omega \subseteq S$ is the option's initiation set, $\pi_\omega:S \to A$ is the option's policy, and $\beta_\omega:S\to [0,1]$ is the option's termination condition.\footnote{In the case of POMDPs, the option components are defined using the set of visible states $\Sigma$ instead of the set of latent states $S$.} An option is available in state $s \in S$ if $s\in I_\omega$. If the option is started, the actions are chosen according to $\pi_\omega$. The option terminates at a given state $s\in S$ with probability $\beta_\omega(s)$. Note that an action $a \in A$ can be viewed as an option that terminates in any state with probability 1.

An MDP whose action set is extended with options is a Semi-Markov Decision Process (SMDP). The learning methods for SMDPs require minimal changes with respect to MDPs. The counterpart of Q-learning for SMDPs is called SMDP Q-learning \shortcite{BradtkeD94}. The update it performs when an option $\omega$ has terminated is:
\begin{equation*}
Q(s_t,\omega_t) = Q(s_t,\omega_t) + \alpha\left(r + \gamma^k\max_{\omega'}Q(s_{t+k},\omega')- Q(s_t,\omega_t)\right),
\end{equation*}
where $k$ is the number of steps between $s_t$ and $s_{t+k}$, and $r=\sum^k_{j=1} \gamma^{j-1} r_{t+j}$ is the cumulative discounted reward over this time. Similarly to Q-learning, $\alpha$ is a learning rate and $s_{t+k}$ is the state in which option $\omega_t$ terminates.

Intra-option learning \shortcite{Kaelbling93b,SuttonPS98} is an often used method for learning option policies. During intra-option learning, an experience $\langle s_t,a_t,r_{t+1},s_{t+1}\rangle$ generated by the current option can be used not only to update its policy, but also other options' policies. Since experience accumulates faster, the convergence speed is often increased.

\subsection{Inductive Learning of Answer Set Programs}
In this section we describe answer set programming (ASP) and the ILASP system for learning ASP programs.

\subsubsection{Answer Set Programming} 
Answer Set Programming \shortcite<ASP,>{GelfondK14} is a declarative programming language for knowledge representation and reasoning. An ASP problem is expressed in a logical format and the models (called answer sets) of its representation provide the solutions to that problem. In the paragraphs below we describe the main concepts of ASP used in the paper.

An \emph{atom} is an expression of the form $\mathtt{p(t_1,\ldots,t_n)}$ where $\mathtt{p}$ is a \emph{predicate} symbol of arity $\mathtt{n}$ and $\mathtt{t_1,\ldots,t_n}$ are \emph{terms}. If $\mathtt{n=0}$, we omit the parentheses. In this paper, a term can be either a variable or a constant. By convention, variables are denoted using upper case (e.g., \texttt{X} or \texttt{Y}), while constants are written in lower case (e.g., \texttt{coffee} or \texttt{mail}). An atom is said to be \emph{ground} if none of its terms is a variable. A \emph{literal} is an atom $\mathtt{a}$ or its negation $\mathtt{not~a}$. The \texttt{not} symbol is called \emph{negation as failure}.%~\shortcite{Clark77}.

An ASP \emph{program} $P$ is a set of rules. In this paper, we assume that this set is formed by normal rules, choice rules and constraints. Given an atom $\mathtt{h}$ and a set of literals $\mathtt{b_1,\ldots,b_n}$, a \emph{normal rule} is of the form $\mathtt{h \codeif b_1, \ldots, b_n}$, where $\mathtt{h}$ is the \emph{head} and $\mathtt{b_1,\ldots,b_n}$ is the \emph{body} of the rule. A normal rule with an empty body is a \emph{fact}. A \emph{choice rule} is of the form $\mathtt{lb\{h_1,\ldots,h_m\}ub \codeif b_1,\ldots, b_n}$, where $\mathtt{lb}$ and $\mathtt{ub}$ are integers, $\mathtt{h_1,\ldots,h_m}$ are atoms and $\mathtt{b_1,\ldots, b_n}$ are literals. Rules of the form $\mathtt{\codeif b_1, \ldots, b_n}$ are called \emph{constraints}.

Given a set of ground atoms (or \emph{interpretation}) $I$, a ground normal rule is satisfied if the head is satisfied by $I$ when the body literals are satisfied by $I$. The head of a choice rule is satisfied by $I$ if and only if the number of satisfied atoms in the head is between $\mathtt{lb}$ and  $\mathtt{ub}$ (both included), i.e., $\mathtt{lb}\leq |I \cap \{\mathtt{h_1,\ldots,h_m} \}| \leq \mathtt{ub}$. A ground constraint is satisfied if the body is not satisfied by $I$. The \emph{reduct} $P^I$ of a program $P$ with respect to $I$ is built in 4 steps~\shortcite{LawRB15}:\footnote{This is a non-standard form of building the reduct, but it is proven to be equivalent \shortcite{LawRB15} to the standard definitions (\shortciteR<e.g.,>{CalimeriFGIKKLM20}).}
\begin{enumerate}
	\item Replace the heads of all constraints with $\bot$.
	\item For each choice rule $R$: 
	\begin{itemize}
		\item if its head is not satisfied, replace its head with $\bot$, or
		\item if its head is satisfied then remove $R$ and for each atom $\mathtt{h}$ in the head of $R$ such that $\mathtt{h} \in I$, add the rule $\mathtt{h} \codeif body(R)$ (where $body(R)$ is the set of literals forming the body of $R$).
	\end{itemize}
	\item Remove any rule $R$ such that the body of $R$ contains the negation of an atom in $I$.
	\item Remove all negation from any remaining rules.
\end{enumerate}
An interpretation $I$ is an \emph{answer set} of $P$ if and only if (1) $I$ satisfies the rules in $P^I$, and (2) no subset of $I$ satisfies the rules in $P^I$.

\begin{example}
	Given the following ASP program formed by two facts and two normal rules
	\begin{align*}
		P = \begin{Bmatrix*}[l]
			\mathtt{p(X) \codeif not~q(X), r(X).} & \mathtt{q(X) \codeif not~p(X), r(X).} & \mathtt{r(1).} & \mathtt{r(2).}
		\end{Bmatrix*},
	\end{align*}
	the following four interpretations are all the answer sets of $P$:
	\begin{align*}
		\begin{matrix*}[c]
			\left\lbrace \mathtt{r(1).~r(2).~p(1).~p(2).} \right\rbrace & \left\lbrace \mathtt{r(1).~r(2).~p(1).~q(2).} \right\rbrace \\
			\left\lbrace \mathtt{r(1).~r(2).~p(2).~q(1).} \right\rbrace & \left\lbrace \mathtt{r(1).~r(2).~q(1).~q(2).} \right\rbrace
		\end{matrix*}
	\end{align*}
	Note that the interpretation $\lbrace \mathtt{r(1).~r(2).~p(1).~p(2).~q(1).~q(2).} \rbrace$ is not an answer set.
\end{example}

\subsubsection{ILASP}
ILASP \shortcite<Inductive Learning of Answer Set Programs,>{ILASP_system} is an inductive logic programming system for learning ASP programs from  partial answer sets.

A \emph{context-dependent partial interpretation}~\shortcite<CDPI,>{LawRB16} is a pair $\langle \langle e^{inc}, e^{exc} \rangle,\allowbreak e^{ctx} \rangle$, where:
\begin{itemize}
	\item $\langle e^{inc}, e^{exc}\rangle$ is a pair of sets of atoms, called a \emph{partial interpretation}. We refer to $e^{inc}$ and $e^{exc}$ as the \emph{inclusions} and \emph{exclusions} respectively.
	\item $e^{ctx}$ is an ASP program, called a \emph{context}.
\end{itemize}
A program $P$ \emph{accepts} a CDPI $\langle \langle e^{inc}, e^{exc} \rangle, e^{ctx} \rangle$ if and only if there is an answer set $A$ of $P \cup e^{ctx}$ such that $e^{inc} \subseteq A$ and $e^{exc}~\cap A = \emptyset$.

An \emph{ILASP task}~\shortcite{LawRB16} is a tuple $T=\langle B,S_M,\langle E^+, E^-\rangle\rangle$ where 
\begin{itemize}
	\item $B$ is the ASP background knowledge, which describes a set of known concepts before learning;
	\item $S_M$ is the set of ASP rules allowed in the hypotheses; and
	\item $E^+$ and $E^-$ are sets of CDPIs called, respectively, the positive and negative examples.
\end{itemize}
A hypothesis $H \subseteq S_M$ is an \emph{inductive solution} of $T$ if and only if: 
\begin{enumerate}
	\item $\forall e \in E^+$, $B \cup H$ accepts $e$, and
	\item $\forall e \in E^-$, $B \cup H$ does not accept $e$.
\end{enumerate}

\begin{example}
	Let $T=\langle B, S_M,\langle E^+, E^- \rangle \rangle$ be an ILASP task where:
	\begin{align*}
		\begin{split}
			B&=\left\lbrace \mathtt{p \codeif not~q.} \right\rbrace,\\
			E^+&=\begin{Bmatrix*}[c]
				\left\langle\left\langle\left\lbrace\mathtt{p}\right\rbrace, \left\lbrace\mathtt{q}\right\rbrace \right\rangle, \left\lbrace\mathtt{r.} \right\rbrace\right\rangle,\\
				\left\langle\left\langle\left\lbrace\mathtt{q}\right\rbrace, \left\lbrace\mathtt{p}\right\rbrace \right\rangle, \left\lbrace\mathtt{r.} \right\rbrace\right\rangle,\\
				\left\langle\left\langle\left\lbrace\mathtt{p}\right\rbrace, \left\lbrace\mathtt{q}\right\rbrace \right\rangle, \emptyset\right\rangle
			\end{Bmatrix*},
		\end{split}
		\quad
		\begin{split}
			S_M&=\begin{Bmatrix*}[c]
				\mathtt{q.}&\mathtt{q \codeif not~p.}\\\mathtt{q \codeif p,r.}&\mathtt{q \codeif not~p,r.}
			\end{Bmatrix*},\\
			E^-&=\begin{Bmatrix*}[c]
				\left\langle\left\langle\left\lbrace\mathtt{q}\right\rbrace, \left\lbrace\mathtt{p}\right\rbrace \right\rangle, \emptyset\right\rangle
			\end{Bmatrix*}.
		\end{split}
	\end{align*}
	A candidate hypothesis is $H=\lbrace\mathtt{q \codeif not~p,r.}\rbrace$. Now we have to check whether the positive examples are accepted, and the negative is not:
	\begin{itemize}
		\item For the positive examples $\left\langle\left\langle\left\lbrace\mathtt{p}\right\rbrace, \left\lbrace\mathtt{q}\right\rbrace \right\rangle, \left\lbrace\mathtt{r.} \right\rbrace\right\rangle$ and $\left\langle\left\langle\left\lbrace\mathtt{q}\right\rbrace, \left\lbrace\mathtt{p}\right\rbrace \right\rangle, \left\lbrace\mathtt{r.} \right\rbrace\right\rangle$, the program $B \cup H \cup \lbrace\mathtt{r.} \rbrace$ has two answer sets $A_1=\lbrace\mathtt{p,r} \rbrace$ and $A_2=\lbrace\mathtt{q,r} \rbrace$. Then, $B \cup H \cup \lbrace\mathtt{r.} \rbrace$ accepts the first example because $\lbrace\mathtt{p}\rbrace\subseteq A_1$ and $\lbrace\mathtt{q}\rbrace \cap A_1 = \emptyset$, and also the second one because $\lbrace\mathtt{q}\rbrace\subseteq A_2$ and $\lbrace\mathtt{p}\rbrace \cap A_2 = \emptyset$.
		\item For the positive example $\left\langle\left\langle\left\lbrace\mathtt{p}\right\rbrace, \left\lbrace\mathtt{q}\right\rbrace \right\rangle, \emptyset\right\rangle$ and the negative example $\left\langle\left\langle\left\lbrace\mathtt{q}\right\rbrace, \left\lbrace\mathtt{p}\right\rbrace \right\rangle, \emptyset\right\rangle$, the program $B\cup H \cup \emptyset$ has a single answer set $A_1'=\lbrace\mathtt{p} \rbrace$. Then, $B\cup H \cup \emptyset$ accepts the first example since $\lbrace\mathtt{p}\rbrace\subseteq A'_1$ and $\lbrace\mathtt{q}\rbrace \cap A'_1 = \emptyset$, and does not accept the second one since $\lbrace\mathtt{q}\rbrace \subsetneq A'_1$ (also, $\lbrace\mathtt{p}\rbrace \cap A_1'\neq \emptyset$).
	\end{itemize}
	Therefore, $H$ is an inductive solution of $T$. In contrast, $H'=\lbrace \mathtt{q.} \rbrace$ is not an inductive solution.  For instance, given the positive example $\left\langle\left\langle\left\lbrace\mathtt{p}\right\rbrace, \left\lbrace\mathtt{q}\right\rbrace \right\rangle, \emptyset\right\rangle$, the program $B \cup H'\cup\emptyset$ has a single answer set, $A''_1=\lbrace\mathtt{q}\rbrace$. Then, this program does not accept the example because $\lbrace\mathtt{p} \rbrace \subsetneq A_1''$ (also, $\lbrace\mathtt{q}\rbrace \cap A_1'' \neq \emptyset$), which causes $H'$ not to be an inductive solution. 
\end{example}

\section{Problem Formulation}
\label{sec:problem_formulation}
The objectives of this work are twofold:
\begin{enumerate}
	\item Propose a method for learning a subgoal automaton from traces of a given reinforcement learning task (Sections~\ref{sec:asp_representation_subgoal_automata}-\ref{sec:structural_properties}).
	\item Propose a method that interleaves automaton learning and reinforcement learning (Section~\ref{sec:interleaved_automata_learning_algorithm}).
\end{enumerate}

In this section we formalize the main components of our automaton-driven RL approach, including the class of RL tasks, the notion of traces generated by the RL agent and the definition of subgoal automata. In later sections, we show how subgoal automata and traces are represented in ASP (see Section~\ref{sec:asp_representation_subgoal_automata}), and how learning a subgoal automaton can be formalized as an ILASP learning task that uses traces as examples (see Section~\ref{sec:learn_subgoal_automata_from_traces}).

\subsection{Tasks}
\label{sec:tasks_def}
The class of RL tasks we consider are \emph{episodic POMDPs} $\mathcal{M}^\Sigma=\langle S,S_T, S_G,\Sigma,A,p,r,\gamma,\nu\rangle$ enhanced by a set of \emph{observables} $\mathcal{O}$. An observable is a propositional event that the agent can sense while interacting with the environment. A \emph{labeling function} $L:\Sigma \to 2^\mathcal{O}$ maps a visible state into a subset of observables (or \emph{observation}) $O \subseteq \mathcal{O}$ perceived by the agent in that state. We emphasize that an observation solely depends on a visible state.

We assume that the combination of a visible state and a history of observations seen during an episode is sufficient to obtain the Markov property. Formally, the set of latent states $S$ is a subset of the cross product of the set of visible states $\Sigma$ and a history of observations $(2^\mathcal{O})^\ast$. Given this assumption, a policy over $\Sigma \times (2^\mathcal{O})^\ast$ could be learned; however, this space can become very large for long histories. Throughout the paper, we explain how the automata we propose can be seen as a compact representation of observation histories. In addition, we also assume that the history of observations seen during an episode is sufficient to determine whether a terminal state is reached and, if so, whether it is a goal state.

We use the \textsc{OfficeWorld} environment \shortcite{IcarteKVM18} as a running example to explain our method. It consists of a $9\times 12$ grid (see Figure~\ref{fig:officeworld_grid}) where an agent ($\Strichmaxerl[1.25]$) can move in the four cardinal directions; that is, the action set is $A=\{\text{up},\text{down}, \text{left}, \text{right}\}$. The agent always moves in the intended direction (i.e., actions are deterministic), and remains in the same location if it moves towards a wall. The set of visible states $\Sigma$ is the set of all locations in the grid (i.e., the agent knows the coordinate of the location it is stepping on). The set of observables is $\mathcal{O}=\lbrace\texttt{\Coffeecup}, \texttt{\Letter}, o, A, B,C,D,\ast\rbrace$, all of which correspond to visible locations. The agent picks up the coffee and the mail when it steps on locations {\Coffeecup} and {\Letter} respectively, and delivers them to the office when it steps on location $o$. The decorations $\ast$ break if the agent steps on them. There are also four locations labeled $A$, $B$, $C$ and $D$. Three tasks with different goals are defined in this environment:
\begin{itemize}
	\item \textsc{Coffee}: deliver coffee to the office.
	\item \textsc{CoffeeMail}: deliver coffee and mail to the office.
	\item \textsc{VisitABCD}: visit $A$, $B$, $C$ and $D$ in order.
\end{itemize}
The tasks terminate when the goal is achieved or a decoration is broken (this is a dead-end state). A reward of 1 is given when the goal is achieved, else the reward is 0.

\begin{figure}
	\tikzset{digit/.style = { minimum height = 5mm, minimum width=5mm, anchor=center }}
	\newcommand{\setcell}[3]{\edef\x{#2 - 0.5}\edef\y{9.5 - #1}\node[digit,name={#1-#2}] at (\x, \y) {#3};}
	\centering
	\begin{tikzpicture}[scale=0.5]
	\draw[white] (0, 0) grid (13, 10);
	
	\setcell{9}{2}{0} \setcell{9}{3}{1} \setcell{9}{4}{2} \setcell{9}{5}{3} \setcell{9}{6}{4} \setcell{9}{7}{5} \setcell{9}{8}{6} \setcell{9}{9}{7} \setcell{9}{10}{8} \setcell{9}{11}{9} \setcell{9}{12}{10} \setcell{9}{13}{11}
	
	\setcell{8}{1}{0} \setcell{7}{1}{1} \setcell{6}{1}{2} \setcell{5}{1}{3} \setcell{4}{1}{4} \setcell{3}{1}{5} \setcell{2}{1}{6} \setcell{1}{1}{7} \setcell{0}{1}{8}
	
	\draw[gray] (1, 1) grid (13, 10);
	\draw[very thick, scale=3] (1/3, 1/3) rectangle (13/3, 10/3);
	
	\draw[very thick, scale=1] (4, 1) rectangle (4, 2); \draw[very thick, scale=1] (4, 9) rectangle (4, 10);
	\draw[very thick, scale=1] (4, 3) rectangle (4, 8);
	\draw[very thick, scale=1] (7, 1) rectangle (7, 2); \draw[very thick, scale=1] (7, 3) rectangle (7, 8); \draw[very thick, scale=1] (7, 9) rectangle (7, 10);
	\draw[very thick, scale=1] (10, 1) rectangle (10, 2); \draw[very thick, scale=1] (10, 3) rectangle (10, 8); \draw[very thick, scale=1] (10, 9) rectangle (10, 10);
	\draw[very thick, scale=1] (1, 4) rectangle (2, 4); \draw[very thick, scale=1] (3, 4) rectangle (11, 4); \draw[very thick, scale=1] (12, 4) rectangle (13, 4);
	\draw[very thick, scale=1] (1, 7) rectangle (2, 7); \draw[very thick, scale=1] (3, 7) rectangle (5, 7); \draw[very thick, scale=1] (6, 7) rectangle (8, 7); \draw[very thick, scale=1] (9, 7) rectangle (11, 7); \draw[very thick, scale=1] (12, 7) rectangle (13, 7);
	\setcell{1}{3}{$D$} \setcell{1}{6}{$\ast$} \setcell{1}{9}{$\ast$} \setcell{1}{12}{$C$}
	\setcell{2}{5}{\Coffeecup} \setcell{2}{6}{\Strichmaxerl[1.25]}
	\setcell{4}{3}{$\ast$} \setcell{4}{6}{$o$} \setcell{4}{9}{\Letter} \setcell{4}{12}{$\ast$}
	\setcell{6}{10}{\Coffeecup}
	\setcell{7}{3}{$A$} \setcell{7}{6}{$\ast$} \setcell{7}{9}{$\ast$} \setcell{7}{12}{$B$}
	\end{tikzpicture}
	\caption{Example grid used in the \textsc{OfficeWorld} environment \shortcite{IcarteKVM18}.}
	\label{fig:officeworld_grid}
\end{figure}

\subsection{Traces}
\label{sec:traces_def}
We define the different kinds of traces that can be generated in our class of RL tasks.

\begin{definition}[Execution trace]
	An execution trace $\lambda=\langle\bm\sigma_0,a_0,r_1,\bm\sigma_1,a_1,\ldots,a_{n-1},r_n,\allowbreak\bm\sigma_n \rangle$ is a finite sequence of tuples $\bm\sigma_i = \langle\sigma^\Sigma_i,\sigma^T_i,\sigma^G_i \rangle$, actions and rewards induced by a (potentially changing) policy during an episode. An execution trace $\lambda$ can be one of the following:
	\begin{itemize}
		\item A goal execution trace $\lambda^G$ if $\sigma^G_n = \top$ (i.e., a latent goal state has been reached).
		\item A dead-end execution trace $\lambda^D$ if $\sigma^T_n = \top \land \sigma^G_n = \bot$ (i.e., a latent dead-end state has been reached).
		\item An incomplete execution trace $\lambda^I$ if $\sigma^T_n=\bot$ (i.e., the final latent state is not terminal).
	\end{itemize}
\end{definition}

Execution traces are the traces perceived by an RL agent. However, these are not the traces that are used to learn the subgoal automata. Since subgoal automata aim to provide the RL agent with a subgoal structure that is independent from the state space, they are defined at a higher level of abstraction. Therefore, the subgoal automaton learner needs traces of higher level events as an input. These traces are called observation traces.

\begin{definition}[Observation trace]
	An observation trace $\lambda_{L,\mathcal{O}}$ is a sequence of observations $O_i\subseteq \mathcal{O}, 0 \leq i \leq n$, obtained by applying a labeling function $L$ to each visible state $\sigma^\Sigma_i\in \Sigma$ in an execution trace $\lambda=\langle \bm{\sigma}_0 ,a_0,r_1,\bm\sigma_1,a_1, \ldots,\allowbreak a_{n-1},r_n, \bm\sigma_n\rangle$. Formally, 
	\begin{equation*}
	\lambda_{L,\mathcal{O}}=\left\langle O_0,\ldots,O_n \mid O_i = L(\sigma^\Sigma_i), \sigma^\Sigma_i \in \lambda \right\rangle.
	\end{equation*}
\end{definition}

For the rest of the paper, we may simply use the term \emph{trace} when we refer to an observation trace.

A \emph{set of execution traces} is denoted by $\Lambda = \Lambda^G \cup \Lambda^D \cup \Lambda^I$, where $\Lambda^G$, $\Lambda^D$ and $\Lambda^I$ are sets of goal, dead-end and incomplete execution traces, respectively. The associated set of observation traces is analogously denoted by $\Lambda_{L,\mathcal{O}}=\Lambda_{L,\mathcal{O}}^G \cup \Lambda_{L,\mathcal{O}}^D \cup  \Lambda_{L,\mathcal{O}}^I$.

\begin{example}
	The first trace below is a goal execution trace for the \textsc{OfficeWorld}'s \textsc{Coffee} task using the grid in Figure~\ref{fig:officeworld_grid}. The second trace is the resulting observation trace. 
	\begin{align*}
	\lambda^G =&~\langle \langle(4,6),\bot,\bot\rangle, \leftarrow, 0, \langle(3,6),\bot,\bot\rangle, \leftarrow, 0 , \langle(3,6),\bot,\bot\rangle, \rightarrow, 0, \langle(4,6),\bot,\bot\rangle,\downarrow, 0,\\ &~~\langle(4,5),\bot,\bot\rangle, \downarrow, 1, \langle(4,4),\top,\top\rangle \rangle,\\
	\lambda^G_{L,\mathcal{O}} =&\left\langle \{\}, \{\text{\Coffeecup}\}, \{\text{\Coffeecup}\}, \{\}, \{\}, \{o\} \right\rangle.
	\end{align*}
	The visible states in $\lambda^G$ correspond to positions in the grid, whereas the arrows correspond to the different actions: up ($\uparrow$), down ($\downarrow$), left ($\leftarrow$) and right ($\rightarrow$).
\end{example}

\subsection{Subgoal Automata}
\label{sec:subgoal_automata_def}
Now we formally define the kind of automaton that is learned and used together with the RL component in our approach. The edges that characterize this class of automata are labeled by propositional logic formulas over a set of observables $\mathcal{O}$. These formulas can be interpreted as the subgoals of the task represented by the automaton. Therefore, we refer to these automata as \emph{subgoal automata}.\footnote{For the rest of the paper, we may simply use the term \emph{automata} when we refer to subgoal automata.}

\begin{definition}[Subgoal automaton]
	A subgoal automaton is a tuple $\mathcal{A}=\langle U, \mathcal{O}, \delta_\varphi, u_0, u_A,\allowbreak u_R \rangle$ where 
	\begin{itemize}
		\item $U$ is a finite set of automaton states,
		\item $\mathcal{O}$ is a finite set of observables (or alphabet),
		\item $\delta_\varphi: U \times 2^\mathcal{O} \to U$ is a deterministic transition function that takes an automaton state and a subset of observables (or observation) and returns an automaton state,
		\item $u_0 \in U$ is the unique initial state,
		\item $u_A \in U$ is the unique absorbing accepting state, and
		\item $u_R \in U$ is the unique absorbing rejecting state.
	\end{itemize}
\end{definition}

The accepting state ($u_A$) denotes the task's goal achievement. In contrast, the rejecting state ($u_R$) indicates that the goal can no longer be achieved. Therefore, these states are both absorbing meaning that they do not have transitions to other states. That is, $\delta_\varphi(u,O) = u$ for $u \in \{u_A, u_R\}$ and any observation $O\subseteq \mathcal{O}$.

To determine how the behavior of an agent during an episode is evaluated by a subgoal automaton, we introduce the concept of \emph{automaton traversal}.

\begin{definition}[Automaton traversal]
	Given a subgoal automaton $\mathcal{A}$ and an observation trace $\lambda_{L,\mathcal{O}}=\langle O_0,\ldots,O_n\rangle$, an automaton traversal $\mathcal{A}(\lambda_{L,\mathcal{O}})=\langle v_0, v_1, \ldots, v_{n+1} \rangle$ is a unique sequence of automaton states such that
	\begin{enumerate}
		\item $v_0 = u_0$, and
		\item $\delta_\varphi(v_i, O_i)=v_{i+1}$ for $i=0,\ldots,n$.
	\end{enumerate}
	A subgoal automaton $\mathcal{A}$ accepts an observation trace $\lambda_{L,\mathcal{O}}$ if the automaton traversal $\mathcal{A}(\lambda_{L,\mathcal{O}})=\langle v_0, v_1, \ldots, v_{n+1} \rangle$ is such that $v_{n+1}=u_A$. Analogously, $\mathcal{A}$ rejects $\lambda_{L,\mathcal{O}}$ if $v_{n+1} = u_R$.
\end{definition}

Now that we know how a subgoal automaton evaluates an observation trace, we need to determine whether this evaluation (acceptance, rejection, or neither) complies with the type of the trace (goal, dead-end, or incomplete). The following definition introduces the concept of \emph{validity} with respect to an observation trace (i.e., whether the type of the trace matches the automaton's evaluation). This concept is crucial to prove the correctness of the ASP encoding (see Section~\ref{sec:asp_representation_subgoal_automata}), as well as to learn subgoal automata (see Section~\ref{sec:learn_subgoal_automata_from_traces}).
	
\begin{definition}
	Given an observation trace $\lambda^*_{L,\mathcal{O}}$, where $\ast \in \lbrace G,D,I\rbrace$, a subgoal automaton $\mathcal{A}$ is valid with respect to $\lambda^*_{L,\mathcal{O}}$ if one of the following holds:
	\begin{itemize}
		\item $\mathcal{A}$ accepts $\lambda^*_{L,\mathcal{O}}$ and $\ast=G$ (i.e., $\lambda^*_{L,\mathcal{O}}$ is a goal trace).
		\item $\mathcal{A}$ rejects $\lambda^*_{L,\mathcal{O}}$ and $\ast=D$ (i.e., $\lambda^*_{L,\mathcal{O}}$ is a dead-end trace).
		\item $\mathcal{A}$ does not accept nor reject $\lambda^*_{L,\mathcal{O}}$ and $\ast=I$ (i.e., $\lambda^*_{L,\mathcal{O}}$ is an incomplete trace).
	\end{itemize}
	\label{def:valid_trace}
\end{definition}

The transition function $\delta_\varphi$ is constructed from a \emph{logical transition function} $\varphi$ that maps state pairs into propositional formulas over $\mathcal{O}$, each representing a subgoal of the task. Our approach represents and learns logical transition functions.

\begin{definition}[Logical transition function]
	A logical transition function $\varphi: U \times U \to \emph{DNF}_\mathcal{O}$ is a transition function that maps a state pair into a disjunctive normal form (DNF) formula over $\mathcal{O}$, where $\varphi(u,u)=\bot$ for each $u\in U$ (i.e., $\varphi$ only represents transitions to different states).
	\label{def:logical_transition_function}
\end{definition}

Expressing $\varphi(u,u')$ as a DNF formula allows representing multiple edges between the same pair of states $u$ and $u'$. That is, each of the conjunctions inside a DNF formula $\varphi(u,u')$ labels a different edge between $u$ and $u'$. The following notation is used throughout the paper:
\begin{itemize}
	\item $|\varphi(u,u')|$ denotes the number of conjunctive formulas that form the formula $\varphi(u,u')$,
	\item $\text{conj}_i$ denotes the $i$-th conjunction (left-to-right) in the DNF formula $\varphi(u,u')$, and
	\item $O \models \varphi(u,u')$ denotes that observation $O \subseteq \mathcal{O}$ satisfies the DNF formula $\varphi(u,u')$. Note that $O$ is used as a truth assignment where observables in the set (i.e., $o\in O$) are true and observables that are not in the set (i.e., $o \notin O$) are false. Formally,
	\begin{equation*}
	O \models \varphi(u,u') \equiv \exists\text{conj}_i \in \varphi(u,u') \text{ such that } O \models \text{conj}_i.
	\end{equation*}
\end{itemize}

\begin{figure}
	\centering
	\begin{tikzpicture}[shorten >=1pt,node distance=2.06cm,on grid,auto]
	\node[state,initial] (u_0)   {$u_0$};
	\node[state,accepting] (u_acc) [below =3cm of u_0]  {$u_{A}$};
	\node[state] (u_1) [left =3.5cm of u_acc]   {$u_1$};
	\node[state] (u_rej) [right =3.5cm of u_acc]  {$u_{R}$};
	
	\path[->] (u_0) edge [loop above] node {otherwise} ();
	\path[->] (u_1) edge [loop below] node {otherwise} ();
	\path[->] (u_acc) edge [loop right] node {otherwise} ();
	\path[->] (u_rej) edge [loop below] node {otherwise} ();
	
	\path[->] (u_0) edge [bend right] node[in place] {$\texttt{\Coffeecup} \land \neg o$} (u_1);
	\path[->] (u_0) edge [bend left] node[in place] {$\ast \land \neg\texttt{\Coffeecup}$} (u_rej);
	\path[->] (u_1) edge[bend right] node[in place] {$\ast \land \neg o$} (u_rej);
	\path[->] (u_1) edge[bend left] node[in place] {$o$} (u_acc);
	\path[->] (u_0) edge node[in place] {$\texttt{\Coffeecup} \land o$} (u_acc);
	\end{tikzpicture}
	
	\caption{Subgoal automaton for the \textsc{OfficeWorld}'s \textsc{Coffee} task.}
	\label{fig:officeworld_coffee_rm}
\end{figure}
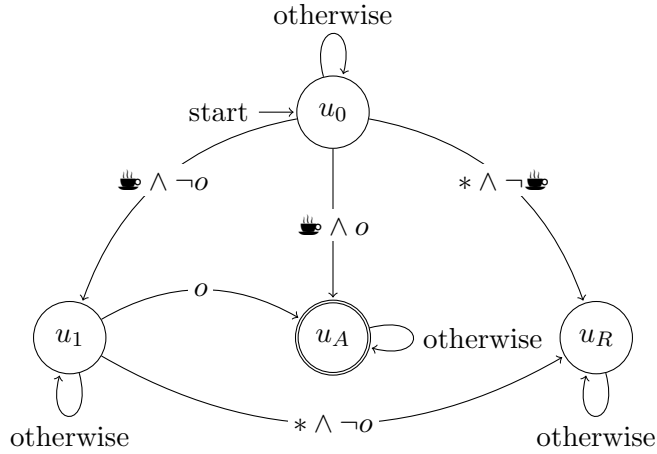

Given a logical transition function $\varphi$, the transition function $\delta_\varphi$ can be formally defined in terms of $\varphi$ as follows:
\begin{equation}
\delta_\varphi(u,O)= \begin{cases} 
u' & \text{if } O \models \varphi(u,u')\\
u  & \text{if } \nexists u' \in U \text{ such that }  O \models \varphi(u,u')
\end{cases}.
\label{eq:delta_from_phi}
\end{equation}
Note that loop transitions are implicitly defined by the absence of a satisfied formula on outgoing transitions to other states. Besides, this mapping only works if $\varphi$ is deterministic; that is, given a state $u\in U$ and an observation $O\subseteq \mathcal{O}$, at most one formula is satisfied. Formally, there \emph{are not} two states $u',u''\in U$ such that $O \models \varphi(u,u'), O \models \varphi(u,u''),$ and $u'\neq u''$. Determinism is guaranteed when all pairs of outgoing transitions from a given state to two different states are mutually exclusive; that is, an observable appears positively in one edge and negatively in another.

\begin{example}
	Figure~\ref{fig:officeworld_coffee_rm} shows a minimal subgoal automaton for the \textsc{OfficeWorld}'s \textsc{Coffee} task. The edges are labeled according to the logical transition function given below and loop transitions are only taken if no outgoing transition holds.
	\begin{align*}
	\begin{aligned}[c]
	\varphi(u_0,u_1) &= \text{\Coffeecup} \land \neg o \\
	\varphi(u_1,u_A) &= o
	\end{aligned}
	\qquad
	\begin{aligned}[c]
	\varphi(u_0,u_A) &= \text{\Coffeecup} \land o \\
	\varphi(u_1,u_R) &= \ast \land \neg o
	\end{aligned}
	\qquad
	\begin{aligned}[c]
	\varphi(u_0,u_R) &= \ast \land \neg \text{\Coffeecup}\\
	~
	\end{aligned}
	\end{align*}
	For all absent pairs of states $(u,u')$, $\varphi(u,u')=\bot$. The automaton covers two accepting cases: (1) $\text{\Coffeecup}$ and $o$ are observed in the same tile (i.e., direct path from $u_0$ to $u_A$) and (2) $\text{\Coffeecup}$ and $o$ are observed in different tiles (i.e., path from $u_0$ to $u_A$ through $u_1$). The transition function is deterministic because all pairs of outgoing transitions from a given state to two different states are mutually exclusive (e.g., the formulas $o$ and $\ast \land \neg o$ are mutually exclusive because $o$ appears positively in the former and negatively in the latter). Note that, in this case, no pairs of states have multiple edges; that is, $|\varphi(u,u')|=1$ for all pairs of states.
\end{example}

\begin{example}
	The automaton traversal for the observation trace $\lambda_{L,\mathcal{O}}=\langle \{\}, \{\text{\Coffeecup}\}, \{\}, \{\}, \{o\} \rangle$ in the automaton of Figure \ref{fig:officeworld_coffee_rm}  is $\mathcal{A}(\lambda_{L,\mathcal{O}})=\langle u_0, u_0, u_1, u_1, u_1, u_A \rangle$.
\end{example}

\section{Representation of Subgoal Automata in Answer Set Programming}
\label{sec:asp_representation_subgoal_automata}
In this section we explain how subgoal automata are represented using Answer Set Programming (ASP). First, we describe how traces and subgoal automata are represented. Then, we present the general rules that describe the behavior of a subgoal automaton. Finally, we prove the correctness of the representation.

\begin{definition}[ASP representation of an observation trace]
	Given an observation trace $\lambda_{L,\mathcal{O}}=\langle O_0,\ldots,O_n\rangle$, $M(\lambda_{L,\mathcal{O}})$ denotes the set of ASP facts that describe it:
	\begin{equation*}
	M(\lambda_{L,\mathcal{O}}) = \begin{matrix*}[l]
	\left\lbrace \mathtt{obs(}o, t). \mid 0 \leq t \leq n, o \in O_t\right\rbrace \cup \\
	\left\lbrace\mathtt{step}(t). \mid 0 \leq t \leq n \right\rbrace \cup \\
	\left\lbrace\mathtt{ last(}n\mathtt{).}\right\rbrace.
	\end{matrix*}
	\end{equation*}
	\label{def:asp_trace_representation}
\end{definition}
The $\mathtt{obs}(o,t)$ predicate indicates that observable $o \in \mathcal{O}$ is observed at step $t$, $\mathtt{step}(t)$ states that $t$ is a step of the trace, and $\mathtt{last}(n)$ indicates that the trace ends at step $n$.

\begin{example}
	The set of ASP facts for the observation trace $\lambda_{L,\mathcal{O}}=\langle\lbrace a\rbrace,\lbrace\rbrace,\lbrace b,c\rbrace\rangle$ is $M(\lambda_{L,\mathcal{O}})=\lbrace \mathtt{obs}(a,0).,~\mathtt{obs}(b,2).,~\mathtt{obs}(c,2).,~\mathtt{step}(0),~\mathtt{step}(1).,~\mathtt{step}(2).,~\mathtt{last}(2).\rbrace$.	
\end{example}

\begin{definition}[ASP representation of a subgoal automaton]
	\label{def:asp_subgoal_automata}
	Given a subgoal automaton $\mathcal{A}=\langle U,\mathcal{O},\delta_\varphi,u_0,u_A, u_R \rangle$, $M(\mathcal{A})=M_U(\mathcal{A}) \cup M_\varphi(\mathcal{A})$ denotes the set of ASP rules that describe it, where:
	\begin{equation*}
	M_U\left(\mathcal{A}\right) = \left\lbrace \mathtt{state}\left(u\right). \mid u \in U \right\rbrace
	\end{equation*}
	and
	\begin{equation*}
	M_\varphi\left(\mathcal{A}\right)=\left\lbrace\begin{array}{@{}l|c@{}}
	\mathtt{ed}(u,u',i).                                                                    &  \\
	\bar{\varphi}(u,u',i,\mathtt{T) \codeif not~obs(}o_1\mathtt{,T), step(T).} & u \in U\setminus \left\lbrace u_A,u_R \right\rbrace, \\
	\multicolumn{1}{c|}{\vdots}                                                                          & u' \in U \setminus \left\lbrace u \right\rbrace, \\
	\bar{\varphi}(u,u',i,\mathtt{T) \codeif not~obs(}o_n\mathtt{,T), step(T).} & 1 \leq i \leq \left|\varphi(u,u')\right|, \\
	\bar{\varphi}(u,u',i,\mathtt{T) \codeif obs(}o_{n+1}\mathtt{,T), step(T).} & \emph{conj}_i \in \varphi(u,u'), \\
	\multicolumn{1}{c|}{\vdots}                                                                          & \begin{split}\emph{conj}_i &= o_1 \land \cdots \land o_n\\ &\land \neg o_{n+1} \land \cdots \land \neg o_m\end{split} \\
	\bar{\varphi}(u,u',i,\mathtt{T) \codeif obs(}o_m,\mathtt{T), step(T).}     & 
	\end{array}\right\rbrace.
	\end{equation*}
\end{definition}
The rules in $M(\mathcal{A})$ are described as follows:
\begin{itemize}
	\item Facts $\mathtt{state}(u)$ indicate that $u$ is an automaton state.
	\item Facts $\mathtt{ed}(u,u',i)$ indicate that there is a transition from state $u$ to $u'$ using edge $i$. Note that $i$ is the $i$-th conjunction in the DNF formula $\varphi(u,u')$.\footnote{Remember that each conjunction in the DNF formula $\varphi(u,u')$ represents a different edge between states $u$ and $u'$.}
	\item Normal rules whose \emph{head} is of the form $\bar{\varphi}(u,u',i,\mathtt{T})$ state that the transition from state $u$ to state $u'$ with edge $i$ \emph{does not} hold at step $\mathtt{T}$. The \emph{body} of these rules consists of a single $\mathtt{obs}(o,\mathtt{T})$ literal and an atom $\mathtt{step}(\mathtt{T})$ indicating that $\mathtt{T}$ is a step. Remember that we represent variables using upper case letters, which is the case of steps $\mathtt{T}$ here.
\end{itemize}
Note that $\bar{\varphi}$ represents the negation of the logical transition function $\varphi$. This is because, as discussed in more detail later in Section~\ref{sec:learn_subgoal_automata_from_traces}, learning the negation of the logical transition function $\varphi$ makes the search space smaller and, thus, makes the learning process faster.

\begin{example}
	\label{ex:asp_rules_automaton}
	The following rules represent the automaton in Figure~\ref{fig:officeworld_coffee_rm} (p.~\pageref{fig:officeworld_coffee_rm}).
	\begin{equation*}
	\begin{Bmatrix*}[l]
	\mathtt{state}(u_0).~\mathtt{state}(u_1).&\mathtt{state}(u_A).~\mathtt{state}(u_R). \\
	\mathtt{ed}(u_0, u_1, 1).~\mathtt{ed}(u_0, u_A, 1).~\mathtt{ed}(u_0, u_R, 1).                   &\mathtt{ed}(u_1, u_A, 1).~\mathtt{ed}(u_1, u_R, 1).\\
	\bar{\varphi}(u_0, u_1, 1, \mathtt{T}) \codeif \mathtt{not~obs(\text{\Coffeecup}, T), step(T).} &\bar{\varphi}(u_0, u_1, 1, \mathtt{T) \codeif obs(}o\mathtt{, T), step(T).}\\
	\bar{\varphi}(u_0, u_A, 1, \mathtt{T) \codeif not~obs(\text{\Coffeecup}, T), step(T).}          &\bar{\varphi}(u_0, u_A, 1, \mathtt{T}) \codeif \mathtt{not~obs(}o\mathtt{, T), step(T).}\\
	\bar{\varphi}(u_0, u_R, 1, \mathtt{T}) \codeif \mathtt{not~obs(\ast, T), step(T).}              &\bar{\varphi}(u_0, u_R, 1, \mathtt{T}) \codeif \mathtt{obs(\text{\Coffeecup}, T), step(T).}\\
	\bar{\varphi}(u_1, u_A, 1, \mathtt{T) \codeif not~obs(}o\mathtt{, T), step(T).}&\\
	\bar{\varphi}(u_1, u_R, 1, \mathtt{T) \codeif not~obs(\ast, T), step(T).}                       &\bar{\varphi}(u_1, u_R, 1, \mathtt{T) \codeif obs(}o\mathtt{, T), step(T).}  
	\end{Bmatrix*}
	\end{equation*}	
\end{example}

\paragraph{General Rules.} In order to check whether an automaton accepts or rejects an observation trace, it is necessary to reason about the automaton's behavior. This is done by means of a set of rules $R$ that define how a subgoal automaton processes an observation trace. This set of rules is given by the union of different components, $R = R_\varphi \cup R_\delta \cup R_\mathtt{st}$. The subsets $R_\varphi$ and $R_\delta$ define the rules related to the automaton transition function:
\begin{itemize}
	\item The first rule in $R_\varphi$ defines the logical transition function $\varphi$ in terms of its negation $\bar{\varphi}$ and $\mathtt{ed}$ atoms. The second rule indicates that an outgoing transition from state $\mathtt{X}$ is taken at step $\mathtt{T}$. 
	\begin{align*}
	R_\varphi &= \begin{Bmatrix*}[l]
	\varphi\mathtt{(X,Y,E,T) \codeif not~}\bar{\varphi}\mathtt{(X,Y,E,T),  ed(X,Y,E), step(T).}\\
	\mathtt{out\_\varphi(X,T) \codeif \varphi(X,\_,\_,T).}
	\end{Bmatrix*}
	\end{align*}
	\item The rules in $R_\delta$ define the transition function $\delta$ in terms of $\varphi$, as defined in Equation~\ref{eq:delta_from_phi}~(p.~\pageref{eq:delta_from_phi}). The first rule states that $\mathtt{X}$ transitions to $\mathtt{Y}$ at step $\mathtt{T}$ if an outgoing transition to $\mathtt{Y}$ holds at that step. In contrast, the second rule indicates that state $\mathtt{X}$ transitions to itself at step $\mathtt{T}$ if no outgoing transition is satisfied at that step.
	\begin{align*}
	R_{\delta} &= \begin{Bmatrix*}[l]
	\mathtt{\delta(X,Y,T) \codeif \varphi(X,Y,\_,T).}\\
	\mathtt{\delta(X, X, T) \codeif not~out\_\varphi(X,T), state(X), step(T).}
	\end{Bmatrix*}
	\end{align*}
\end{itemize}

The subset $R_\mathtt{st}$ is used to define the automaton traversal of the trace (that is, the sequence of visited automaton states), and the criteria for accepting or rejecting a trace. The $\mathtt{st(T,X)}$ atoms indicate that a trace is in state $\mathtt{X}$ at step $\mathtt{T}$. The first rule defines that the agent is in $u_0$ at step $\mathtt{0}$. The second rule determines that at step $\mathtt{T\mathord{+}1}$ the agent will be in state $\mathtt{Y}$ if it is in state $\mathtt{X}$ at step $\mathtt{T}$ and a transition between them holds at that step. The third (resp.\ fourth) rule indicates that the observation trace is accepted (resp.\ rejected) if the state at the trace's last step is $u_A$ (resp.\ $u_R$).
\begin{equation*}
R_\mathtt{st} = \begin{Bmatrix*}[l]
\mathtt{st(0},u_0).\\
\mathtt{st(T\mathord{+}1,Y) \codeif st(T,X),\delta(X,Y,T).}\\
\mathtt{accept \codeif last(T), st(T\mathord{+}1 },u_A).\\
\mathtt{reject \codeif last(T), st(T\mathord{+}1},u_R).
\end{Bmatrix*}
\end{equation*}

\begin{restatable}[Correctness of the ASP encoding]{proposition}{propositionaspcorrectness}
	Given a finite observation trace $\lambda_{L,\mathcal{O}}^\ast$, where $\ast \in \lbrace G, D, I\rbrace$, and an automaton $\mathcal{A}$ that is valid with respect to $\lambda_{L,\mathcal{O}}^\ast$, the program $P=M(\mathcal{A}) \cup R \cup M(\lambda_{L,\mathcal{O}}^\ast)$ has a unique answer set $AS$ and (1) $\mathtt{accept}\in AS$ if and only if $\ast = G$, and (2) $\mathtt{reject} \in AS$ if and only if $\ast = D$.
	\label{prop:asp_correctness}
\end{restatable}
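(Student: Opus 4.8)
The plan is to prove the statement in two movements: first establish that $P$ has a unique answer set $AS$, and then give an explicit characterization of $AS$ from which the accept/reject conditions can be read off. For the characterization I would show, by induction on the time step $t$, that the $\mathtt{st}$ atoms in $AS$ encode exactly the automaton traversal $\mathcal{A}(\lambda^\ast_{L,\mathcal{O}}) = \langle v_0,\dots,v_{n+1}\rangle$; that is, $\mathtt{st}(t,v_t)\in AS$ and no other $\mathtt{st}(t,\cdot)$ atom belongs to $AS$. Reading this off at the final step then yields both biconditionals once I combine it with the validity hypothesis.

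For uniqueness I would observe that $P$ is a normal logic program (it contains only facts and normal rules, no choice rules or constraints) and that it is stratified. Concretely, I would exhibit a stratification placing the facts ($\mathtt{state}$, $\mathtt{ed}$, $\mathtt{obs}$, $\mathtt{step}$, $\mathtt{last}$) at the bottom, followed by $\bar{\varphi}$, then $\varphi$, then $\mathtt{out\_\varphi}$, then $\delta$, then $\mathtt{st}$, and finally $\mathtt{accept}$/$\mathtt{reject}$. The only uses of negation-as-failure are $\mathtt{not}~\bar{\varphi}(\dots)$ in the $R_\varphi$ rule defining $\varphi$, and $\mathtt{not}~\mathtt{out\_\varphi}(\dots)$ in the self-loop rule of $R_\delta$, and in both cases the negated predicate lies in a strictly lower stratum. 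The recursion defining $\mathtt{st}$ is purely positive and terminates because its time argument strictly increases and is bounded by $n+1$. Since a stratified normal program has a unique answer set (its perfect model), uniqueness follows.

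The core of the argument is the faithful encoding of $\delta_\varphi$ by the ASP fragment. From the $\bar{\varphi}$ rules I would first verify that $\bar{\varphi}(u,u',i,t)\in AS$ precisely when $O_t\not\models\mathrm{conj}_i$, since $\bar{\varphi}$ is derived exactly when some positive literal of $\mathrm{conj}_i$ is absent from $O_t$ or some negated literal is present. Hence, via $R_\varphi$, $\varphi(u,u',i,t)\in AS$ iff the edge $\mathtt{ed}(u,u',i)$ exists and $O_t\models\mathrm{conj}_i$, and $\mathtt{out\_\varphi}(u,t)\in AS$ iff some outgoing edge of $u$ fires at step $t$. The \emph{determinism} of $\varphi$, guaranteed because $\mathcal{A}$ is a subgoal automaton, is the pivotal ingredient: it ensures that for each pair $(u,t)$ at most one successor $u'\neq u$ satisfies an outgoing formula, so that the two $R_\delta$ rules together make $\delta(u,\cdot,t)$ a singleton whose unique value is exactly $\delta_\varphi(u,O_t)$, namely the fired successor, or $u$ itself when no outgoing transition holds.

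With this $\delta$-correspondence, the induction on $\mathtt{st}$ is routine: the base case is the fact $\mathtt{st}(0,u_0)$ with $v_0=u_0$ (no other $\mathtt{st}(0,\cdot)$ is derivable, since the recursive rule only produces $\mathtt{st}(T{+}1,\cdot)$), and the inductive step applies the recursive $R_{\mathtt{st}}$ rule together with the fact that $\delta(v_t,\cdot,t)$ is the singleton $\{v_{t+1}\}$. Evaluating at the last step, $\mathtt{accept}\in AS$ iff $\mathtt{st}(n{+}1,u_A)\in AS$ iff $v_{n+1}=u_A$ iff $\mathcal{A}$ accepts $\lambda^\ast_{L,\mathcal{O}}$, and symmetrically for $\mathtt{reject}$. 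Finally I would invoke validity: because the trace type $\ast$ and the automaton's verdict (determined by the single state $v_{n+1}$, with $u_A\neq u_R$) are each mutually exclusive and exhaustive, validity forces $v_{n+1}=u_A\Leftrightarrow\ast=G$ and $v_{n+1}=u_R\Leftrightarrow\ast=D$, giving the two claimed biconditionals. The main obstacle I anticipate is the careful handling of determinism in the $\delta$-step, where one must confirm that multiple edges to the \emph{same} successor do not break single-valuedness and that the self-loop rule fires exactly in the complementary case; the stratification and the remaining induction are standard by comparison.
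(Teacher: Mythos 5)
Your proposal is correct and follows essentially the same route as the paper's proof: uniqueness via the stratification theorem for normal logic programs (your finer stratification refines the paper's four-stratum partition $M(\lambda^\ast_{L,\mathcal{O}})$, $M(\mathcal{A})$, $R_\varphi$, $R_\delta \cup R_{\mathtt{st}}$), followed by matching the $\mathtt{st}$ atoms of the unique answer set to the automaton traversal and invoking validity for the two biconditionals. The only difference is presentational: the paper exhibits the full answer set explicitly as $AS_0 \cup \cdots \cup AS_3$ and asserts its correctness, whereas you derive the $\mathtt{st}$-fragment by induction on $t$ and spell out the determinism argument that makes $\delta(u,\cdot,t)$ single-valued, a step the paper leaves implicit.
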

\begin{proof}
	See Appendix \ref{proof:correctness_asp}.
\end{proof}

\section{Learning Subgoal Automata from Traces}
\label{sec:learn_subgoal_automata_from_traces}
This section describes our approach for learning a subgoal automaton. Firstly, we formalize the task of learning an automaton from traces.

\begin{definition}
	An automaton learning task is a tuple $T_{\mathcal{A}}=\langle U, \mathcal{O}, u_0, u_A, u_R, \Lambda_{L,\mathcal{O}}, \kappa \rangle$, where
	\begin{itemize}
		\item $U \supseteq \lbrace u_0, u_A, u_R\rbrace$ is a set of automaton states, where $u_0$ is the initial state, $u_A$ is the accepting state and $u_R$ is the rejecting state;
		\item $\mathcal{O}$ is a set of observables;
		\item $\Lambda_{L,\mathcal{O}}=\Lambda^G_{L,\mathcal{O}} \cup \Lambda^D_{L,\mathcal{O}} \cup \Lambda^I_{L,\mathcal{O}}$ is a set of observation traces; and
		\item $\kappa$ is the maximum number of directed edges $(u,u')$ from a state $u\in U$ to another state $u'\in U\setminus\{u\}$.
	\end{itemize}
	An automaton $\mathcal{A}$ is a solution of $T_\mathcal{A}$ if and only if it is valid with respect to all the traces in $\Lambda_{L,\mathcal{O}}$; that is, if and only if it accepts all goal traces in $\Lambda_{L,\mathcal{O}}^G$, rejects all dead-end traces in $\Lambda_{L,\mathcal{O}}^D$, and does not accept nor reject any incomplete trace in $\Lambda_{L,\mathcal{O}}^I$.
\end{definition}
Note that (i) $\kappa$ can be seen as the maximum number of disjuncts that a DNF formula $\varphi(u,u')$ between two states $u$ and $u'$ can have, and (ii) $U$ will be the set of states of the learned automaton.

Given an automaton learning task $T_\mathcal{A}$, we map it into an ILASP learning task $M(T_{\mathcal{A}})=\langle B,S_M,\langle E^+, \emptyset \rangle\rangle$ and use the ILASP system~\shortcite{ILASP_system} to find a minimal inductive solution $M_\varphi(\mathcal{A}) \subseteq S_M$ that covers the examples.\footnote{Note that we do not use \emph{negative examples} ($E^-=\emptyset$).} We define the different components of $M(T_{\mathcal{A}})$ below.

\paragraph{Background Knowledge.}
The background knowledge $B=B_U \cup R$ is a set of rules that describe the general behavior of any subgoal automaton. The set of rules $B_U$ consists of $\mathtt{state}(u)$ facts for each automaton state $u \in U$, while $R$ is the set of general rules that defines how subgoal automata process observation traces (see the definition in Section~\ref{sec:asp_representation_subgoal_automata}). Describing known concepts through the background knowledge is useful to avoid learning everything from scratch. In this case, we only need to learn the edges of the automata, and not the general definitions of how subgoal automata work (i.e., the rules in $R$).

\paragraph{Hypothesis Space.}
The hypothesis space $S_M$ contains all $\mathtt{ed}$ and $\bar{\varphi}$ rules that characterize a transition from a non-terminal state $u \in U \setminus \{u_A,u_R\}$ to a different state $u' \in U \setminus \{u\}$ using edge $i\in[1,\kappa]$. Formally, it is defined as
\begin{equation*}
S_M=\left\lbrace\begin{array}{@{}l|c@{}}
\mathtt{ed}(u,u',i).                                                  & u \in U\setminus \left\lbrace u_A,u_R \right\rbrace, \\
\bar{\varphi}(u,u',i,\mathtt{T) \codeif obs}(o, \mathtt{T), step(T).} & u' \in U \setminus \left\lbrace u \right\rbrace, \\
\bar{\varphi}(u,u',i,\mathtt{T) \codeif not~obs}(o, \mathtt{T), step(T).}  & i \in \left[1, \kappa \right], o \in \mathcal{O}
\end{array}\right\rbrace.
\end{equation*}
Loop transitions are not included since they are unsatisfiable formulas (see Definition~\ref{def:logical_transition_function}). Note that it is possible to learn unlabeled transitions, which are taken unconditionally (that is, regardless of the current observation). For example, for a transition from $u$ to $u'$ using edge $i$, an inductive solution may only include $\mathtt{ed}(u,u',i)$ and not $\bar{\varphi}(u,u',i,\mathtt{T})$.

As mentioned before, the learner induces the negation $\bar{\varphi}$ of a logical transition function $\varphi$.\footnote{Remember that the set of rules $R$ introduced in Section~\ref{sec:asp_representation_subgoal_automata} defines $\varphi$ in terms of $\bar{\varphi}$.} A different hypothesis space where the learned rules characterize $\varphi$ directly could have been defined. However, this requires guessing the maximum number of literals that label a transition between two states.\footnote{This is because ILASP has the maximum length of learnable rules as a parameter. This is a problem to enforce determinism (see Section~\ref{sec:determinism}) since we do not know how many literals are going to be needed to make two formulas mutually exclusive. Besides, allowing for an arbitrarily large number of literals to overcome the problem increases the hypothesis massively.} Therefore, we represent subgoal automata using $\bar{\varphi}$ and instead of imposing a maximum size for the conjunctive formulas, we impose a limit ($\kappa$) on the number of edges from one state to another.

It is important to realize that the learned hypothesis is denoted by $M_\varphi(\mathcal{A})$ and not $M(\mathcal{A})$ (see Definition~\ref{def:asp_subgoal_automata}). The set of automaton states is given in the background knowledge $B$, and the hypothesis space $S_M$ only contains transition rules. Hence, the hypothesis is a smallest subset  of transition rules that covers all the examples. Since the set of automaton states is provided through the background knowledge, a minimal automaton (i.e., an automaton with the minimum number of states) is only guaranteed to be learned when the set of automaton states is the minimal one. The mechanism that interleaves reinforcement learning and automaton learning described in Section~\ref{sec:interleaved_automata_learning_algorithm} ensures that the learned automaton is minimal for a specific $\kappa$.

\paragraph{Example Sets.}
Given a set of traces $\Lambda_{L,\mathcal{O}}=\Lambda^G_{L,\mathcal{O}} \cup \Lambda^D_{L,\mathcal{O}} \cup \Lambda^I_{L,\mathcal{O}}$, the set of \emph{positive examples} is defined as 
\begin{equation*}
E^+=\lbrace\langle e^*, M(\lambda_{L,\mathcal{O}}) \rangle \mid \ast \in \lbrace G,D,I \rbrace,\lambda_{L,\mathcal{O}} \in \Lambda^*_{L,\mathcal{O}} \rbrace,
\end{equation*}
where 
\begin{itemize}
	\item $e^G=\langle \lbrace\mathtt{accept}\rbrace,\lbrace\mathtt{reject}\rbrace \rangle$,
	\item $e^D=\langle \lbrace\mathtt{reject}\rbrace, \lbrace\mathtt{accept}\rbrace \rangle$, and
	\item $e^I=\langle \lbrace\rbrace,\lbrace\mathtt{accept, reject}\rbrace \rangle$
\end{itemize}
are the partial interpretations for goal, dead-end and incomplete traces. The $\mathtt{accept}$ and  $\mathtt{reject}$ atoms express whether a trace is accepted or rejected by the automaton; hence, goal traces must only be accepted, dead-end traces must only be rejected, and incomplete traces cannot be accepted or rejected. Note that the context of each example is the set of ASP facts $M(\lambda_{L,\mathcal{O}})$ that represents the corresponding trace (see Definition~\ref{def:asp_trace_representation}).

\paragraph{Correctness of the Learning Task.} The following theorem captures the correctness of the automaton learning task.
\begin{theorem}
	Given an automaton learning task $T_\mathcal{A}=\langle U, \mathcal{O},u_0,u_A,u_R,\Lambda_{L,\mathcal{O}},\kappa\rangle$, an automaton $\mathcal{A}$ is a solution of $T_\mathcal{A}$ if and only if $M_\varphi(\mathcal{A})$ is an inductive solution of $M(T_\mathcal{A})=\langle B, S_M, \langle E^+,\emptyset\rangle\rangle$. 
	\label{theorem:first_theorem}
\end{theorem}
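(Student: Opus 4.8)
The plan is to prove the biconditional by reducing it to the already-established correctness of the ASP encoding, Proposition~\ref{prop:asp_correctness}. The key observation is that an automaton $\mathcal{A}$ being a solution of $T_\mathcal{A}$ is \emph{defined} as $\mathcal{A}$ being valid with respect to every trace in $\Lambda_{L,\mathcal{O}}$, and an inductive solution $M_\varphi(\mathcal{A})$ of $M(T_\mathcal{A})$ is \emph{defined} as $B \cup M_\varphi(\mathcal{A})$ accepting every positive example (there are no negative examples). Since both notions are conjunctions over the same index set (one clause per trace $\lambda_{L,\mathcal{O}}$), it suffices to prove the per-trace equivalence: for each $\lambda_{L,\mathcal{O}} \in \Lambda^*_{L,\mathcal{O}}$ with $\ast \in \{G,D,I\}$, the automaton $\mathcal{A}$ is valid with respect to $\lambda_{L,\mathcal{O}}$ if and only if $B \cup M_\varphi(\mathcal{A})$ accepts the corresponding example $\langle e^*, M(\lambda_{L,\mathcal{O}})\rangle$. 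The overall biconditional then follows immediately by quantifying over all traces.

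First I would unpack the program $B \cup M_\varphi(\mathcal{A})$ and identify it with the program $P$ appearing in Proposition~\ref{prop:asp_correctness}. By the definition of the background knowledge, $B = B_U \cup R$, where $B_U = \{\mathtt{state}(u). \mid u \in U\} = M_U(\mathcal{A})$; together with the learned hypothesis $M_\varphi(\mathcal{A})$ this gives $B \cup M_\varphi(\mathcal{A}) = M(\mathcal{A}) \cup R$. Adjoining the context $M(\lambda_{L,\mathcal{O}})$ of the example yields exactly $P = M(\mathcal{A}) \cup R \cup M(\lambda_{L,\mathcal{O}})$, the program covered by Proposition~\ref{prop:asp_correctness}. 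I would then need to check that the hypothesis space $S_M$ is rich enough that any solution automaton's rules $M_\varphi(\mathcal{A})$ lie inside $S_M$, and conversely that any hypothesis $H \subseteq S_M$ corresponds to the logical transition function of a well-defined subgoal automaton; this is the translation between the two syntactic worlds and should follow from inspecting the forms of $S_M$ and Definition~\ref{def:asp_subgoal_automata}.

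Next I would invoke Proposition~\ref{prop:asp_correctness} to conclude that $P$ has a \emph{unique} answer set $AS$, in which $\mathtt{accept} \in AS \iff \ast = G$ and $\mathtt{reject} \in AS \iff \ast = D$. The acceptance test for the example then reduces to checking the inclusions and exclusions of $e^*$ against this single answer set. For $\ast = G$ we have $e^G = \langle\{\mathtt{accept}\},\{\mathtt{reject}\}\rangle$, which is accepted iff $\mathtt{accept} \in AS$ and $\mathtt{reject} \notin AS$; by the proposition this holds exactly when $\ast = G$ and validity of $\mathcal{A}$ on a goal trace is precisely $\mathcal{A}$ accepting it. The analogous bookkeeping for $e^D$ and $e^I$ (the incomplete case requiring both $\mathtt{accept}$ and $\mathtt{reject}$ to be absent) matches Definition~\ref{def:valid_trace} line by line. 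Because the answer set is unique, there is no subtlety about whether \emph{some} answer set witnesses acceptance versus whether \emph{the} answer set does, so the CDPI acceptance condition collapses cleanly to membership tests.

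The main obstacle I anticipate is not any deep argument but rather getting the syntactic correspondence between automata and hypotheses airtight, and confirming that uniqueness of the answer set is genuinely guaranteed so that the existential quantifier in the definition of CDPI acceptance coincides with a universal one. Concretely I must verify that $\mathcal{A}$ being a \emph{deterministic} subgoal automaton (as required by Definition~\ref{def:logical_transition_function} and Equation~\ref{eq:delta_from_phi}) is compatible with $M_\varphi(\mathcal{A}) \subseteq S_M$, and that determinism is what underwrites the uniqueness claimed in Proposition~\ref{prop:asp_correctness}; this is the one place where the correspondence could fail if $S_M$ admitted rule sets encoding nondeterministic transition functions. Once that is pinned down, the remainder is routine case-checking of the three partial interpretations $e^G, e^D, e^I$ against the unique answer set, and the final quantification over all traces to assemble the full biconditional.
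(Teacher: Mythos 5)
Your proposal is correct and takes essentially the same route as the paper's proof: both unfold the two definitions into per-trace conditions, identify $B \cup M_\varphi(\mathcal{A}) \cup M(\lambda_{L,\mathcal{O}})$ with the program $P$ of Proposition~\ref{prop:asp_correctness}, and use the uniqueness of its answer set to match the partial interpretations $e^G$, $e^D$, $e^I$ against Definition~\ref{def:valid_trace}. The one point where you are more cautious than necessary is the converse syntactic correspondence (that every $H \subseteq S_M$ encodes a well-defined deterministic automaton): the theorem quantifies only over automata $\mathcal{A}$ and their encodings $M_\varphi(\mathcal{A})$, so this direction is not needed, which is why the paper's proof silently omits it (determinism of arbitrary learned hypotheses is instead handled separately by the constraints of Section~\ref{sec:determinism}).
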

\begin{proof}
	Assume $\mathcal{A}$ is a solution of $T_\mathcal{A}$.
	
	$\iff$ $\mathcal{A}$ is valid with respect to all traces in $\Lambda_{L,\mathcal{O}}$ (i.e., $\mathcal{A}$ accepts all traces in $\Lambda^G_{L,\mathcal{O}}$, rejects all traces in $\Lambda^D_{L,\mathcal{O}}$ and does not accept nor reject any trace in $\Lambda^I_{L,\mathcal{O}}$).
	
	$\iff$ By Proposition~\ref{prop:asp_correctness}, for each trace $\lambda^*_{L,\mathcal{O}}\in\Lambda^*_{L,\mathcal{O}}$ where $\ast \in \lbrace G,D,I\rbrace$, $M(\mathcal{A}) \cup R \cup M(\lambda_{L,\mathcal{O}}^\ast)$ has a unique answer set $AS$ and (1) $\mathtt{accept}\in AS$ if and only if $\ast = G$, and (2) $\mathtt{reject} \in AS$ if and only if $\ast = D$.
	
	$\iff$ For each example $e \in E^+$, $R\cup M(\mathcal{A})$ accepts $e$.
	
	$\iff$ For each example $e \in E^+$, $B\cup M_\varphi(\mathcal{A})$ accepts $e$ (the two programs are identical).
	
	$\iff M_\varphi(\mathcal{A})$ is an inductive solution of $M(T_\mathcal{A})$.
\end{proof}

\paragraph{Optimization.} To make the formalization simpler, we have always included $u_A$ and $u_R$ in the set of automaton states $U$ of the automaton learning task $T_\mathcal{A}$. However, in practice, $u_A$ is not included in $U$ when the set of goal traces $\Lambda^G_{L,\mathcal{O}}$ is empty. Likewise, $u_R$ is not included in $U$ when the set of dead-end traces $\Lambda^D_{L,\mathcal{O}}$ is empty. Removing these states when they are not needed is helpful to make the hypothesis space smaller.

\section{Verification of Structural Properties}
\label{sec:structural_properties}
The formalization of the automaton learning task described in Section~\ref{sec:learn_subgoal_automata_from_traces} induces non-deterministic automata. That is, there can exist an observation that simultaneously satisfies two formulas labeling outgoing edges to two different states. In order to comply with the definition of the logical transition function (see Definition~\ref{def:logical_transition_function}, p.~\pageref{def:logical_transition_function}), we need to constrain it to be deterministic. The automaton states act as a proxy for determining what is the current level of completion of the task; in other words, which subgoals have been achieved. If the automaton is non-deterministic, several different levels of completion can be active simultaneously, which is impractical from the perspective of an RL agent. These agents need to know exactly in which stage they are in order to make an appropriate decision.

Determinism is not the only property we can impose. Note that an automaton can be easily transformed into an equivalent one by rearranging its state and edge identifiers. Therefore, ILASP can visit multiple symmetric automata during the search for a solution that covers the examples. By avoiding revisiting parts of the search space, the automaton learning time can be greatly reduced.

In this section we introduce rules that allow us to verify that a given automaton is deterministic and complies with specific properties that characterize a canonical representation (e.g., a criteria for naming the automaton states). Crucially, these verification rules can also be used to enforce these properties during search.

Both the determinism and symmetry properties are related to the automaton structure; that is, the edges and formulas that label them. The ASP representation $M(\mathcal{A})$ of a subgoal automaton $\mathcal{A}$ given in Definition \ref{def:asp_subgoal_automata} represents the formulas on the edges as rules. Consequently, we cannot impose constraints over them easily. To solve this problem, we map the $\bar{\varphi}$ rules in $M(\mathcal{A})$ to facts of the form $\mathtt{pos}(u,u',i,o)$ and $\mathtt{neg}(u,u',i,o)$. These facts express that observable $o\in\mathcal{O}$ appears positively (resp. negatively) in the edge $i$ from state $u$ to state $u'$. Formally, given the ASP encoding $M(\mathcal{A})$ of automaton $\mathcal{A}$, its corresponding mapping into facts $F(M(\mathcal{A}))$ is defined as follows:
\begin{equation*}
F(M(\mathcal{A}))=\left\lbrace\begin{array}{@{}l|l@{}}
\mathtt{state}(u).                                                         & \mathtt{state}(u).\\
\mathtt{ed}(u,u',i).                                                       & \mathtt{ed}(u,u',i).\\
\mathtt{pos}(u,u',i,o_1).                                                  & \bar{\varphi}(u,u',i,\mathtt{T) \codeif not~obs(}o_1\mathtt{,T), step(T).} \\
\multicolumn{1}{c|}{\vdots}                                                & \multicolumn{1}{c}{\vdots}  \\
\mathtt{pos}(u,u',i,o_n).                                                  & \bar{\varphi}(u,u',i,\mathtt{T) \codeif not~obs(}o_n\mathtt{,T), step(T).} \\
\mathtt{neg}(u,u',i,o_{n+1}).                                              & \bar{\varphi}(u,u',i,\mathtt{T) \codeif obs(}o_{n+1}\mathtt{,T), step(T).} \\
\multicolumn{1}{c|}{\vdots}                                                & \multicolumn{1}{c}{\vdots}    \\
\mathtt{neg}(u,u',i,o_{m}).                                                & \bar{\varphi}(u,u',i,\mathtt{T) \codeif obs(}o_m,\mathtt{T), step(T).} 
\end{array}\right\rbrace.
\end{equation*}
Note that the right hand side of the set corresponds to a given $M(\mathcal{A})$. The facts $\mathtt{state}$ and $\mathtt{ed}$ are replicated, while the rules $\bar{\varphi}$ are transformed into $\mathtt{pos}$ and $\mathtt{neg}$ facts.\footnote{Note that we could have learned the factual representation $F(M(\mathcal{A}))$ instead of the one based on normal rules $M(\mathcal{A})$. We opted for the latter since it requires less grounding and can be potentially used to represent more complex automata in future work (see Section~\ref{sec:conclusions}).} Given this factual representation of a subgoal automaton, we can now define constraints for enforcing determinism and a canonical structure over these facts.

\begin{example}
	The set of facts below represents the formulas on the edges of the automaton in Figure~\ref{fig:officeworld_coffee_rm} (p.~\pageref{fig:officeworld_coffee_rm}) generated from the logical transition function in Example~\ref{ex:asp_rules_automaton} (p.~\pageref{ex:asp_rules_automaton}).
	\begin{equation*}
	\begin{Bmatrix*}[l]
	\mathtt{pos}(u_0,u_1,1,\text{\Coffeecup}). & \mathtt{neg}(u_0,u_1,1,o).\\
	\mathtt{pos}(u_0,u_A,1,\text{\Coffeecup}). & \mathtt{pos}(u_0,u_A,1,o).\\
	\mathtt{pos}(u_0,u_R,1,\ast).              & \mathtt{neg}(u_0,u_R,1,\text{\Coffeecup}).\\
	\mathtt{pos}(u_1,u_A,1,o).\\
	\mathtt{pos}(u_1,u_R,1,\ast).              & \mathtt{neg}(u_1,u_R,1,o).
	\end{Bmatrix*}
	\end{equation*}
	
\end{example}

Verifying that a learned automaton complies with a set of structural properties can be done over its factual encoding. That is, those automata that violate the properties are discarded as solutions to the automaton learning task. However, this is clearly computationally expensive. ILASP allows the above mapping to be included in the learning task through meta-program injection~\shortcite{LawRB18}. This enables the learner to verify the properties during the search for an automaton, which effectively shrinks the search space and speeds up automaton learning.

In what follows, we use the factual representation above to encode the determinism constraints (Section~\ref{sec:determinism}) and a canonical structure for breaking symmetries (Section~\ref{sec:symmetry_breaking}).

\subsection{Determinism}
\label{sec:determinism}
As described in Section~\ref{sec:subgoal_automata_def}, a logical transition function is deterministic if no observation can satisfy two outgoing formulas to two different states simultaneously. As these formulas are conjunctions of literals,  two formulas are mutually exclusive (i.e., cannot be both satisfied at the same time) if and only if an observable appears positively in the first formula and negatively in the second (or vice versa). The set of rules below encodes this definition by means of the $\mathtt{mutex(X,Y,EY,Z,EZ)}$ predicate which indicates that the formula on the edge from $\mathtt{X}$ to $\mathtt{Y}$ with index $\mathtt{EY}$ is mutually exclusive with the edge $\mathtt{X}$ to $\mathtt{Z}$ with index $\mathtt{EZ}$. The first and second rules specify that two outgoing edges from state $\mathtt{X}$ to two different states ($\mathtt{Y}$ and $\mathtt{Z}$) are mutually exclusive if an observable $\mathtt{O}$ appears positively in one edge and negatively in the other.\footnote{The comparison $\mathtt{Y\mathord{<}Z}$ is done instead of $\mathtt{Y!\mathord{=}Z}$ for efficiency purposes. Note that both comparisons are equivalent in this context. The former imposes a lexicographical order to evaluate the rules and thus avoids reevaluating the expression when $\mathtt{Y}$ and $\mathtt{Z}$ are interchanged.} The third rule enforces edges from a state $\mathtt{X}$ to two different states $\mathtt{Y}$ and $\mathtt{Z}$ to be mutually exclusive.
\begin{equation*}
\begin{Bmatrix*}[l]
\mathtt{mutex(X, Y, EY, Z, EZ) \codeif pos(X, Y, EY, O), neg(X, Z, EZ, O), Y\mathord{<}Z.}\\
\mathtt{mutex(X, Y, EY, Z, EZ) \codeif neg(X, Y, EY, O), pos(X, Z, EZ, O), Y\mathord{<}Z.}\\
\mathtt{\codeif not~mutex(X, Y, EY, Z, EZ), ed(X, Y, EY), ed(X, Z, EZ), Y\mathord{<}Z.}
\end{Bmatrix*}
\end{equation*}

\subsection{Symmetry Breaking}
\label{sec:symmetry_breaking}
In this section we describe rules which aim to enforce that the induced automata follow a canonical structure in order to make the search for solution faster by breaking symmetries. 

There are several types of symmetries we are interested in breaking. Firstly, any two states except for $u_0$, $u_A$ and $u_R$ are interchangeable. For example, Figure~\ref{fig:isomorphishms} shows two automata whose states $u_1,u_2$ and $u_3$ can be used interchangeably.
\begin{figure}[t]
	\centering
	\subfloat[\textsc{VisitABCD}\label{fig:isomorphisms_patrol}]{
		\resizebox{0.42\columnwidth}{!}{
			\begin{tikzpicture}[shorten >=1pt,node distance=3cm,on grid,auto]
			\node[state,initial] (u_0)   {$u_0$};
			\node[state,fill=lightgray] (u_1) [right =of u_0] {$u_1$};
			\node[state,fill=lightgray] (u_2) [below =of u_1] {$u_2$};
			\node[state,fill=lightgray] (u_3) [below =of u_2] {$u_3$};
			\node[state,accepting] (u_A) [left =of u_3] {$u_A$};
			\node[state] (u_R) [left =of u_2] {$u_R$};
			
			\path[->] (u_0) edge node[in place] {$A \land \neg \ast$} (u_1);
			\path[->] (u_1) edge node[in place] {$B \land \neg \ast$} (u_2);
			\path[->] (u_2) edge node[in place] {$C \land \neg \ast$} (u_3);
			\path[->] (u_3) edge node[in place] {$D \land \neg \ast$} (u_A);
			
			\path[->] (u_0) edge node [in place] {$\ast$} (u_R);
			\path[->] (u_1) edge node [in place] {$\ast$} (u_R);
			\path[->] (u_2) edge node [in place] {$\ast$} (u_R);
			\path[->] (u_3) edge node [in place] {$\ast$} (u_R);
			\end{tikzpicture}
		}
	}
	\hfill
	\subfloat[\textsc{CoffeeMail}\label{fig:isomorphisms_coffee_mail}]{
		\centering
		\resizebox{0.53\columnwidth}{!}{
			\begin{tikzpicture}[shorten >=1pt,node distance=3.2cm,on grid,auto]
			\node[state,initial] (u_0)   {$u_0$};
			\node[state,accepting] (u_A) [below =of u_0] {$u_A$};
			\node[state,fill=lightgray] (u_1) [left =of u_A] {$u_1$};
			\node[state,fill=lightgray] (u_2) [right =of u_A] {$u_2$};
			\node[state,fill=lightgray] (u_3) [below =of u_A] {$u_3$};
			
			\path[->] (u_0) edge[bend right=20] node [swap] {$\text{\Coffeecup} \land \neg\text{\Letter}$} (u_1);
			\path[->] (u_0) edge[bend left=20] node {$\neg\text{\Coffeecup} \land \text{\Letter}$} (u_2);
			\path[->] (u_0) edge[bend right, below] node[in place,pos=0.6] {$\text{\Coffeecup} \land \text{\Letter} \land \neg o$} (u_3);
			\path[->] (u_0) edge node {$\text{\Coffeecup} \land \text{\Letter} \land o$} (u_A);
			
			\path[->] (u_1) edge[bend right=20] node [swap] {$\text{\Letter} \land \neg o$} (u_3);
			\path[->] (u_1) edge[above] node [swap] {$\text{\Letter} \land o$} (u_A);
			
			\path[->] (u_2) edge[bend left=20] node {$\text{\Coffeecup} \land \neg o$} (u_3);
			\path[->] (u_2) edge[above] node {$\text{\Coffeecup} \land o$} (u_A);
			
			\path[->] (u_3) edge node [swap] {$o$} (u_A);
			\end{tikzpicture}
		}
	}
	\caption{Minimal subgoal automata for two \textsc{OfficeWorld} tasks. Self-loops and transitions to the rejecting state $u_R$ in (b) are omitted for simplicity. The shaded states can be interchanged in the absence of symmetry breaking.
	}
	\label{fig:isomorphishms}
\end{figure}
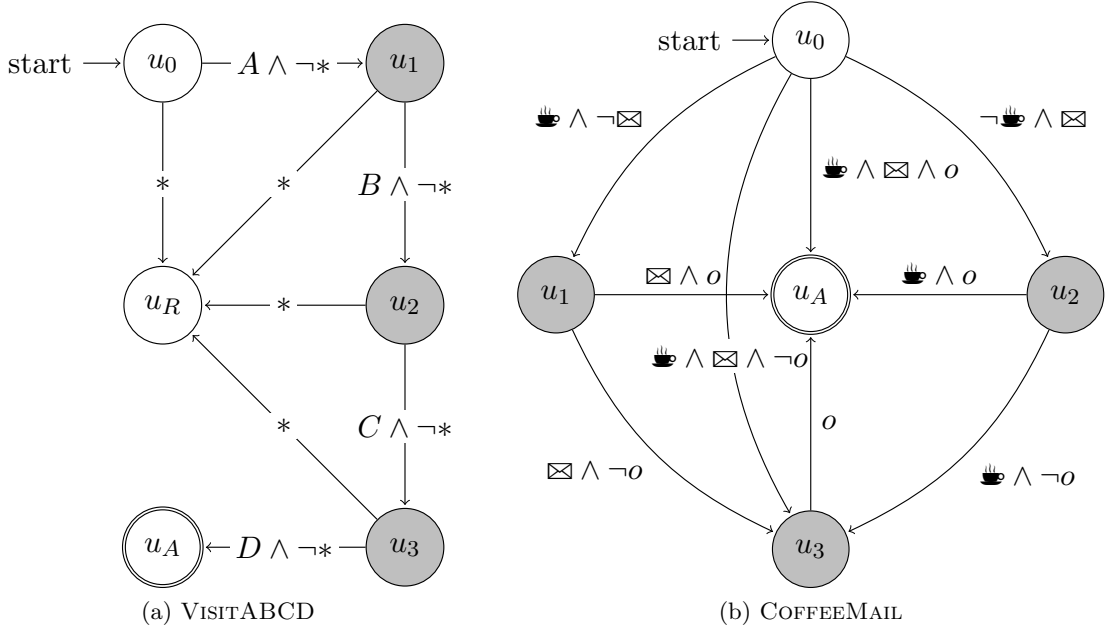
Secondly, there can also be symmetries in the automaton edges. The inductive solution to the automaton learning task $M(T_\mathcal{A})$ contains transition rules whose edge indices range between 1 and $\kappa$ (the maximum number of edges from one state to another). For instance, if $\kappa=2$ a potentially learned representation of the \textsc{Coffee} automaton (see Figure~\ref{fig:officeworld_coffee_rm}, p.~\pageref{fig:officeworld_coffee_rm}) is shown below. Note that edges can arbitrarily be labeled 1 or 2 even though there is a single edge between every pair of states.
\begin{equation*}
\begin{Bmatrix*}[l]
\mathtt{ed}(u_0, u_1, 2).~\mathtt{ed}(u_0, u_A, 1).~\mathtt{ed}(u_0, u_R, 2).                   &\mathtt{ed}(u_1, u_A, 1).~\mathtt{ed}(u_1, u_R, 2).\\
\bar{\varphi}(u_0, u_1, 2, \mathtt{T}) \codeif \mathtt{not~obs(\text{\Coffeecup}, T), step(T).} &\bar{\varphi}(u_0, u_1, 2, \mathtt{T) \codeif obs(}o\mathtt{, T), step(T).}\\
\bar{\varphi}(u_0, u_A, 1, \mathtt{T) \codeif not~obs(\text{\Coffeecup}, T), step(T).}          &\bar{\varphi}(u_0, u_A, 1, \mathtt{T}) \codeif \mathtt{not~obs(}o\mathtt{, T), step(T).}\\
\bar{\varphi}(u_0, u_R, 2, \mathtt{T}) \codeif \mathtt{not~obs(\ast, T), step(T).}              &\bar{\varphi}(u_0, u_R, 2, \mathtt{T}) \codeif \mathtt{obs(\text{\Coffeecup}, T), step(T).}\\
\bar{\varphi}(u_1, u_A, 1, \mathtt{T) \codeif not~obs(}o\mathtt{, T), step(T).}\\
\bar{\varphi}(u_1, u_R, 2, \mathtt{T) \codeif not~obs(\ast, T), step(T).}                       &\bar{\varphi}(u_1, u_R, 2, \mathtt{T) \codeif obs(}o\mathtt{, T), step(T).}  
\end{Bmatrix*}.
\end{equation*}
Finally, the indices of two edges between the same pair of states can be also interchanged.

The idea of our symmetry breaking mechanism is to impose a unique assignment of state and edge indices given a labeling of the automaton edges. In Section~\ref{sec:symmetry_breaking_method} we propose a symmetry breaking mechanism for a particular class of labeled directed graphs, whereas in Section~\ref{sec:symmetry_breaking_application} we explain how this method applies to subgoal automata, which can be represented as graphs in this class.

\subsubsection{Graph Indexing}
\label{sec:symmetry_breaking_method}
In this section we propose a symmetry breaking mechanism for a particular class of labeled directed graphs. 

Let $\mathcal{L}=\{l_1,\ldots,l_k\}$ be a set of \emph{labels}, and let $G=(V,E)$ be a labeled directed graph with a set of nodes $V = \{v_1,\ldots,v_n\}$ and a set of edges $E$. Each edge in $E$ is of the form $(u,v,L)$, where $u,v\in V$ are the two connected nodes, and $L\subseteq \mathcal{L}$ is a subset of labels. For each node $u \in V$, let $E^o(u) = \{(v,w,L) \in E \mid u = v\}$ be the set of outgoing labeled edges from $u$, and let $E^i(u) = \{(v,w,L) \in E \mid u = w\}$ be the set of incoming labeled edges.

We define a class $\mathcal{G}$ of labeled directed graphs by imposing three assumptions:
\begin{assumption}
	\label{assump:sym_starting_node}
	The node $v_1$ is a designated start node.
\end{assumption}
\begin{assumption}
	\label{assump:sym_reachability}
	Each node $u \in V \setminus \{v_1\}$ is reachable on a directed path from $v_1$.
\end{assumption}
\begin{assumption}
	\label{assump:sym_unique_label_sets}
	Outgoing label sets from each node are unique, i.e.~for each $u \in V$ and label set $L \subseteq \mathcal{L}$ there is at most one edge $(u,v,L)\in E^o(u)$.
\end{assumption}
As a consequence of Assumption~\ref{assump:sym_reachability}, it holds that $|E^i(u)| \geq 1$ for each $u \in V \setminus \{v_1\}$.

\begin{example}
	The following figure is a labeled directed graph $G=\langle V,E\rangle$ that belongs to class $\mathcal{G}$, where $V=\{v_1,\ldots,v_5\}$. The set of labels is $\mathcal{L}=\{a,b,c,d,e,f\}$.
	\begin{center}
		\begin{tikzpicture}[shorten >=1pt,node distance=2.5cm,on grid,auto]
		\node[state,initial] (u_1)   {$v_1$};
		\node[state] (u_3) [above right =of u_1] {$v_5$};
		\node[state] (u_2) [below right =of u_1] {$v_4$};
		\node[state] (u_4) [above right =of u_2] {$v_3$};	
		\node[state] (u_5) [right =of u_4] {$v_2$};	
		
		\path[->] (u_1) edge node {$\{a,f\}$} (u_3);
		\path[->] (u_1) edge node[swap] {$\{b,e\}$} (u_2);
		\path[->] (u_1) edge node {$\{a,b\}$} (u_4);
		\path[->] (u_3) edge node {$\{b\}$} (u_4);
		\path[->] (u_2) edge node[swap] {$\{a\}$} (u_4);
		\path[->] (u_4) edge[bend left] node {$\{c\}$} (u_5);
		\path[->] (u_4) edge[bend right] node[swap] {$\{d\}$} (u_5);
		\end{tikzpicture}
	\end{center}
	\label{ex:labeled_graph}
\end{example}

\paragraph{Label Set Ordering.} Given a set of labels $\mathcal{L}=\{l_1,\ldots,l_k\}$, we impose a total order on label sets as follows.

\begin{definition}
	\label{def:observation_ordering}
	A label set $L\subseteq \mathcal{L}$ is lower than a label set $L'\subseteq \mathcal{L}$, denoted $L<L'$, if there exists a label $l_{1\leq m \leq k} \in \mathcal{L}$ such that
	\begin{enumerate}
		\item $l_m \notin L$ and $l_m \in L'$, and
		\item there is not a label $l_{m'\mid m'<m}$ such that $l_{m'} \in L$ and $l_{m'} \notin L'$.
	\end{enumerate}
\end{definition}
A label set $L \subseteq \mathcal{L}$ can be mapped into a binary string $B(L)=\{0,1\}^k$ where $B_m(L)=1$ if $l_m\in L$ and 0 otherwise. Note that $B_m(L)$ denotes the $m$-th digit in  $B(L)$. Then, a label set $L$ is lower than another label set $L'$ if its binary representation $B(L)$ is lexicographically lower than $B(L')$.
\begin{example}
	Given the label set $\mathcal{L}=\{a,b,c,d,e,f\}$, the following inequalities between some of its subsets hold. Note that the second column contains the corresponding binary representations of the sets.
	\begin{equation*}
	\begin{matrix*}[c]%
	\{\} &<& \{a,d,f\}   & (000000 < 100101),\\
	\{d\} &<& \{c\} & (000100 < 001000), \\
	\{b,e\} &<& \{a, f\} & (010010 < 100001),\\
	\{a,f\} &<& \{a,b\} & (100001 < 110000).
	\end{matrix*}
	\end{equation*}
	\label{ex:observation_comparison}
\end{example}

\paragraph{Graph Indexing.} Given a graph $G\in\mathcal{G}$, we next introduce a {\em graph indexing} that assigns unique integers to each node in $V$ and, for each node $u\in V$, to each outgoing edge in $E^o(u)$. Formally, a graph indexing is a tuple $\mathcal{I}(G) = \langle f,\{\Gamma_u\}_{u\in V} \rangle$ of bijections defined as
\begin{align*}
f &: V \rightarrow \{1,\ldots,|V|\} \;\; \mathrm{s.t.} \;\; f(v_1) = 1,\\
\Gamma_u &: E^o(u) \rightarrow \{1,\ldots,|E^o(u)|\}, \;\; \forall u \in V.
\end{align*}
Hence a graph indexing always assigns 1 to the designated start node $v_1$. Since outgoing label sets are unique due to Assumption~\ref{assump:sym_unique_label_sets}, we use $\Gamma_u(L)$ as shorthand for $\Gamma_u(u,v,L)$.

Given a graph indexing $\mathcal{I}(G)$, we introduce an associated {\em parent function} $\Pi_\mathcal{I}:V \setminus \{v_1\} \to \{1,\ldots,|V|\} \times \mathbb{N}$ from nodes (excluding the start node $v_1$) to pairs of integers, defined as
\begin{equation*}
\Pi_\mathcal{I}(v)=\min_{(u,v,L)\in E^i(v)} (f(u), \Gamma_u(L)).
\end{equation*}
Here, the minimum is with respect to a lexicographical ordering of integer pairs. Hence $\Pi_\mathcal{I}(v)=(i,e)$ is the smallest integer $i$ assigned to any node on an incoming edge to $v$ and, in the case of ties, the smallest integer $e$ on such an edge. Note that the parent function is well-defined since $|E^i(v)|\geq 1$ for each $v\in V\setminus\{v_1\}$ due to the reachability assumption.

We are particularly interested in the graph indexing that corresponds to a breadth-first search (BFS) traversal of the graph $G$, which we proceed to define.
\begin{definition}\label{def:bfs}
	A graph indexing $\mathcal{I}(G)$ is a {\em BFS traversal} if the following conditions hold:
	\begin{enumerate}
		\item For each pair of nodes $u$ and $v$ in $V\setminus\{v_1\}$, $\Pi_\mathcal{I}(u) < \Pi_\mathcal{I}(v) \Leftrightarrow f(u) < f(v)$.
		\item For each node $u\in V$ and each pair of outgoing edges $(u,v,L)$ and $(u,v',L')$ in $E^o(u)$, $L<L' \Leftrightarrow \Gamma_u(L)<\Gamma_u(L')$. 
	\end{enumerate}
\end{definition}
Due to the second condition in Definition~\ref{def:bfs}, the bijection $\Gamma_u$ of each node $u\in V$ clearly orders outgoing edges by their label sets. Due to the first condition, the bijection $f$ orders node $u$ before node $v$ if the parent function of $u$ is smaller than that of $v$. In a BFS traversal from $v_1$, nodes are processed in the order they are first visited. In this context, the parent function identifies the edge used to visit a node for the first time. Together, these facts imply that $f$ assigns integers to nodes in the order they are visited by a BFS traversal from $v_1$, given that the label set ordering is used to break ties among edges. This BFS traversal can be characterized by a BFS subtree whose edges are defined by the parent function.

\begin{example}
	The figure below shows a graph indexing for the graph in Example~\ref{ex:labeled_graph}, with nodes and edges labeled by their assigned integer. This graph indexing is a BFS traversal since nodes are ordered according to their distance from the start node $v_1$, and since the edge integers used to break ties are consistent with the label set ordering shown in Example~\ref{ex:observation_comparison}. The parent function is given by $\Pi_\mathcal{I}(v_2)=(4,1)$, $\Pi_\mathcal{I}(v_3)=(1,3)$, $\Pi_\mathcal{I}(v_4)=(1,1)$, and $\Pi_\mathcal{I}(v_5)=(1,2)$, and the corresponding BFS subtree appears in bold.
	\begin{center}
		\begin{tikzpicture}[shorten >=1pt,node distance=2.5cm,on grid,auto]
		\node[state,initial] (u_1)   {$1: v_1$};
		\node[state] (u_3) [above right =of u_1] {$3: v_5$};
		\node[state] (u_2) [below right =of u_1] {$2: v_4$};
		\node[state] (u_4) [above right =of u_2] {$4: v_3$};	
		\node[state] (u_5) [right =of u_4] {$5: v_2$};
		
		\path[->, line width=2pt] (u_1) edge node {$2:\{a,f\}$} (u_3);
		\path[->, line width=2pt] (u_1) edge node[swap] {$1:\{b,e\}$} (u_2);
		\path[->, line width=2pt] (u_1) edge node {$3:\{a,b\}$} (u_4);
		\path[->] (u_3) edge node {$1:\{b\}$} (u_4);
		\path[->] (u_2) edge node[swap] {$1:\{a\}$} (u_4);
		\path[->] (u_4) edge[bend left] node {$2:\{c\}$} (u_5);
		\path[->, line width=2pt] (u_4) edge[bend right] node[swap] {$1:\{d\}$} (u_5);
		\end{tikzpicture}
	\end{center}
	\label{ex:labeled_graph_with_edge_indices}
\end{example}

The important property that we exploit about BFS traversals is that they are unique, which we prove in Lemma~\ref{lemma:unique}. To prove it, we use the result in Proposition~\ref{prop:unique} when BFS is applied to any kind of directed graph where all nodes are reachable.

\begin{proposition}
	If BFS visits the neighbors of each node in a fixed order, the resulting tree is unique.
	\label{prop:unique}
\end{proposition}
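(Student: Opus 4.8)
The plan is to argue that once the order in which neighbors are visited is fixed, BFS becomes a fully deterministic procedure, so it can produce only one tree. Concretely, I would model BFS by its standard queue-based formulation: a FIFO queue initialized with the start node, a set of discovered nodes, and, for every node other than the start node, a recorded tree edge (the edge along which the node was first reached). The only point at which a textbook BFS makes a choice is the order in which it scans the neighbors of the node currently being processed, and by hypothesis this order is fixed. I would therefore show, by induction on the number of dequeue operations performed, that the entire configuration of the algorithm---the contents and order of the queue, the set of discovered nodes, and the tree edges assigned so far---is uniquely determined at every step.

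For the induction, the base case is immediate: before any dequeue the queue contains exactly the start node, which is the only discovered node. For the inductive step, assume the configuration after $k$ dequeues is uniquely determined. The $(k+1)$-th node to be processed is the node at the front of the FIFO queue, hence uniquely determined; scanning its neighbors in the fixed order enqueues precisely the currently-undiscovered neighbors in a uniquely determined order, and assigns each of them the tree edge from the processed node, while no other node changes status. Thus the configuration after $k+1$ dequeues is again uniquely determined, closing the induction. Since reachability guarantees that every node is eventually discovered, and each non-root node is assigned its tree edge exactly once---at the moment of its first (and only) discovery---the resulting collection of tree edges, and hence the BFS tree, is uniquely determined, so any two BFS executions respecting the fixed neighbor order produce identical trees.

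The main obstacle, and the place where the fixed-order hypothesis is genuinely needed, is showing that the \emph{discovery order} of the nodes is itself unique: the tree edge of a node is fixed by which edge first discovers it, and this depends both on the FIFO dequeue order and on the neighbor-scan order. The induction handles this by carrying the whole queue (not merely the set of discovered nodes) through the inductive invariant, so that the FIFO order---which governs all subsequent processing---is preserved at each step. A secondary point to keep in view is that \emph{uniqueness} here means literal equality of the edge sets (equivalently, identical parent assignments), rather than uniqueness up to isomorphism; taking the full configuration, including queue order, as the inductive invariant makes this equality transparent and avoids any appeal to graph automorphisms.
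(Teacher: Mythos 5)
Your proof is correct, but it takes a different route from the paper's. The paper argues by contradiction: if two distinct BFS trees $T$ and $T'$ arose from the same visitation criteria, they would disagree on the parent of some node $v$, say via edges $(u,v)\in T$ and $(u',v)\in T'$, forcing $u$ to be visited before $u'$ in one execution and $u'$ before $u$ in the other, contradicting the fixed criteria. You instead prove determinism directly, by induction on the number of dequeue operations, carrying the entire algorithm configuration (FIFO queue with its order, discovered set, assigned tree edges) as the invariant. Your argument is longer but arguably tighter: the paper's contradiction quietly assumes that the two executions agree on the visitation order up to the point where $v$ is discovered, and making that precise requires exactly the kind of first-divergence or full-configuration induction you carry out explicitly. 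What the paper's approach buys is brevity — it isolates the single combinatorial fact (a node's parent is decided by who reaches it first) without formalizing the algorithm's state; what yours buys is a self-contained proof in which the uniqueness of the discovery order, the crux of the matter, is established rather than presupposed, and in which uniqueness is transparently literal equality of parent assignments rather than anything up to isomorphism.
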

\begin{proof}
	By contradiction. Assume that two different BFS trees, $T$ and $T'$, are produced using the same visitation criteria. Then $T$ contains an edge $(u,v)$ that is not in $T'$, and $T'$ contains an edge $(u',v)$ that is not in $T$. In the case of $T$, this means that $u$ was visited before $u'$. Analogously, $u'$ was visited before $u$ to produce $T'$. Therefore, the visitation criteria is different for each of the BFS trees. This is a contradiction.
\end{proof}

\begin{lemma}\label{lemma:unique}
	Each graph $G\in\mathcal{G}$ has a unique associated BFS traversal $\mathcal{I}(G)$. 
\end{lemma}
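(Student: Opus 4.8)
The plan is to prove existence and uniqueness separately, pinning down the two components $f$ and $\{\Gamma_u\}_{u\in V}$ of a BFS traversal in turn. First I would observe that the bijections $\Gamma_u$ are forced by the second condition of Definition~\ref{def:bfs}: since by Assumption~\ref{assump:sym_unique_label_sets} the outgoing label sets of $u$ are pairwise distinct, and the label-set ordering of Definition~\ref{def:observation_ordering} is total, there is exactly one bijection $\Gamma_u:E^o(u)\to\{1,\ldots,|E^o(u)|\}$ listing the outgoing edges in increasing label-set order. Hence all freedom in a BFS traversal lies in the node bijection $f$, and it suffices to show that $f$ exists and is unique once the $\Gamma_u$ are fixed.

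For existence I would make the connection to an explicit breadth-first search. With the $\Gamma_u$ fixed, each node has a canonical order in which to visit its out-neighbors, namely increasing $\Gamma_u$-index; running BFS from $v_1$ with this fixed rule reaches every node by Assumption~\ref{assump:sym_reachability} and produces a well-defined discovery order, and I set $f$ to be this order, so $f(v_1)=1$ by construction. To verify that $\langle f,\{\Gamma_u\}\rangle$ is a BFS traversal, the key observation is that the parent function $\Pi_\mathcal{I}(v)$ coincides with the discovery key of $v$: BFS discovers $v$ along the incoming edge $(u,v,L)$ minimising the pair $(\text{discovery index of }u,\Gamma_u(L))$, and I would check that this minimising edge is exactly the one selected by $\min_{(u,v,L)\in E^i(v)}(f(u),\Gamma_u(L))$. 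Because BFS processes nodes in discovery order, that order is precisely the order induced by these keys, which gives the first condition of Definition~\ref{def:bfs}; the second holds by the choice of $\Gamma_u$.

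For uniqueness I would show that any $f$ satisfying Definition~\ref{def:bfs} equals this discovery order, by induction on the value $k$ to prove that $f^{-1}(k)$ is the $k$-th node discovered by the canonical BFS (the base case being $f^{-1}(1)=v_1$). The inductive step rests on a separate fact that I would establish first: for every $v\neq v_1$, the edge minimising $(f(u),\Gamma_u(L))$ in $\Pi_\mathcal{I}(v)$ has source $u$ with $f(u)<f(v)$, so parents precede children. Given this, $\Pi_\mathcal{I}(v)$ for any not-yet-assigned node depends only on nodes with smaller already-assigned indices, so condition~1 forces $f^{-1}(k+1)$ to be the unassigned node reachable by the smallest-key edge, which is exactly the next BFS-discovered node; uniqueness of the underlying tree is underwritten by Proposition~\ref{prop:unique}. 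Two small facts make the ordering arguments clean and I would record them up front: $\Pi_\mathcal{I}$ is injective on $V\setminus\{v_1\}$, because a pair $(i,e)$ recovers the node $f^{-1}(i)$ and then, via the bijection $\Gamma_{f^{-1}(i)}$, a unique outgoing edge and hence a unique target; and the lexicographic order on integer pairs is total, so condition~1 is a genuine order isomorphism.

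I expect the main obstacle to be the ``parents precede children'' fact, because the first condition of Definition~\ref{def:bfs} is self-referential: $\Pi_\mathcal{I}$ is defined through $f$, yet it is meant to determine the ordering of $f$. I would untangle this by ruling out cycles in the parent relation $v\mapsto\arg\min_{(u,v,L)\in E^i(v)}(f(u),\Gamma_u(L))$: on any such cycle, take the node of smallest $f$-value and compare the parent functions of it and of its cycle-child, deriving via condition~1 a strictly smaller $f$-value elsewhere on the cycle and thus a contradiction. Reachability (Assumption~\ref{assump:sym_reachability}) guarantees the parent relation is everywhere defined and that iterating it eventually reaches $v_1$, which is what forbids such cycles; the only genuinely fiddly point is the bookkeeping for possible self-loops and for multiple parallel edges between the same pair of nodes, which I would dispatch as special cases of the same comparison.
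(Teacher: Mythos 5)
Your proposal is correct and takes essentially the same route as the paper's own proof: first pin down each $\Gamma_u$ uniquely via the total label-set order and Assumption~\ref{assump:sym_unique_label_sets}, then argue that with the $\Gamma_u$ fixed the node bijection $f$ must coincide with the BFS discovery order from $v_1$, with Proposition~\ref{prop:unique} underwriting uniqueness of the resulting tree. Where the paper disposes of the second step in one asserted line (``if $f$ orders nodes in any other way, there will always be two nodes \dots violating the condition''), you fill in the genuinely delicate point --- that condition~1 of Definition~\ref{def:bfs} is self-referential because $\Pi_\mathcal{I}$ is defined through $f$ --- via the parents-precede-children fact and the cycle-elimination comparison (your min-$f$-on-a-cycle argument is sound for cycles of length at least two, and the degenerate self-parent case, which your ``same comparison'' remark does not quite cover literally, can indeed be ruled out using Assumption~\ref{assump:sym_reachability} by considering where a path from $v_1$ first leaves the set of nodes with $f$-value at most $f(v)$); this is added rigor on the paper's path rather than a different proof.
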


\begin{proof}	
	Intuitively, the lemma holds because there is only one way to perform a BFS traversal from $v_1$, given that we use the label set ordering to break ties among edges.
	
	Formally, for each $u\in V$, since outgoing label sets are unique, there is a unique bijection $\Gamma_u$ that satisfies the second condition in Definition~\ref{def:bfs}. If $\Gamma_u$ orders outgoing edges in any other way, there will always be two outgoing edges $(u,v,L)$ and $(u,v',L')$ in $E^o(u)$ such that $L<L'$ and $\Gamma_u(L)>\Gamma_u(L')$, thus violating the condition.
	
	Then, if we fix the correct definition of $\Gamma_u$ for each $u\in V$, there is a unique bijection $f$ that satisfies the first condition in Definition~\ref{def:bfs}. To recover this bijection we can simply perform a BFS traversal from $v_1$, using the bijections $\Gamma_u$, $u\in V$, to break ties among edges. By Proposition~\ref{prop:unique}, this results in a unique BFS tree. If $f$ orders nodes in any other way, there will always be two nodes $u$ and $v$ in $V\setminus\{v_1\}$ such that $\Pi_\mathcal{I}(u)<\Pi_\mathcal{I}(v)$ and $f(u)>f(v)$, thus violating the condition.
\end{proof}

Appendix~\ref{app:sat_encoding_labeled_directed_graph} shows an encoding of our symmetry breaking mechanism in the form of a satisfiability (SAT) formula and formally prove several of its properties.

\subsubsection{Application to Subgoal Automata}
\label{sec:symmetry_breaking_application}
In this section we devise a symmetry breaking mechanism based on the graph indexing from the previous subsection. The key idea is to encode rules that force ILASP to generate a graph indexing which is also a BFS traversal. Since this graph indexing is unique due to Lemma~\ref{lemma:unique}, ILASP can only represent each graph in one way, precluding multiple symmetric variations. In fact, we could have used any unique graph indexing for this purpose.

We first show that a subgoal automaton $\mathcal{A}=\langle U, \mathcal{O}, \delta_\varphi, u_0, u_A, u_R \rangle$ is a special case of a label directed graph $G=\langle V,E\rangle$ in the class $\mathcal{G}$. The set of automaton states $U$ corresponds to the set of nodes $V$, while the logical transition function $\varphi$ corresponds to the set of edges $E$. Crucially, a subgoal automaton complies with all three assumptions we made about graphs in the class $\mathcal{G}$:
\begin{itemize}
	\item Assumption~\ref{assump:sym_starting_node} holds because $\mathcal{A}$ has an initial state $u_0$.
	\item Assumption~\ref{assump:sym_reachability} is enforced by the set of rules below. The first rule defines $u_0$ to be reachable, while the second rule indicates that a state is reachable if it has an incoming edge from a reachable state. Finally, the third rule enforces all states to be reachable.
	\begin{equation*}
	\begin{Bmatrix*}[l]
	\mathtt{reachable}(u_0).\\
	\mathtt{reachable(Y) \codeif reachable(X), ed(X, Y, \_).}\\
	\mathtt{\codeif not~reachable(X), state(X).}
	\end{Bmatrix*}
	\end{equation*}
	\item Assumption~\ref{assump:sym_unique_label_sets} holds because the automaton is deterministic (see Section~\ref{sec:determinism}). If the automaton is deterministic, the formulas labeling two outgoing edges from a given state to two different states are mutually exclusive and, thus, are different (which fulfills the assumption). Besides, two outgoing edges from a given state to another state cannot be equal because the automaton learner, ILASP, induces a minimal hypothesis (in our case, a minimal set of transition rules). Since a hypothesis with two equally labeled edges is not minimal, the assumption still holds.
\end{itemize}

Even though a subgoal automaton complies with the three assumptions of labeled directed graphs, there are two differences between them:
\begin{enumerate}
	\item The edges of a subgoal automaton are labeled by propositional formulas over a set of observables $\mathcal{O}$, whereas the edges of a labeled directed graph are defined over a set of labels $\mathcal{L}$.
	\item The edge indices in an indexed labeled directed graph are different from those in our representation of a subgoal automaton. In the former, the edge indices from a given node range between 1 to the number of outgoing edges from than node. In contrast, the indices range from 1 to the number of edges between each pair of states. In other words, the edge indices from a given node in the labeled directed graph are unique, whereas in a subgoal automaton they can be repeated.
\end{enumerate} 
To address these differences we just need to set a mapping from the representation used by the subgoal automata to that characterizing a labeled directed graph. The set of symmetry breaking constraints is defined on top of this mapping. Appendix~\ref{app:asp_encodings} shows two possible ways of encoding the symmetry breaking mechanism in ASP using the factual representation introduced at the beginning of this section.

\section{Interleaved Automaton Learning}
\label{sec:interleaved_automata_learning_algorithm}
In this section we describe {\methodname} (\methodnamefullhighlight), a method that combines reinforcement learning and automaton learning. Firstly, we explain two reinforcement learning algorithms that we use to exploit a given subgoal automaton (Section~\ref{sec:interleaved_rl_algorithms}). Secondly, we explain how reinforcement learning and automaton learning are interleaved to learn a minimal automaton (Section~\ref{sec:interleaved_algorithm_detailed}). The automaton learning process includes the constraints introduced in Section~\ref{sec:structural_properties} for enforcing determinism while the symmetry breaking constraints are optional (i.e., we can still learn a usable deterministic subgoal automaton without the latter).

\subsection{Reinforcement Learning Algorithms}
\label{sec:interleaved_rl_algorithms}
In this section we describe two methods to learn a policy for an episodic POMDP $\mathcal{M}^\Sigma=\langle S,S_T, S_G,\Sigma,A,p,r,\gamma,\nu \rangle$ by exploiting the automaton structure given by its subgoals. Each method is characterized by a different way of using \emph{options} \shortcite{SuttonPS99}:
\begin{enumerate}
	\item Learning an option for each outgoing edge in the automaton and a metacontroller to choose between options in each automaton state (Section~\ref{sec:interleaved_hrl}).
	\item Learning an option for each automaton state (Section~\ref{sec:interleaved_qrm}).
\end{enumerate}

In general, a subgoal automaton is used as follows. Firstly, the agent selects an option when it reaches an automaton state. Note that in (1)~there can be multiple options to choose from, whereas in (2) there is a single option. Once an option is chosen, the agent selects actions according to that option's policy until its termination. An option terminates when either (a) the episode ends, or (b) when a formula on an outgoing edge from the current automaton state is satisfied. After the agent experiences a tuple $\langle\bm\sigma_t,a_t,\bm\sigma_{t+1}\rangle$ in automaton state $u$ at step $t$, it transitions to the automaton state $u'=\delta_\varphi(u, L(\sigma^\Sigma_{t+1}))$ given by the transition function of the automaton. Remember that the labeling function $L$ maps a visible state into an observation.

The automaton is used as an external memory since each automaton state indicates which subgoals have been achieved so far. Crucially, the use of the automaton as an external memory causes tasks to have the Markov property given two assumptions made earlier (see Section~\ref{sec:tasks_def}). First, that observations only depend on visible states. Second, that the \emph{combination} of a visible state and a history of observables (i.e., a sequence of achieved subgoals) is sufficient to be Markovian. Because each option is only executed for a specific history of observations (represented by the current automaton state), we can define an option's policy over visible states only.

In the following sections we describe the features of the RL methods in detail including:
\begin{itemize}
	\item how options are modeled,
	\item how policies are learned, and
	\item which optimality guarantees they have.
\end{itemize}

\subsubsection{Learning an Option for each Outgoing Edge and a Metacontroller for each Automaton State (HRL)}
\label{sec:interleaved_hrl}
The edges of a subgoal automaton are labeled by propositional formulas over a set of observables $\mathcal{O}$. The idea is that each of these formulas represents a subgoal of the task encoded by the automaton. Therefore, an intuitive approach to exploit the automaton structure consists in learning an option that aims to reach a state whose observation satisfies a given formula, and a metacontroller that learns which option to take at each automaton state. Since decisions are taken at two different hierarchical levels, we will refer to this approach as HRL (Hierarchical Reinforcement Learning).

\paragraph{Option Modeling.} Given a subgoal automaton $\mathcal{A}=\langle U, \mathcal{O}, \delta_\varphi, u_0, u_A, u_R\rangle$, the set of options in a \emph{non-terminal} automaton state $u\in U$ (that is, a state with outgoing transitions to other states) is
\begin{align*}
\Omega_u=\left\lbrace \omega_{u,\phi} \mid \phi \in \varphi(u,u'), u\neq u' \right\rbrace,
\end{align*}
where $\omega_{u,\phi}$ is the option that attempts to satisfy the formula $\phi$ in state $u\in U$. Note that $\phi$ is a disjunct of the DNF formula $\varphi(u,u')$.\footnote{Remember that the logical transition function $\varphi: U \times U \to \text{DNF}_\mathcal{O}$ maps an automaton state pair into a DNF formula over $\mathcal{O}$.} Each option at the automaton state $u$ is a tuple $\omega_{u,\phi}=\langle I_u, \pi_\phi, \beta_u \rangle$ where: 
\begin{itemize}
	\item The initiation set is simply the set of visible states (formally, $I_u=\Sigma$) since when the automaton state $u$ is reached, it does not matter which is the current visible state $\sigma^\Sigma\in\Sigma$: an option in that automaton state will be started.
	\item The policy $\pi_\phi:\Sigma \to A$ maps a visible state to a primitive action with the goal of satisfying the propositional formula $\phi$.
	\item The termination condition $\beta_u:\Sigma \times \lbrace\bot,\top\rbrace \times \lbrace\bot,\top\rbrace \to [0,1]$ indicates that the option terminates if any formula on an outgoing edge from $u$ holds (i.e., the formula does not necessarily have to be $\phi$) or a terminal state is reached: 
	\begin{align*}
		\beta_u(\bm\sigma)=\begin{cases}
			1 & \text{if}~\exists u' \in U \text{ such that } L(\sigma^\Sigma) \models \varphi(u,u') \textnormal{ or } \sigma^T = \top \\
			0 & \text{otherwise}
		\end{cases}.
	\end{align*}
\end{itemize}
Note that the policy $\pi_\phi$ can be shared by different options in the automaton. This opportunity arises when two different automaton states have outgoing edges labeled by the same formula $\phi$. Indeed, we store a dictionary that maps a formula $\phi$ into its Q-function and use it for the different options that depend on $\phi$.

In the case of \emph{terminal} states (e.g., $u_A$ and $u_R$), the set of available options $\Omega_u$ at automaton state $u\in U$ is formed by one-step options, each corresponding to a primitive action $a\in A$. Their initiation sets are the same as for non-terminal automaton states.

\paragraph{Policy Learning.} In this approach, decisions are taken at two levels and, thus, two types of policies are learned: (1) policies over options (i.e., a metacontroller) and (2) option policies.

A policy over options $\Pi_u:\Sigma \to \Omega_u$ in state $u\in U$ maps a visible state into an option available at $u$. These policies are learned using SMDP Q-learning \shortcite{BradtkeD94} with $\epsilon$-greedy exploration. Given an experience tuple $\langle\bm\sigma_t,\omega_t,\bm\sigma_{t+k}\rangle$, where $\omega_t \in \Omega_{u}$ is an option taken in automaton state $u$ at step $t$, the update rule is the following:
\begin{align*}
Q_u(\sigma^\Sigma_t,\omega_t) = Q_u(\sigma^\Sigma_t,\omega_t) + \alpha \left(r+  \gamma^k \max_{\omega'\in \Omega_{u'}}Q_{u'}({\sigma^\Sigma_{t+k}},\omega') - Q_u(\sigma^\Sigma_t,\omega_t)\right),
\end{align*}
where $k$ is the number of steps between $\bm\sigma_t$ and $\bm\sigma_{t+k}$, $r$ is the cumulative discounted reward over this time, and $u'$ is the automaton state when the option ends. Note that the discounted term depends on  $u'$ (i.e., the policy in $u$ depends on that in $u'$) and becomes 0 when $\sigma^T_{t+k}$ is true (i.e., the latent state is terminal) since there is no applicable action thereafter.

An option policy $\pi_\phi:\Sigma \to A$ aiming to satisfy a formula $\phi$ is not learned using the rewards from the POMDP. Instead, we use a pseudoreward function $r_\phi:\Sigma \times \lbrace \bot,\top \rbrace \times \lbrace\bot,\top \rbrace \to \mathbb{R}$, which is defined as follows:
\begin{align*}
r_\phi(\bm\sigma)= \begin{cases} 
r_{success} & \text{if}~L(\sigma^\Sigma) \models \phi \\
r_{deadend} & \text{if}~\sigma^T=\top \land \sigma^G=\bot \\
r_{step} & \text{otherwise}
\end{cases},
\end{align*}
where $r_{success}>0$ is given when the next state satisfies $\phi$, $r_{deadend} \leq 0$ is given if the next state is a dead-end state, and $r_{step} \leq 0$ is given after every step otherwise.\footnote{In Section~\ref{sec:experiments} we instantiate $r_\phi$ in two different ways and show the impact they have on learning.} Note that the last case includes the scenario in which a formula different from $\phi$ labeling an edge from the current automaton state is satisfied. These policies are learned using Q-learning \shortcite{Watkins89} with $\epsilon$-greedy exploration. The update rule for a given formula $\phi$ and an experience tuple $\langle\bm\sigma_t,a_t,\bm\sigma_{t+1} \rangle$ is:
\begin{equation}
Q_\phi(\sigma^\Sigma_t,a_t)=Q_\phi(\sigma^\Sigma_t,a_t)+\alpha\left(r_\phi(\bm\sigma_{t+1}) + \gamma\max_{a'}Q_\phi(\sigma^\Sigma_{t+1},a')-Q_\phi(\sigma^\Sigma_t,a_t) \right),
\label{eq:formula_qvalue_update}
\end{equation}
where the second term of the target becomes 0 when either a terminal state is reached (i.e., $\sigma^T_{t+1}$ is true) or the next observation satisfies $\phi$ (i.e., $L(\sigma^\Sigma_{t+1}) \models \phi$).

Intra-option learning is easily applicable to update an option policy $\pi_{\phi'}$ while another policy $\pi_\phi$ is being followed. That is, given an option policy $\pi_\phi$, an experience tuple $\langle\bm\sigma_t,a_t,\bm\sigma_{t+1}\rangle$ generated by this policy is used to update the Q-value of $(\sigma^\Sigma_t,a_t)$ of another formula $\phi'$ through Equation~\ref{eq:formula_qvalue_update}.

\paragraph{Optimality.} Since the sets of available options $\Omega_u$ in non-terminal states $u\in U$ do not include any primitive actions as one-step options, then optimal policies over the set of available options are in general suboptimal policies of the core MDP~\shortcite{Dietterich00,BartoM03a}.

\subsubsection{Learning an Option for each Automaton State (QRM)}
\label{sec:interleaved_qrm}
Instead of learning one option for each outgoing edge and a metacontroller for each automaton state, we can learn a single policy over the space $\Sigma \times U$. This policy is distributed among automaton states; specifically, we learn a single option for each automaton state. However, despite of this distribution, the global policy is still coupled everywhere since we bootstrap the action-value from one automaton state to the next (details below). Therefore, the policy of each single option in the automaton chooses the action that appears globally best. In contrast, the previous method (HRL) decouples the option policies by making them independent of one another (i.e., each attempts to satisfy a specific formula).

The method we describe here is better known as Q-learning for Reward Machines \shortcite<QRM,>{IcarteKVM18}. QRM was created to exploit the structure of Reward Machines (RMs), a family of automata similar to our subgoal automata. The main difference is that each transition in a RM is not only labeled by a propositional formula over a set of observables, but also by a reward function. An in-depth comparison is made in Section~\ref{sec:reward_machines_rw}. We explain QRM in terms of options for a better comparison with the method we presented in the previous section.

\paragraph{Option Modeling.} Given a subgoal automaton $\mathcal{A}=\langle U, \mathcal{O}, \delta_\varphi, u_0, u_A, u_R\rangle$, each state $u\in U$ encapsulates an option $\omega_u=\langle I_{u}, \pi_{u}, \beta_{u} \rangle$ where:
\begin{itemize}
	\item The initiation set $I_u$ and the termination $\beta_u$ are defined as in Section~\ref{sec:interleaved_hrl} for non-terminal automaton states.
	\item The policy $\pi_{u}:\Sigma \to A$ selects the action that appears globally best at a given state (i.e., the action that leads to the fastest achievement of the task's goal). In other words, the policy does not attempt to satisfy a particular formula: it will eventually satisfy the formula that appears to be the best to reach the task's goal. Therefore, the agent may act towards reaching different formulas for different regions of the state space.
\end{itemize}

\paragraph{Policy Learning.} Given an experience tuple $\langle\bm\sigma_t, a_t, \bm\sigma_{t+1}\rangle$, the policy of an option $\omega_u$ is learned through Q-learning updates of the form:
\begin{equation}
Q_u(\sigma^\Sigma_t,a_t) = Q_u(\sigma^\Sigma_t,a_t) + \alpha \left( r(u,u')+\gamma \max_{a'}Q_{u'}(\sigma^\Sigma_{t+1},a') - Q_u(\sigma^\Sigma_t,a_t) \right),
\label{eq:qrm_update}
\end{equation}
where the discounted term depends on the next automaton state $u'$ (i.e., the policy in the automaton state $u$ is coupled with that of $u'$) and becomes 0 when the next state is terminal (i.e., $\sigma^T_{t+1}$ is true) since there is no applicable action thereafter. Crucially, the reward $r$ used in the update does not come from the POMDP. As said before, QRM is originally applied on automata whose transitions are also labeled by reward functions. The reward $r$ is obtained by evaluating those functions. Since subgoal automata are not labeled with reward functions, we assume that the reward function $r:S\times A \times S \to \mathbb{R}$ of the underlying POMDP is as follows:\footnote{This reward function is chosen because all evaluation tasks in Section~\ref{sec:experiments} are characterized by it.}
\begin{equation*}
r(s,a,s')= \begin{cases}
1 & \text{if}~ s' \in S_G\\
0 & \text{otherwise}
\end{cases}.
\end{equation*}
Then, given the current automaton state $u \in U$ and the next automaton state $u'\in U$ the reward $r$ in Equation~\ref{eq:qrm_update} is always 0 except when we transition to the accepting state:
\begin{equation*}
r(u,u') = \begin{cases}
1 & \text{if}~ u \neq u_A, u' = u_A\\
0 & \text{otherwise}
\end{cases}.
\end{equation*}

QRM performs the update in Equation~\ref{eq:qrm_update} for all the options given a single $\langle\bm\sigma_t,a_t,\bm\sigma_{t+1}\rangle$ experience. That is, given the option $\omega_u$ of automaton state $u\in U$, the next automaton state $u'\in U$ used for bootstrapping is determined by evaluating the observation $L(\sigma^\Sigma_{t+1})$ of the next visible state in $u$. Note this is a form of intra-option learning: we update the policies of all options from the experience generated by a single option's policy.

\paragraph{Optimality.} \shortciteA{IcarteKVM18} proved that in the tabular case QRM is guaranteed to converge to an optimal policy in the limit. Essentially, optimality is possible since action-values are bootstrapped from one automaton state to the next.

\paragraph{Reward Shaping.} A subgoal automaton does not only provide the subgoals of a given task, but also gives an intuition of how far the agent is from achieving the task goal. Intuitively, the closer the agent is to the accepting state, the closer it is to the task goal. Therefore, we can provide the agent with an extra positive reward signal  when it gets closer to the accepting state. The idea of giving additional rewards to the agent to guide its behavior is known as reward shaping.

\shortciteA{NgHR99} proposed a function that provides the agent with additional reward while guaranteeing that optimal policies remain unchanged:
\begin{equation*}
F(s_t,a_t,s_{t+1})=\gamma\Phi(s_{t+1})-\Phi(s_t),
\end{equation*}
where $\gamma$ is the MDP's discount factor and $\Phi : S \to \mathbb{R}$ is a real-valued function. The automaton structure can be exploited by defining $F:(U \setminus \{u_A,u_R\})\times U \to \mathbb{R}$ in terms of the automaton states instead \shortcite{CamachoIKVM19,furelosblanco2020aaai}:
\begin{equation*}
F(u,u')= \gamma\Phi(u')-\Phi(u),
\end{equation*}
where $\Phi:U\to\mathbb{R}$. Consequently, Equation~\ref{eq:qrm_update} is rewritten as:
\begin{equation*}
Q_u(\sigma^\Sigma_t,a_t) = Q_u(\sigma^\Sigma_t,a_t) + \alpha \left( r(u,u')+F(u,u')+\gamma \max_{a'}Q_{u'}(\sigma^\Sigma_{t+1},a') - Q_u(\sigma^\Sigma_t,a_t) \right).
\end{equation*}
Since we want the value of $F(u,u')$ to be positive when the agent gets closer to the accepting state $u_A$, we define $\Phi$ as
\begin{equation*}
\Phi(u)=|U| - d(u, u_A),
\end{equation*}
where $|U|$ is the number of automaton states, and $d(u,u_A)$ is a measure of distance between $u$ and $u_A$. If $u_A$ is unreachable from $u$, then $d(u,u_A)=\infty$.\footnote{In practice, we use a sufficiently big number to represent $\infty$ (e.g., $10^6$).} Note that $|U|$ acts as an upper bound of the maximum length (i.e., number of directed edges) of an acyclic path between $u$ and $u_A$. The distance between $u$ and $u_A$ can be given either by the length of the shortest path between them ($d_{\min}$) or by the length of the longest acyclic path between them ($d_{\max}$).

\begin{example}
	The following figures show the additional rewards generated by the reward shaping function using $d_{\min}$ (left) and $d_{\max}$ (right) with $\gamma=0.99$ in the \textsc{Coffee} task's automaton (see Figure~\ref{fig:officeworld_coffee_rm}, p.~\pageref{fig:officeworld_coffee_rm}). The numbers inside the states correspond to the values returned by $\Phi$, whereas the numbers on the edges are the values returned by $F$. Note that $|U|=4$ and that the only difference in the $\Phi$ values occurs in the initial state.
	\begin{center}
		\begin{tikzpicture}[shorten >=1pt,node distance=2.06cm,on grid,auto]
		\node[state,initial] (u_0)   {$3.0$};
		\node[state,accepting] (u_acc) [below =3cm of u_0]  {$4.0$};
		\node[state] (u_1) [left =3cm of u_acc] {$3.0$};
		\node[state] (u_rej) [right =3cm of u_acc]  {$-\infty$};
		\path[->] (u_0) edge [loop above] node {$-0.03$} ();
		\path[->] (u_1) edge [loop below] node {$-0.03$} ();
		\path[->] (u_0) edge [bend right] node[in place] {$-0.03$} (u_1);
		\path[->] (u_0) edge [bend left] node[in place] {$-\infty$} (u_rej);
		\path[->] (u_1) edge[bend right] node[in place] {$-\infty$} (u_rej);
		\path[->] (u_1) edge[bend left] node[in place] {$+0.96$} (u_acc);
		\path[->] (u_0) edge node[in place] {$+0.96$} (u_acc);
		\end{tikzpicture}
		\hfill
		\begin{tikzpicture}[shorten >=1pt,node distance=2.06cm,on grid,auto]
		\node[state,initial] (u_0)   {$2.0$};
		\node[state,accepting] (u_acc) [below =3cm of u_0]  {$4.0$};
		\node[state] (u_1) [left =3cm of u_acc]   {$3.0$};
		\node[state] (u_rej) [right =3cm of u_acc]  {$-\infty$};
		\path[->] (u_0) edge [loop above] node {$-0.02$} ();
		\path[->] (u_1) edge [loop below] node {$-0.03$} ();
		\path[->] (u_0) edge [bend right] node[in place] {$+0.97$} (u_1);
		\path[->] (u_0) edge [bend left] node[in place] {$-\infty$} (u_rej);
		\path[->] (u_1) edge[bend right] node[in place] {$-\infty$} (u_rej);
		\path[->] (u_1) edge[bend left] node[in place] {$+0.96$} (u_acc);
		\path[->] (u_0) edge node[in place] {$+1.96$} (u_acc);
		\end{tikzpicture}
	\end{center}
	\label{ex:reward_shaping_qrm}
\end{example}

\subsection{Interleaved Automaton Learning Algorithm}
\label{sec:interleaved_algorithm_detailed}
In this section we describe how the {\methodname} algorithm interleaves reinforcement learning and automaton learning. Given an episodic POMDP $\mathcal{M}^\Sigma=\langle S,S_T, S_G,\Sigma,A,p,r,\gamma,\nu\rangle$, a set of observables $\mathcal{O}$, a labeling function $L:\Sigma \to 2^\mathcal{O}$, and a maximum number of edges $\kappa$ between two states, {\methodname} aims to learn and iteratively refine a subgoal automaton $\mathcal{A}=\langle U, \mathcal{O}, \delta_\varphi, u_0, u_A, u_R \rangle$ from the experience of a reinforcement learning agent. The subgoal automaton $\mathcal{A}$ is the automaton with the smallest number of states that has at most $\kappa$ edges from one state to another and is valid with respect to all the traces observed by the agent.

\begin{algorithm}[ht]
	\caption{{\methodname} Algorithm}
	\label{alg:interleaving_pseudocode}
	\begin{algorithmic}[1]
		\Require An initial state ($u_0$), an accepting state ($u_A$), a rejecting state ($u_R$), a set of observables $\mathcal{O}$, a labeling function $L$, and max. number of edges between two states ($\kappa$).
		\State $U \leftarrow \lbrace u_0,u_A,u_R \rbrace$
		\State $\mathcal{A} \leftarrow \langle U, \mathcal{O}, \delta_\varphi, u_0,u_A,u_R \rangle$
		\State $\Lambda_{L,\mathcal{O}} \leftarrow \lbrace\rbrace$ \Comment Set of counterexamples
		\State \textsc{InitQFunctions}($\mathcal{A}$)
		\For{$l=0$ \textbf{to} num\_episodes}
		\State $\sigma^\Sigma,\sigma^T,\sigma^G\leftarrow$ \textsc{Env.InitialState}()
		\State $u \leftarrow \delta_\varphi(u_0, L(\sigma^\Sigma))$
		\State $\lambda_{L,\mathcal{O}} \leftarrow \langle L(\sigma^\Sigma)\rangle$ \Comment Initialize trace
		\If {\textsc{IsCounterexample}($\sigma^T,\sigma^G, u$)}
		\State \textsc{OnCounterexampleFound}($\lambda_{L,\mathcal{O}}$)
		\State $u \leftarrow \delta_\varphi(u_0, L(\sigma^\Sigma))$
		\EndIf
		\State $t \leftarrow 0$
		\While{$t < \text{max\_episode\_length} \land \sigma^T = \bot$} \Comment Run episode
		\State $a \leftarrow$ \textsc{SelectAction}($\sigma^\Sigma,u$)
		\State $\sigma'^\Sigma,\sigma'^T,\sigma'^G,r \leftarrow$ \textsc{Env.Step}($a$)
		\State $u' \leftarrow \delta_\varphi(u,L(\sigma'^\Sigma))$
		\State \textsc{UpdateTrace}($\lambda_{L,\mathcal{O}},L(\sigma'^\Sigma)$)
		\If {\textsc{IsCounterexample}($\sigma'^T, \sigma'^G, u'$)}
		\State \textsc{OnCounterexampleFound}($\lambda_{L,\mathcal{O}}$)
		\State \algorithmicbreak
		\Else
		\State \textsc{UpdateQFunctions}($\sigma^\Sigma$, $a$, $\sigma'^\Sigma$, $\sigma'^T$, $\sigma'^G$, $L(\sigma'^\Sigma)$, $r$)
		\EndIf
		\State $\sigma^\Sigma \leftarrow \sigma'^\Sigma; \sigma^T \leftarrow \sigma'^T; u \leftarrow u'$
		\State $t \leftarrow t+1$
		\EndWhile
		\EndFor
		\Function{IsCounterexample}{$\sigma^T,\sigma^G,u$}
		\State \Return $(\sigma^G=\top \land u \neq u_A) \vee  (\sigma^T=\top \land \sigma^G=\bot \land u \neq u_R) \vee (\sigma^T=\bot \land u \in \{u_A, u_R\})$
		\EndFunction
		\Function{OnCounterexampleFound}{$\lambda_{L,\mathcal{O}}$}
		\State $\Lambda_{L,\mathcal{O}} \leftarrow \Lambda_{L,\mathcal{O}} \cup \lbrace \lambda_{L,\mathcal{O}} \rbrace$
		\State is\_unsat $\leftarrow$ \textbf{true}
		\While {is\_unsat}
		\State $\mathcal{A}, \text{is\_unsat} \leftarrow$ \textsc{LearnAutomaton}($U,u_0,u_A,u_R,\Lambda_{L,\mathcal{O}}, \kappa$)
		\If {is\_unsat}
		\State $U \leftarrow U \cup \{u_{|U|-2}\}$
		\EndIf
		\EndWhile
		\State \textsc{ResetQFunctions}($\mathcal{A}$)
		\EndFunction
	\end{algorithmic}
\end{algorithm}

Algorithm~\ref{alg:interleaving_pseudocode} contains the pseudocode describing how RL and automaton learning are interleaved. The pseudocode applies to both of the methods described in Section~\ref{sec:interleaved_rl_algorithms} and consists of two functions related to automaton learning:
\begin{itemize}
	\item The \textsc{IsCounterexample} function (lines~25-26) checks whether the current automaton state $u$ correctly recognizes the current tuple $\bm\sigma=\langle\sigma^\Sigma,\sigma^T,\sigma^G \rangle$. It returns true in the following cases:
	\begin{itemize}
		\item a goal state is reached and $u$ is not the accepting state ($\sigma^G=\top \land u \neq u_A$), or
		\item a dead-end state is reached and $u$ is not the rejecting state ($\sigma^T=\top \land \sigma^G = \bot \land u \neq u_R$), or
		\item the state is not a terminal state and $u$ is either the accepting or rejecting state ($\sigma^T=\bot \land u \in \{u_A, u_R\}$).
	\end{itemize}
	\item The \textsc{OnCounterexampleFound} function (lines~27-34) determines what to do when a trace $\lambda_{L,\mathcal{O}}$ is not correctly recognized by the current automaton:
	\begin{enumerate}[(a)]
		\item Add $\lambda_{L,\mathcal{O}}$ to the corresponding set of traces (line~28):
		\begin{itemize}
			\item to the set of goal traces $\Lambda_{L,\mathcal{O}}^G$ if a goal state is reached ($\sigma^G=\top$), or
			\item to the set of dead-end traces $\Lambda_{L,\mathcal{O}}^D$ if a dead-end state is reached ($\sigma^T=\top \land \sigma^G=\bot$), or
			\item to the set of incomplete traces $\Lambda_{L,\mathcal{O}}^I$ if a terminal state is not reached ($\sigma^T=\bot$).
		\end{itemize}
		\item Run the automaton learner (lines~29-33). If the automaton learning task is unsatisfiable, it means that the hypothesis space does not include the automaton we are looking for. Therefore, we add a new state to $U$.\footnote{Non-special states (i.e., not $u_0,u_A$ or $u_R$) are labeled from 1 upwards ($u_1, u_2,\ldots$).} We adopt this iterative deepening strategy to find the subgoal automaton with the lowest number of states with at most $\kappa$ edges from one state to another.
		\item When a new automaton is learned, the Q-functions are reset (line~34). We later explain what resetting the Q-functions means for the two RL algorithms we use.
	\end{enumerate}
\end{itemize}
We now describe the main function of the algorithm:
\begin{enumerate}
	\item Initially, the set of states is formed by the initial state $u_0$, the accepting state $u_A$ and the rejecting state $u_R$ (line~1). The automaton is initialized such that it does not accept nor reject anything; that is, there are no edges between the states in $U$ (line~2). The set of counterexample traces and the Q-functions are also initialized (lines~3-4).
	\item When an episode starts, the current automaton state $u$ is $u_0$. One transition is then applied depending on the agent's initial observation $L(\sigma^\Sigma)$ (lines~6-7). For instance, if the agent initially observes $\lbrace\text{\Coffeecup}\rbrace$ in $\textsc{OfficeWorld}$'s $\textsc{Coffee}$ task, then $u$ must be the state where the agent has already observed {\Coffeecup} (see Figure~\ref{fig:officeworld_coffee_rm}, p.~\pageref{fig:officeworld_coffee_rm}). The episode trace $\lambda_{L,\mathcal{O}}$ is initialized with the initial observation $L(\sigma^\Sigma)$ (line~8). If a counterexample is detected at the beginning of the episode (line~9), a new automaton is learned (line~10), the automaton state is reset (line~11) and the episode continues.
	\item At each episode's step, we select an action $a$ in state $\sigma^\Sigma$ (line~14) and apply it in the environment (line~15). Based on the new observation $L(\sigma'^\Sigma)$, we get the next state $u'$ (line~16) and update the observation trace $\lambda_{L,\mathcal{O}}$ (line~17). If a counterexample trace $\lambda_{L,\mathcal{O}}$ is found (line~18), a new automaton is learned (line~19) and the episode ends (line~20). Else, the Q-functions are updated (line~22) and the episode continues. 
\end{enumerate}

Theorem~\ref{theorem:finite_steps} shows that if the target automaton is in the hypothesis space, there will only be a finite number of learning steps in the algorithm before it converges to such automaton (or an equivalent one).
\begin{theorem}
	Given a target finite automaton $\mathcal{A_*}$, there is no infinite sequence $\rho$ of automaton-counterexample pairs $\langle\mathcal{A}_i, e_i\rangle$ such that $\forall i$: (1) $\mathcal{A}_i$ covers all examples $e_1, \ldots, e_{i-1}$, (2) $\mathcal{A}_i$ does not cover $e_i$, and (3) $\mathcal{A}_i$ is in the finite hypothesis space $S_M$.
	\label{theorem:finite_steps}
\end{theorem}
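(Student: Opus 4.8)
The plan is to prove the statement by combining a \emph{distinctness} observation with the finiteness of $S_M$: I will show that all automata occurring in $\rho$ are pairwise distinct, so that an infinite $\rho$ would require infinitely many distinct automata inside a finite hypothesis space, which is impossible. The target automaton $\mathcal{A_*}$ enters only to guarantee that the procedure stays well-defined: since $\mathcal{A_*} \in S_M$ is valid with respect to every trace, each accumulated set of counterexamples remains coverable within $S_M$, so each $\mathcal{A}_i$ exists without ever needing to enlarge the state set. The finiteness conclusion itself does not rely on $\mathcal{A_*}$ beyond this.

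First I would record that a finite set of rules $S_M$ represents only finitely many automata: there are at most $2^{|S_M|}$ hypotheses $H \subseteq S_M$, and (for the fixed state set, alphabet and bound $\kappa$ underlying $S_M$) each hypothesis $H = M_\varphi(\mathcal{A})$ determines a single automaton $\mathcal{A}$. Hence the automata admissible under condition~(3) form a finite collection.

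The heart of the argument is the claim that the automata $\mathcal{A}_i$ in $\rho$ are pairwise distinct, which I would prove by contradiction. Suppose $\mathcal{A}_i = \mathcal{A}_j$ with $i < j$. Applying condition~(1) at index $j$, the automaton $\mathcal{A}_j$ covers all of $e_1,\ldots,e_{j-1}$; since $i \le j-1$, it in particular covers $e_i$. But condition~(2) at index $i$ says $\mathcal{A}_i$ does \emph{not} cover $e_i$. As covering and not covering a fixed example are mutually exclusive, the equality $\mathcal{A}_i = \mathcal{A}_j$ is contradictory. Therefore no automaton recurs in the sequence.

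Combining the two, an infinite sequence $\rho$ would exhibit infinitely many pairwise-distinct automata, each belonging to the finite collection represented by $S_M$ --- a contradiction --- so $\rho$ is finite, with length bounded by the number of distinct automata in $S_M$. I expect no serious obstacle here; the only point needing care is the index bookkeeping, i.e.\ checking that the counterexample $e_i$ that eliminates $\mathcal{A}_i$ is genuinely among the examples that every later $\mathcal{A}_j$ is forced to cover. This is exactly the monotone accumulation of counterexamples encoded in condition~(1), which is what makes each automaton's elimination permanent.
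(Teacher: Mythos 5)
Your proposal is correct and follows essentially the same argument as the paper: the paper also argues by contradiction that an infinite $\rho$ inside the finite hypothesis space $S_M$ would force some automaton to repeat as $\mathcal{A}_i=\mathcal{A}_j$ with $i<j$, whence $\mathcal{A}_j$ covers $e_i$ by condition (1) while $\mathcal{A}_i$ does not by condition (2) --- a contradiction. Your reorganization (proving pairwise distinctness first, then invoking finiteness) and your remark on the role of $\mathcal{A_*}$ are just cosmetic variations on the same pigeonhole argument.
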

\begin{proof}
	By contradiction. Assume that $\rho$ is infinite. Given that $S_M$ is finite, the number of possible automata is finite.  Hence, some automaton $\mathcal{A}$ must appear in $\rho$ at least twice, say as $\mathcal{A}_i=\mathcal{A}_j, i<j$. By definition, $\mathcal{A}_i$ does not cover $e_i$ and $\mathcal{A}_j$ covers $e_i$. This is a contradiction.
\end{proof}

In the following paragraphs we describe important aspects regarding the implementation of the algorithm. Firstly, we describe how the Q-functions are managed for the two different algorithms we consider (HRL and QRM). Secondly, we introduce two optimizations to make automaton learning more efficient.

\paragraph{Management of Q-functions.} A critical aspect of the algorithm is how Q-functions are initialized, updated and reset when a new automaton is learned. In the tabular case, all the Q-values for the different state-action and/or state-option pairs are initialized to 0 for both HRL and QRM. The Q-functions are updated using the rules described in Sections~\ref{sec:interleaved_hrl} and \ref{sec:interleaved_qrm} respectively. Note that in the case of HRL, Algorithm~\ref{alg:interleaving_pseudocode} omits the call to the function responsible for updating the metacontroller Q-functions, which occurs when the selected option terminates.\footnote{We have omitted it for the generality of the pseudocode. Note that according to the termination condition defined in Section~\ref{sec:interleaved_rl_algorithms}, an option ends when the episode terminates or the next automaton state is different. Therefore, such a check can be done at the end of each step (between lines~22 and 23).}

Ideally, when a new automaton is learned, we would like to reuse the knowledge from the previous automata into the new one:
\begin{itemize}
	\item In the case of HRL, as described in Section~\ref{sec:interleaved_hrl}, the policies of all options that have been used throughout learning are stored in a dictionary. Therefore, these policies can be reused whenever their corresponding formulas appear in the automaton. There are two choices to make in this case:
	\begin{enumerate}
		\item Which Q-functions to update. There are two alternatives: (i) update all the stored Q-functions or (ii) update only the Q-functions of the formulas appearing in the current automaton. While (i) is more costly, the fact that all Q-functions are updated makes them more reliable if they are to be used in the future. Experimentally, we use (i) for the tabular case and (ii) in the function approximation case because of the running time.
		\item Reuse Q-functions by copying those defined for similar formulas. Given that options correspond to propositional formulas, we can compare how similar two formulas are and, thus, initialize a new Q-function from an existing one. We use the number of matching positive literals between two formulas; in case of a draw, the formula whose Q-function has been updated the most is chosen. Experimentally, the number of matching positive literals works better than the number of all matching literals (positive and negative). Taking into account negative literals can lead to negative transfer because they usually emerge from the determinism constraints to make two formulas mutually exclusive; therefore, they do not usually represent the ``important'' part of the subgoal formula in the tasks we use later for evaluation. For this reason, we use the number of matching positive literals in the experiments.
	\end{enumerate}
	Unlike option policies, it is more difficult to determine when to transfer the metacontroller (i.e., a policy over options) from one automaton to another. In this case, we create a new Q-function and do not reuse any previous knowledge.
	\item In the case of QRM, the policy at each automaton state selects the action that appears best towards satisfying the task's final goal. Therefore, it can attempt to satisfy different formulas for different regions of the state space. This makes the transfer of policies between automata non-trivial. 
	In this case we just reinitialize all the Q-functions, which causes the agent to forget everything it learned. Nevertheless, the reward shaping mechanism introduced before can be helpful to alleviate this problem.
\end{itemize}

\paragraph{Optimizations.} In practice we use two additional optimizations to make the automaton learning phase more efficient:
\begin{enumerate}
	\item The agent does not learn an automaton for the first time until a goal trace is found (i.e., the goal is achieved). Experimentally, we have observed that in tasks where dead-ends are frequent, the learner constantly finds counterexamples with the aim to refine the paths to the rejecting state. Starting to learn automata only when there is a goal trace in the counterexample set has proved to be a better strategy.
	\item As explained in Section~\ref{sec:learn_subgoal_automata_from_traces}, the rejecting state $u_R$ is not included in the set of states $U$ if the set of dead-end traces is empty. This avoids an unnecessary increase in the number of rules in the hypothesis space, specially in tasks without dead-end states.
\end{enumerate}

\section{Experiments}
\label{sec:experiments}
In this section, we evaluate the effectiveness of {\methodname} in different domains. Our analysis focuses on evaluating how the behavior of the RL agent and the task being learned affect automaton learning and vice versa. Firstly, we describe the main characteristics of our evaluation methodology. Secondly, we make a thorough analysis of the performance of our approach using the \textsc{OfficeWorld}~\shortcite{IcarteKVM18}, \textsc{CraftWorld}~\shortcite{AndreasKL17} and \textsc{WaterWorld}~\shortcite{IcarteKVM18} domains.

We use ILASP2 to learn the automata with a 2 hour timeout for each automaton learning task. All experiments ran on 3.40GHz Intel\textsuperscript{\textregistered} Core\texttrademark~i7-6700 processors. The code is available at \url{https://github.com/ertsiger/induction-subgoal-automata-rl}.

\subsection{Experimental Setting}
\label{sec:experimental_setting}
In this section, we describe how we evaluate our approach and introduce some restrictions that we can apply on the automaton learning task. Then, we explain the nomenclature we use for the different RL algorithms we evaluate, and how we report the results.

\paragraph{Sets of POMDPs.} We consider the problem of learning an automaton given a set of POMDPs $\mathcal{D}= \lbrace \mathcal{M}^\Sigma_1, \ldots,\mathcal{M}^\Sigma_{|\mathcal{D}|} \rbrace$. All POMDPs in $\mathcal{D}$ correspond to the same task (e.g., \textsc{OfficeWorld}'s \textsc{Coffee} task) and are enhanced by the same set of observables $\mathcal{O}$. However, they do not need to share the same state and action spaces. Algorithm~\ref{alg:interleaving_pseudocode} is applied on a set of POMDPs $\mathcal{D}$ by iteratively running one full episode for each POMDP, and repeating until a maximum total number of episodes is reached. We impose a maximum episode length $N$ to guarantee all episodes terminate in a reasonable amount of time, especially when the terminal states are hard to reach. There are two reasons for which these sets are useful:
\begin{enumerate}
	\item The automaton will generalize to several POMDPs. For example, in \textsc{OfficeWorld} the coffee ({\Coffeecup}) and the office ($o$) can sometimes be in the same location. Therefore, the learned automaton should reflect these two situations: when {\Coffeecup} and $o$ are together and when they are not (see Figure~\ref{fig:officeworld_coffee_rm}, p.~\pageref{fig:officeworld_coffee_rm}). Furthermore, using different POMDPs can help to avoid overgeneralization, which is related to the fact that a minimal automaton is learned from positive examples only (see discussion in Section~\ref{sec:conclusions}). Figure~\ref{fig:overgeneralization_single_mdp} shows an \textsc{OfficeWorld} grid that if used alone to learn an automaton for \textsc{Coffee} would produce the automaton on the right. Since the agent (\Strichmaxerl[1.25]) cannot see any trace that reaches $o$ without having seen {\Coffeecup}, it does not learn that observing {\Coffeecup} is important to reach the goal.
	
	\item The set of POMDPs may contain some POMDPs in which it is easier to reach the goal than in others. Therefore, automaton learning can be initialized earlier. The RL agent can then immediately exploit the automaton in the harder POMDPs to effectively reduce the amount of exploration needed to reach the goal in them.
\end{enumerate}

\begin{figure}
	\centering
	\subfloat{
		\tikzset{digit/.style = { minimum height = 5mm, minimum width=5mm, anchor=center }}
		\newcommand{\setcell}[3]{\edef\x{#2 - 0.5}\edef\y{9.5 - #1}\node[digit,name={#1-#2}] at (\x, \y) {#3};}
		\centering
		\begin{tikzpicture}[scale=0.5]
		\draw[gray] (1, 1) grid (13, 10);
		\draw[very thick, scale=3] (1/3, 1/3) rectangle (13/3, 10/3);
		
		\draw[very thick, scale=1] (4, 1) rectangle (4, 2); \draw[very thick, scale=1] (4, 9) rectangle (4, 10);
		\draw[very thick, scale=1] (4, 3) rectangle (4, 8);
		\draw[very thick, scale=1] (7, 1) rectangle (7, 2); \draw[very thick, scale=1] (7, 3) rectangle (7, 8); \draw[very thick, scale=1] (7, 9) rectangle (7, 10);
		\draw[very thick, scale=1] (10, 1) rectangle (10, 2); \draw[very thick, scale=1] (10, 3) rectangle (10, 8); \draw[very thick, scale=1] (10, 9) rectangle (10, 10);
		\draw[very thick, scale=1] (1, 4) rectangle (2, 4); \draw[very thick, scale=1] (3, 4) rectangle (11, 4); \draw[very thick, scale=1] (12, 4) rectangle (13, 4);
		\draw[very thick, scale=1] (1, 7) rectangle (2, 7); \draw[very thick, scale=1] (3, 7) rectangle (5, 7); \draw[very thick, scale=1] (6, 7) rectangle (8, 7); \draw[very thick, scale=1] (9, 7) rectangle (11, 7); \draw[very thick, scale=1] (12, 7) rectangle (13, 7);
		\setcell{1}{3}{$D$} \setcell{1}{6}{$\ast$} \setcell{1}{9}{$\ast$} \setcell{1}{12}{$C$}
		\setcell{3}{6}{\Coffeecup} \setcell{2}{6}{$o$}
		\setcell{4}{3}{$\ast$} \setcell{5}{6}{\Strichmaxerl[1.25]} \setcell{4}{9}{\Letter} \setcell{4}{12}{$\ast$}
		\setcell{6}{10}{\Coffeecup}
		\setcell{7}{3}{$A$} \setcell{7}{6}{$\ast$} \setcell{7}{9}{$\ast$} \setcell{7}{12}{$B$}
		\end{tikzpicture}}\quad
	\subfloat{
		\scalebox{1.2}{
			\begin{tikzpicture}[shorten >=1pt,node distance=2.06cm,on grid,auto]
			\node[state,initial] (u_0)   {$u_0$};
			\node[state,accepting] (u_acc) [below =of u_0]  {$u_{A}$};
			\path[->] (u_0) edge node {$o$} (u_acc);
			\end{tikzpicture}
		}
	}
	\caption{Example of an \textsc{OfficeWorld} grid whose traces cause automata overgeneralization for the \textsc{Coffee} task.}
	\label{fig:overgeneralization_single_mdp}
\end{figure}

The sets of POMDPs used in these experiments are not handcrafted, but randomly generated (e.g., placing observables randomly in a grid).\footnote{The constraints we impose in the POMDP generation are explained later for each of the used domains.} Therefore, (1) there is no guarantee that certain observations will be seen (e.g., two observables in the same tile of a grid), and (2) the difficulty of the POMDPs is not fully controlled. Given the two previous consequences of using randomly generated sets of POMDPs, we will evaluate how different sets affect automaton learning and RL. Note that it is easy to handcraft tasks such that the target automaton is learned. In the case of a grid-world, like \textsc{OfficeWorld}, we need at most one grid for each path to the accepting state (in the absence of dead-ends). However, we believe that using a set of randomly generated POMDPs reduces our bias on the automaton learning.

\paragraph{Restrictions.} The following are restrictions that can be imposed on the automaton structure, the traces and the observables, and that are later used in the evaluation:\footnote{The details of these restrictions (e.g., implementation and examples) are given in Appendix~\ref{app:automata_learning_restrictions}.}
\begin{itemize}
	\item Avoid learning purely negative formulas. We assume that the tasks' subgoals cannot be characterized \emph{only} by the non-occurrence of certain observables. That is, the formula labeling an edge cannot be formed only by negated observables. The minimal automata for the tasks we consider in this paper comply with this assumption, which helps to slightly simplify the automaton learning phase.
	\item Acyclicity. There are tasks whose corresponding minimal automata do not contain cycles (i.e., a previously visited automaton state cannot be revisited). For instance, the three \textsc{OfficeWorld} tasks we have considered so far belong to this class of automata. Thus, the search space can be made smaller by ruling out solutions containing cycles.
	\item Trace compression. The traces used as counterexamples in {\methodname} depend on the agent behavior. While the agent has not managed to reach the goal, its behavior is random. Consequently, the counterexample that the agent provides to the automaton learner can be long and include many observables that are irrelevant to the task at hand. These two factors, as we will see later, have a negative impact on the time required to learn an automaton.
	\item Restricted observable set. To simplify the traces, we can use only those observables relevant to the task at hand. For example, if the task is \textsc{CoffeeMail}, then the set of observables becomes $\mathcal{O}=\lbrace\texttt{\Coffeecup}, o,\ast \rbrace$ instead of $\mathcal{O}=\lbrace\texttt{\Coffeecup}, \texttt{\Letter}, o, A, B,C,D,\ast\rbrace$. This greatly simplifies the automaton learning tasks since the hypothesis space becomes smaller and ILASP does not have to discern which are the relevant observables.
\end{itemize}

\paragraph{Reinforcement Learning Algorithms.} We use the following nomenclature for the different RL algorithms applied in the experiments:
\begin{itemize}
	\item HRL: HRL where $r_{success}=1.0$, $r_{deadend}=0.0$ and $r_{step}=0.0$.
	\item $\text{HRL}_\text{G}$: HRL where $r_{success}=1.0$, $r_{deadend}=-N$ and $r_{step}=-0.01$.
	\item QRM: QRM without reward shaping.
	\item $\text{QRM}_{\min}$: QRM with reward shaping based on the length of the shortest path to the accepting state ($d_{\min}$).
	\item $\text{QRM}_{\max}$: QRM with reward shaping based on the length of the longest acyclic path to the accepting state ($d_{\max}$).
\end{itemize}
In the case of HRL algorithms, the option policies are updated as follows:
\begin{itemize}
	\item In the tabular case, we update all the Q-functions of the formulas in the dictionary after every step (i.e., all discovered formulas during learning).
	\item In the function approximation case, we update only the Q-functions of the formulas appearing in the current automaton.
\end{itemize}

\paragraph{Reporting Results.}
We report results using tables and figures. In the \emph{tables} we report the following \emph{average} automaton learning statistics across runs where the automaton learner has not timed out and at least one automaton has been learned:
\begin{itemize}
	\item Total time (in seconds) used to run the automaton learner.
	\item Number of examples needed to learn the final automaton.
	\item Length of the examples used to learn the final automaton.
\end{itemize}
For the first two cases, the numbers in brackets correspond to the standard error, while in the last case we use the standard deviation since it is an average of the example lengths across all runs. We mark with an asterisk (*) cases where either no automaton has been learned\footnote{Remember that automata are started to be learned once a goal trace has been observed.} or the automaton learner has timed out between 1 and 10 runs. A dash (-) is used if the number of such cases is higher than 10. 

The \emph{figures} show the average reward across the POMDPs in set $\mathcal{D}$ and the number of runs (20). Each point of the learning curve represents the sum of rewards obtained by the greedy policy at a given episode. By default, the greedy policy is evaluated after every training episode for one episode. The dotted vertical lines correspond to episodes where an automaton was learned. When the automaton learner times out, the reward is set to 0 for the entire interaction.

\subsection{Experiments in {\mdseries\textsc{OfficeWorld}}}
\label{sec:officeworld_experiments}
The \textsc{OfficeWorld} domain \shortcite{IcarteKVM18}, introduced in Section~\ref{sec:problem_formulation}, is characterized by the $9\times12$ grid shown in Figure~\ref{fig:officeworld_grid} (p.~\pageref{fig:officeworld_grid}). The set of observables is $\mathcal{O}=\lbrace\texttt{\Coffeecup}, \texttt{\Letter}, o, A, B,C,D,\ast\rbrace$. The grid contains one observable of each type except for the coffee location {\Coffeecup} (2) and the decoration $\ast$ (6). The agent and the observables are randomly placed in the grid using the following criteria:
\begin{itemize}
	\item The agent cannot be initially placed with decorations $\ast$ or observables $A-D$.
	\item The decorations $\ast$ do not share a location with any other observable.
	\item The decorations $\ast$ and observables $A-D$ cannot be placed next to each other (including diagonals) nor in locations that connect two rooms (e.g., $(1,2)$ and $(1,3)$).
	\item Observables $A-D$ and the office $o$ cannot be in the same location.
\end{itemize}
Note that {\Coffeecup}, {\Letter} and $o$ are allowed to be in the same location, and that {\Coffeecup} and {\Letter} can share a location with any observable $A-D$.

The tasks in this domain are the ones we introduced in Section~\ref{sec:problem_formulation}: \textsc{Coffee}, \textsc{CoffeeMail} and \textsc{VisitABCD}. These tasks constitute a good test-bed since their automata are different and they are incrementally more challenging:
\begin{itemize}
	\item The \textsc{Coffee} task has 2 subgoals and is represented by a 4 state minimal automaton.
	\item The \textsc{CoffeeMail} task has 3 subgoals and is represented by a 6 state minimal automaton.
	\item The \textsc{VisitABCD} task has 4 subgoals and is represented by a 6 state minimal automaton.
\end{itemize}
In the tasks we consider in this paper, the number of subgoals is defined as the number of directed edges of the longest acyclic path from the initial state to the accepting state in the minimal automaton. In general, the longer the sequence of subgoals is, the harder it becomes to achieve the goal.

Table~\ref{tab:officeworld_experiments_params} shows the parameters used throughout these experiments. The parameters $\alpha$, $\epsilon$ and $\gamma$ are the same for both the metacontrollers and the options in the case of HRL. In the following paragraphs we compare the learning curves produced by the two types of RL algorithms we have previously presented (HRL and QRM). We show how learning the automata in an interleaved manner affects RL and how introducing guidance (e.g., reward shaping in QRM) influences when counterexamples are found. In Appendix~\ref{app:additional_experimental_results} we analyze how parameter tuning affects automaton learning and reinforcement learning.

\begin{table}[]
	\centering
	\begin{tabular}{ll}
		\toprule
		Learning rate ($\alpha$)                  & 0.1    \\
		Exploration rate ($\epsilon$)             & 0.1    \\
		Discount factor ($\gamma$)                & 0.99   \\
		Number of episodes                        & 10,000 \\
		Avoid learning purely negative formulas & \cmark \\
		Number of tasks ($|\mathcal{D}|$)         & 50     \\
		Maximum episode length ($N$)              & 250    \\
		Trace compression                         & \cmark    \\
		Enforce acyclicity                        & \cmark    \\
		Number of disjuncts ($\kappa$)         & 1      \\
		Use restricted observable set             & \xmark \\
		\bottomrule
	\end{tabular}
	\caption{Parameters used in the \textsc{OfficeWorld} experiments.}
	\label{tab:officeworld_experiments_params}
\end{table}

Figure~\ref{fig:rl_algorithm_hancrafted} shows how the learning curves with interleaved automaton learning (ISA-HRL, ISA-QRM) compare to those obtained with handcrafted automata (HRL, QRM). We observe the following:
\begin{itemize}
	\item The algorithms using auxiliary guidance ($\text{HRL}_\text{G}, \text{QRM}_{\min}$ and $\text{QRM}_{\max}$) converge faster than their respective basic versions (HRL and QRM). The use of  auxiliary reward signals helps to explore the state space more effectively, which results in observing counterexample traces early. Consequently, automaton learning is less frequent in the last episodes, which is convenient to avoid resetting the Q-functions late and leave enough episodes to converge.
	\item $\text{QRM}_{\max}$ converges faster than $\text{QRM}_{\min}$ except in \textsc{VisitABCD} where they perform exactly the same because there is a single path to the accepting state. As previously shown in Example~\ref{ex:reward_shaping_qrm}, $\text{QRM}_{\max}$ provides a positive reward signal for any path that allows the agent to approach the accepting state. In contrast, $\text{QRM}_{\min}$ only provides a positive signal for the shortest path(s). If the shortest path is not available in a certain grid (e.g., {\Coffeecup} and $o$ occurring together), $\text{QRM}_{\min}$ gives a negative reward for choosing the only available path to the accepting state. Consequently, convergence is not as fast as in $\text{QRM}_{\max}$.
	\item HRL converges faster than QRM across the different tasks. In the absence of reward shaping, QRM needs to satisfy the formula on an edge to the accepting state to start propagating positive reward through the different states. On the other hand, HRL can independently update the Q-functions of each of the formulas in the automaton without having to achieve the task's goal.
	\item The curves for the settings involving automaton learning perform closely to the ones with handcrafted automata. Naturally, sometimes the convergence is slower due to the fact that a proper automaton cannot be exploited from episode 0. This is noticeable in \textsc{VisitABCD}, where a stable automaton requires several relearning steps to be found.
\end{itemize}

\begin{sidewaysfigure}
	\centering
	\subfloat{
		\resizebox{0.32\columnwidth}{!}{
			\includegraphics{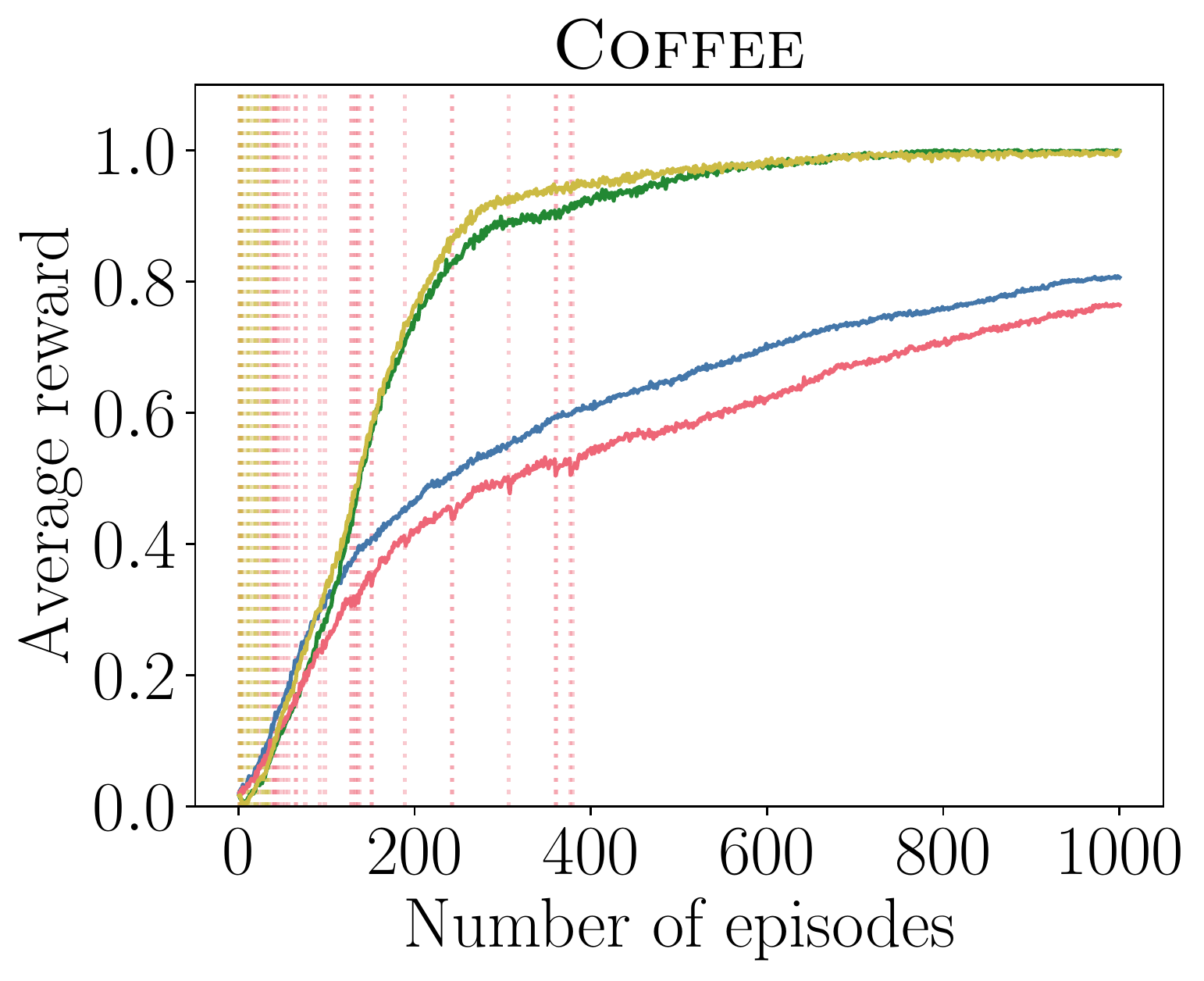}
		}
	}
	\subfloat{
		\resizebox{0.32\columnwidth}{!}{
			\includegraphics{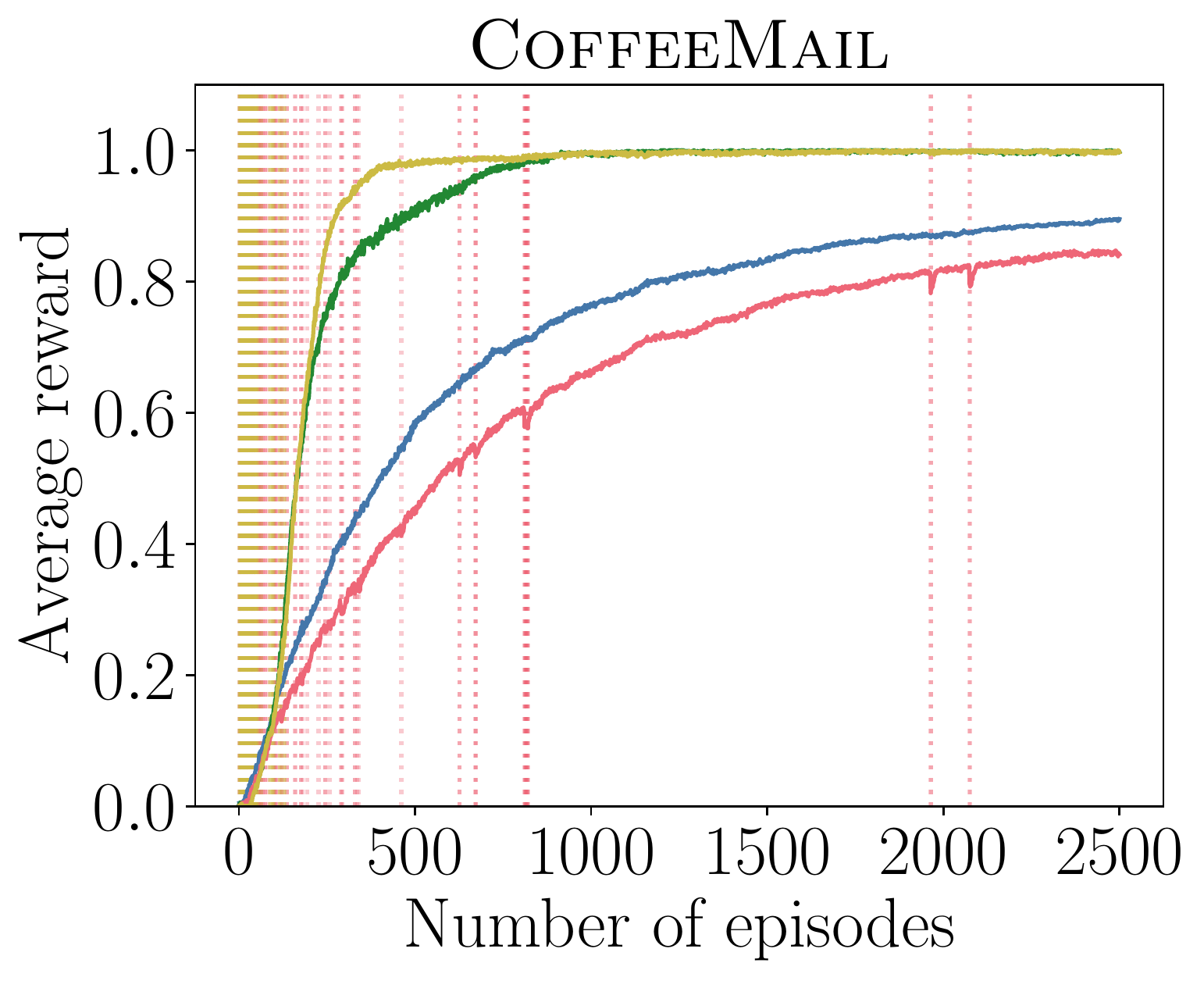}
		}
	}
	\subfloat{
		\resizebox{0.32\columnwidth}{!}{
			\includegraphics{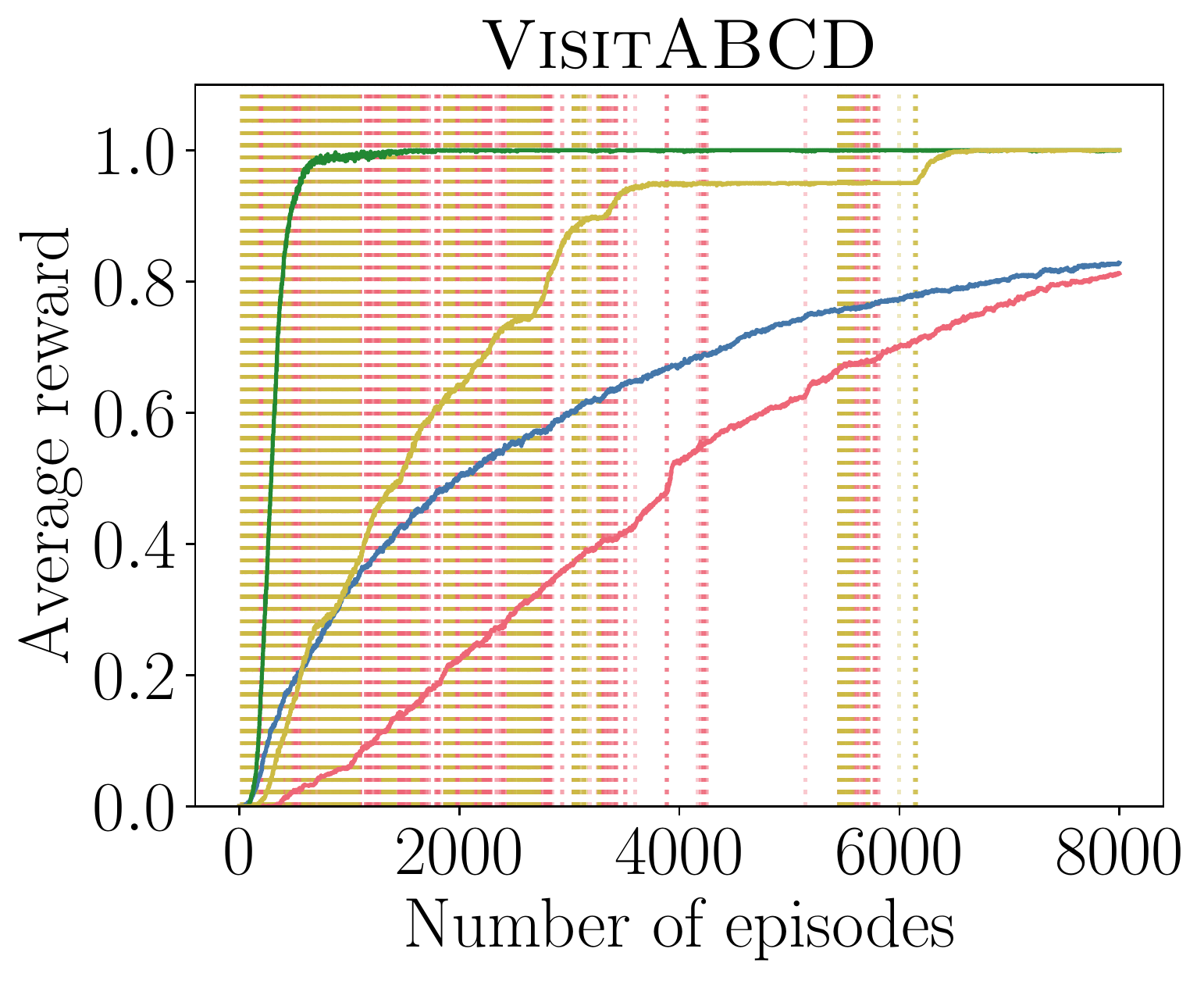}
		}
	}
	\begin{center}
		\begin{tikzpicture}
			\begin{customlegend}[legend columns=-1,legend style={column sep=1ex},legend cell align={left},legend entries={HRL,$\text{HRL}_\text{G}$,ISA-HRL,$\text{ISA-HRL}_\text{G}$}]
				\addlegendimage{pblue}
				\addlegendimage{pgreen}
				\addlegendimage{pred}
				\addlegendimage{pyellow}
			\end{customlegend}
		\end{tikzpicture}
	\end{center}
	\subfloat{
		\resizebox{0.32\columnwidth}{!}{
			\includegraphics{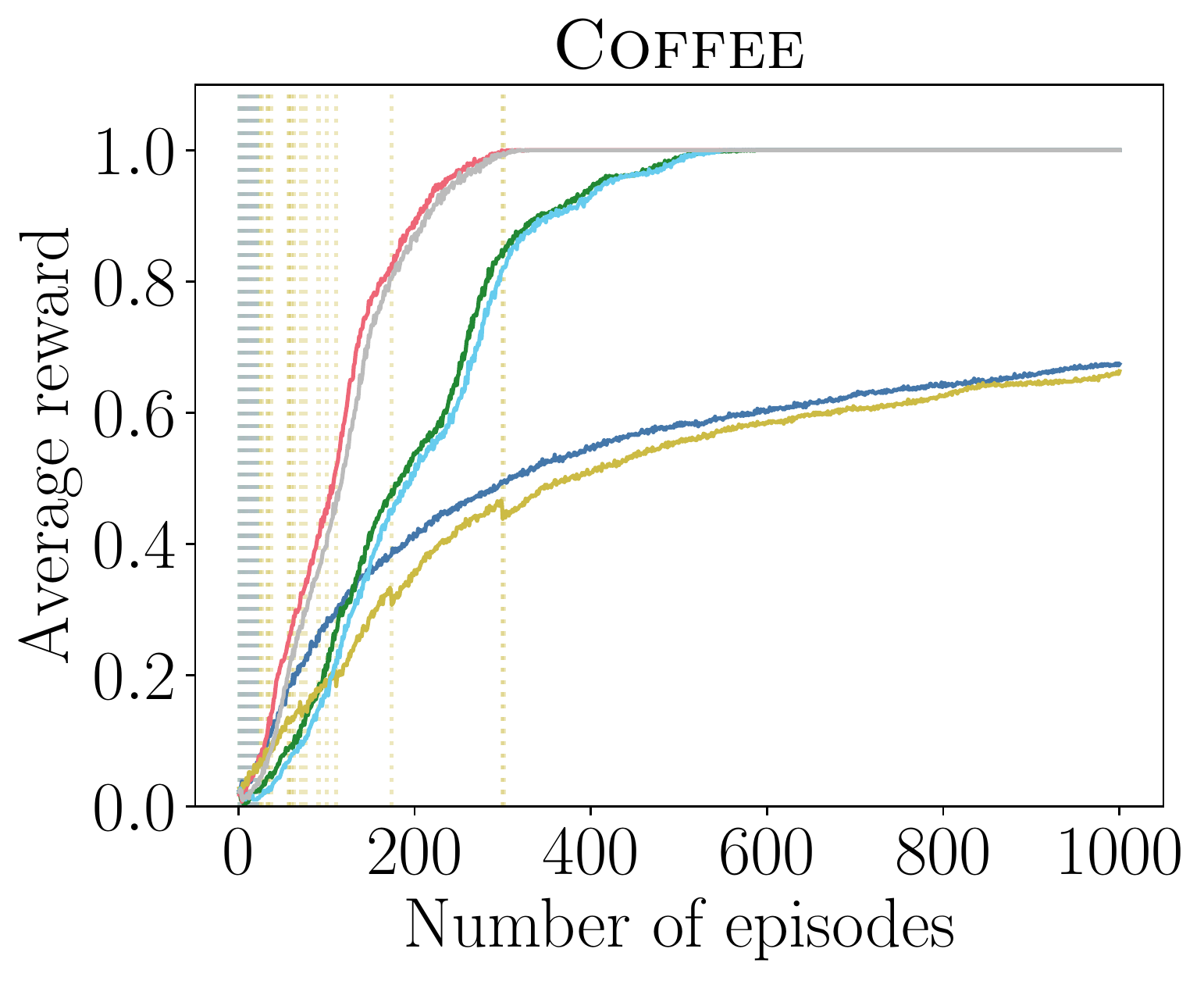}
		}
	}
	\subfloat{
		\resizebox{0.32\columnwidth}{!}{
			\includegraphics{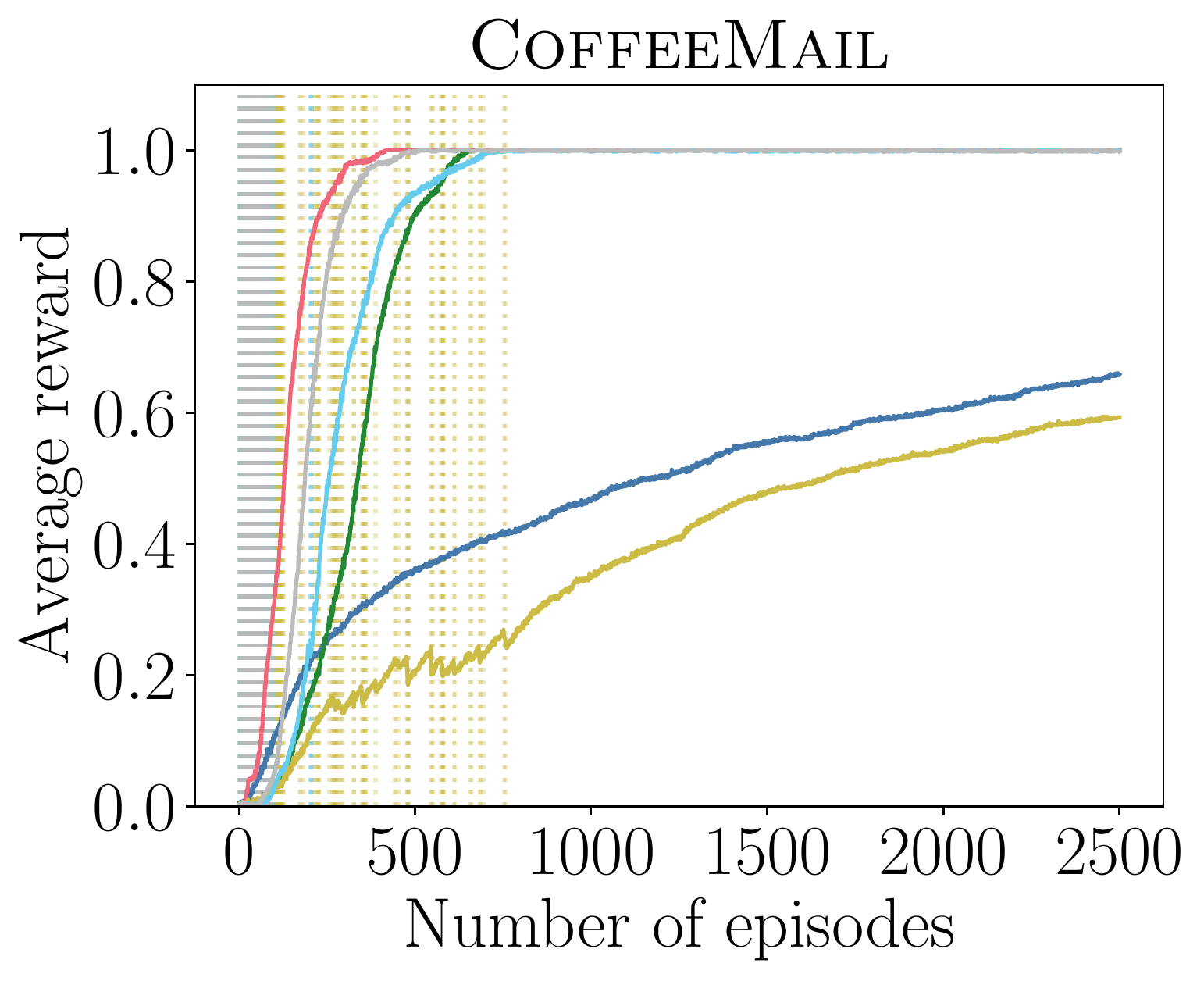}
		}
	}
	\subfloat{
		\resizebox{0.32\columnwidth}{!}{
			\includegraphics{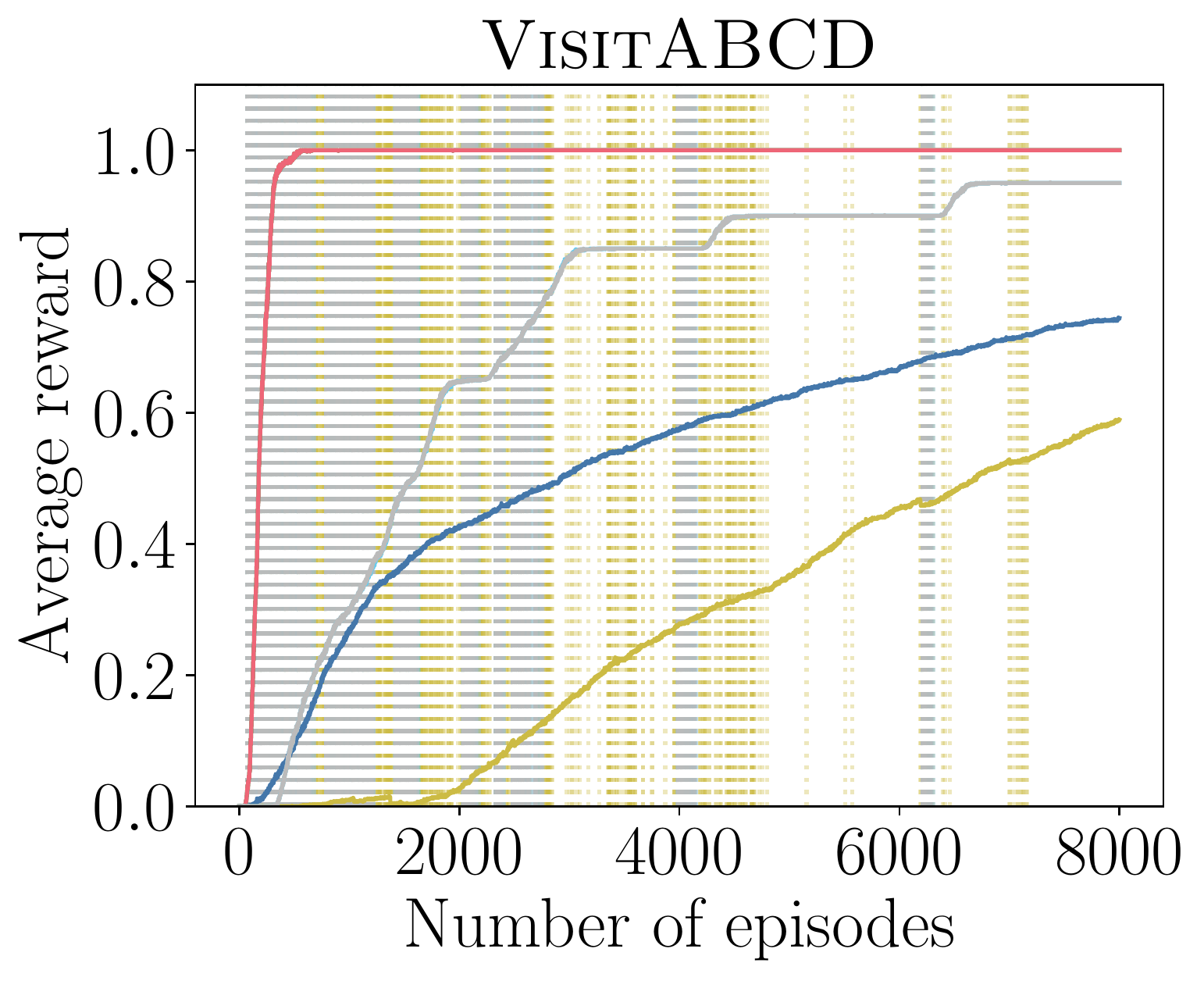}
		}
	}
	\begin{center}
		\begin{tikzpicture}
			\begin{customlegend}[legend columns=-1,legend style={column sep=1ex},legend cell align={left},legend entries={QRM,$\text{QRM}_{\min}$,$\text{QRM}_{\max}$, ISA-QRM,$\text{ISA-QRM}_{\min}$,$\text{ISA-QRM}_{\max}$}]
				\addlegendimage{pblue}
				\addlegendimage{pgreen}
				\addlegendimage{pred}
				\addlegendimage{pyellow}
				\addlegendimage{pcyan}
				\addlegendimage{pgray}
			\end{customlegend}
		\end{tikzpicture}
	\end{center}
	\caption{Learning curves for different RL algorithms in the \textsc{OfficeWorld} tasks when interleaved automaton learning is off (HRL, QRM) and on (ISA-HRL, ISA-QRM).}
	\label{fig:rl_algorithm_hancrafted}
\end{sidewaysfigure}

Figure~\ref{fig:forgetting_effect_coffee_hrl_qrm} shows the impact of automaton learning on the HRL and QRM learning curves of the \textsc{Coffee} task in a single run.\footnote{Note that abrupt changes in the learning curves do not occur in other plots because they are averaged across 20 runs. Hence, changes are smoother and not fully visible.} Remember that an HRL agent only forgets what it learned at the metacontroller level and keeps the Q-functions of the formulas unchanged. Besides, it reuses the Q-functions between similar formulas. In contrast, QRM forgets everything it has learned. The plot illustrates this behavior: while HRL quickly recovers, QRM requires a few more episodes to match HRL again.

\begin{figure}
	\centering
	\includegraphics[scale=0.6]{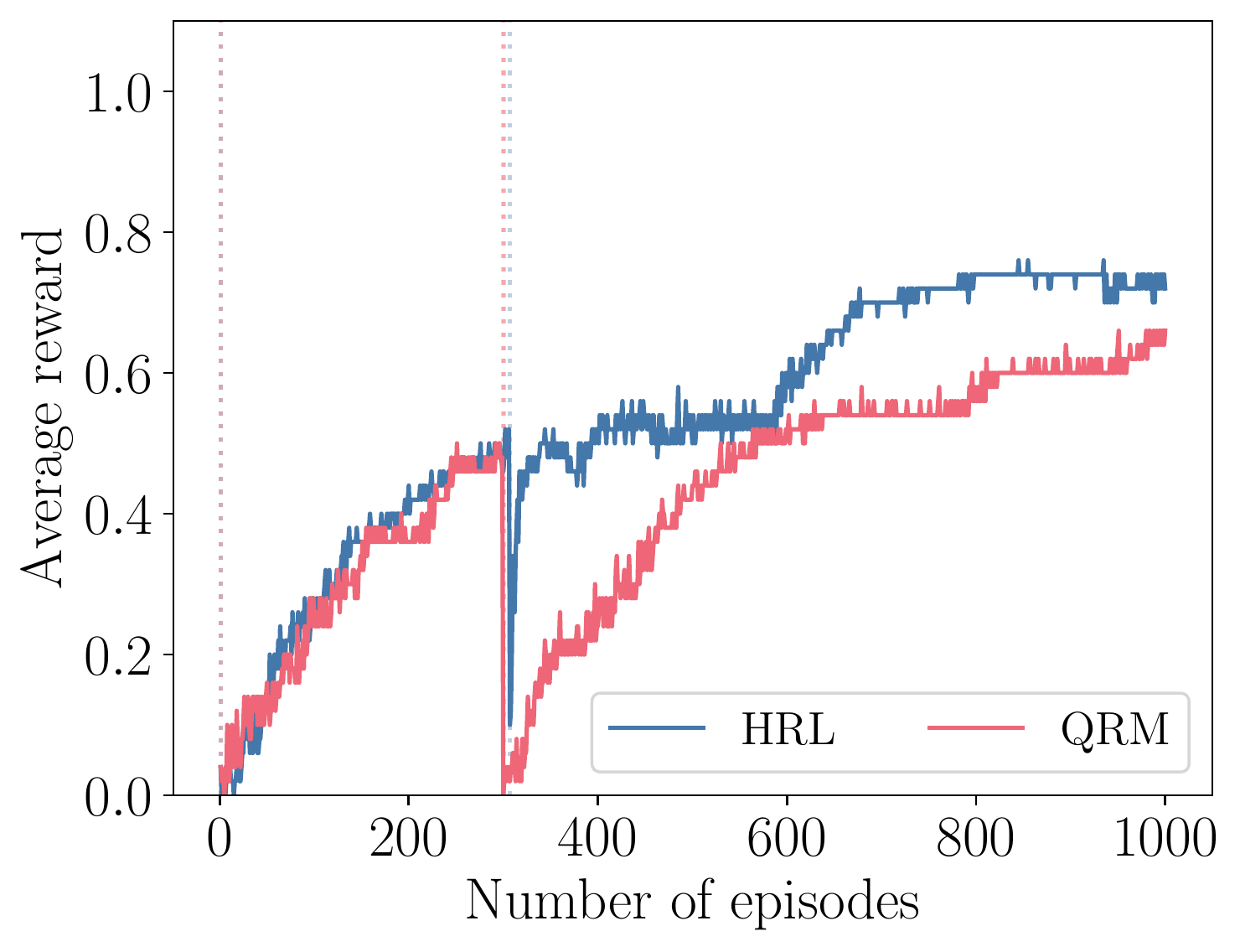}
	\caption{Example of the impact that interleaved automaton learning has on the learning curves of the \textsc{Coffee} task. An automaton is learned around episode 300. While HRL quickly recovers (it only has to relearn the policies over options), QRM needs some more episodes because it forgets everything.}
	\label{fig:forgetting_effect_coffee_hrl_qrm}
\end{figure}

Table~\ref{tab:automata_learning_statistics_officeworld} shows the automaton learning statistics for the presented \textsc{OfficeWorld} tasks using $\textnormal{HRL}_\textnormal{G}$. We observe that:
\begin{itemize}
	\item The running time increases with the number of subgoals. An automaton for the \textsc{VisitABCD} task takes (on average) more time than one for the \textsc{CoffeeMail} task even though they are both characterized by automata with the same number of states.
	\item  The number of examples increases with the number of subgoals of the task. The number of goal examples is approximately the same as the number of paths to the accepting state. For instance, in \textsc{VisitABCD} there is only one such path, so the number of goal examples is approximately 1.\footnote{The number of goal trace examples can sometimes be higher than 1 for \textsc{VisitABCD} if the first used goal trace example is complex (e.g., longer than needed or with many unnecessary symbols), thus making the subgoals unclear. In such cases a simpler goal trace might be found as a counterexample.} On the other hand, the observables that characterize the \textsc{CoffeeMail} automaton can appear jointly or not; consequently, there are more paths to the accepting state and, thus, the required number of goal examples increases. Furthermore, while there is a relationship between the number of goal examples and the number of paths to the accepting state, we do not observe such relationship between the number of dead-end examples and the number of paths to the rejecting state. The number of dead-end and incomplete examples is higher than that of goal examples; thus, we hypothesize that these two kinds of examples are mainly used to refine the automaton given the set of goal examples.
	\item The example length increases with the number of subgoals. Intuitively, the more subgoals, the longer the agent will have to interact with the environment to achieve the goal. Therefore, the observed counterexamples tend to be longer for the tasks with more subgoals.
\end{itemize}

\begin{table}
	\centering
	\resizebox{\textwidth}{!}{
		\begin{tabular}{lrrrrrrrr}
			\toprule
			                    & \multicolumn{1}{c}{Time (s.)}     & & \multicolumn{4}{c}{\# Examples}                                                                       & &  \multicolumn{1}{c}{Example Length}  \\
			\cmidrule{4-7}
			                    &                                   & & \multicolumn{1}{c}{All} & \multicolumn{1}{c}{$G$} & \multicolumn{1}{c}{$D$} & \multicolumn{1}{c}{$I$} & &                                     \\
			\midrule
			\textsc{Coffee}     & 0.4 (0.0)                         & & 8.7 (0.4)               & 2.4 (0.1)               & 3.0 (0.1)               & 3.2 (0.3)               & & 2.8 (2.1)                                    \\
			\textsc{CoffeeMail} & 18.9 (3.3)                        & & 29.0 (1.5)              & 3.9 (0.3)               & 9.3 (0.6)               & 15.8 (1.0)              & & 4.0 (2.6)                            \\
			\textsc{VisitABCD}  & 163.2 (44.3)                      & & 54.9 (3.8)              & 1.6 (0.1)               & 15.2 (0.9)              & 38.1 (3.1)              & & 5.5 (3.1)                           \\
			\bottomrule
		\end{tabular}
	}
	\caption{Automaton learning statistics for the \textsc{OfficeWorld} tasks using $\text{HRL}_\text{G}$.}
	\label{tab:automata_learning_statistics_officeworld}
\end{table}

\subsection{Experiments in {\mdseries\textsc{CraftWorld}}}
The \textsc{CraftWorld} domain~\shortcite{AndreasKL17} consists of a $39\times 39$ grid without walls. The grid contains raw materials (wood, grass, iron) and tools/workstations (toolshed, workbench, factory, bridge, axe), which constitute the set of observables $\mathcal{O}$. There are 5 labeled locations for each material, and 2 labeled locations for each tool/workstation. Like in \textsc{OfficeWorld}: (i) the agent moves in the four cardinal directions and remains in the same location if it tries to cross the grid's limits, and (ii) at each timestep, the agent knows at which cell of the grid it is (i.e., the history of achieved subgoals is not stored in the state), and sees the observables at that cell. The grids are randomly generated such that all items must be in different locations (i.e., the observations consist of one observable at most).
 
The tasks in this domain consist in observing a specific sequence of materials and tools/workstations. We use the set of tasks in \shortcite{IcarteKVM18} for evaluation:
\begin{enumerate}
	\item \textsc{MakePlank}: wood, toolshed.
	\item \textsc{MakeStick}: wood, workbench.
	\item \textsc{MakeCloth}: grass, factory.
	\item \textsc{MakeRope}: grass, toolshed.
	\item \textsc{MakeShears}: iron, wood, workbench (the iron and the wood can be observed in any order).
	\item \textsc{MakeBridge}: iron, wood, factory (the iron and the wood can be observed in any order).
	\item \textsc{GetGold}: iron, wood, factory, bridge (the iron and the wood can be observed in any order).
	\item \textsc{MakeBed}: wood, toolshed, grass, workbench (the grass can be observed anytime before the workbench.)
	\item \textsc{MakeAxe}: wood, workbench, iron, toolshed (the iron can be observed anytime before the toolshed).
	\item \textsc{GetGem}: wood, workbench, iron, toolshed, axe (the iron can be observed anytime before the toolshed).
\end{enumerate}
Tasks 1-4 have 2 subgoals and are represented by 3 state minimal automata. Tasks 5-6 have 3 subgoals and are represented by 5 state minimal automata. Task 7 has 4 subgoals and is represented by a 6 state minimal automaton. Tasks 8-9 have 4 subgoals and are represented by 7 state minimal automata. Task 10 has 5 subgoals and is represented by an 8 state minimal automaton. The agent gets a reward of 1 upon the goal's achievement and 0 otherwise. Unlike \textsc{OfficeWorld}, this domain has no dead-end states, so the set of dead-end examples is always empty. 

Table~\ref{tab:craftworld_experiments_params} lists the parameters used in these experiments. The only difference with respect to the default parameters used in \textsc{OfficeWorld} is the number of tasks $|\mathcal{D}|$, which is 100 in this case. Experimentally, we observed that using 100 instead of 50 was a better choice for tasks 8-10 which, as we will explain later, occasionally time out. 

\begin{table}[]
	\centering
	\begin{tabular}{ll}
		\toprule
		Learning rate ($\alpha$)                  & 0.1    \\
		Exploration rate ($\epsilon$)             & 0.1    \\
		Discount factor ($\gamma$)                & 0.99   \\
		Number of episodes                        & 10,000 \\
		Avoid learning purely negative formulas & \cmark \\
		Number of tasks ($|\mathcal{D}|$)         & 100     \\
		Maximum episode length ($N$)              & 250    \\
		Trace compression                         & \cmark    \\
		Enforce acyclicity                        & \cmark    \\
		Number of disjuncts ($\kappa$)         & 1      \\
		Use restricted observable set             & \xmark \\
		\bottomrule
	\end{tabular}
	\caption{Parameters used in the \textsc{CraftWorld} experiments.}
	\label{tab:craftworld_experiments_params}
\end{table}

Table~\ref{tab:automata_learning_statistics_craftworld} shows the automaton learning statistics for the presented \textsc{CraftWorld} tasks using $\text{HRL}_\text{G}$. Note that we have divided the tasks into several groups according to the number of subgoals they have and the number of states that their corresponding minimal automata have. Figure~\ref{fig:rl_algorithm_hancrafted_crafworld} shows the learning curves for one representative of each group of tasks\footnote{The learning curves are similar between members of each group, so we report just one of them.} with and without interleaved learning of automata. We observe the following:
\begin{itemize}
	\item Like in the \textsc{OfficeWorld} tasks, the more subgoals and automaton states, the higher the values for the collected metrics (running time, number of examples and example length). Besides, the number of goal examples still corresponds to the number of paths from the initial state to the accepting state. The figure also shows that, as before, learning becomes more frequent as the tasks become harder.
	\item The automaton learning statistics are very close between groups of tasks, especially for the ones having simpler automata. As the tasks become harder, the differences between tasks in the same group become bigger (e.g., \textsc{MakeBed} and \textsc{MakeAxe}). Naturally, it is extremely unlikely that an agent observes two equivalent sets of examples for two different tasks, especially when examples become longer (as we have seen before, the more subgoals a task has, the longer the examples become). Therefore, it is normal that these differences arise for harder tasks.
	\item The running time increases dramatically from \textsc{GetGold} to \textsc{MakeBed} and \textsc{MakeAxe}. Actually, the automaton learner has timed out a few times for the latter tasks: 5 for \textsc{MakeBed} and 4 for \textsc{MakeAxe}. Furthermore, in the case of \textsc{GetGem}, the harder task, it has timed out 9 times. The number of timeouts varies between algorithms, which is probably caused by exploration. For example, standard HRL has timed out 8 times for \textsc{MakeAxe} and only once for \textsc{MakeBed}. 
	\item The difference between the curves where interleaved automaton learning is on (ISA-HRL, ISA-QRM) and off (HRL, QRM) is small for most of the tasks, like in \textsc{OfficeWorld}. This shows that the induced automata are useful to learn a policy that reaches the goal. Note that for the hardest tasks (\textsc{MakeAxe}, \textsc{MakeBed} and \textsc{GetGem}) the gap between HRL and QRM with respect to ISA-HRL and ISA-QRM is usually bigger than for other tasks. This is due to the presence of timeouts in the approaches that induce automata, as explained before.
	\item When an automaton is handcrafted, $\text{QRM}_{\min}$ performs like $\text{QRM}_{\max}$ because the minimum and maximum distances to the accepting state are the same in these automata. This similarity also occurs when an automaton is learned; however, they are not identical since the intermediately learned automata may cause some variances.
	\item The approaches not using guidance (HRL, QRM) start converging faster than the ones that use it ($\text{HRL}_\text{G}, \text{QRM}_{\min}, \text{QRM}_{\max}$). However, the latter eventually learn to reach the goal earlier in the interaction. The initially slower convergence for the approaches using guidance can be due to the fact that guidance encourages more exploration. A \textsc{CraftWorld} grid is bigger than an \textsc{OfficeWorld} grid, which causes the agent to explore the environment for longer and delays the start of convergence. However, all the knowledge acquired while exploring is later quickly exploited and the learning curves  surpass those of the approaches without guidance.
\end{itemize}

In Section~\ref{sec:conclusions} we present several proposals for future work to reduce the running time that ILASP needs to find an automaton, thus making the approach more scalable to tasks whose minimal automata have many states, such as \textsc{GetGem}.

\begin{table}
	\centering
	\resizebox{\textwidth}{!}{
		\begin{tabular}{lrrrrrrr}
			\toprule
			                    & \multicolumn{1}{c}{Time (s.)}     & & \multicolumn{3}{c}{\# Examples}                                             & &  \multicolumn{1}{c}{Example Length}  \\
			\cmidrule{4-6}
			                    &                                   & & \multicolumn{1}{c}{All} & \multicolumn{1}{c}{$G$} & \multicolumn{1}{c}{$I$} & &                                       \\
			\midrule
			\textsc{MakePlank}  & 0.2 (0.0)                         & & 4.4 (0.3)               & 1.4 (0.1)               & 3.0 (0.3)               & & 2.2 (1.2)                             \\
			\textsc{MakeStick}  & 0.2 (0.0)                         & & 3.6 (0.2)               & 1.2 (0.1)               & 2.4 (0.2)               & & 2.2 (1.4)                             \\
			\textsc{MakeCloth}  & 0.3 (0.0)                         & & 4.9 (0.4)               & 1.2 (0.1)               & 3.6 (0.4)               & & 2.4 (1.5)                             \\
			\textsc{MakeRope}   & 0.2 (0.0)                         & & 4.2 (0.3)               & 1.2 (0.1)               & 2.9 (0.3)               & & 2.4 (1.4)                             \\
			\midrule
			\textsc{MakeShears} & 2.0 (0.3)                         & & 16.2 (0.8)              & 3.3 (0.2)               & 12.8 (0.8)              & & 3.4 (1.7)                             \\
			\textsc{MakeBridge} & 1.7 (0.3)                         & & 15.5 (1.3)              & 3.0 (0.2)               & 12.5 (1.2)              & & 3.0 (1.5)                             \\
			\midrule
			\textsc{GetGold}    & 60.7 (25.7)                       & & 30.6 (3.2)              & 2.2 (0.2)               & 28.5 (3.2)              & & 4.0 (1.9)                             \\
			\midrule
			\textsc{MakeBed}    & 2140.4 (1071.7)*                  & & 37.1 (3.1)*             & 3.8 (0.3)*              & 33.3 (3.0)*             & & 4.0 (1.7)*                           \\
			\textsc{MakeAxe}    & 2990.3 (717.7)*                   & & 46.6 (3.5)*             & 3.3 (0.2)*              & 43.3 (3.5)*             & & 4.3 (1.9)*                            \\
			\midrule
			\textsc{GetGem}     & 6179.4 (2784.8)*                  & & 116.8 (14.7)*           & 1.2 (0.1)*              & 115.6 (14.7)*            & & 5.2 (2.0)*                                     \\
			\bottomrule
		\end{tabular}
	}
	\caption{Automaton learning statistics for the \textsc{CraftWorld} tasks using $\text{HRL}_\text{G}$.}
	\label{tab:automata_learning_statistics_craftworld}
\end{table}

\begin{sidewaysfigure}
	\centering
	\begin{minipage}[c]{0.24\linewidth}
		\resizebox{\linewidth}{!}{
			\includegraphics{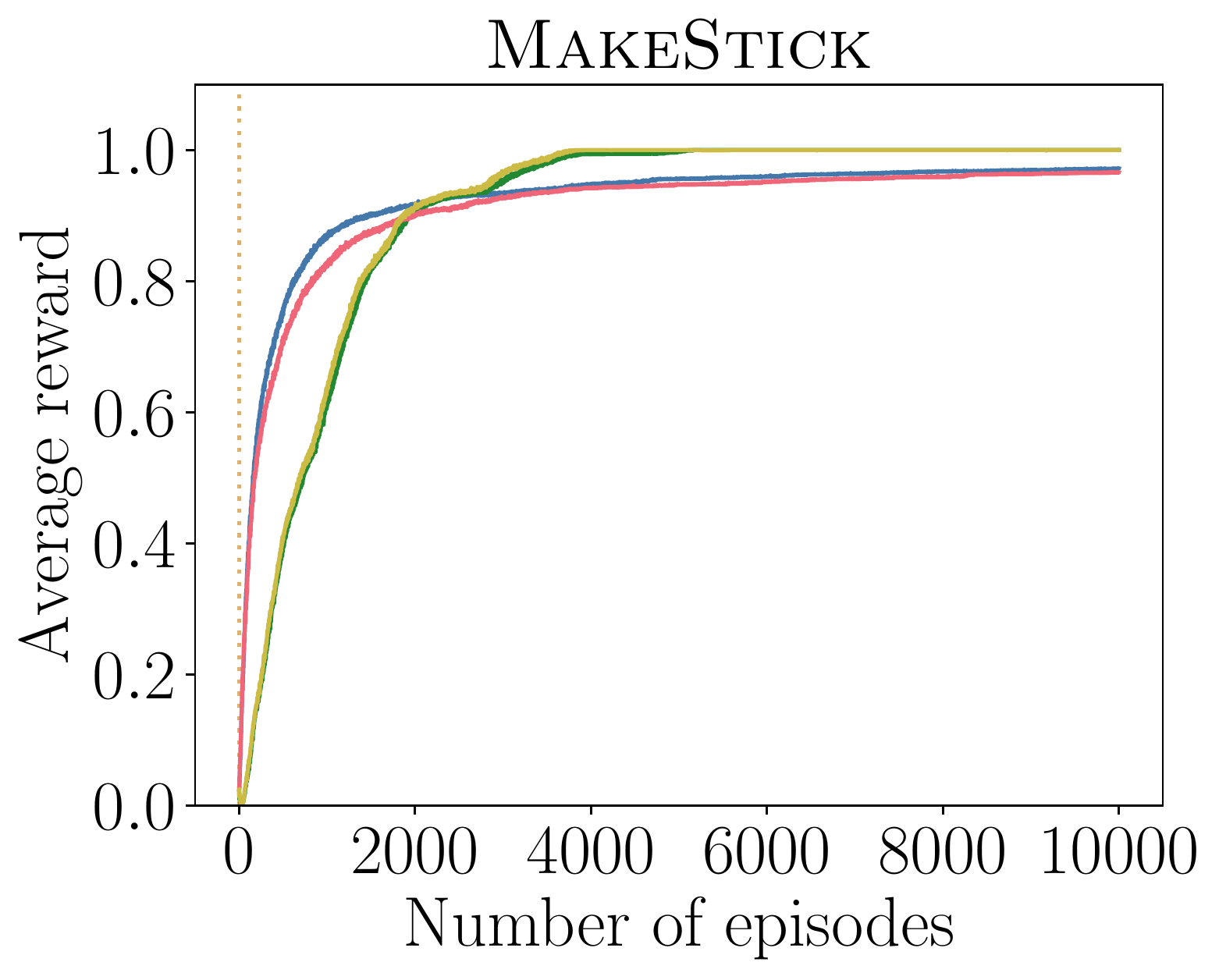}
		}
	\end{minipage}
	\begin{minipage}[c]{0.24\linewidth}
		\resizebox{\linewidth}{!}{
			\includegraphics{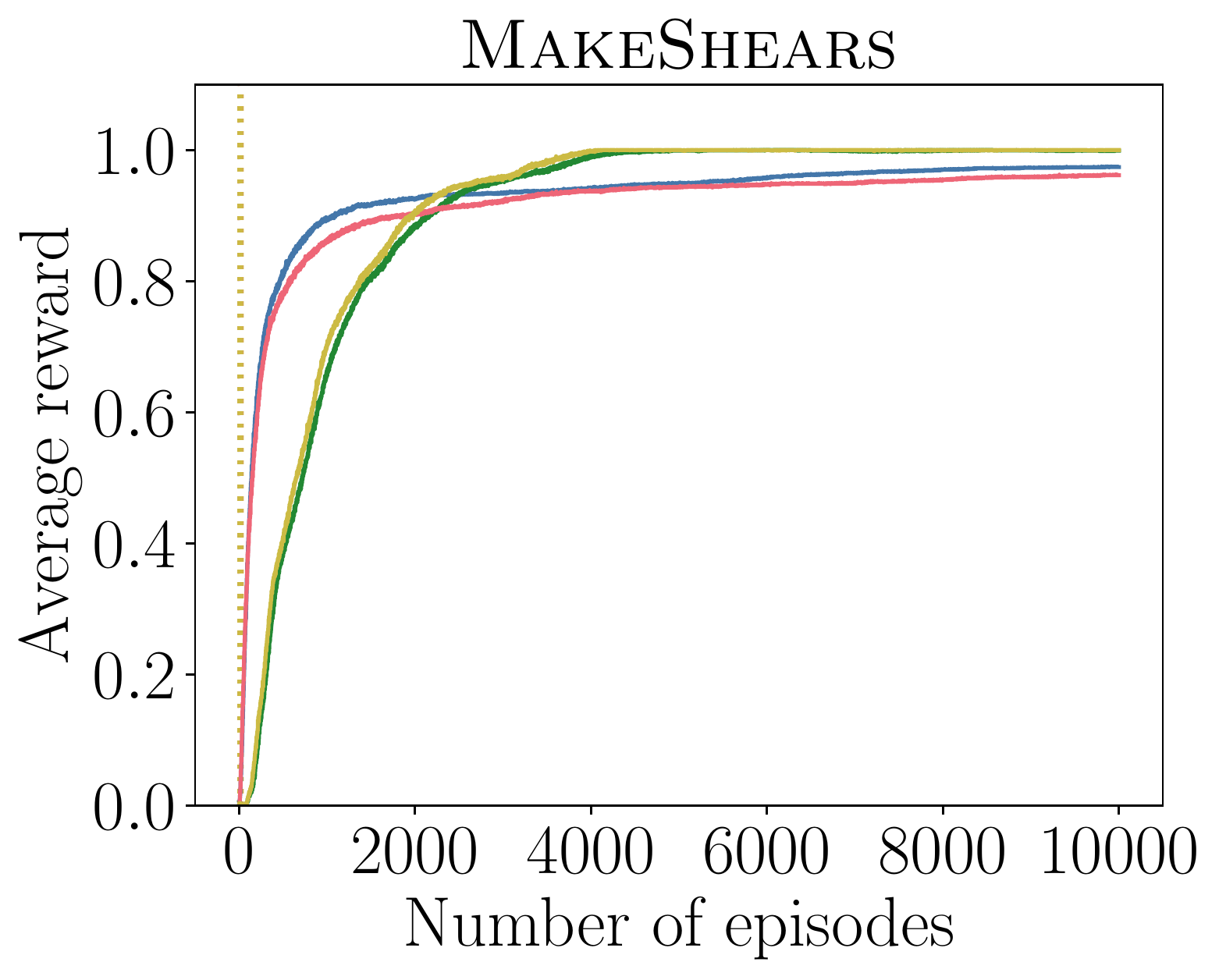}
		}
	\end{minipage}
	\rulesep
		\begin{minipage}[c]{0.24\linewidth}
		\resizebox{\linewidth}{!}{
			\includegraphics{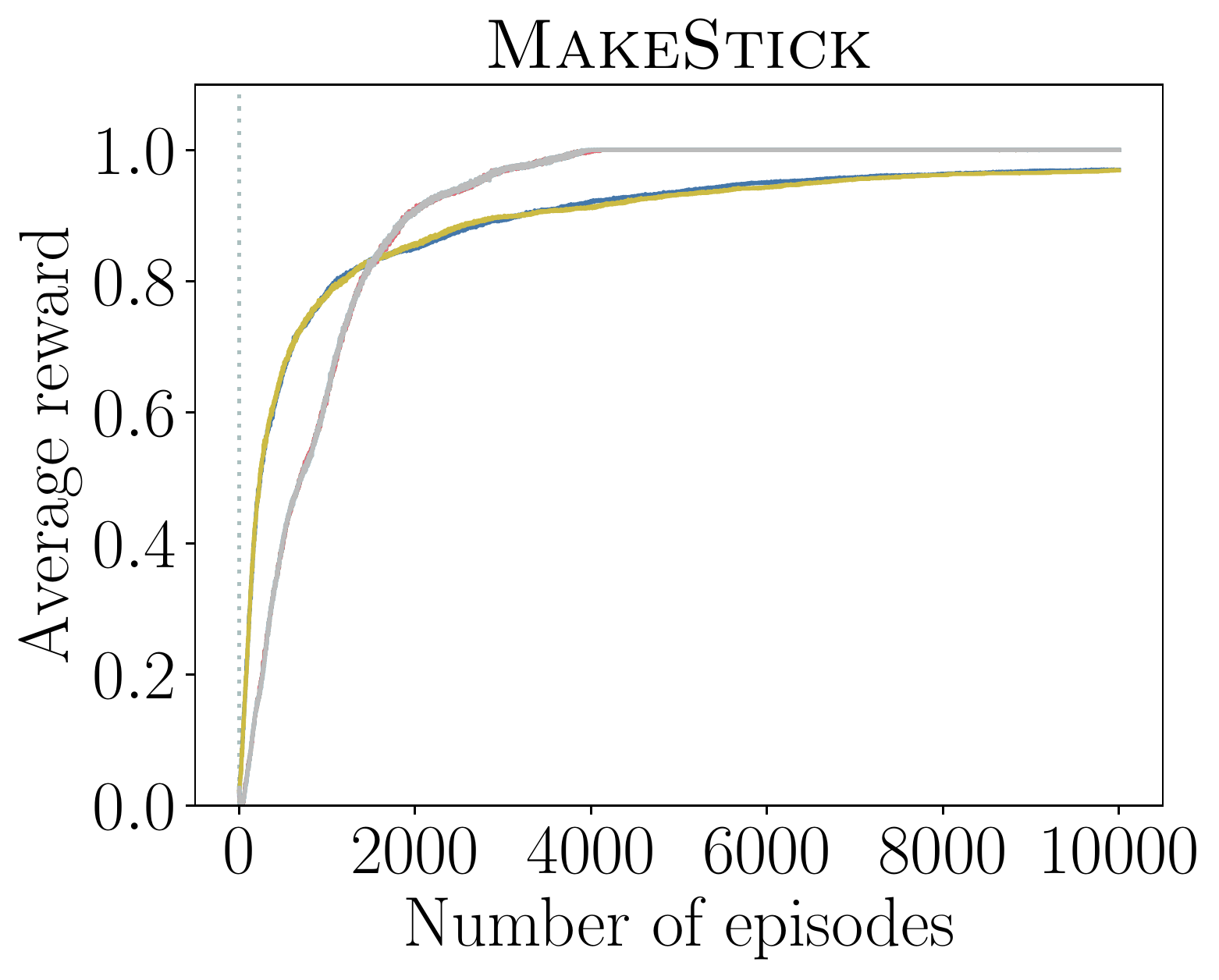}
		}
	\end{minipage}
	\begin{minipage}[c]{0.24\linewidth}
		\resizebox{\linewidth}{!}{
			\includegraphics{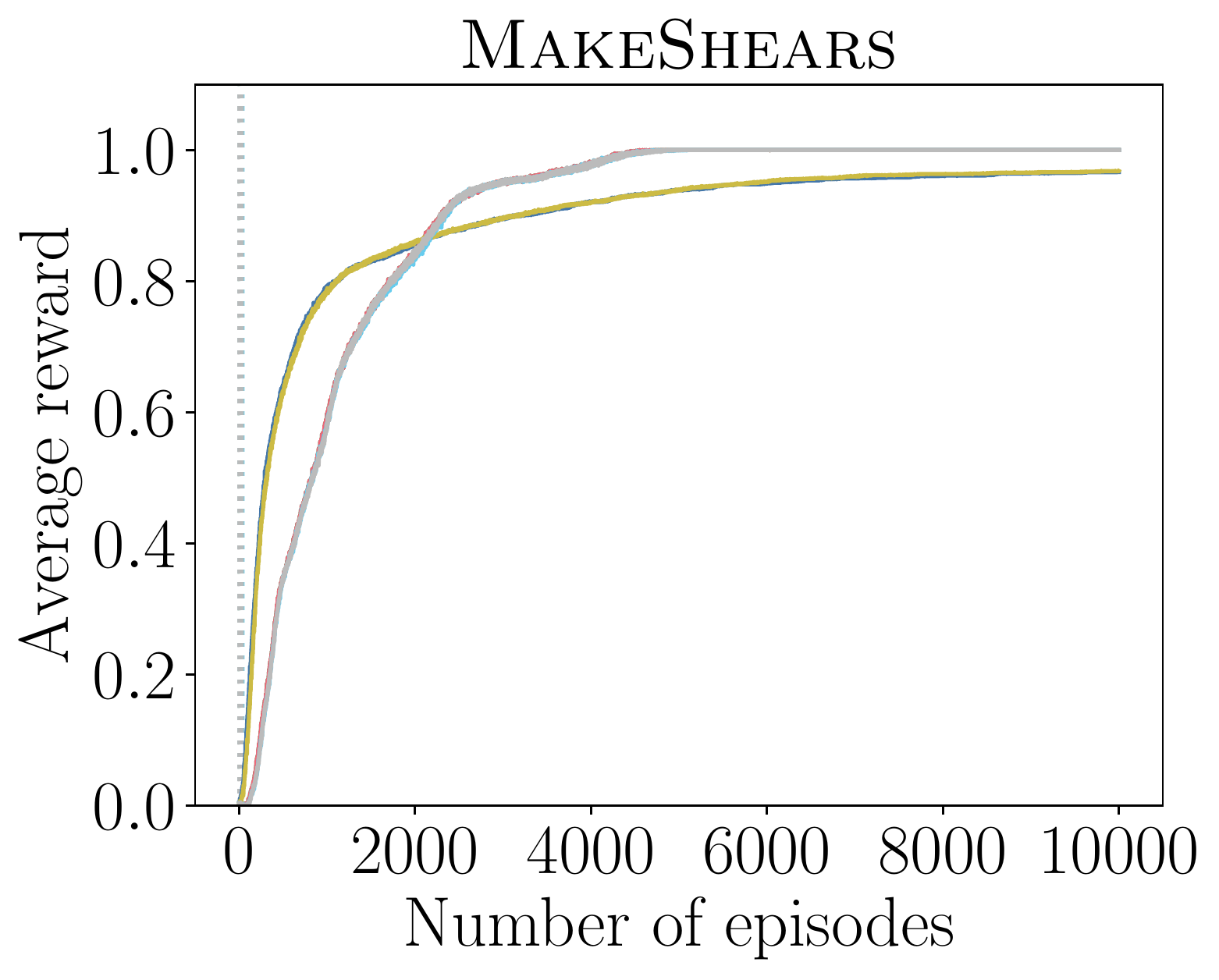}
		}
	\end{minipage}

	\begin{minipage}[c]{0.24\linewidth}
		\resizebox{\linewidth}{!}{
			\includegraphics{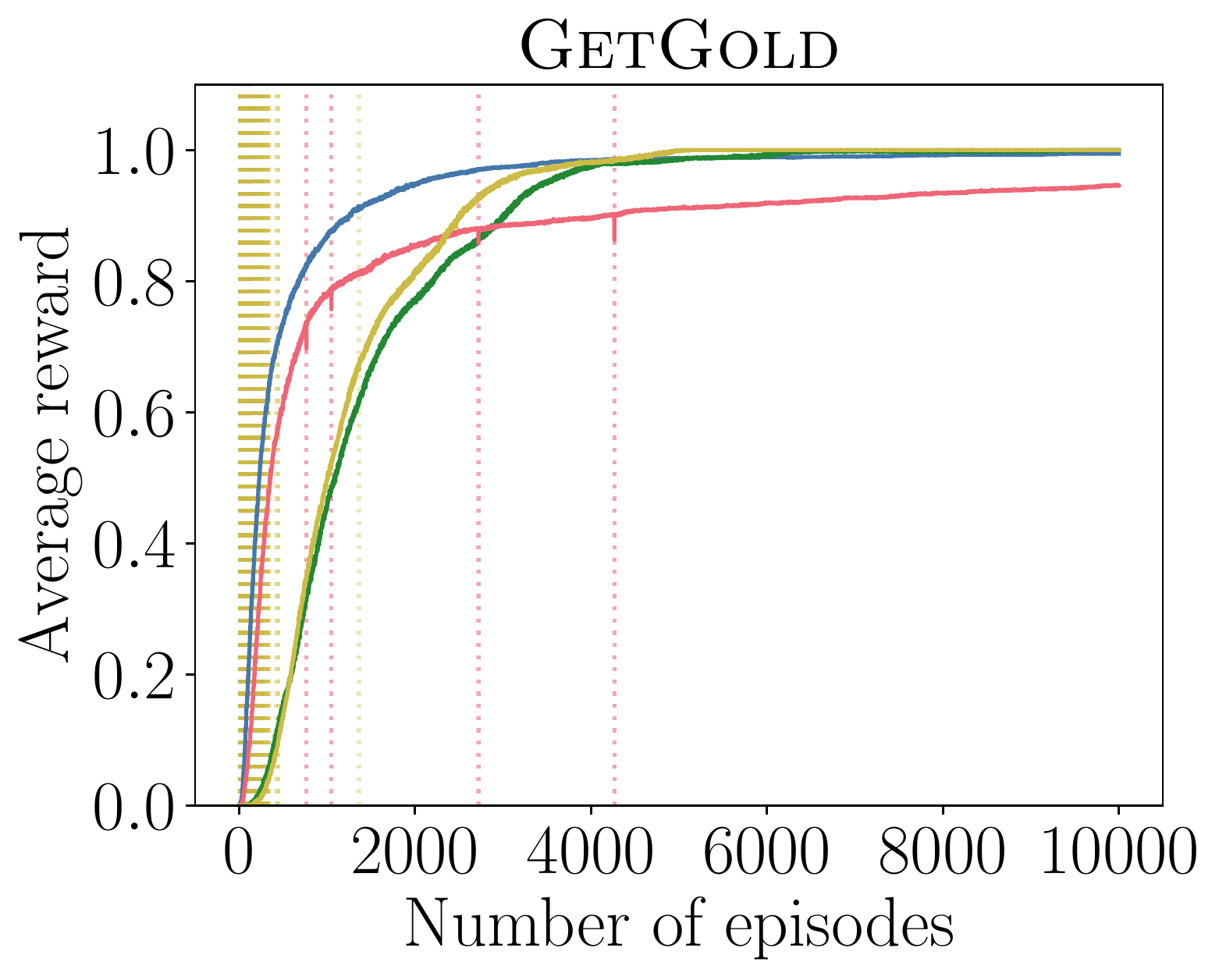}
		}
	\end{minipage}
	\begin{minipage}[c]{0.24\linewidth}
		\resizebox{\linewidth}{!}{
			\includegraphics{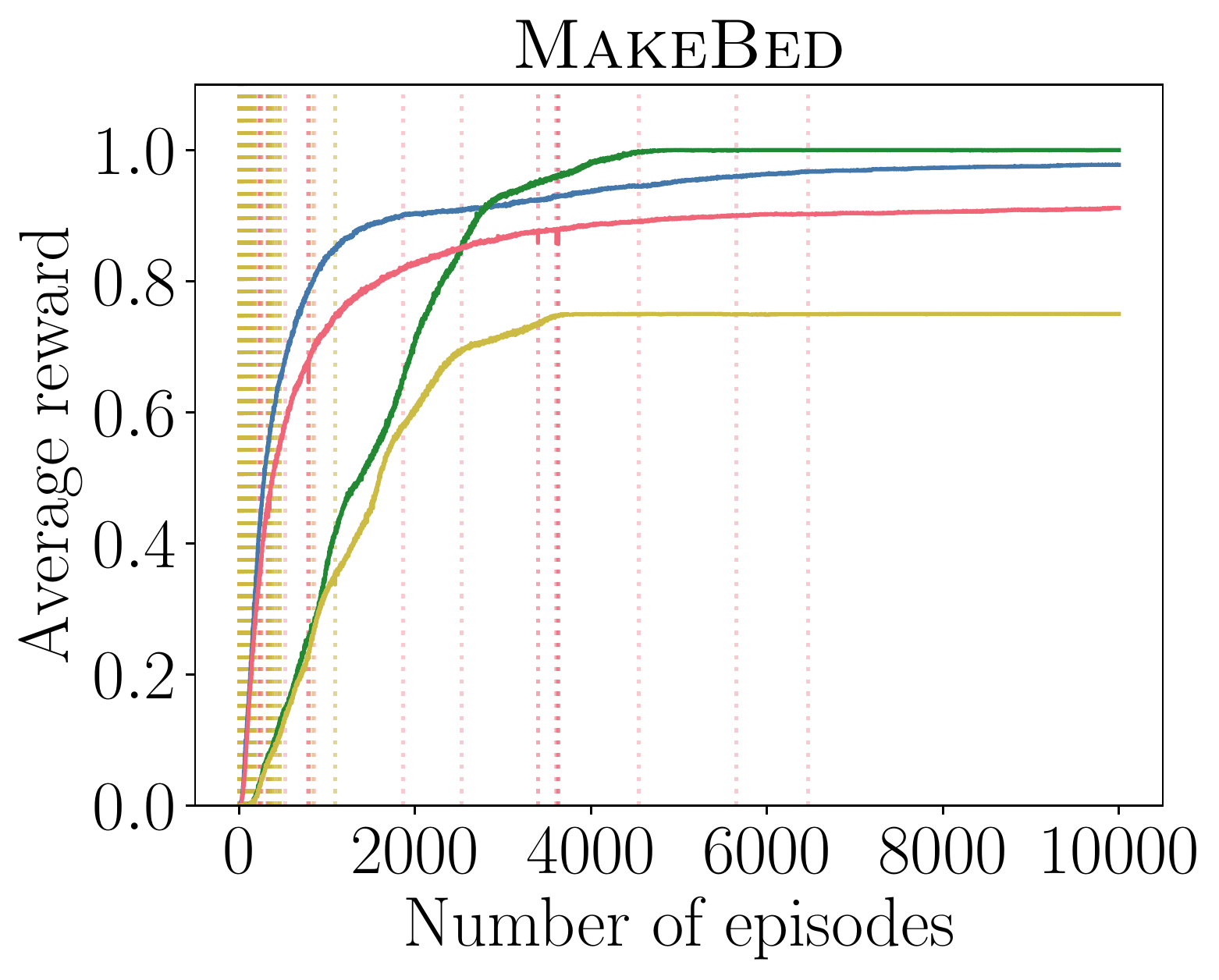}
		}
	\end{minipage}
	\rulesep
	\begin{minipage}[c]{0.24\linewidth}
		\resizebox{\linewidth}{!}{
			\includegraphics{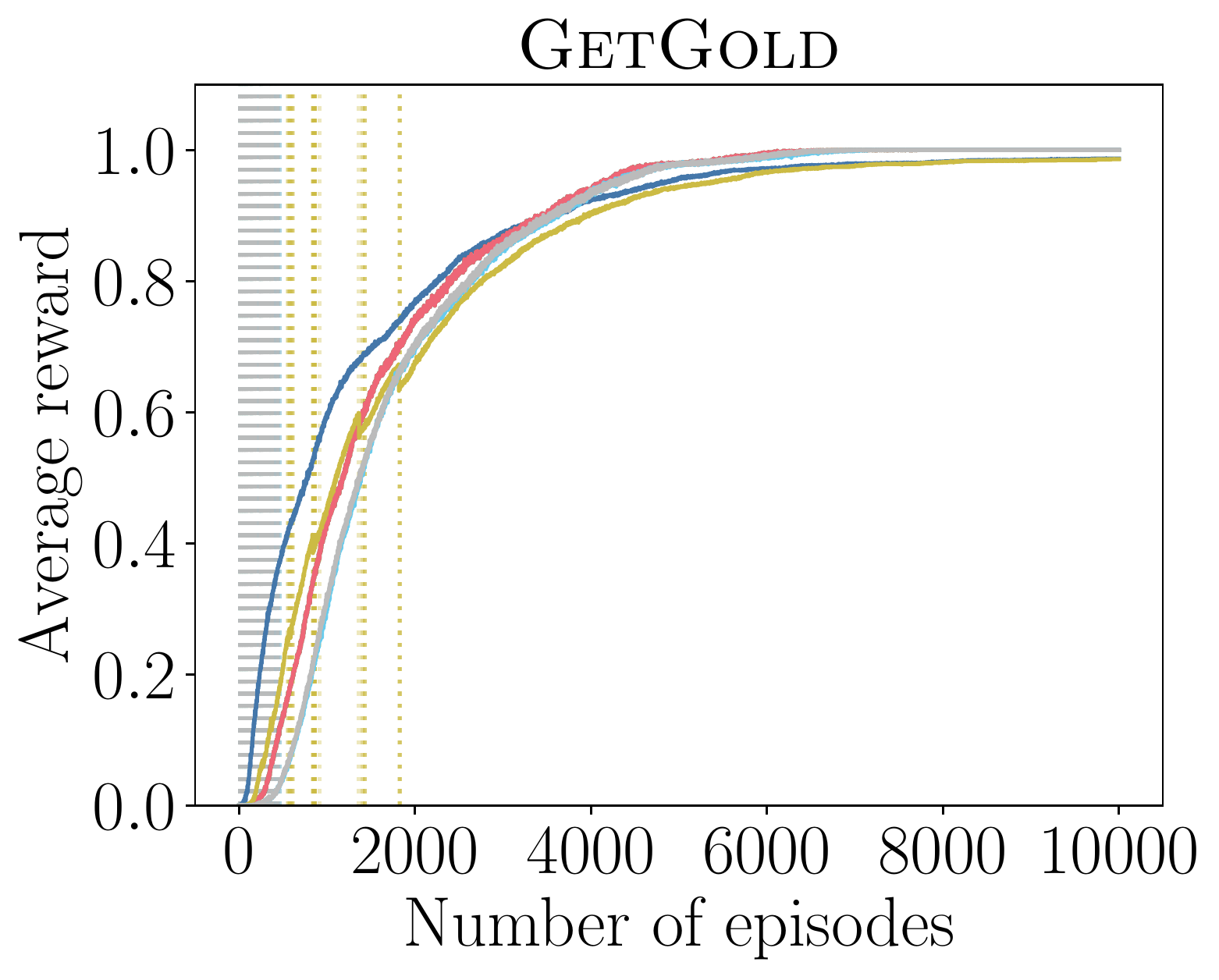}
		}
	\end{minipage}
	\begin{minipage}[c]{0.24\linewidth}
		\resizebox{\linewidth}{!}{
			\includegraphics{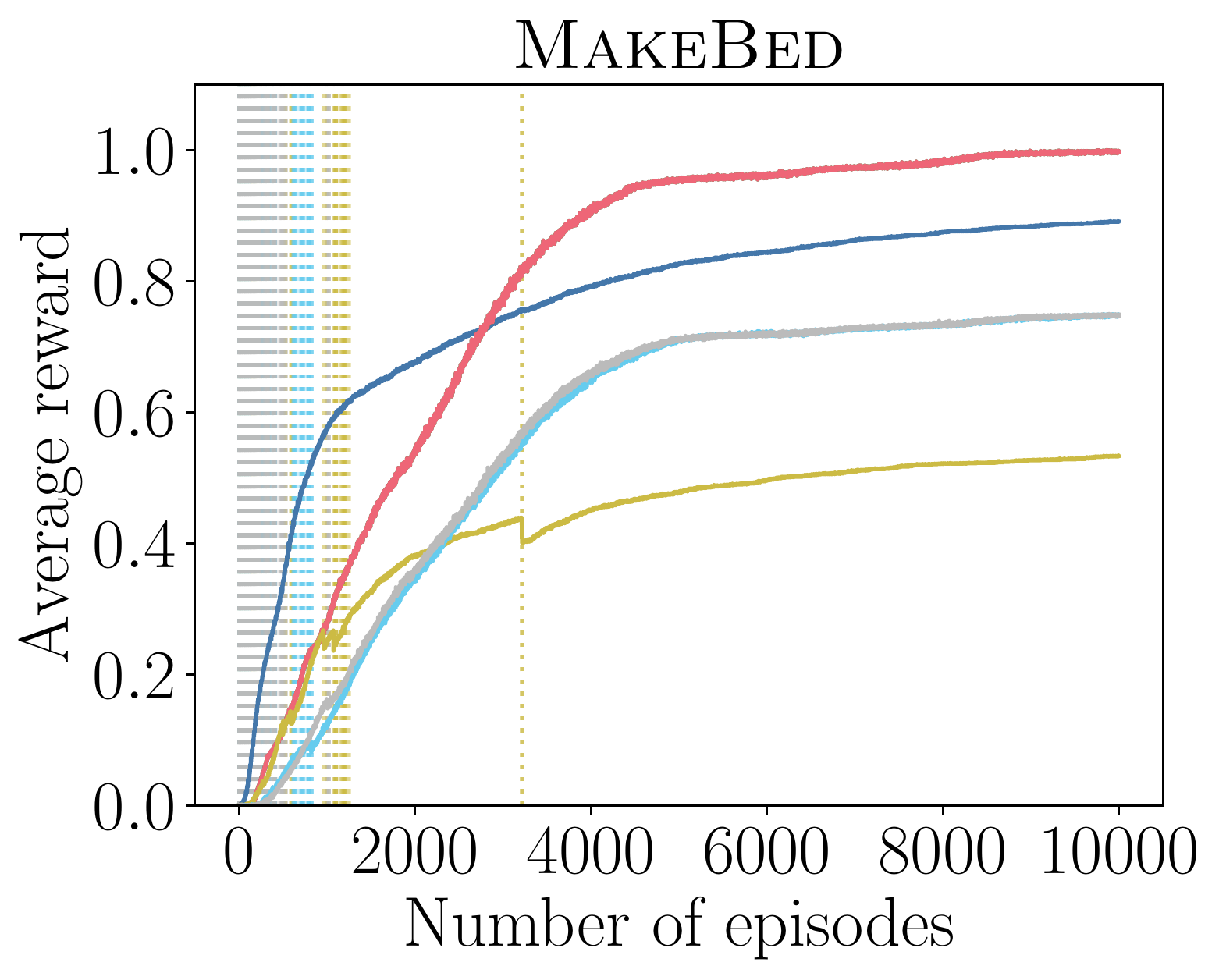}
		}
	\end{minipage}

	\begin{minipage}[c]{0.24\columnwidth}
		\resizebox{\linewidth}{!}{
			\includegraphics{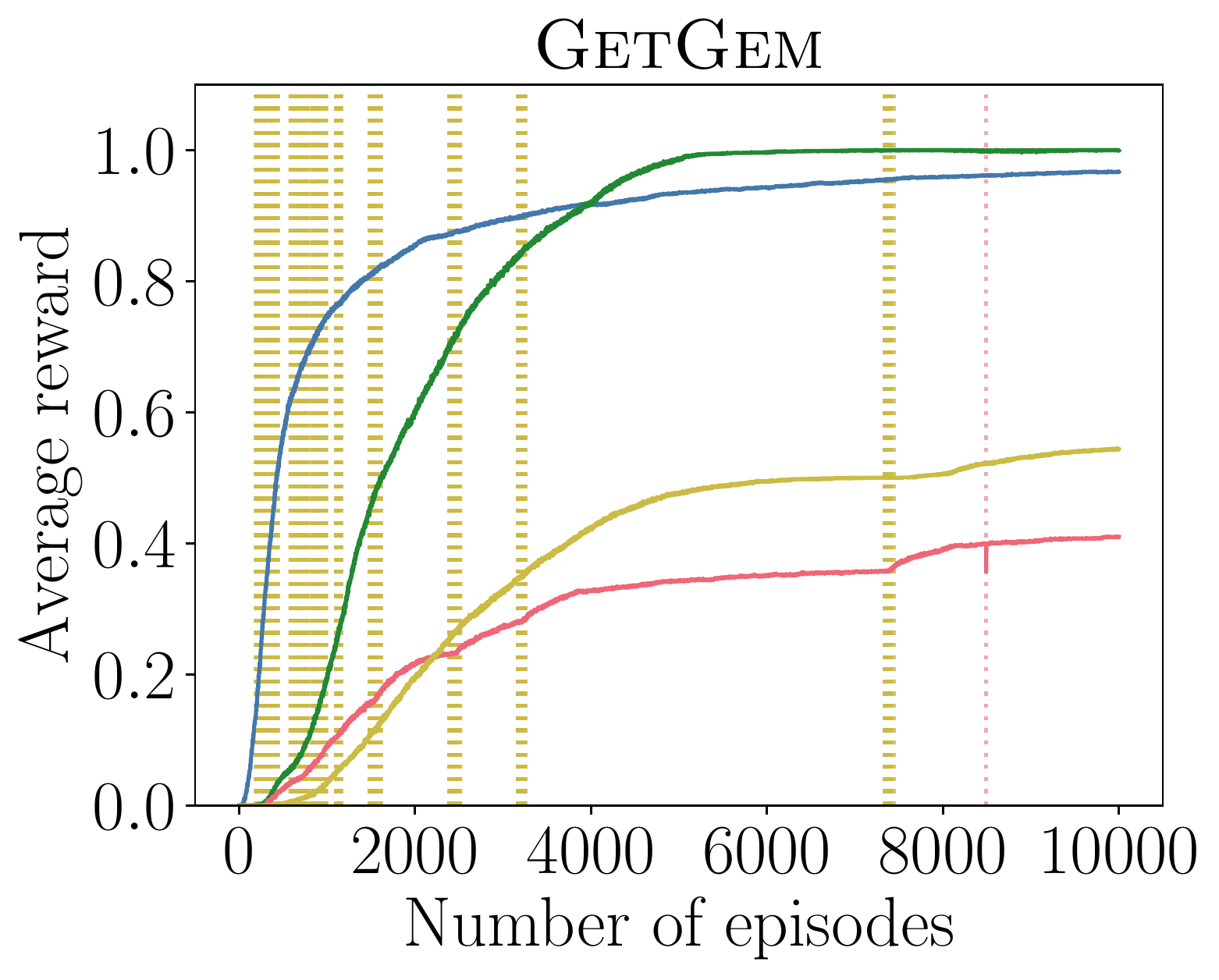}
		}
	\end{minipage}
	\begin{minipage}[c]{0.24\columnwidth}
		\centering
		\begin{tikzpicture}
		\begin{customlegend}[legend columns=1,legend style={column sep=1ex},legend cell align={left},legend entries={HRL,$\text{HRL}_\text{G}$,ISA-HRL,$\text{ISA-HRL}_\text{G}$}]
		\addlegendimage{pblue}
		\addlegendimage{pgreen}
		\addlegendimage{pred}
		\addlegendimage{pyellow}
		\end{customlegend}
		\end{tikzpicture}
	\end{minipage}
	\rulesep
	\begin{minipage}[c]{0.24\columnwidth}
		\resizebox{\linewidth}{!}{
			\includegraphics{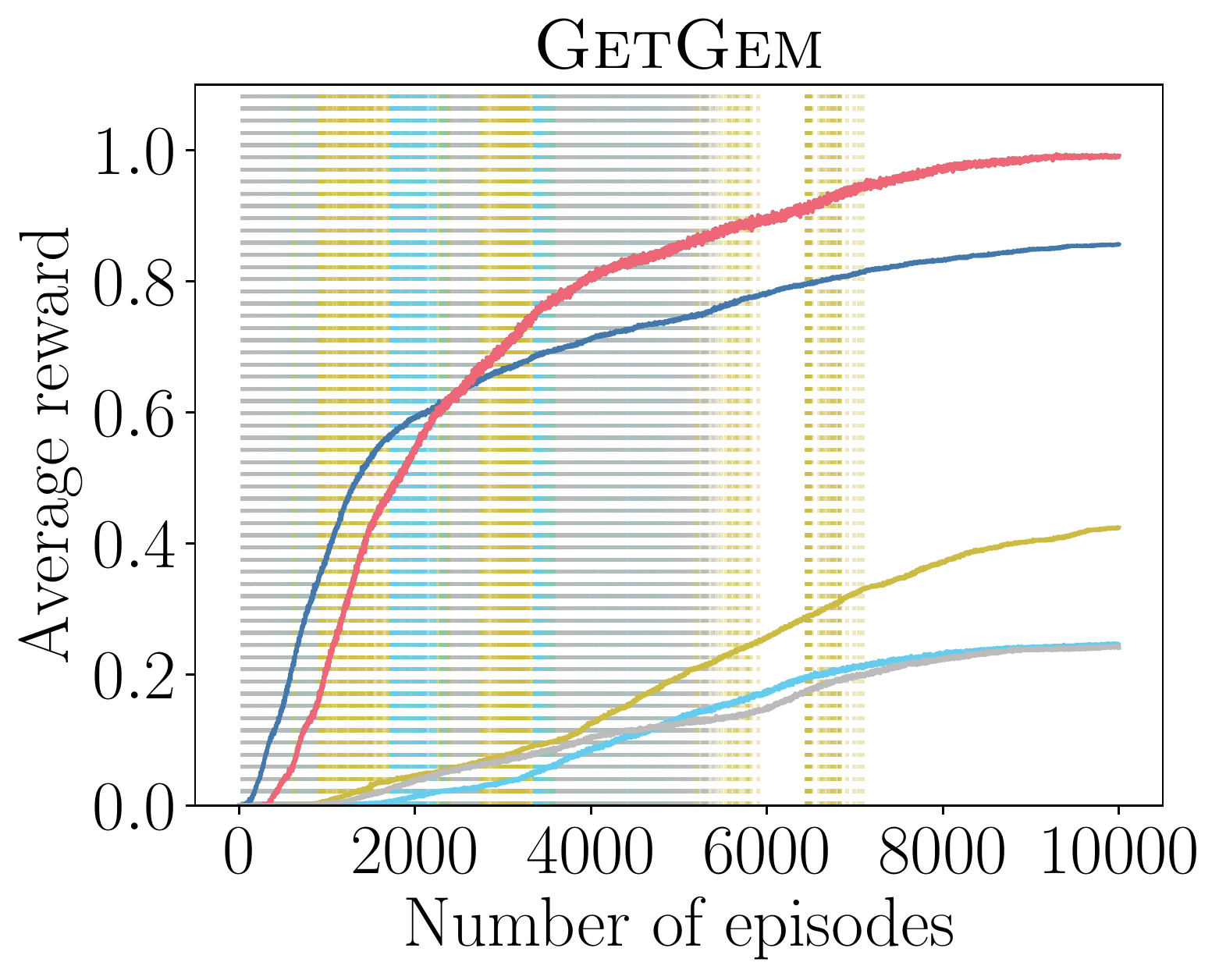}
		}
	\end{minipage}
	\begin{minipage}[c]{0.24\columnwidth}
		\centering
		\begin{tikzpicture}
		\begin{customlegend}[legend columns=1,legend style={column sep=1ex},legend cell align={left},legend entries={QRM,$\text{QRM}_{\min}$,$\text{QRM}_{\max}$, ISA-QRM,$\text{ISA-QRM}_{\min}$,$\text{ISA-QRM}_{\max}$}]
		\addlegendimage{pblue}
		\addlegendimage{pgreen}
		\addlegendimage{pred}
		\addlegendimage{pyellow}
		\addlegendimage{pcyan}
		\addlegendimage{pgray}
		\end{customlegend}
		\end{tikzpicture}
	\end{minipage}
	\caption{Learning curves for different RL algorithms in the \textsc{CraftWorld} tasks when interleaved automaton learning is off (HRL, QRM) and on (ISA-HRL, ISA-QRM). The ISA curves for \textsc{MakeBed} and \textsc{GetGem} are below the handcrafted automata curves because of the timeout runs. When a run finishes successfully, both types of curves are similar.}
	\label{fig:rl_algorithm_hancrafted_crafworld}
\end{sidewaysfigure}

\subsection{Experiments in {\mdseries\textsc{WaterWorld}}}
The \textsc{WaterWorld} domain \shortcite{IcarteKVM18}, which is illustrated in Figure~\ref{fig:waterworld_grid}, consists of a 2D box containing 12 balls of 6 different colors (2 balls per color). Each ball moves at a constant speed in a given direction. The balls bounce only when they collide with a wall. The agent is a white ball that can change its velocity in any of the four cardinal directions. The set of observables $\mathcal{O}=\{r,g,b,y,c,m\}$ is formed by the balls' colors. The agent observes a color when it overlaps with the ball painted with it. For example, in Figure~\ref{fig:waterworld_grid} the agent would observe $\{g\}$ (green). Note that several balls can overlap at the same time; thus, the agent can simultaneously observe several colors. The tasks we consider consist in observing a sequence of colors in a specific order:
\begin{itemize}
	\item \textsc{RGB}: red ($r$) then green ($g$) then blue ($b$). It consists of 3 subgoals and is represented by a 4 state minimal automaton.
	\item \textsc{RG-B}: red ($r$) then green ($g$) and (independently) blue ($b$). Note that there are two  sequences and they can be interleaved. For instance, $\langle\{r\},\{g\},\{b\}\rangle$, $\langle\{b\},\{r\},\{g\}\rangle$ and $\langle\{r\},\{b\},\{g\}\rangle$ are three possible goal traces. Note that RGB is a subcase of this task. It consists of 3 subgoals and is represented by a 6 state minimal automaton.
	\item \textsc{RGBC}: touch red ($r$) then green ($g$) then blue ($b$) then cyan ($c$). It consists of 4 subgoals and is represented by a 5 state minimal automaton.
\end{itemize}
The agent gets a reward of 1 upon the goal's achievement and 0 otherwise. The tasks we consider have no dead-end states, like in \textsc{CraftWorld}. The balls start with a random position and direction at the beginning of each episode.

\begin{figure}
	\centering
	\includegraphics[scale=0.8]{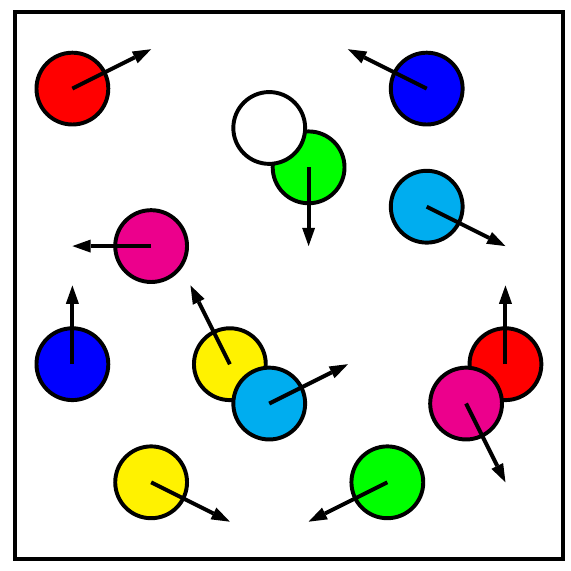}
	\caption{The \textsc{WaterWorld} domain \shortcite{IcarteKVM18}.}
	\label{fig:waterworld_grid}
\end{figure}

Unlike \textsc{OfficeWorld} and \textsc{Craftworld}, the state space is continuous, so we cannot use tabular Q-functions. Instead, like \shortciteA{IcarteKVM18}, we use a Double DQN \shortcite<{DDQN},>{HasseltGS16} to approximate the Q-functions in both HRL and QRM. The neural networks consist of 4 hidden layers of 64 neurons, each followed by a ReLU. A network's input is a vector containing the absolute position and velocity of the agent, and the relative positions and velocities of the other balls. Just like in a standard DQN \shortcite{MnihKSRVBGRFOPB15}, the output contains the estimated Q-value for each action (or option in the case of a metacontroller in HRL). We train the neural networks using the Adam optimizer \shortcite{KingmaB15} with $\alpha=\num{1e-5}$. The target networks are updated every 100 steps. A total of 50,000 episodes are run for each setting with parameters $\epsilon=0.1$ and $\gamma= 0.9$. The Q-functions are updated using batches of 32 experiences uniformly sampled from an experience replay buffer of size 50,000. The learning starts after collecting 1,000 samples.

Table~\ref{tab:waterworld_experiments_params} lists the mentioned neural network hyperparameters along with the automaton learning and RL parameters. The main difference with respect to the previous experiments is the use of a POMDP set consisting of a single POMDP. Since the balls in \textsc{WaterWorld} are constantly moving, it is easier for the agent to observe all possible combinations of observables. Therefore, we do not need to use a larger set to learn a general automaton.

\begin{table}[]
	\centering
	\begin{tabular}{ll}
		\toprule
		Learning rate ($\alpha$)                  & \num{1e-5}    \\
		Exploration rate ($\epsilon$)             & 0.1    \\
		Discount factor ($\gamma$)                & 0.9   \\
		Number of episodes                        & 50,000 \\
		Maximum episode length ($N$)              & 150    \\
		Replay memory size                        & 50,000 \\
		Replay start size                         & 1,000 \\
		Batch size                                & 32     \\
		Number of tasks ($|\mathcal{D}|$)         & 1     \\
		Target network update frequency           & 100 \\
		Trace compression                         & \cmark    \\
		Enforce acyclicity                        & \cmark    \\
		Number of disjuncts ($\kappa$)         & 1      \\
		Avoid learning purely negative formulas & \cmark \\
		Use restricted observable set             & \xmark \\
		\bottomrule
	\end{tabular}
	\caption{Parameters used in the \textsc{WaterWorld} experiments.}
	\label{tab:waterworld_experiments_params}
\end{table}

Figure~\ref{fig:rl_algorithm_hancrafted_waterworld} shows the learning curves for the \textsc{WaterWorld} tasks introduced before. We show how those that use automaton learning (ISA-HRL, ISA-QRM) compare to those obtained with handcrafted automata (HRL, QRM). The greedy policy was evaluated every 500 episodes: the evaluation consisted in running the greedy policy for 10 different episodes and averaging the reward obtained across them. The learning curves are smoothed using a sliding window of size 1,000. Table~\ref{tab:automata_learning_statistics_waterworld} shows the automaton learning statistics for these tasks using $\text{HRL}_\text{G}$.\footnote{The automaton learning statistics for the other RL algorithms are similar, so we do not report them.} We observe the following:
\begin{table}
	\centering
	\resizebox{0.9\textwidth}{!}{
		\begin{tabular}{lrrrrrrr}
			\toprule
			& \multicolumn{1}{c}{Time (s.)}     & & \multicolumn{3}{c}{\# Examples}                                             & &  \multicolumn{1}{c}{Example Length}  \\
			\cmidrule{4-6}
			&                                   & & \multicolumn{1}{c}{All} & \multicolumn{1}{c}{$G$} & \multicolumn{1}{c}{$I$} & &                                     \\
			\midrule
			\textsc{RGB}     & 3.7 (0.3)                         & & 26.9 (1.1)              & 7.0 (0.4)               & 20.0 (0.9)              & & 4.3 (2.3)                           \\
			\textsc{RG-B}    & 129.5 (23.9)                      & & 48.4 (1.2)              & 15.4 (0.4)              & 33.0 (1.0)              & & 4.4 (2.2)                            \\
			\textsc{RGBC}    & 111.8 (23.1)                      & & 61.4 (2.7)              & 11.7 (0.5)              & 49.6 (2.4)              & & 5.5 (2.7)                           \\
			\bottomrule
		\end{tabular}
	}
	\caption{Automaton learning statistics for the \textsc{WaterWorld} tasks using $\text{HRL}_\text{G}$.}
	\label{tab:automata_learning_statistics_waterworld}
\end{table}
\begin{itemize}
	\item The tasks with more subgoals run the automaton learner more often. For instance, automaton learning is concentrated at the beginning of the interaction in RGB; in contrast, it is called at many different times in RGBC.
	\item Even though the minimal automaton of RG-B has more states than that of RGB, the reward is sparser in the latter since there are not as many ways to achieve the goal as in the former. This is why RGB's learning curve does not converge faster than RG-B's. Note that the time and the number of examples needed to learn the RG-B automaton are higher than those for RGB because its set of automaton states is bigger. Finally, the fact that RG-B is the task requiring more goal traces  shows that it is the task where achieving the goal is easier although it makes automaton learning harder.
	\item Even though RGBC has more subgoals than RG-B, its running time is lower. The most likely reason is that the minimal automaton for RG-B has more states and, therefore, the hypothesis space is bigger. Naturally, the bigger the hypothesis space is, the harder it becomes to find a solution. However, as we saw in the grid-world experiments, the tasks with more subgoals require more examples. In particular, RGBC needs more incomplete examples than RG-B possibly because there are more sequences of candidate subgoals to be discarded.
	\item The approaches based on HRL perform better than the ones based on QRM, specially in RGBC. We hypothesize there are two possible causes for this behavior:
	\begin{enumerate}[(i)]
		\item The Q-functions resetting. Remember that while in HRL only the metacontrollers are reset, all the Q-functions in QRM are reset. Given that new automata are learned throughout the entire interaction, the agents using QRM rarely have the chance to converge to a stable policy.
		\item While the HRL agent commits to satisfying a given formula (determined by the metacontroller) at a given step, the QRM agent selects the globally best action at each step. Unlike the grid-worlds, in this domain the observables are constantly changing their position, so it can be more difficult to learn a function that generalizes to many scenarios (i.e., different initializations of the task).\footnote{We highlight that our evaluation QRM in the \textsc{WaterWorld} domain differs a bit from the one by \shortciteA{IcarteKVM18}. While we randomly initialize the environment at the start of every episode, they use a fixed map. In the latter case, QRM quickly converges in RGBC (we have been able to reproduce the results), but we consider that learning Q-functions that generalize to diverse scenarios is more interesting.}
	\end{enumerate}
	To determine which of these two causes is more plausible, we examine the performance of QRM with a handcrafted automaton. In general, the performance of approaches using automaton learning are very similar to the ones using a handcrafted automaton. This shows that the forgetting effect is not as present in \textsc{WaterWorld} as in the grid-world domains. The use of experience replay is likely to be responsible for this because the agents can update the Q-functions without having to relive successful experiences that happened in the past. Therefore, we conclude that cause (ii) is better supported by our experiments.
	\item There is barely any difference between the learning curves of the algorithms that do not use auxiliary guidance (HRL, QRM), and the ones that do use it ($\text{HRL}_\text{G}$, $\text{QRM}_{\min}$, $\text{QRM}_{\max}$). \shortciteA{CamachoIKVM19} also showed similar behavior in the case of QRM using handcrafted reward machines with a different reward shaping mechanism. We hypothesize that since the Q-functions must generalize to different settings (remember that all episodes start with a random configuration), an agent that does not use guidance might explore similarly to an agent that does use it. Therefore, using guidance does not help much in these tasks.
\end{itemize}
\begin{sidewaysfigure}
	\centering
	\subfloat{
		\resizebox{0.32\columnwidth}{!}{
			\includegraphics{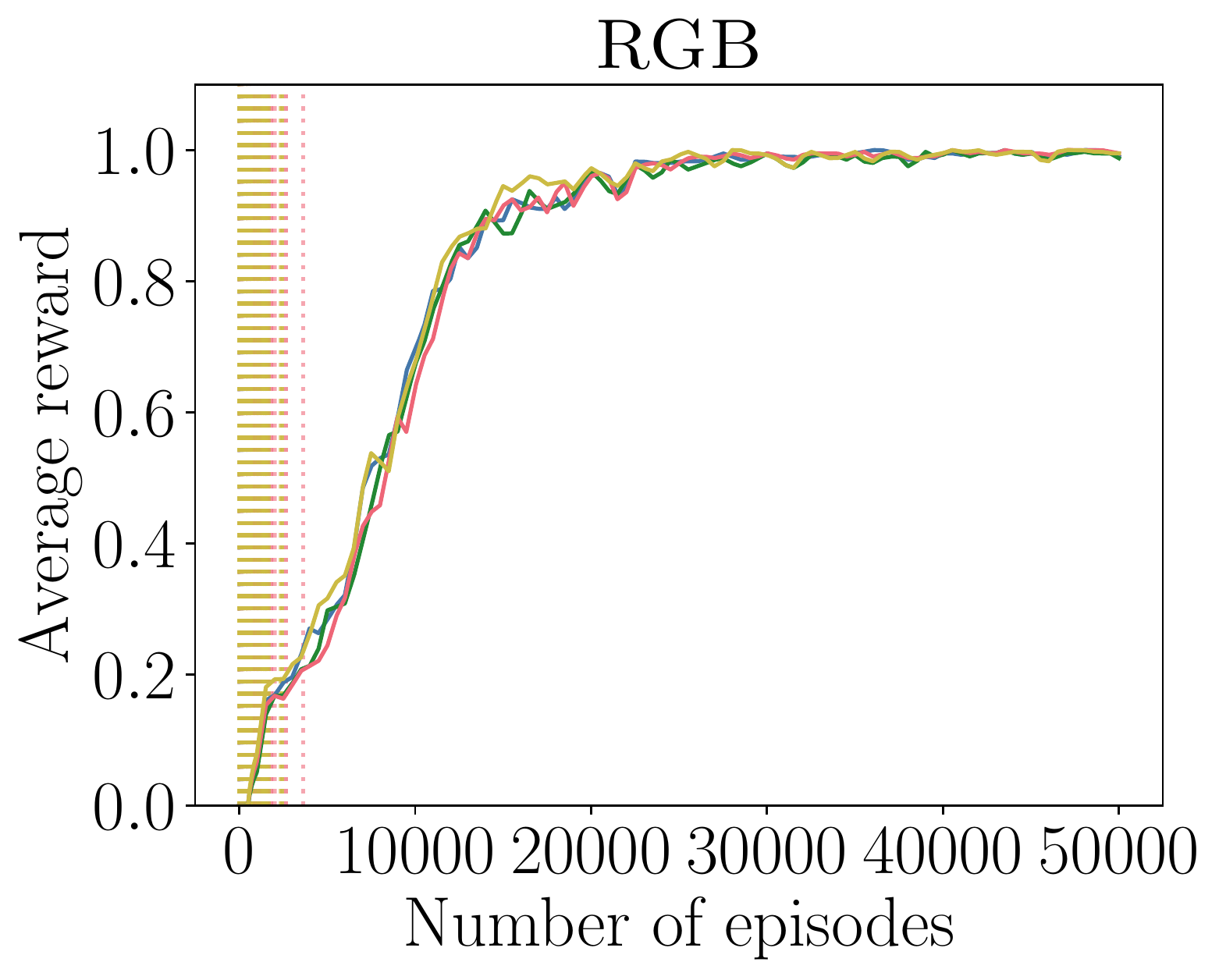}
		}
	}
	\subfloat{
		\resizebox{0.32\columnwidth}{!}{
			\includegraphics{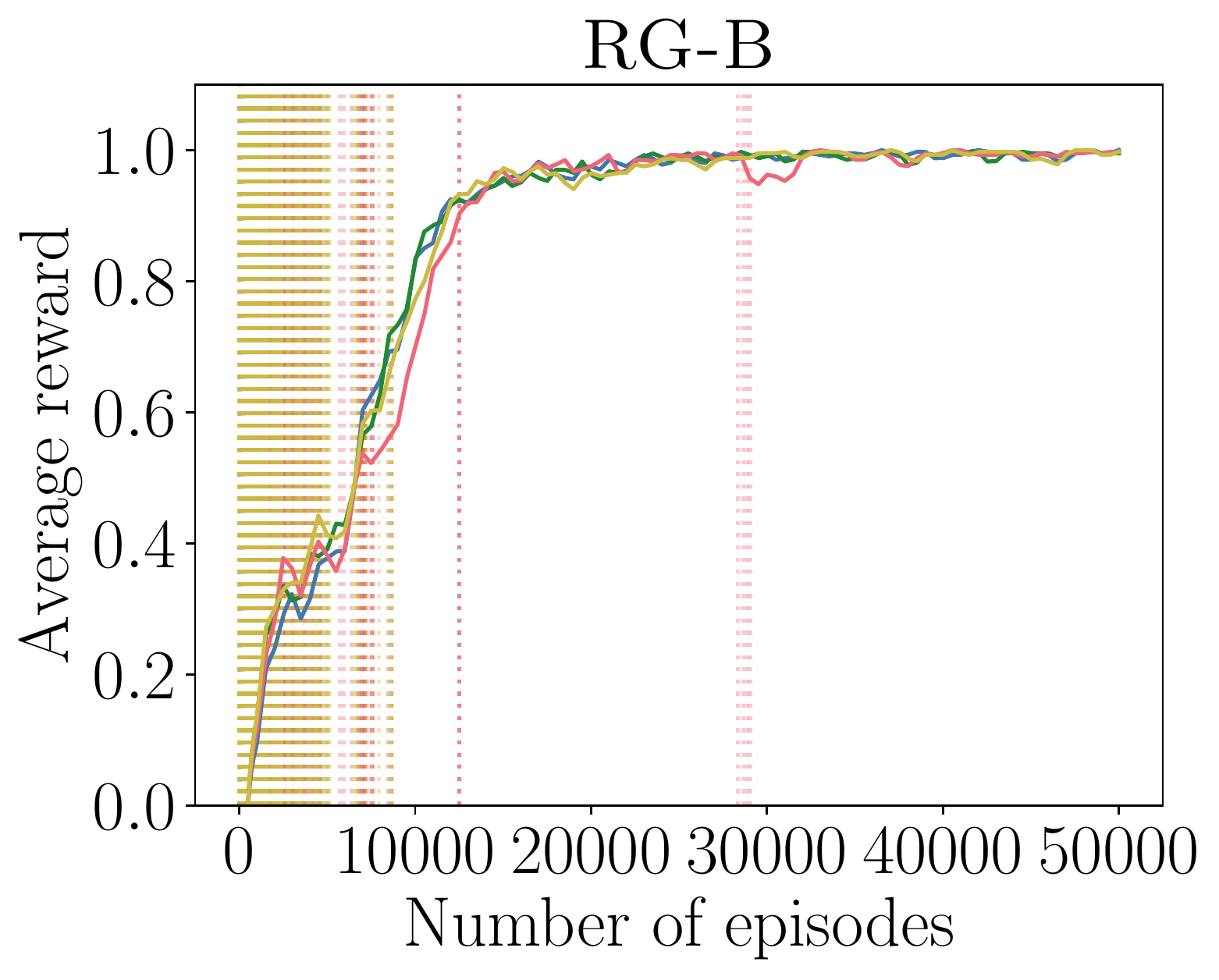}
		}
	}
	\subfloat{
		\resizebox{0.32\columnwidth}{!}{
			\includegraphics{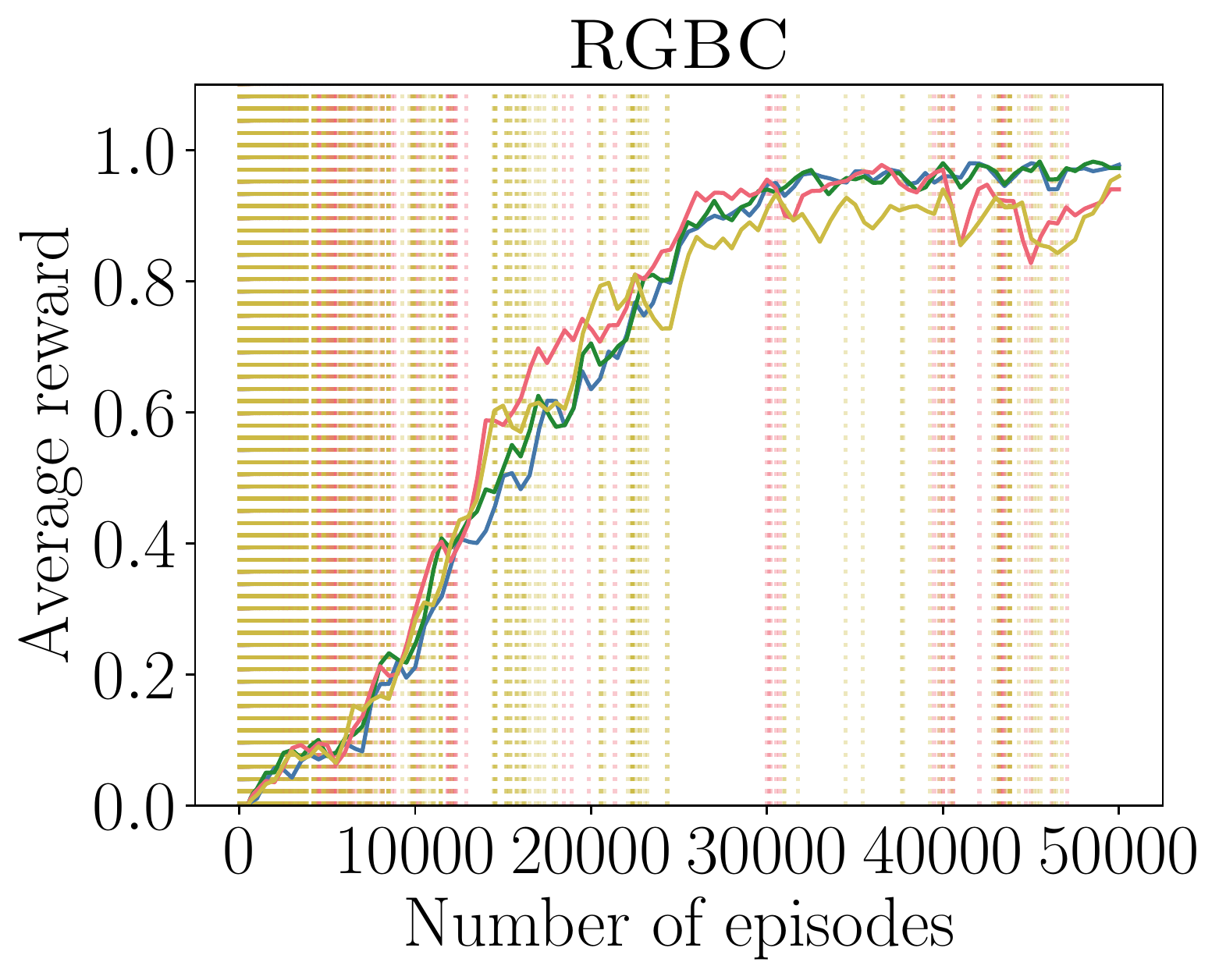}
		}
	}
	\begin{center}
		\begin{tikzpicture}
			\begin{customlegend}[legend columns=-1,legend style={column sep=1ex},legend cell align={left},legend entries={HRL,$\text{HRL}_\text{G}$,ISA-HRL,$\text{ISA-HRL}_\text{G}$}]
				\addlegendimage{pblue}
				\addlegendimage{pgreen}
				\addlegendimage{pred}
				\addlegendimage{pyellow}
			\end{customlegend}
		\end{tikzpicture}
	\end{center}
	\subfloat{
		\resizebox{0.32\columnwidth}{!}{
			\includegraphics{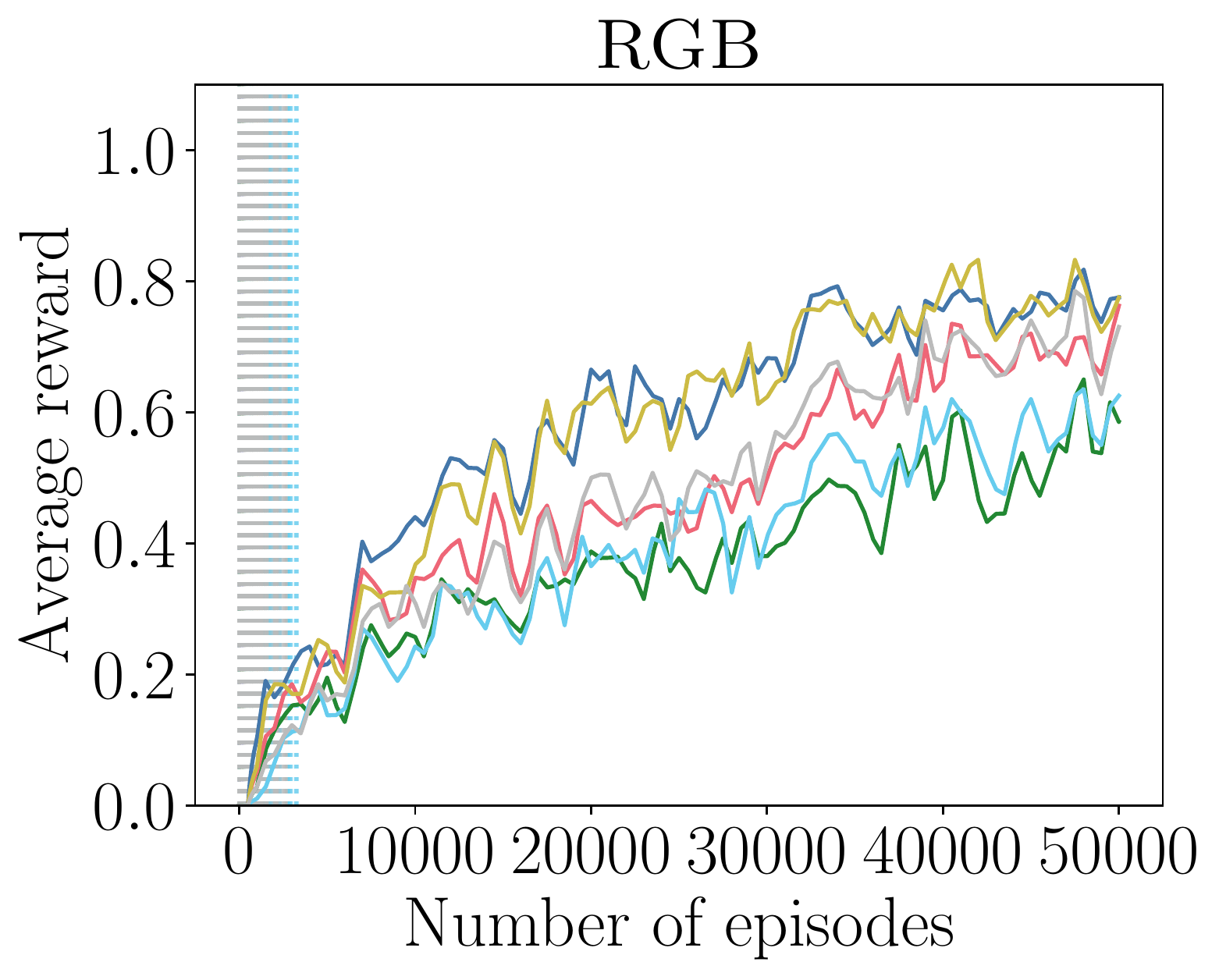}
		}
	}
	\subfloat{
		\resizebox{0.32\columnwidth}{!}{
			\includegraphics{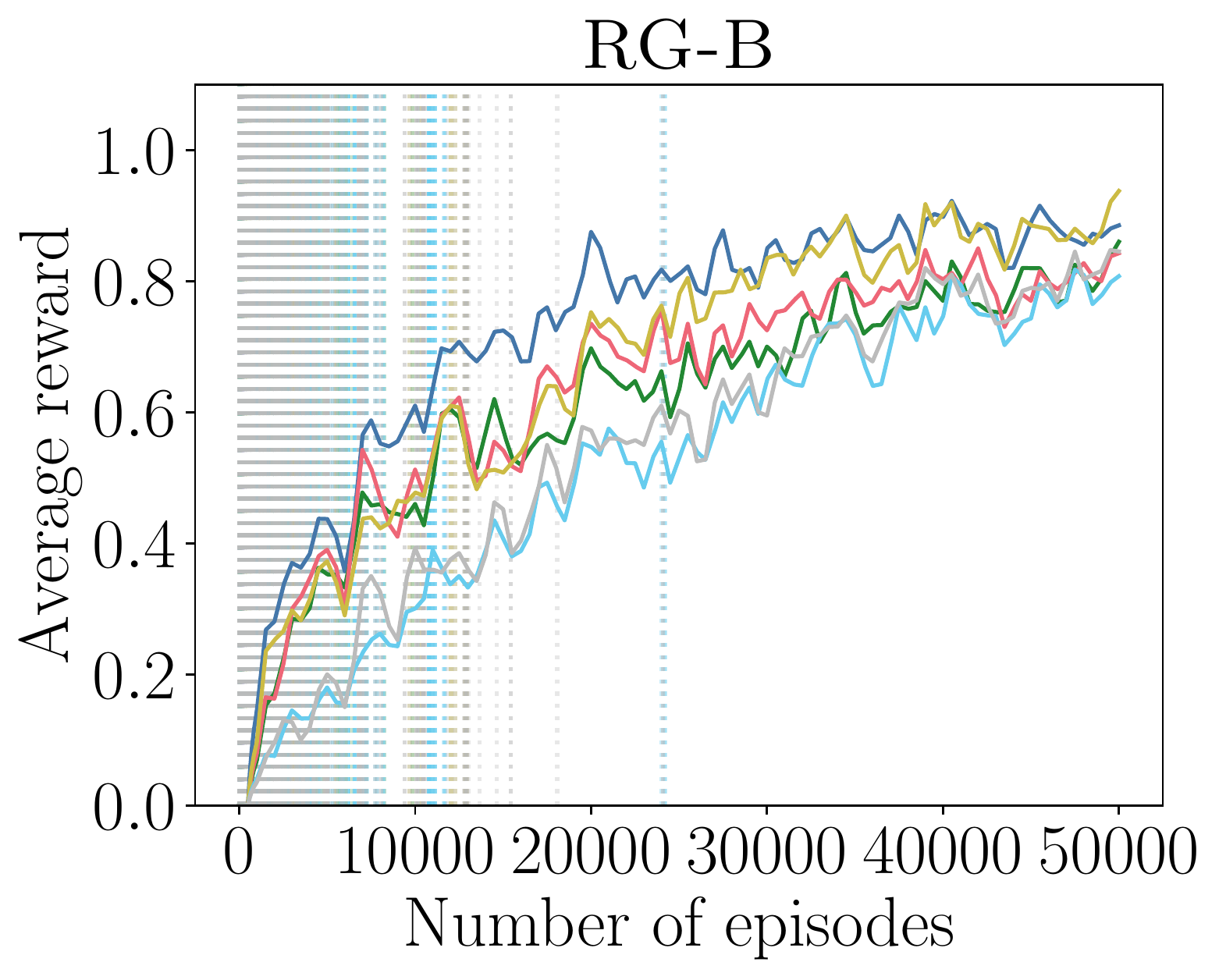}
		}
	}
	\subfloat{
		\resizebox{0.32\columnwidth}{!}{
			\includegraphics{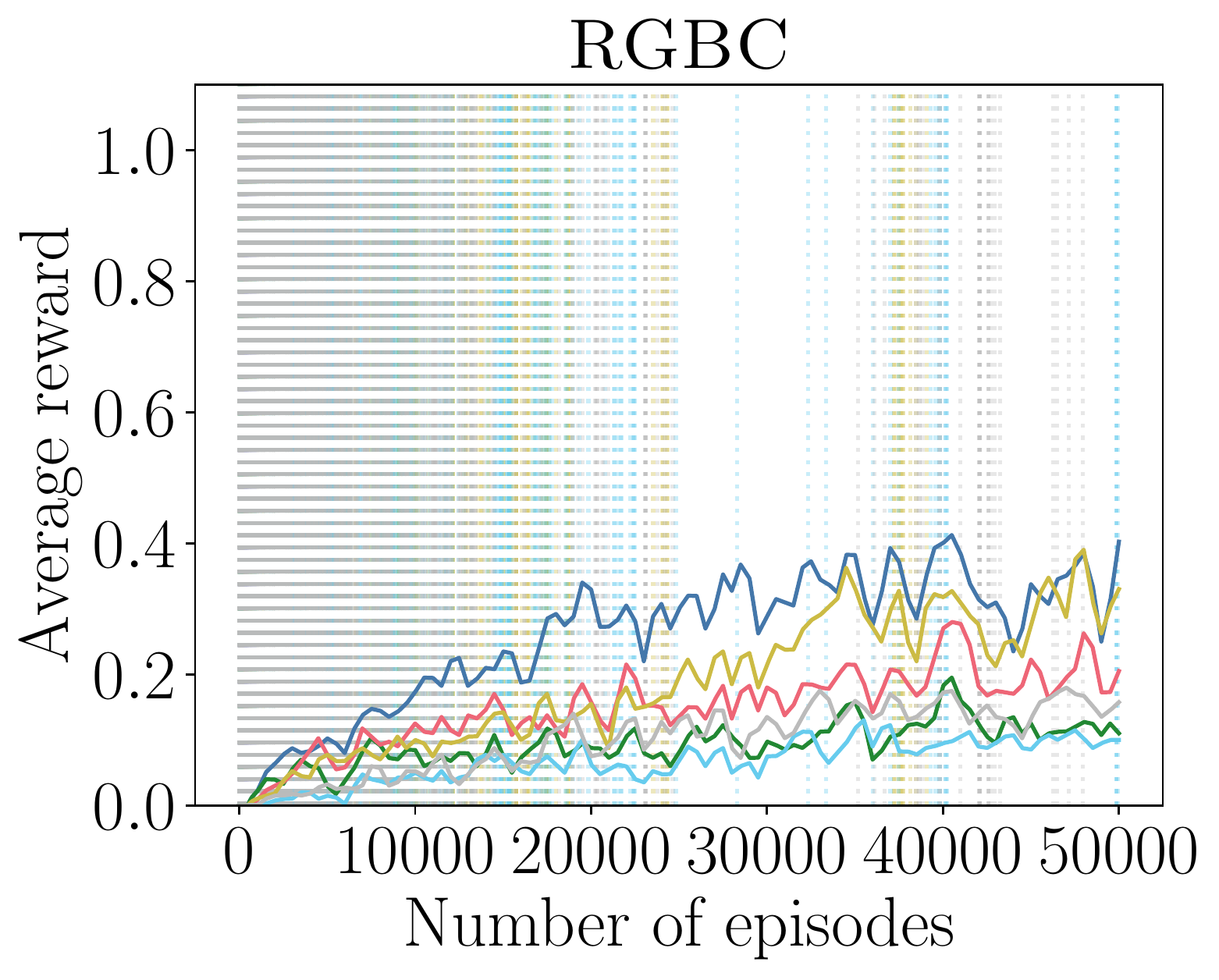}
		}
	}
	\begin{center}
		\begin{tikzpicture}
			\begin{customlegend}[legend columns=-1,legend style={column sep=1ex},legend cell align={left},legend entries={QRM,$\text{QRM}_{\min}$,$\text{QRM}_{\max}$, ISA-QRM,$\text{ISA-QRM}_{\min}$,$\text{ISA-QRM}_{\max}$}]
				\addlegendimage{pblue}
				\addlegendimage{pgreen}
				\addlegendimage{pred}
				\addlegendimage{pyellow}
				\addlegendimage{pcyan}
				\addlegendimage{pgray}
			\end{customlegend}
		\end{tikzpicture}
	\end{center}
	\caption{Learning curves for different RL algorithms in the \textsc{WaterWorld} tasks when interleaved automaton learning is off (HRL, QRM) and on (ISA-HRL, ISA-QRM).}
	\label{fig:rl_algorithm_hancrafted_waterworld}
\end{sidewaysfigure}
\subsection{Summary of the Results}
Throughout this section we have observed several commonalities between the results across domains. We provide a short summary of the main common findings below:
\begin{itemize}
	\item The higher the number of task subgoals and the number of required states are, the higher the values for the collected metrics (running time, number of examples and example length).
	\item The number of goal examples used to learn an automaton is approximately the same as the number of paths from the initial state to the accepting state. Incomplete and dead-end examples are used to refine those paths and increase as the number of subgoals and automaton states increases.
	\item Using auxiliary reward signals (or guidance) has been shown useful to speed up convergence in grid-world tasks. Importantly, as mainly shown in \textsc{OfficeWorld}, it helps to learn an automaton early in the interaction and thus reduces relearning later. This is extremely helpful in QRM since it resets all the Q-functions when a new automaton is learned. On the other hand, in \textsc{WaterWorld} there is not a big difference between approaches using guidance and those which do not use it.
	\item HRL usually converges faster than QRM, specially in the absence of guidance. In such case, QRM needs to satisfy the goal at least once to start propagating reward through the automaton states. In contrast, HRL can update the Q-functions of the formulas independently and, importantly, these can be kept  throughout the entire learning process.
	\item The difference between the approaches that learn an automaton and those that use a handcrafted one is not big except for the cases where an automaton cannot be normally learned under the imposed timeout. This shows that the learned automata properly represent the subgoal structure of the tasks.
\end{itemize}

\section{Related Work}
\label{sec:related_work}
The following sections describe the works that share more commonalities with ours. First, we qualitatively compare subgoal automata and {\methodname} with other forms of automata and automaton learning methods that have been recently used in RL. Second, we briefly discuss some work on discovering hierarchies in HRL. Finally, we describe some of the work on breaking symmetries in graphs and automata in particular.

\subsection{Automata in Reinforcement Learning}
\label{sec:reward_machines_rw}
In this section we describe others forms of automata that have been used in reinforcement learning and how their structure has been exploited. We compare these automata and the approaches for learning them to subgoal automata and {\methodname} respectively.

\paragraph{Reward Machines.} Subgoal automata are similar to another formalism that uses automata in RL called \emph{reward machines} \shortcite<RMs,>{IcarteKVM18}. Both automata consist of edges labeled by propositional formulas over a set of observables $\mathcal{O}$. The main differences with respect to our formalism are:
\begin{enumerate}
	\item RMs do not have explicit accepting and rejecting states. Therefore, they can be used to represent continuing tasks.
	\item RMs specify a reward-transition function $\delta_r: U \times U \to [S \times A \times S \to \mathbb{R}]$ that maps an automaton state pair into a reward function.
\end{enumerate}
\shortciteA{IcarteKVM18} also define \emph{simple RMs} where the reward-transition function $\delta_r:U \times U \to \mathbb{R}$ maps an automaton state pair into a reward instead of a reward function. The authors propose the QRM (Q-learning for Reward Machines) algorithm to exploit the structure of RMs, which was one of the RL algorithms we have applied on our automata.

Different methods for simultaneously learning and exploiting simple reward machines from observation traces have been recently proposed. \shortciteA{IcarteWKVCM19} introduce LRM, which formulates the RM learning problem as a discrete optimization problem and use a local search algorithm called \emph{tabu search} to solve it. They use QRM to learn the policies. Unlike our approach, LRM starts by performing random actions for a fixed number of steps to collect some traces. These traces are used for two things:
\begin{enumerate}
	\item Learn an initial automaton.
	\item Know which is the set of observables $\mathcal{O}$ of the task.\footnote{The set can be later extended if more observables are found in the following steps.} This is done by looping through the traces. Interestingly, if two events happen at the same time in these traces, then they will constitute a single observable. For example, if the traces  show that $\text{\Coffeecup}$ and $o$ happen together in \textsc{OfficeWorld}'s \textsc{Coffee} task, then the set observables $\mathcal{O}$ would include a single observable $\text{\Coffeecup}o$ instead of two distinct observables {\Coffeecup} and $o$.
\end{enumerate}
Note that these two aspects are different from our approach (ISA) because:
\begin{enumerate}
	\item ISA does not learn automata from a set of randomly collected traces. Instead, it learns a new automaton when a counterexample is found.
	\item ISA is given the set of observables $\mathcal{O}$ in advance. However, we could also determine the set of observables just like LRM does.
\end{enumerate}
Even though LRM learns an initial automaton from random traces, it uses counterexamples to refine the automata as well. LRM is also different from {\methodname} in the following aspects:
\begin{itemize}
	\item The traces used by LRM consist of observations formed by a single observable. This is due to the process for getting the set of observables $\mathcal{O}$ described above. Therefore, conjunctions are not learned but assumed to be given in $\mathcal{O}$. This can be problematic if the automaton is to be used in a task where the set of observables is different. Keeping observables separated and explicitly learning the conjunctions as we do is less constrained, but allows for better generalization. Note that the observations we use can contain an arbitrary number of observables. 
	\item LRM does not need to enforce mutual exclusivity between the edges to two different automaton states. The reason is that each observable in $\mathcal{O}$ always happens alone, as explained before. In contrast, {\methodname} needs to enforce mutual exclusivity (see Section~\ref{sec:structural_properties}).
	\item LRM's optimization scheme is used to find an automaton that is good at predicting what will be the next different observable given a maximum number of states. Thus, they do not aim to find a minimal automaton. In contrast, {\methodname} aims to find a minimal automaton that covers the example traces through the use of an iterative deepening strategy on the number of states. This has two main consequences:
	\begin{itemize}
		\item The fact that {\methodname} looks for a minimal automaton using only positive examples (i.e., traces obtained from the agent-environment interaction) makes it prone to overgeneralize in certain tasks (see Section~\ref{sec:conclusions} for a detailed discussion). LRM does not suffer from this problem given that they do not look for a minimal automaton, but an automaton that is good at predicting what will be the next different observation.
		\item Since LRM aims to be good at predicting which will be the next \emph{different} observation, it uses traces where no two consecutive observations are equal. This can be seen as a form of trace compression similar to the one we use in the experiments (see details in Appendix~\ref{app:automata_learning_restrictions}). Therefore, their method cannot be applied to tasks like counting how many times an observable has been seen in a row, while ours can handle these tasks as long as the traces are not compressed.
	\end{itemize}
	\item LRM does not classify examples into different categories. The reward machine it aims to learn can also represent continuing tasks; therefore, they do not use an explicit notion of goal or dead-end as we do.
	\item LRM does not apply a symmetry breaking mechanism and may consider different equivalent solutions during the search for an automaton. {\methodname} uses symmetry breaking constraints to shrink the search space and speed up automaton learning. 
\end{itemize}

\shortciteA{XuGAMNTW20} propose another algorithm to learn simple RMs called JIRP (Joint Inference of Reward Machines and Policies). The authors express the automaton learning problem as a SAT problem and use QRM to learn a policy for a given task. JIRP is similar to {\methodname} in the following aspects:
\begin{enumerate}
	\item It aims to learn a minimal automaton based on an iterative deepening strategy on the number of states.
	\item The automaton learner is triggered when a counterexample is found.
	\item It learns only from positive examples (i.e., attainable traces by the RL agent).
	\item It learns RMs for episodic tasks where the reward is 1 only when the goal is achieved and 0 otherwise.\footnote{Note that we assumed 0/1 reward tasks only when we used QRM and not HRL (see Section~\ref{sec:interleaved_qrm}).} Therefore, like in our case, these RMs consist of absorbing accepting and rejecting states. The rejecting state, however, seems to be induced implicitly and not indicated explicitly as in our case.
\end{enumerate}
Unlike {\methodname}, the traces used by JIRP do not only consist of observations but they also include rewards. Crucially, these sequences of rewards are used to determine counterexamples: if a trace in the environment yields a different sequence of rewards in the automaton, then that trace is a counterexample. Furthermore, there are four other differences:
\begin{itemize}
	\item JIRP does not call the automaton learner after every single counterexample. Instead, it accumulates them into a batch and calls the automaton learner periodically.
	\item JIRP learns reward machines whose edges are labeled by sets of observables, which is similar to what LRM does.\footnote{\citeauthor{XuGAMNTW20}'s paper shows RMs whose edges are labeled by propositional formulas. However, we have verified through personal communication with the authors that the transitions are indeed labeled by sets of observables. The propositional formulas were used to make the representation simpler in the paper.} In contrast, the edges of a subgoal automaton are labeled by propositional formulas over a set of observables.
	\item JIRP learns conditions for the loop transitions. In contrast, our approach takes loop transitions only when outgoing transitions to other states cannot be taken.
	\item JIRP reuses the Q-functions learned by QRM when a new automaton is learned. The Q-function at a given state $u$ is reused in a new state $u'$ if they are equivalent. Two states are equivalent if they yield the exact same sequence of rewards for all traces in the set of counterexamples. We decided not to include this mechanism in the QRM experiments for several reasons. As explained in Section~\ref{sec:interleaved_qrm}, the policy at a given automaton state selects the action that appears to be best to achieve the task's final goal. In other words, that policy might aim to satisfy any of the formulas on the outgoing edges, so it is unclear what should be transferred to the new automaton. Importantly, the transfer proposed by \citeauthor{XuGAMNTW20} is based on the sequence of rewards a trace yields from a given automaton state, and not on which subgoals the policies are trying to achieve. In our view, the latter aspect is what should be taken into account.
\end{itemize}

\shortciteA{GaonB20} learn a deterministic finite automaton (DFA), so the transitions are labeled by symbols instead of propositional formulas. However, it still shares some commonalities with reward machines and subgoal automata. The tasks they consider are also episodic and terminate when the goal is reached. The main difference with respect to subgoal automata is that the set of observables $\mathcal{O}$ contains an observable for each action. Therefore, their automata are learned from action traces. The authors use two well-known algorithms from the grammatical inference literature \shortcite{DeLaHiguera10} to learn a minimal DFA: L* and EDSM (Evidence Driven State Merging):
\begin{itemize}
	\item L*~\shortcite{Angluin87} is an active learning algorithm that can learn a DFA from just a polynomial number of queries. Typically, two types of queries are considered:
	\begin{itemize}
		\item Membership queries: the learner requests to label a trace (i.e.,~state whether the trace belongs to the language or not).
		\item Equivalence queries: the learner asks whether its automaton captures the target language. If it does not, a counterexample is returned.
	\end{itemize}
	\citeauthor{GaonB20} use the RL agent as the oracle. A membership query is answered by trying to reproduce the sequence of actions in it, whereas an equivalence query is answered by checking if the trace has been observed in the past. Note that the equivalence queries can be unfeasible traces and, thus, overgeneralization is controlled. However, their method might be prone to making wrong guesses, specially in large state and action spaces where it is unlikely that a given trace has been seen before.
	
	\item EDSM~\shortcite{LangPP98} is a state-merging approach to automaton learning, which consists of two phases:
	\begin{enumerate}
		\item Build an initial DFA called Prefix Tree Acceptor (PTA), which is a tree-structured DFA built from the prefixes of a finite set of traces such that it accepts the positive traces in the set and rejects the negative ones. \label{ref:pta_reference}
		\item Iteratively choose pairs of equivalent states to merge and produce a new automaton. If the automaton does not cover all the examples, it backtracks and chooses another pair of states. When no additional merging is possible, it stops.
	\end{enumerate}
	The automaton learned by EDSM depends on the quality of the example set and, under specific conditions, the algorithm is proved to converge to the minimal DFA. The complexity of the algorithm is polynomial in the number of examples.
	
	To apply EDSM, \citeauthor{GaonB20} keep a record of the traces that reach the goal and those that do not reach the goal. Note that this is similar to what we do, although our dead-end and incomplete traces would be both inside their set of traces that do not reach the goal because they do not have an explicit rejecting state.
	
	State-merging approaches follow a different path to minimality than our approach and \citeauthor{XuGAMNTW20}'s. The former start from a big set of states and aim to reduce it, whereas the latter begin with a small set of states and increase it when becomes insufficient to cover the examples. Nevertheless, even though these approaches are slightly different, they can both generate overgeneralized automata if only positive examples are used, as we show in Section~\ref{sec:conclusions}.
\end{itemize}
The authors combine the automaton learning approaches with Q-learning and a model-based RL algorithm called \textsc{R-max} \shortcite{BrafmanT02} and test them in tabular tasks. Note that while the minimality of our automaton is dependent on the maximum number of edges between two states ($\kappa$), the approaches by \shortciteA{XuGAMNTW20} and \shortciteA{GaonB20} do not mention this dependence (they both show examples of automata involving disjunctions).

Automaton structures have also been exploited in reward machines to give bonus reward signals. \shortciteA{CamachoIKVM19} convert reward functions expressed in various formal languages (e.g., linear temporal logic) into RMs, and propose a reward shaping method that runs value iteration on the RM states. Similarly, \shortciteA{CamachoCSM17} use automata as representations of non-Markovian rewards and exploit their structure to guide the search of an MDP planner using reward shaping. In our case, we have proposed two reward shaping mechanisms based on the maximum and minimum distances to the accepting state.

\paragraph{Hierarchical Abstract Machines (HAMs).} Throughout the paper we have shown the connection between subgoal automata and the options framework~\shortcite{SuttonPS99}, which is one of the classical approaches for HRL along with HAMs~\shortcite{ParrR97} and MAXQ~\shortcite{Dietterich00}. Even though our method is closer to options, it is also similar to HAMs in that both use an automaton. However, HAMs are non-deterministic automata whose transitions can invoke lower level machines and are not labeled by observables (the high-level policy consists in deciding which transition to fire).

\paragraph{Relational Macros.} \shortciteA{TorreySWM07} learn a kind of automata, similar to subgoal automata, called \emph{relational macros}, which are finite state machines where both states and transitions are characterized by first-order logic formulas. These formulas are built on the first-order logic predicates that describe the environment states. The formulas on the automaton states indicate which action to take, while the formulas on the transitions say when the transition is taken. The learning of the relational macros is done in two phases. The \emph{structure learning} phase finds a sequence of actions that distinguishes traces reaching the goal from those which do not and composes them into an automaton. In the \emph{ruleset learning} phase, their system learns the conditions for choosing actions and for taking transitions. The authors use Aleph~\shortcite{Srinivasan01}, an inductive-logic programming system, to learn the rules in both phases from positive and negative examples. The learned automaton is used for transfer learning: the target task follows the strategy encoded by the automaton for some steps to estimate the Q-values of the actions in the strategy, and then stops using the automaton and acts according to the Q-values. This approach differs from ours in that:
\begin{enumerate}
	\item The traces it uses to learn the automata are formed by actions and not high-level events (observables). However, similarly to us, the traces are divided into groups depending on whether they reach the goal or not.
	\item The transitions are labeled by first-order logic formulas instead of propositional formulas.
	\item It learns logic rules that describe what action to take in each automaton state, while we learn policies to choose the actions.
	\item A relational macro requires that the target task has the same action space as the source task since the rules are defined on these actions. In contrast, a subgoal automaton can be reused in another task if the set of observables and the goal are the same, even when the state and action spaces are different.
\end{enumerate}
Note that the first-order logic predicates are similar to our observables. Even though observables are propositional, both provide a high-level abstraction of the state space.

\paragraph{Policy Graphs.} \shortciteA{MeuleauPKK99} propose to represent policies with finite memory using a class of finite-state automata called policy graphs. Like subgoal automata, policy graphs are applied to POMDPs. The states of a policy graph are labeled with actions, while each edge is labeled by a single visible state.\footnote{Note that we use the terminology for POMDPs introduced in Section~\ref{sec:background}.} This differs from our case where the edges are labeled by propositional formulas over a set of high-level events (i.e., the observables). Another difference is that the transition function between states in the automaton is probabilistic. This function is represented as a parametric function and is learned through stochastic gradient descent using a set of traces obtained by the RL agent. Similarly to us, policy graphs are applied to tasks with a subset of states characterizing the tasks' goals.

\paragraph{Moore Machines.} \shortciteA{KoulFG19} transform the policy encoded by a Recurrent Neural Network (RNN) into a Moore machine, which is a quantized version of the RNN. That is, the Moore machine is defined in terms of quantized state and observation representations of the RNN. Unlike our method and the previously presented works, the authors use the resulting machine for interpretability and do not exploit its structure.

\subsection{Hierarchical Reinforcement Learning (HRL)}
In Section~\ref{sec:interleaved_automata_learning_algorithm} we described two RL algorithms for exploiting the structure of a subgoal automaton using the options framework~\shortcite{SuttonPS99}, which is one of the classical approaches for HRL along with HAMs~\shortcite{ParrR97} and MAXQ~\shortcite{Dietterich00}.

One of the core problems in the options framework is finding a set of options that helps to maximize the return instead of handcrafting such set. This problem is known as \emph{option discovery}. The method we propose in this paper, {\methodname}, can certainly be seen as an option discovery method. The family of option discovery methods where ISA fits best are \emph{bottleneck} methods, which find ``bridges'' between regions of the latent state space. In particular, each state of our automata can represent a different region of the state space, and the bottleneck is represented by the formula connecting two automaton states. The option discovery method most similar to ours, except for the reward machine related ones (see Section~\ref{sec:reward_machines_rw}), is due to \shortciteA{McGovernB01}. Their approach uses diverse density to find landmark states in state traces, and it is similar to ours because:
\begin{enumerate}
	\item It learns from traces.
	\item It classifies traces into two different categories depending on whether they achieve the goal or not.
	\item It interleaves option discovery and learning of policies for the discovered options.
\end{enumerate}
The main difference is that while our bottlenecks are propositional logic formulas, theirs are crucial states to achieve the task's goal. Therefore, they do not use/require a set of propositional events (i.e., observables) to be provided in advance.

Just like some option discovery methods \shortcite<e.g.,>{McGovernB01,StolleP02}, our approach requires the task to be solved at least once. Other methods \shortcite<e.g.,>{MenacheMS02,SimsekB04,SimsekWB05,MachadoBB17} discover options without solving the task and, thus, are also suited to continuing tasks.

Alternative formalisms to automata for expressing formal languages, like grammars, have been used to discover options. \shortciteA{LangeF19} induce a straight-line grammar, a non-branching and loop-free context-free grammar, which can only generate a single string. The authors use greedy algorithms to find a straight-line grammar from the shortest sequence of actions that leads to the goal. The production rules are then flattened, leading to one macro-action (a sequence of actions) per production rule. These macro-actions constitute the set of options.

Similarly to options, there has been work on learning the structures used in other HRL frameworks. \shortciteA{LeonettiIP12} synthesize a HAM from the set of shortest solutions to a non-deterministic planning problem, and use it to refine the choices at non-deterministic points through RL. \shortciteA{MehtaRTD08} propose a method for discovering MAXQ hierarchies from a trace that reaches the task's goal.

\subsection{Symmetry Breaking}
The symmetry breaking mechanism that we have proposed in this paper has been shown to help decrease the time needed to find a subgoal automaton that covers a set of examples. In this section we briefly mention some of the most related works to ours; that is, those addressing the problem of breaking symmetries in graphs (specifically, automata) or that use ASP to encode problems.

SAT-based approaches to learning deterministic finite automata (DFA) have used symmetry breaking constraints to shrink the search space. \shortciteA{HeuleV10} reduce the DFA learning problem to a graph coloring problem, which is translated into SAT. The graph to be colored is derived from the Prefix Tree Acceptor (PTA, see p.~\pageref{ref:pta_reference}) of the examples by connecting two states if they cannot be merged. The vertices in a $k$-clique must be colored differently and there are $k!$ different ways of coloring them. The authors propose to break these symmetries by imposing a way of assigning colors after finding a large clique using an approximation algorithm (given that the problem is NP-complete). On the other hand, similarly to us, orderings based on well-known search algorithms like breadth-first search \shortcite<BFS,>{UlyantsevZS15,ZakirzyanovMIUM19} and depth-first search \shortcite<DFS,>{UlyantsevZS16} have also been proposed. State-merging approaches to learning DFA have also used BFS to break symmetries \shortcite{LambeauDD08}. These BFS-based methods are different from ours in that they do not need to define a comparison criteria for sets of symbols since they are applied to DFA. Remember that the edges in a DFA are labeled by a single symbol, whereas the edges in a subgoal automaton might be labeled by formulas with more than one symbol. Besides, unlike previous works, we prove that the assignment of state and edge indices given by our mechanism is unique.

Given the successes of the use of symmetry breaking in SAT solving, the ASP community has also produced some work on symmetry breaking. \shortciteA{DrescherTW11} propose \textsc{sbass}, a system that detects symmetries in a ground ASP program through a reduction to a graph automorphism problem. Then, it adds constraints to the initial program to break the detected symmetries.

\shortciteA{CodishMPS19} propose a method that breaks symmetries in undirected graphs by imposing a lexicographical order in the rows of the adjacency matrix.

In our previous work \shortcite{furelosblanco2020aaai}, we introduced a method for breaking symmetries in \emph{acyclic} subgoal automata, which consists in:
\begin{enumerate}
	\item assigning an integer index to each automaton state such that $u_0$ has the lowest index and $u_A$ and $u_R$ have the highest indices; and
	\item imposing that a trace must visit automaton states in increasing order of indices.
\end{enumerate}
However, this method cannot break symmetries when there is not a trace that traverses all states in the automaton (e.g., if there are two different paths to the accepting state). We illustrate this drawback with the example below.
\begin{example}
	Figure~\ref{fig:isomorphishms} (p.~\pageref{fig:isomorphishms}) showed two automata whose states $u_1,u_2$ and $u_3$ can be used interchangeably if no symmetry breaking is used. If we assign indices $0,\ldots,3$ to states $u_0,\ldots,u_3$ and apply the symmetry breaking rule in \shortcite{furelosblanco2020aaai}, the learned automaton for \textsc{OfficeWorld}'s \textsc{VisitABCD} task will always be the one showed in Figure~\ref{fig:isomorphisms_patrol}. On the other hand, the automaton states $u_1$ and $u_2$ in Figure~\ref{fig:isomorphisms_coffee_mail} can still be switched since there is no trace that traverses both of them.
\end{example}
The method presented in this paper does not depend on the sequence of states visited by the traces. Therefore, it can break both symmetries given in the example above.

\section{Conclusions and Future Work}
\label{sec:conclusions}
In this paper we have proposed {\methodname}, a method that interleaves the learning and exploitation of an automaton whose edges encode the subgoals of an episodic goal-oriented task. The subgoals are expressed as propositional logic formulas over a set of high-level events that the agent observes when interacting with the environment. These automata are represented using a logic programming language and learned with a state-of-the-art inductive logic programming system from traces of high-level events observed by the agent. Importantly, we have devised a symmetry breaking mechanism that speeds up the automaton learning phase by avoiding to revisit equivalent solutions in the hypothesis space. Besides, the interleaving mechanism we propose ensures that the learned automata are minimal (i.e., have the fewest number of states). We have experimentally tested {\methodname} with different types of tasks, showing that it is capable of learning automata that can be exploited by existing reinforcement learning techniques. We have also shown how automaton learning affects reinforcement learning and vice versa, and that {\methodname} achieves a performance comparable to the setting where the automaton is handcrafted and given beforehand.

We now discuss possible improvements to our algorithm and directions for future work.

\paragraph{Learning from Positive and Negative Examples.} {\methodname} aims to learn a minimal subgoal automaton from positive examples only (i.e., traces that the agent can observe), which can lead to overgeneralization \shortcite{Angluin80}. More specifically, in our case, the learned automaton will be too general in tasks where there is a temporal dependency between the observables (e.g., an observable $o$ can only be observed if another observable $o'$ has been observed before). Previously, we also showed that overgeneral automata might be learned when key traces to learn all subgoals cannot be observed (see Figure~\ref{fig:overgeneralization_single_mdp}, p.~\pageref{fig:overgeneralization_single_mdp}).\footnote{Remember that we addressed this issue by learning an automaton that generalizes to a set of POMDPs.}

\begin{example}
	Imagine that the set of observables $\mathcal{O}$ for the \textsc{OfficeWorld} environment includes an observable $g$ that states whether the task's goal has been achieved. In the case of the \textsc{Coffee} task, $g$ is observed after the agent has been in the coffee location $\text{\Coffeecup}$ and then in the office location $o$; thus, a possible goal trace is $\langle\{\text{\Coffeecup}\},\{\},\{o,g\}\rangle$. The automaton learner could then output the automaton below.
	\begin{center}
		\begin{tikzpicture}[shorten >=1pt,node distance=2.06cm,on grid,auto]
		\node[state,initial] (u_0)   {$u_0$};
		\node[state,accepting] (u_acc) [right =of u_0]  {$u_{A}$};
		\path[->] (u_0) edge node {$g$} (u_acc);
		\end{tikzpicture}
	\end{center}
	This automaton is minimal and there is no other positive example that contradicts it because $g$ is only observed when the goal is reached.
	\label{example:overgeneralization}
\end{example}

To avoid overgeneralization, we need traces that are impossible to observe from the agent-environment interaction. These traces would constitute the set of \emph{negative examples}, which are supported by ILASP, the system we have used to learn the automata. For instance, the trace $\langle\{g\}\rangle$ would be a negative example for the task described in Example~\ref{example:overgeneralization}. Since it is impossible to observe such traces, the agent must be able to learn or hypothesize what is unfeasible in the environment.

To discover negative examples, our method would need to determine whether a given trace is feasible or not. We hypothesize that approaches which learn the POMDP's model (that is, the transition probability function) would be useful to solve the problem along with more sophisticated exploration strategies than $\epsilon$-greedy. 

\paragraph{Observable Discovery.} {\methodname} assumes that an appropriate set of observables is given in advance. The same occurs with other methods that learn automaton structures, like reward machines \shortcite{IcarteWKVCM19,XuGAMNTW20}. Discovering the set of observables from interaction is an important task towards automating the entire automaton learning process. Possible future work could extract observables from object keypoints in images \shortcite{KulkarniGIBRZM19} or represent the abstract states from an abstracted state space using observables.

\paragraph{Improve Scalability.} Learning a minimal automaton from examples is a well-known hard problem~\shortcite{Gold78}. The main factors that affect the scalability of our approach are the number of states, whether cycles are allowed, the observable set size,  the length of the counterexample traces, and the maximum number of edges between two states. Even though we have introduced some methods for improving scalability (symmetry breaking and trace compression), there are several interesting directions for future work:
\begin{itemize}
	\item Other trace compression methods. It has been shown that shorter traces make learning faster. More sophisticated compression techniques than the one we have presented (see Appendix~\ref{app:automata_learning_restrictions}) could be considered. However, compression can make the learning of certain automata unfeasible. Thus, the assumptions on the environments and target automata must be clearly stated.
	\item Hierarchies of automata. While learning complex automata can take some time with the current approach, learning small automata does not take long. Therefore, we can think of hierarchies of small automata that invoke each other. Furthermore, once a small automaton has been learned in one task, it might be reusable in another task.
	\item Revision of traces. Some of the collected counterexamples during learning can be long and/or contain many observables which are irrelevant to the task at hand (e.g., observing {\Letter} in \textsc{OfficeWorld}'s \textsc{Coffee} task). Thus, if at some point in the learning we realize that a counterexample is no longer needed because it is subsumed by a simpler one (i.e., shorter and without irrelevant observables), then we can replace it.
	\item Other automaton learning methods. The L* algorithm~\shortcite{Angluin87} is a query learning algorithm that can learn a DFA from just a polynomial number of membership and equivalence queries made to an oracle. One of the main criticisms of this paradigm is that it does not adapt to practical situations where there is not an oracle. However, there is recent work where Recurrent Neural Networks (RNNs) are used to play the role of the oracle \shortcite{WeissGY18}. \shortciteA{MichalenkoSVBCP19} have shown that there is a close relationship between the internal representations used by RNNs and finite state automata. Therefore, we can consider using this family of methods in the future.
\end{itemize}

\paragraph{More Expressive Automata.} A natural extension of this work is to learn automata whose edges are labeled by first-order logic formulas instead of propositional logic formulas. This would allow to enable features not currently supported by subgoal automata, such as counting (without using an arbitrary number of edges). The ILASP system can learn first-order logic rules.

\acks{The authors would like to thank the anonymous reviewers for their helpful comments and suggestions. Anders Jonsson is partially supported by the Spanish grants PCIN-2017-082 and PID2019-108141GB-I00.}

\appendix
\section{Proof of Proposition \ref{prop:asp_correctness}}
\label{proof:correctness_asp}
To prove Proposition \ref{prop:asp_correctness}, we use the following result due to \shortciteA{GelfondL88}:
\begin{theorem}
	If an ASP program $P$ is stratified, then it has a unique answer set.
	\label{theorem:stratification_unique_as}
\end{theorem}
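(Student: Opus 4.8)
The plan is to prove the theorem by the classical iterated-least-fixpoint construction, exploiting the defining feature of stratification: recursion never passes through negation. I would work at the level of (ground) normal logic programs, which is the classical scope of the stratification theorem; constraints are handled trivially since they only discard candidate models, and the programs relevant to this paper (e.g.\ $P = M(\mathcal{A}) \cup R \cup M(\lambda^\ast_{L,\mathcal{O}})$) are of this form.

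First I would fix the stratification explicitly. A stratified program admits a function $\mathrm{str}$ from predicate symbols to $\{1,\dots,k\}$ such that, for every ground instance of a rule $h \codeif b_1,\dots,b_n,\mathtt{not}~c_1,\dots,\mathtt{not}~c_m$, we have $\mathrm{str}(b_j) \le \mathrm{str}(h)$ for each positive body atom $b_j$ and $\mathrm{str}(c_j) < \mathrm{str}(h)$ for each negated body atom $c_j$. I would partition the rules into strata $P_1,\dots,P_k$ according to the stratum of the head predicate, and write $\mathcal{H}_i$ for the set of ground atoms whose predicate lies in stratum $i$.

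Next I would construct a canonical interpretation level by level. Set $M_0 = \emptyset$. Given $M_{i-1}$ (containing only atoms of strata below $i$), observe that in the rules of $P_i$ every negated literal refers to a predicate of a strictly lower stratum, so its truth value is already fixed by $M_{i-1}$. Evaluating those negated literals against $M_{i-1}$ turns $P_i$ into a definite (negation-free) program whose immediate-consequence operator is monotone; by Knaster--Tarski it has a unique least fixpoint, and I would let $M_i$ be $M_{i-1}$ together with that least model restricted to $\mathcal{H}_i$, finally setting $M = M_k$. I would then check the two halves of the claim. For existence, I would show $M$ is the least model of the reduct $P^M$: the construction is arranged so that at each stratum the part of $P^M$ defining $\mathcal{H}_i$ coincides with the definite program used to compute $M_i$, whence the least models agree and $M$ is an answer set. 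For uniqueness, I would show any answer set equals $M$.

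Uniqueness is the heart of the argument. I would take an arbitrary answer set $N$ and prove by induction on $i$ that $N$ and $M$ agree on all atoms in strata $\le i$. In the inductive step, assuming agreement below stratum $i$, the negated literals in $P_i$ are evaluated identically under $N$ and under $M_{i-1}$, so the fragment of the reduct $P^N$ defining stratum-$i$ atoms equals the fixed definite program from the construction. Since $N$ is an answer set it is a minimal model of $P^N$: minimality forbids any stratum-$i$ atom beyond those derivable (the upper bound), while the requirement that $N$ satisfy the stratum-$i$ rules forces every derivable one to be present (the lower bound), giving $N \cap \mathcal{H}_i = M \cap \mathcal{H}_i$. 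The main obstacle is exactly this minimality book-keeping: one must verify that fixing the lower strata genuinely collapses each stratum's subprogram to a positive program, and that no answer set can smuggle in ``unsupported'' extra atoms at a stratum — which is precisely where the interplay between the reduct definition and the least-fixpoint characterisation has to be made airtight.
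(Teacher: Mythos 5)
Your proof is correct, but note that the paper does not actually prove this theorem: it is imported as a known result of Gelfond and Lifschitz (1988) and used as a black box in the proof of Proposition~\ref{prop:asp_correctness}. What you have written is essentially a reconstruction of the classical argument behind that citation --- the iterated least-fixpoint (perfect-model) construction of Apt--Blair--Walker, combined with the observation that a minimal model of a definite program must be its least model, which is exactly what powers your stratum-by-stratum uniqueness induction. So you gain self-containedness where the paper buys brevity by outsourcing. Two small points deserve care. First, your restriction to ground normal programs is not merely ``the classical scope'' but necessary: the paper's ASP language admits choice rules, and a stratified program containing, say, $\mathtt{1\{p,q\}1.}$ has two answer sets, so the theorem as literally stated fails for the paper's full rule class; fortunately the program $M(\mathcal{A}) \cup R \cup M(\lambda^\ast_{L,\mathcal{O}})$ to which it is applied consists only of facts and normal rules, as you observe. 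Second, your parenthetical that constraints ``only discard candidate models'' overstates matters: discarding can destroy existence (e.g. $\lbrace \mathtt{p.} \rbrace \cup \lbrace \codeif \mathtt{p.} \rbrace$ has no answer set), so with constraints one gets at most one answer set, not a unique one. Neither point damages your argument, since you correctly confine it to normal programs.
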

Now, we give the definition of a stratified ASP program and proceed to prove Proposition~\ref{prop:asp_correctness}.
\begin{definition}
	An ASP program $P$ is stratified when there is a partition 
	\begin{equation*}
	P = P_0 \cup P_1 \cup \cdots \cup P_n
	\tag{$P_i$ and $P_j$ disjoint for all $i\neq j$}
	\end{equation*}
	such that, for every predicate $p$
	\begin{itemize}
		\item the definition of $p$ (all clauses with $p$ in the head) is contained in one of the partitions $P_i$
	\end{itemize}
	and, for each $1 \leq i \leq n$:
	\begin{itemize}
		\item if a predicate occurs positively in a clause of $P_i$ then its definition is contained within $\bigcup_{j \leq i} P_j$.
		\item if a predicate occurs negatively in a clause of $P_i$ then its definition is contained within $\bigcup_{j < i} P_j$.
	\end{itemize}
	\label{def:stratified_program}
\end{definition}

\propositionaspcorrectness*
\begin{proof}
	First, we prove that the program $P=M(\mathcal{A}) \cup R \cup M(\lambda_{L,\mathcal{O}}^\ast)$ has a unique answer set. By Theorem \ref{theorem:stratification_unique_as}, if $P$ is stratified then it has a unique answer set. Therefore, we show there is a possible way of partitioning $P$ following the constraints in Definition \ref{def:stratified_program}. A possible partition is $P=P_0 \cup P_1 \cup P_2 \cup P_3$, where:
	\begin{align*}
	\begin{matrix*}[c]
	P_0 = M(\lambda^\ast_{L,\mathcal{O}}), & P_1 = M(\mathcal{A}), & P_2 = R_\varphi, & P_3 = R_\delta \cup R_\mathtt{st}.
	\end{matrix*}
	\end{align*}
	Remember that $R=R_\varphi \cup R_\delta \cup R_\mathtt{st}$. The unique answer set is $AS=AS_0 \cup AS_1 \cup AS_2 \cup AS_3$, where $AS_i$ corresponds to partition $P_i$:
	\begingroup
	\allowdisplaybreaks 
	\begin{align*}
	AS_0 &= \left\lbrace\mathtt{obs}(o,t).  \mid o \in \lambda^\ast_{L,\mathcal{O}}[t], 0 \leq t \leq n \right\rbrace \cup \left\lbrace\mathtt{step}(t). \mid 0 \leq t \leq n \right\rbrace \cup \left\lbrace \mathtt{last}(n). \right\rbrace,\\\\
	AS_1 &= \begin{matrix*}[l]
	\left\lbrace\mathtt{state}(u). \mid u \in U \right\rbrace\cup  \\
	\left\lbrace\mathtt{ed}(u,u',i). \mid u, u' \in U, 1 \leq i \leq \left|\varphi(u,u')\right| \right\rbrace\cup  \\
	\left\lbrace \bar{\varphi}(u,u',i,t). \mid u,u' \in U, \text{conj}_i \in \varphi(u,u'), 0 \leq t \leq n, \lambda^\ast_{L,\mathcal{O}}[t] \not\models \text{conj}_i \right\rbrace
	\end{matrix*},\\\\
	AS_2 &= \begin{matrix*}[l]
	\left\lbrace \varphi(u,u',i,t). \mid u,u' \in U, \text{conj}_i \in \varphi(u,u'), 0 \leq t \leq n, \lambda^\ast_{L,\mathcal{O}}[t] \models \text{conj}_i \right\rbrace\cup \\
	\left\lbrace \mathtt{out}\mhyphen\varphi(u,t). \mid u \in U, 0\leq t\leq n, \exists u' \in U \text{ s.t. } \lambda^*_{L,\mathcal{O}}[t] \models \varphi(u,u') \right\rbrace
	\end{matrix*},\\\\
	AS_3&=\begin{matrix*}[l]
	\left\lbrace \delta(u,u',t). \mid u,u' \in U, 0 \leq t \leq n, \lambda^\ast_{L,\mathcal{O}}[t] \models \varphi(u,u') \right\rbrace\cup\\
	\left\lbrace \delta(u,u,t). \mid u \in U, 0 \leq t \leq n, \nexists u' \in U \text{ s.t. } \lambda^\ast_{L,\mathcal{O}}[t] \models \varphi(u,u') \right\rbrace\cup\\
	\left\lbrace \mathtt{st}(0,u_0). \right\rbrace\cup\\
	\left\lbrace \mathtt{st}(t,u). \mid 1 \leq t \leq n+1, u = \mathcal{A}(\lambda^\ast_{L,\mathcal{O}})[t]  \right\rbrace\cup\\
	\left\lbrace \mathtt{accept.} \mid \mathcal{A}(\lambda^\ast_{L,\mathcal{O}})[n+1] = u_A  \right\rbrace\cup\\
	\left\lbrace \mathtt{reject.} \mid \mathcal{A}(\lambda^\ast_{L,\mathcal{O}})[n+1] = u_R \right\rbrace
	\end{matrix*}.
	\end{align*}
	\endgroup
	
	We now prove that $\mathtt{accept} \in AS$ if and only if $\ast = G$ (i.e., the trace achieves the goal). If $\ast=G$ then, since the automaton is valid with respect to $\lambda^\ast_{L,\mathcal{O}}$ (see Definition~\ref{def:valid_trace}), the automaton traversal $\mathcal{A}(\lambda^\ast_{L,\mathcal{O}})$ finishes in the accepting state $u_A$; that is, $\mathcal{A}(\lambda^\ast_{L,\mathcal{O}})[n+1] = u_A$. This holds if and only if $\mathtt{accept} \in AS$.
	
	The proof showing that $\mathtt{reject} \in AS$ if and only if $\ast =D$ (i.e., the trace reaches a dead-end) is similar to the previous one. If $\ast=D$ then, since the automaton is valid with respect to $\lambda^\ast_{L,\mathcal{O}}$, the automaton traversal $\mathcal{A}(\lambda^\ast_{L,\mathcal{O}})$ finishes in the rejecting state $u_R$; that is, $\mathcal{A}(\lambda^\ast_{L,\mathcal{O}})[n+1] = u_R$. This holds if and only if $\mathtt{reject} \in AS$.
\end{proof}

\section{Symmetry Breaking Encodings}
\label{app:symmetry_breaking}
In this section we describe the details of our symmetry breaking mechanism. For ease of presentation, we first encode it in the form of a satisfiability (SAT) formula and formally prove several of its properties. We later explain how to convert this SAT formula into ASP rules. Finally, we propose a more efficient ASP encoding of the symmetry breaking constraints than the direct translation from SAT.

\subsection{SAT Encoding}
\label{app:sat_encoding_labeled_directed_graph}
Our idea is to define a SAT formula that encodes a BFS traversal $\mathcal{I}(G)=\langle f, \{\Gamma_u\}_{u\in V} \rangle$ of a given graph $G=(V,E)$ in the class $\mathcal{G}$, defined on a set of labels $\mathcal{L}=\{l_1,\ldots,l_k\}$. Since a graph indexing $\mathcal{I}(G)$ assigns unique integers to nodes and edges, we use $i$ to refer to a node $u$ such that $f(u)=i$, $(i,e)$ or $(i,e,L)$ to refer to an edge $(u,v,L)$ such that $f(u)=i$ and $\Gamma_u(L)=e$, and $m$ to refer to a label $l_m \in \mathcal{L}$. We sometimes extend this notation in the natural way, e.g.~by writing $\Gamma_i(L)$, $E^o(i)$ and $\Pi_\mathcal{I}(i)$.

\paragraph{Variables.} We first define a set $\mathcal{X}$ of propositional SAT variables for all combinations of symbols (sometimes with restrictions as indicated):\\

\noindent
\begin{tabular}{l@{\hspace{5pt}}l@{\hspace{5pt}}l@{\hspace{5pt}}l}
	1. & $ed(i,j,e)$, & & [edge $(i,e)$ ends in node $j$]\\
	2. & $label(i,e,m)$, & & [the label set on edge $(i,e)$ includes $l_m\in\mathcal{L}$]\\
	3. & $pa(i,j)$, & $i<j$, & [node $i$ is the parent of $j$ in the BFS subtree]\\
	4. & $sm(i,j,e)$, & $i<j$, & [$e$ is the smallest integer on a BFS edge from node $i$ to $j$]\\
	5. & $lt(i,e-1,e,m)$, & $e>1$, & [there is a label $l_{m'\mid m'\leq m}$ on $(i,e)$ and not on $(i,e-1)$]
\end{tabular}\\

\noindent
Intuitively, variables $ed(i,j,e)$ and $label(i,e,m)$ are used to encode a graph $G$ together with an associated graph indexing $\mathcal{I}(G)$, variables $pa(i,j)$ and $sm(i,j,e)$ are used to encode the parent function $\Pi_\mathcal{I}$, and variables $lt(i,e-1,e,m)$ are used to encode the label set ordering.

\paragraph{Clauses.} We next define a set $\mathcal{C}$ of clauses on $\mathcal{X}$ for all combinations of symbols (sometimes with restrictions as indicated). The first set of clauses (1-8) enforces the first condition in Definition~\ref{def:bfs}: for any two nodes $i>1$ and $j>1$, $\Pi_\mathcal{I}(i)<\Pi_\mathcal{I}(j) \Leftrightarrow i<j$.\\

\noindent
\begin{tabular}{l@{\hspace{5pt}}l@{\hspace{5pt}}l@{\hspace{5pt}}l}
	1. & $\bigvee_{i\mid i<j} pa(i,j)$, & $j>1$, & [node $j>1$ has incoming BFS edge]\\
	2. & $pa(i,j) \Rightarrow \neg pa(i',j)$, & $i<i'<j$, & [incoming BFS edge is unique]\\
	3. & $pa(i,j) \Rightarrow \bigvee_e sm(i,j,e)$, & $i<j$, & [BFS edge implies smallest integer]\\
	4. & $pa(i,j) \Rightarrow \neg ed(i',j',e)$, & $i'<i<j\leq j'$, & [respect BFS order]\\
	5. & $sm(i,j,e) \Rightarrow pa(i,j)$, & $i<j$, & [smallest integer implies BFS edge]\\
	6. & $sm(i,j,e) \Rightarrow \neg sm(i,j,e')$, & $i<j$, $e<e'$, & [smallest integer is unique]\\
	7. & $sm(i,j,e) \Rightarrow ed(i,j,e)$, & $i<j$, & [smallest integer implies edge]\\
	8. & $sm(i,j,e) \Rightarrow \neg ed(i,j',e')$, & $i<j\leq j'$, $e'<e$, & [correctly break ties]
\end{tabular}\\

\noindent
Intuitively, Clauses 1 and 2 state that each node $j>1$ has a unique parent node $i$ in the BFS subtree. Clauses 3, 5 and 6 state that each node $j>1$ has a unique incoming edge $(i,e)$ with smallest integer $e$ from its parent $i$ in the BFS subtree. Clause 7 ensures that the incoming edge $(i,e)$ to $j$ in the BFS subtree corresponds to an actual edge in the graph $G$. 

Clauses 4 and 8 constitute the core of symmetry breaking by enforcing the condition that $\Pi_\mathcal{I}(i)<\Pi_\mathcal{I}(j)$ should imply $i<j$. By definition of $\Pi_\mathcal{I}$, the incoming edge $(i,e)$ to $j$ in the BFS subtree should be the lexicographically smallest such integer pair. Hence the graph $G$ cannot contain any incoming edge $(i',e')$ to $j$ from a node $i'<i$. In addition, no node $j'>j$ can have such an incoming edge either, since otherwise its parent function would be smaller than that of $j$, thus violating the desired condition. These two facts are jointly encoded in Clause 4 by enforcing the restriction $j'\geq j$.

Likewise, if $(i,e)$ is the incoming edge to $j$ in the BFS subtree, graph $G$ cannot contain an incoming edge $(i,e')$ from the same node $i$ with $e'<e$. Again, no node $j'>j$ can have such an incoming edge either, since otherwise its parent function would be smaller than that of $j$. These two facts are jointly encoded in Clause 8 by enforcing the restriction $j'\geq j$.

The second set of clauses (9-14) assigns edge integers to the outgoing edges from each node, enforcing the second condition in Definition~\ref{def:bfs}: for each node $i$ and pair of outgoing edges $(i,e,L)$ and $(i,e',L')$, $L<L' \Leftrightarrow e<e'$. Due to the transitivity of the relation $<$, it is sufficient to check that the condition holds for all pairs of consecutive edge integers $(e-1,e)$. Clauses 9 and 10 enforce that edge integers are unique between 1 and $|E^o(i)|$.\\

\noindent
\begin{tabular}{l@{\hspace{5pt}}l@{\hspace{5pt}}l@{\hspace{5pt}}l}
	9. & $ed(i,j,e) \Rightarrow \bigvee_{j'} ed(i,j',e-1)$, & $e>1$, & [edge integers start at 1 and are contiguous]\\
	10. & $ed(i,j,e) \Rightarrow \neg ed(i,j',e)$, & $j < j'$, & [edge integers cannot be duplicated]
\end{tabular}\\

\noindent
Clauses 11-14 are used to enforce that two consecutive edges $(i,e-1,L)$ and $(i,e,L')$ satisfy $L<L'$. Formally, variable $lt(i,e-1,e,m)$ is only true if there exists $m'\leq m$ such that $l_{m'} \notin L$ and $l_{m'} \in L'$. This is implemented using the following two clauses:\\

\noindent
\begin{tabular}{l@{\hspace{5pt}}l@{\hspace{5pt}}l@{\hspace{5pt}}l}
	11. & $lt(i,e-1,e,m) \Rightarrow \neg label(i,e-1,m) \vee lt(i,e-1,e,m-1)$, & $e>1$,\\
	12. & $lt(i,e-1,e,m) \Rightarrow label(i,e,m) \vee lt(i,e-1,e,m-1)$, & $e>1$.
\end{tabular}\\

\noindent Hence if $lt(i,e-1,e,m)$ holds, either $l_m \notin L$ and $l_m \in L'$, or $lt(i,e-1,e,m-1)$ holds for $m-1$. The disjuncts mentioning $m-1$ are only evaluated when $m>1$. The next clause ensures that for each edge $(i,e)$ with $e>1$, $lt(i,e-1,e,m)$ is true for at least one label $l_m \in \mathcal{L}$:\\

\noindent
\begin{tabular}{l@{\hspace{5pt}}l@{\hspace{5pt}}l@{\hspace{5pt}}l}
	13. & $ed(i,j,e) \Rightarrow \bigvee_m lt(i,e-1,e,m), \;\; e>1$.
\end{tabular}\\

\noindent Finally, the following clause encodes the second part of Definition~\ref{def:observation_ordering}, ensuring that the label set on edge $(i,e)$ is \emph{not} lower than that on $(i,e-1)$:\\

\noindent
\begin{tabular}{l@{\hspace{5pt}}l@{\hspace{5pt}}l@{\hspace{5pt}}l}
	14. & $lt(i,e-1,e,m) \vee \neg label(i,e-1,m) \vee label(i,e,m), \;\; e>1$.
\end{tabular}\\

%\noindent Note that Clause~14 is just the negation of the formula that expresses the violation of the second condition in Definition~\ref{def:observation_ordering}. That is, there is no label $l_{m'|m'<m}$ such that it appears in $e$ and does not appear in $e-1$, and $l_m$ appears in $e-1$ and not in $e$:
%\begin{equation*}
%\neg lt(i,e-1,e,m) \land label(i,e-1,m) \land \neg label(i,e,m).
%\end{equation*}

\paragraph{Properties.} We proceed to prove several properties about the SAT encoding. Concretely, we show that there is a one-to-one correspondence between the BFS traversal of a graph and a solution to the SAT encoding.

\begin{definition}
	Given a graph $G=(V,E)\in\mathcal{G}$ defined on a set of labels $\mathcal{L}=\{l_1,\ldots,l_k\}$ and an associated graph indexing $\mathcal{I}(G)=\langle f,\{\Gamma_u\}_{u\in V}\rangle$, let $X(G,\mathcal{I})$ be an assignment to the SAT variables in $\mathcal{X}$, assigning false to all variables in $\mathcal{X}$ except as follows:
	\begin{itemize}
		\item For each edge $(u,v,L)\in E$, $ed(f(u),f(v),\Gamma_u(L))$ is true.
		\item For each edge $(u,v,L)\in E$ and each label $l_m\in L$, $label(f(u),\Gamma_u(L),m)$ is true.
		\item For each node $v\in V\setminus\{v_1\}$ with $\Pi_\mathcal{I}(v)=(i,e)$, $pa(i,f(v))$ and $sm(i,f(v),e)$ are true.
		\item For each node $u\in V$, each pair of outgoing edges $(u,v,L)$ and $(u,v',L')$ in $E^o(u)$ such that $\Gamma_u(L)=\Gamma_u(L')-1$, and each label $l_m\in\mathcal{L}$, $lt(f(u),\Gamma_u(L),\Gamma_u(L'),m)$ is true if there exists $m'\leq m$ such that $l_{m'}\notin L$ and $l_{m'}\in L'$.
	\end{itemize}
	\label{def:graph_sat_assignment}
\end{definition}

\begin{example}
	Given the graph $G$, graph indexing $\mathcal{I}(G)$ and set of labels $\mathcal{L}=\{a,b,c,d,e,f\}$ from Example~\ref{ex:labeled_graph_with_edge_indices} (p.~\pageref{ex:labeled_graph_with_edge_indices}), the assignment $X(G,\mathcal{I})$ assigns true to the following SAT variables in $\mathcal{X}$:
	\begin{equation*}
	\begin{Bmatrix*}[l]
	ed(1,2,1) & label(1,1,2) & label(1,1,5) & ed(1,3,2) & label(1,2,1) & label(1,2,6)\\
	ed(1,4,3) & label(1,3,1) & label(1,3,2) & ed(2,4,1) & label(2,1,1) \\
	ed(3,4,1) & label(3,1,2) & ed(4,5,1) & label(4,1,4) & ed(4, 5, 2) & label(4,2,3)\\
	pa(1,2) & pa(1,3) & pa(1,4) & pa(4,5) \\
	sm(1,2,1) & sm(1,3,2) & sm(1,4,3) & sm(4,5,1)\\
	lt(1,1,2,1) & lt(1,1,2,2) & lt(1,1,2,3) & lt(1,1,2,4) & lt(1,1,2,5) & lt(1,1,2,6) \\
	lt(1,2,3,2) & lt(1,2,3,3) & lt(1,2,3,4) & lt(1,2,3,5) & lt(1,2,3,6)\\
	lt(4,1,2,3) & lt(4,1,2,4) & lt(4,1,2,5) & lt(4,1,2,6)
	\end{Bmatrix*}
	\end{equation*}
\end{example}

\begin{theorem}\label{thm:bfssat}
	Given a graph $G$ and a graph indexing $\mathcal{I}(G)$, the assignment $X(G,\mathcal{I})$ to the SAT variables in $\mathcal{X}$ satisfies all SAT clauses in $\mathcal{C}$ if and only if $\mathcal{I}(G)$ is a BFS traversal.
\end{theorem}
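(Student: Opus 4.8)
The plan is to prove the biconditional by treating the two conditions of Definition~\ref{def:bfs} separately, after first observing that the assignment $X(G,\mathcal{I})$ of Definition~\ref{def:graph_sat_assignment} encodes $G$, the indexing $\mathcal{I}$, the parent function $\Pi_\mathcal{I}$, and the label-set comparisons faithfully. Concretely, I would partition the clauses into three groups: the \emph{definitional} clauses (1--3, 5--7, 9--12), which merely assert that $ed$/$label$ describe a graph with a bijective indexing, that $pa$/$sm$ describe $\Pi_\mathcal{I}$, and that $lt$ carries its intended meaning; the clauses (4, 8) that encode the first BFS condition; and the clauses (13, 14) that encode the second. I would check that $X(G,\mathcal{I})$ satisfies every definitional clause for \emph{any} graph indexing $\mathcal{I}$, clause by clause: e.g.\ Clause 1 holds because $|E^i(v)|\ge 1$ for $v\ne v_1$ (Assumption~\ref{assump:sym_reachability}) makes $\Pi_\mathcal{I}(v)$ well-defined, Clauses 9--10 hold because each $\Gamma_u$ is a bijection onto a contiguous range, and Clause 7 holds because the minimizing incoming edge of $\Pi_\mathcal{I}(v)$ is a genuine edge of $G$. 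One bookkeeping point to settle here is that the variables $pa(i,j)$ and $sm(i,j,e)$ exist only for $i<j$, so $X$ is well-defined precisely when each parent index is smaller than its child index; in the $\Leftarrow$ direction this follows from the BFS property that a node is first reached from an earlier-visited node, and in the $\Rightarrow$ direction Clause 1 forces it.

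For the second BFS condition I would argue the equivalence through Clauses 13 and 14 (with 11--12 fixing the meaning of $lt$). Assuming a BFS traversal, condition~2 gives $L_{e-1}<L_e$ for every pair of consecutive outgoing edges of a node, which is exactly what makes Clauses 13--14 true under $X$. Conversely, assuming $X$ satisfies the clauses, I would look at the first index $m^\ast$ at which the binary strings $B(L_{e-1})$ and $B(L_e)$ of two consecutive edges differ: Clause 14 at $m^\ast$ together with the $lt$ semantics rules out $B_{m^\ast}(L_{e-1})=1$ with $B_{m^\ast}(L_e)=0$, so the first difference must read $0$ then $1$, i.e.\ $L_{e-1}<L_e$ by Definition~\ref{def:observation_ordering}; transitivity of $<$ and bijectivity of $\Gamma_u$ then upgrade this to the full biconditional $L<L'\Leftrightarrow\Gamma_u(L)<\Gamma_u(L')$.

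The heart of the proof, and the step I expect to be hardest, is the equivalence between Clauses 4 and 8 and the first BFS condition. For $\Leftarrow$, assuming condition~1, I would establish Clause 4 by contradiction: if $pa(i,j)$ holds but $ed(i',j',e)$ holds with $i'<i<j\le j'$, then the edge from $i'$ witnesses $\Pi_\mathcal{I}(j')\le(i',e)<\Pi_\mathcal{I}(j)$, so condition~1 forces $j'<j$, contradicting $j'\ge j$; Clause 8 is analogous with the tie-broken second coordinate. For $\Rightarrow$, assuming all clauses, I would prove $f(u)<f(v)\Rightarrow\Pi_\mathcal{I}(u)<\Pi_\mathcal{I}(v)$ for $u,v\ne v_1$ by writing $\Pi_\mathcal{I}(u)=(p_u,e_u)$ and $\Pi_\mathcal{I}(v)=(p_v,e_v)$ and excluding the two ways $\Pi_\mathcal{I}(u)$ could fail to be smaller: the case $p_u>p_v$ is ruled out by Clause 4 applied to $pa(p_u,f(u))$ (where the in-domain fact $p_u<f(u)$ supplies the required middle inequality), and the case $p_u=p_v$ with $e_u>e_v$ is ruled out by Clause 8 applied to $sm(p_u,f(u),e_u)$; injectivity of $\Pi_\mathcal{I}$, which follows from Clause 10, closes the remaining possibility and yields the reverse implication by symmetry. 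I anticipate the main difficulty is managing these lexicographic case splits while keeping track of the index-domain restrictions ($i<j$, $e'<e$, $j\le j'$) attached to each clause, since it is exactly the interplay of those restrictions with the minimality built into $\Pi_\mathcal{I}$ that makes Clauses 4 and 8 encode the node ordering rather than something strictly weaker.
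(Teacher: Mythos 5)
Your proposal is correct and takes essentially the same route as the paper's proof: the same partition of the clauses into definitional ones (satisfied by $X(G,\mathcal{I})$ for any indexing) versus Clauses 4/8 and 13/14 encoding the two BFS conditions, the same first-difference argument connecting Clauses 13--14 to the label-set order, and the same use of the lexicographic minimality of $\Pi_\mathcal{I}$ against Clauses 4 and 8 (showing $f(u)<f(v)\Rightarrow\Pi_\mathcal{I}(u)<\Pi_\mathcal{I}(v)$ in the converse direction). If anything, you are slightly more explicit than the paper on two points it glosses over, namely the $i<j$ domain restriction on the $pa$/$sm$ variables and the injectivity-plus-trichotomy step that upgrades the one-directional implications to the biconditionals of Definition~\ref{def:bfs}.
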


\begin{proof}
	$\Leftarrow$: Assume that $\mathcal{I}(G)$ is a BFS traversal (i.e., it satisfies the conditions of Definition~\ref{def:bfs}, p.~\pageref{def:bfs}). We show that each clause in $\mathcal{C}$ is satisfied:
	\begin{enumerate}
		\item $\bigvee_{i\mid i<j}pa(i,j)$ holds for $v$ such that $f(v)=j>1$ since $pa(i,j)$ is true for $\Pi_\mathcal{I}(v)=(i,e)$.
		\item $pa(i,j) \Rightarrow \neg pa(i',j)$ holds since $pa(i,j)$ is true for a single $i$.
		\item $pa(i,j) \Rightarrow \bigvee_e sm(i,j,e)$ holds for $v$ such that $f(v)=j$ since $pa(i,j)$ and $sm(i,j,e)$ are true for $\Pi_\mathcal{I}(v)=(i,e)$.
		\item $pa(i,j) \Rightarrow \neg ed(i',j',e)$ holds for $v$ such that $f(v)=j$ and $i'<i<j=j'$ since $\Pi_\mathcal{I}(v)=(i,e)$ is the lexicographically smallest integer pair on incoming edges to $v$, implying that $G$ cannot contain an edge to $v$ from a node $u$ with $f(u)=i'<i$. Moreover, since $\mathcal{I}(G)$ is a BFS traversal, we cannot have $\Pi_\mathcal{I}(w)<\Pi_\mathcal{I}(v)$ for a node $w$ with $f(w)=j'>j$, else the second condition in Definition~\ref{def:bfs} is violated. Hence $G$ cannot contain an edge to $w$ from a node $u$ with $f(u)=i'<i$, so the clause also holds for $i'<i<j<j'$.
		\item $sm(i,j,e) \Rightarrow pa(i,j)$ holds for $v$ such that $f(v)=j$ since $pa(i,j)$ and $sm(i,j,e)$ are true for $\Pi_\mathcal{I}(v)=(i,e)$.
		\item $sm(i,j,e) \Rightarrow \neg sm(i,j,e')$ holds since $sm(i,j,e)$ is true for a single $i$ and $e$.
		\item $sm(i,j,e) \Rightarrow ed(i,j,e)$ holds for $v$ such that $f(v)=j$ since $\Pi_\mathcal{I}(v)=(i,e)$ implies that there exists an edge $(u,v,L)\in E$ with $f(u)=i$ and $\Gamma_u(L)=e$.
		\item $sm(i,j,e) \Rightarrow \neg ed(i,j',e')$ holds for $v$ such that $f(v)=j$ and $j=j'$ since $\Pi_\mathcal{I}(v)=(i,e)$ is the lexicographically smallest integer pair on incoming edges to $v$, implying that $G$ cannot contain an edge $(u,v,L)$ with $f(u)=i$ and $\Gamma_u(L)<e$. For $j<j'$, the parent function of the node $w$ with $f(w)=j'$ cannot be smaller than that of $v$, else the second condition in Definition~\ref{def:bfs} is violated. Hence $G$ cannot contain an edge $(u,w,L)$ with $f(u)=i$ and $\Gamma_u(L)<e$. Thus the clause also holds for the case $j<j'$.
		\item $ed(i,j,e) \Rightarrow \bigvee_{j'} ed(i,j',e-1)$ holds for $u$ with $f(u)=i$ since $\Gamma_u$ is a bijection onto $\{1,\ldots,|E^o(u)|\}$.
		\item $ed(i,j,e) \Rightarrow \neg ed(i,j',e)$ holds for $u$ with $f(u)=i$ since $\Gamma_u$ is a bijection.
		\item $lt(i,e-1,e,m) \Rightarrow \neg label(i,e-1,m) \vee lt(i,e-1,e,m-1)$ holds for $u$, $f(u)=i$, and outgoing edges $(u,v,L)$, $(u,v',L')$ with $\Gamma_u(L)=e-1=\Gamma_u(L')-1$ since $lt(i,e-1,e,m)$ implies that either $l_m\notin L$ or there exists $m'<m$ such that $l_{m'}\notin L$ and $l_{m'}\in L'$.
		\item $lt(i,e-1,e,m) \Rightarrow label(i,e,m) \vee lt(i,e-1,e,m-1)$ holds for the same setting since $lt(i,e-1,e,m)$ implies that either $l_m\in L'$ or there exists $m'<m$ such that $l_{m'}\notin L$ and $l_{m'}\in L'$.
		\item $ed(i,j,e) \Rightarrow \bigvee_{j'} ed(i,j',e-1)$ holds since $\mathcal{I}(G)$ is a BFS traversal, implying that the bijection $\Gamma_u$ for $u$ with $f(u)=i$ satisfies $L<L'$ whenever $\Gamma_u(L)<\Gamma_u(L')$ (and in particular when $\Gamma_u(L)=e-1=\Gamma_u(L')-1$). Since $L<L'$, there has to exist at least one $m,1\leq m\leq k$ such that $l_m\notin L$ and $l_m\in L'$ due to Definition~\ref{def:observation_ordering}.
		\item $lt(i,e-1,e,m) \vee \neg label(i,e-1,m) \vee label(i,e,m)$ also holds since $\Gamma_u(L)=e-1=\Gamma_u(L')-1$ implies $L<L'$. Hence for the given $m$, Definition~\ref{def:observation_ordering} is satisfied either 1) by a label $l_{m'\leq m}$, implying that $lt(i,e-1,e,m)$ is true; or 2) by a label $l_{m'>m}$, implying that $l_m\in L$ and $l_m\notin L'$ cannot both be true.
	\end{enumerate}
	
	\noindent
	$\Rightarrow$: Assume that $X(G,\mathcal{I})$ satisfies all SAT clauses. We show that $\mathcal{I}(G)$ is a BFS traversal. First note from above that $X(G,\mathcal{I})$ satisfies all clauses except 4, 8, 13 and 14 even if $\mathcal{I}(G)$ is not a BFS traversal. Hence we can focus exclusively on these four clauses.
	
	We first analyze Clauses 13 and 14. For Clause 13 to be true, any edge $(i,e)$ induced from graph $G$ with $e>1$ has to satisfy $lt(i,e-1,e,m)$ for at least one label $l_m\in\mathcal{L}$. Let $u$ be the node with $f(u)=i$ and let $(u,v,L)$ and $(u,v',L')$ be the two outgoing edges in $E^o(u)$ such that $\Gamma_u(L)=e-1=\Gamma_u(L')-1$. By definition of $X(G,\mathcal{I})$ there exists $m,1\leq m\leq k$ such that $l_m\notin L$ and $l_m\in L'$. For the smallest such $m$ there cannot exist $m'<m$ such that $l_{m'}\in L$ and $l_{m'}\notin L'$, else Clause 14 would be violated for $m'$. Hence Definition~\ref{def:observation_ordering} holds for label $l_m$, implying $L<L'$.
	
	We next analyze Clauses 4 and 8. Let $u$ be the node such that $f(u)=j$ and $\Pi_\mathcal{I}(u)=(i,e)$, and let $v$ be any node such that $f(v)=j'>j$. Since Clause 4 holds, graph $G$ cannot contain an edge $(w,v,L)$ such that $f(w)=i'<i$. Since Clause 8 holds, graph $G$ cannot contain an edge $(w,v,L)$ such that $f(w)=i$ and $\Gamma_w(L)<e$. Since $\Pi_\mathcal{I}(v)$ cannot equal $(i,e)$, it has to be larger than $(i,e)$, implying $\Pi_\mathcal{I}(u)<\Pi_\mathcal{I}(v)$.
	
	We have shown that the two conditions in Definition~\ref{def:bfs} hold: for each pair of nodes $u$ and $v$ in $V\setminus\{v_1\}$, $\Pi_\mathcal{I}(u)<\Pi_\mathcal{I}(v)$ implies $f(u) < f(v)$, and for each node $u\in V$ and pair of outgoing edges $(u,v,L)$ and $(u,v',L')$ in $E^o(u)$, $L<L'$ implies $\Gamma_u(L)<\Gamma_u(L')$. Hence by definition $\mathcal{I}(G)$ is a BFS traversal.
\end{proof}

\begin{definition}
	Let $X$ be an assignment to the SAT variables $\mathcal{X}$ that satisfies the SAT clauses in $\mathcal{C}$. Given an edge $(i,e)$, let $L(i,e)=\{l_m\mid label(i,e,m)\}$ be the label set induced by $X$. We define a mapping $\mathcal{G}(X)=(G,\mathcal{I}(G))$ from $X$ to a graph $G=(V,E)$ and associated graph indexing $\mathcal{I}(G)=\langle f, \{\Gamma_u\}_{u\in V} \rangle$ as follows:
	\begin{itemize}
		\item The set of nodes is $V=\{v_1,\ldots,v_n\}$, where $n$ is the largest node index in the assignment, and $f(v_i)=i$ for each $v_i\in V$.
		\item The set of edges is $E=\{(v_i,v_j,L(i,e)) \mid ed(i,j,e)\}$ and $\Gamma_{v_i}(L(i,e))=e$.
		\item The parent function of each $v_j\in V$ equals $\Pi_\mathcal{I}(v_j)=(i,e)$ if $sm(i,j,e)$ is true.
	\end{itemize}
\end{definition}

\begin{theorem}\label{thm:satgraph}
	Given an assignment $X$ to the SAT variables $\mathcal{X}$ that satisfies all clauses in $\mathcal{C}$, the mapping $\mathcal{G}(X)=(G,\mathcal{I}(G))$ induces a graph $G$ in the class $\mathcal{G}$ and a well-defined graph indexing $\mathcal{I}(G)$.
\end{theorem}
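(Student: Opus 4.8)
The plan is to verify directly that the two objects produced by $\mathcal{G}(X)$ satisfy their respective definitions: that $G$ obeys Assumptions~\ref{assump:sym_starting_node}--\ref{assump:sym_unique_label_sets} defining the class $\mathcal{G}$, and that $f$ and $\{\Gamma_u\}_{u\in V}$ are the bijections required of a graph indexing. Each property will be read off from the group of clauses designed to enforce it, and the label-set analysis from the $\Rightarrow$ direction of Theorem~\ref{thm:bfssat} can be reused almost verbatim. Assumption~\ref{assump:sym_starting_node} is immediate since $v_1$ is designated as the start node, and $f$ is a bijection onto $\{1,\ldots,|V|\}$ with $f(v_1)=1$ by construction, because $V=\{v_1,\ldots,v_n\}$ with $f(v_i)=i$.

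First I would establish reachability (Assumption~\ref{assump:sym_reachability}). For each index $j>1$, Clause~1 forces $pa(i,j)$ for some $i<j$, and Clause~2 makes this $i$ unique; Clause~3 then yields $sm(i,j,e)$ for some $e$, and Clause~7 converts this into an actual edge $ed(i,j,e)$, i.e.\ an edge from $v_i$ to $v_j$ with $i<j$. A straightforward induction on $j$ (base case $v_1$, inductive step using the incoming edge from the strictly smaller index $i$) then shows every node is reachable from $v_1$. In passing this also shows that no node index in $\{2,\ldots,n\}$ is a gap, since each must be the target of such an edge.

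Next I would handle the edge-level structure, which gives both the bijectivity of each $\Gamma_u$ and Assumption~\ref{assump:sym_unique_label_sets}. Fixing a node $v_i$, Clause~10 guarantees that each integer $e$ is the index of at most one outgoing edge (a unique target $j$), and Clause~9 guarantees that the set of used indices is contiguous from $1$, i.e.\ $\{1,\ldots,k_i\}$; hence there are exactly $k_i=|E^o(v_i)|$ outgoing edges and the assignment $e\mapsto(v_i,v_j,L(i,e))$ ranges over them. To see that $\Gamma_{v_i}(L(i,e))=e$ is a well-defined bijection I must show the label sets $L(i,e)$ are distinct for distinct $e$, which is exactly Assumption~\ref{assump:sym_unique_label_sets}. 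For this I reproduce the ordering argument: for $e>1$, Clause~13 forces $lt(i,e-1,e,m)$ for some $m$; taking the \emph{smallest} such $m$, Clauses~11 and~12 give $l_m\notin L(i,e-1)$ and $l_m\in L(i,e)$, while Clause~14 rules out any $m'<m$ with $l_{m'}\in L(i,e-1)$ and $l_{m'}\notin L(i,e)$. By Definition~\ref{def:observation_ordering} this yields the strict inequality $L(i,e-1)<L(i,e)$, and transitivity makes all outgoing label sets of $v_i$ pairwise distinct.

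Finally I would check that the parent function is single-valued and consistent with its intended meaning: Clauses~2, 5 and~6 force $sm(i,j,\cdot)$ to hold for a unique pair $(i,e)$ per $j>1$, and Clauses~4 and~8 force this pair to be the lexicographically smallest $(f(u),\Gamma_u(L))$ over incoming edges, so it agrees with $\Pi_\mathcal{I}$ as defined for graph indexings. The main obstacle is the label-set ordering step: since $X$ is an \emph{arbitrary} satisfying assignment, the $lt$ variables are not pinned down by the label sets (as they would be in $X(G,\mathcal{I})$), so the distinctness of the $L(i,e)$ must be extracted purely from the clause structure via the minimal-index witness argument above, taking care that the boundary case $m=1$ (where the disjuncts mentioning $m-1$ are vacuous) is handled correctly. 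Everything else amounts to routine bookkeeping over the clause groups.
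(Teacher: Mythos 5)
Your proposal is correct and follows essentially the same approach as the paper's proof: it verifies well-definedness of $f$, $\Gamma_{v_i}$ (via Clauses~9--10 plus the distinctness of label sets from Clauses~11--14) and of the parent function (Clauses~1--8), and checks Assumptions~\ref{assump:sym_starting_node}--\ref{assump:sym_unique_label_sets} from the same clause groups. Your minor variations --- an induction on node index instead of aggregating the $sm$-edges into a BFS subtree for reachability, and choosing the smallest $m$ with $lt(i,e-1,e,m)$ true rather than the smallest $m'$ witnessing the literal pattern --- are equivalent reformulations of the paper's argument, and your explicit handling of the $m=1$ boundary case and of the fact that $X$ is an arbitrary satisfying assignment is sound.
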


\begin{proof}	
	We first show that the induced graph indexing $\mathcal{I}(G)$ is well-defined. Clearly $f$ is a bijection onto $\{1,\ldots,|V|\}$ by definition. We next show that $\Gamma_{v_i}$ is a bijection onto $\{1,\ldots,|E^o(v_i)|\}$ for each node $v_i\in V$. Clause 10 ensures that $ed(i,j,e)$ and $ed(i,j',e)$ cannot be true simultaneously for $j\neq j'$. Due to Clause 9, if edge $(i,e)$ is defined for $e>1$, then so is $(i,e-1)$. Applying this argument recursively implies that $(i,e)$ is uniquely defined for $e\in\{1,\ldots,|E^o(v_i)|\}$, where $|E^o(v_i)|$ is the largest integer of an outgoing edge from $i$.
	
	We next show that the induced label set $L(i,e)$ on each outgoing edge $(i,e)$ from $i$ is unique. Clause 13 implies that for each edge $(i,e)$ with $e>1$, $lt(i,e-1,e,m)$ is true for at least one label $l_m\in\mathcal{L}$. Clauses 11 and 12 ensure that $lt(i,e-1,e,m)$ is true only if there exists $m'\leq m$ such that $\neg label(i,e-1,m')$ and $label(i,e,m')$ are true. For the smallest such $m'$ there cannot exist $m''<m'$ such that $label(i,e-1,m'')$ and $\neg label(i,e,m'')$ are true, else Clause 14 would be violated for $m''$. Hence label $l_m$ satisfies the condition in Definition~\ref{def:observation_ordering} with respect to the induced label sets $L(i,e-1)$ and $L(i,e)$, implying $L(i,e-1)<L(i,e)$.
	
	In conclusion, we have shown that $(i,e)$ is uniquely defined for $e\in\{1,\ldots,|E^o(v_i)|\}$, and that $L(i,e-1)<L(i,e)$ holds for each pair of consecutive integers in $\{1,\ldots,|E^o(v_i)|\}$. Since $\Gamma_{v_i}$ is defined as $\Gamma_{v_i}(L(i,e))=e$ for each $e\in\{1,\ldots,|E^o(v_i)|\}$, this implies that $\Gamma_{v_i}$ is a well-defined bijection from $E^o(v_i)$ to $\{1,\ldots,|E^o(v_i)|\}$.
	
	We also need to show that the induced parent function $\Pi_\mathcal{I}$ is well-defined, i.e.~that for each $j>1$, $sm(i,j,e)$ is true for a single $i$ and $e$, and that $\Pi_\mathcal{I}(v_j)=(i,e)$ is consistent with the definition of $\Pi_\mathcal{I}$. Clauses 1 and 2 imply that $pa(i,j)$ is true for a single $i$. Clauses 3, 5 and 6 imply that $sm(i,j,e)$ can only be true for the same $i$ and $j$ as $pa(i,j)$, and that $sm(i,j,e)$ is true for a single $e$. For $\Pi_\mathcal{I}(v_j)=(i,e)$ to hold, $G$ has to contain the edge $(v_i,v_j,L(i,e))$, which is guaranteed by Clause 7. Moreover, $G$ cannot contain any edge $(v_{i'},v_j,L(i',e'))$ such that $(i',e')<(i,e)$, which is guaranteed by Clauses 4 and 8.
	
	We finally show that the induced graph $G$ belongs to the class $\mathcal{G}$, i.e.~that the three assumptions on page~\pageref{assump:sym_unique_label_sets} are satisfied. We satisfy Assumption~\ref{assump:sym_starting_node} by designating $v_1$ as the start node. We have already shown above that the induced label set $L(i,e)$ on each outgoing edge $(i,e)$ from $i$ is unique, satisfying Assumption~\ref{assump:sym_unique_label_sets}. It remains to show that Assumption~\ref{assump:sym_reachability} holds, i.e.~that each node $v_j$, $j>1$, is reachable from $v_1$. Since $sm(i,j,e)$ is true for a single $i$ and $e$ such that $i<j$, $G$ has to contain the edge $(v_i,v_j,L(i,e))$ due to Clause 7. Aggregating these incoming edges for all nodes different from $v_1$ results in a BFS subtree rooted in $v_1$, and each node $v_j$, $j>1$, is reachable from $v_1$ in this subtree.
\end{proof}

By combining Theorems~\ref{thm:bfssat} and \ref{thm:satgraph}, it follows that the mapping $\mathcal{G}(X)=(G,\mathcal{I}(G))$ of a satisfying assignment $X$ to the SAT clauses in $\mathcal{C}$ is such that $\mathcal{I}(G)$ is a BFS traversal. Since by Lemma~\ref{lemma:unique} (p.~\pageref{lemma:unique}) each graph $G\in\mathcal{G}$ has a unique BFS traversal, it follows that the SAT encoding cannot generate two permutations of node integers that represent the same graph $G$. Hence the SAT encoding breaks the symmetries in graphs such as those in Figure~\ref{fig:isomorphishms} (p.~\pageref{fig:isomorphishms}).

We remark that the SAT encoding is not forced to include edges in the induced graph $G$ other than those needed to correctly represent the parent function $\Pi_\mathcal{I}$. However, by combining the SAT encoding with another encoding for generating automata, we ensure that the encoding cannot produce two symmetric automata.

\subsection{ASP Encodings}
\label{app:asp_encodings}
In this section we present two different encodings of our symmetry breaking method in ASP for its application to subgoal automata. The first method (Appendix~\ref{app:sat_based_asp_encoding}) is a direct translation from the SAT clauses introduced in the previous section. However, there are certain aspects that can be made more efficient in ASP, so we propose a second method for encoding the symmetry breaking constraints (Appendix~\ref{app:alternative_asp_encoding}).

In order to apply the proposed symmetry breaking mechanism to a subgoal automaton, we have to take the following aspects into account:
\begin{enumerate}
	\item The edges of a subgoal automaton are labeled by propositional formulas over a set of observables $\mathcal{O}$, whereas the edges of a labeled directed graph are defined over a set of labels $\mathcal{L}$.
	\item The graph indexing presented for labeled directed graphs assigns a different edge index for each of the outgoing edges from a node. In contrast, in the ASP representation $M(\mathcal{A})$ of a subgoal automaton $\mathcal{A}$ the edge indices are unique only from one state to another (i.e., can be repeated between other pairs of states).
\end{enumerate}

We now partially describe how we address (1), which is common to both methods. We map the set of observables $\mathcal{O}$ used by the subgoal automaton into a set of labels $\mathcal{L}$, which consists of integer values for easy comparison. Each of these integer values must encode either an observable or its negation. Given a set of observables $\mathcal{O}=\{o_1,\ldots, o_{|\mathcal{O}|}\}$, the set of labels $\mathcal{L}$ is defined as:
\begin{equation*}
\mathcal{L} = \left\lbrace i, i+|\mathcal{O}| \mid o_i \in \mathcal{O}  \right\rbrace,
\end{equation*}
where $i$ corresponds to $o_i$ and $i + |\mathcal{O}|$ corresponds to its negation, $\neg o_i$. Therefore, $\mathcal{L}$ consists of integers from 1 to $2|\mathcal{O}|$ where labels $1\ldots|\mathcal{O}|$ correspond to each of the observables, and labels $|\mathcal{O}|+1\ldots 2|\mathcal{O}|$ correspond to the observable negations. This mapping is encoded in ASP using the following predicates:
\begin{itemize}
	\item $\mathtt{obs\_id}(o,i)$ indicates that observable $o\in \mathcal{O}$ is assigned id $i$.
	\item $\mathtt{num\_obs}(i)$ indicates that the set of observables has size $i$.
	\item $\mathtt{valid\_label}(l)$ indicates that $l$ is a label.
\end{itemize}
We simply ground the above predicates according to their descriptions:
\begin{equation*}
\left\lbrace \mathtt{obs\_id}(o_i, i). \mid o_i \in \mathcal{O} \right\rbrace \cup \left\lbrace \mathtt{num\_obs}(|\mathcal{O}|). \right\rbrace \cup \left\lbrace\mathtt{valid\_label}(l). \mid 1 \leq l \leq 2|\mathcal{O}|)\right\rbrace.
\end{equation*}
The mapping is not complete: we need to map the formulas on the edges into sets of labels. Specifically, we map the factual representation of the automaton introduced in Section~\ref{sec:structural_properties} into $\mathtt{label}$ facts similar to the variables of the same name used in the SAT encoding. However, these facts are slightly different between encodings, so we describe them in detail in their respective sections. The second aspect commented above is also addressed differently for each encoding.

Another common feature between the encodings is that states in the automaton are assigned an integer index for easy comparison. This is needed only for the rules enforcing the BFS traversal on the automaton, which corresponds to Clauses~1-8 in the SAT encoding. The predicate $\mathtt{state\_id}(u,i)$ denotes that the automaton state $u$ has index $i$ and its ground instances are:
\begin{equation*}
\left\lbrace \mathtt{state\_id}(u_i,i). \mid u_i \in U \setminus \left\lbrace u_A, u_R \right\rbrace \right\rbrace.
\end{equation*}
Note that, without loss of generality, the first state index is 0 (because of the initial state $u_0$)\footnote{Remember that in Section~\ref{sec:symmetry_breaking} the bijection $f$ assigns indices starting from 1.} and that not all states are assigned an index. Given that the accepting state $u_A$ and the rejecting state $u_R$ are fixed, they cannot be interchanged with any other state. Furthermore, they cannot be the parent of any other state since they are absorbing states. Thus, they are excluded from the BFS ordering. In order to easily compare the indices of two states, we introduce the set of rules below. The first rule defines the predicate $\mathtt{state\_lt}(u,u')$ to express that the index of state $u$ is lower than that of $u'$. Similarly, the second rule defines the predicate $\mathtt{state\_leq}(u, u')$ to express that the index of state $u$ is lower or equal than that of $u'$.
\begin{equation*}
\begin{Bmatrix*}[l]
\mathtt{state\_lt(X, Y) \codeif state\_id(X, XID), state\_id(Y, YID), XID\mathord{<}YID.}\\
\mathtt{state\_leq(X, Y) \codeif state\_id(X, XID), state\_id(Y, YID), XID\mathord{<=}YID.}
\end{Bmatrix*}
\end{equation*}

\subsubsection{SAT-Based Encoding}
\label{app:sat_based_asp_encoding}
The encoding we present in this section is a direct translation from the SAT encoding introduced in Appendix~\ref{app:sat_encoding_labeled_directed_graph}. The presentation of this encoding is divided into three parts. First, we introduce a mapping from the edge indices used by the subgoal automata into the edge indices used in the symmetry breaking method. Second, we describe how the previously introduced mapping from observables into labels is applied to map formulas over observables into label sets. Third, we transform the SAT clauses into ASP rules.

\paragraph{Edge Index Mapping.} 
The set of rules below encodes the edge index mapping. Firstly, we define a predicate $\mathtt{edge\_id}(i)$, where $i$ is an edge index, and ground it for values between 1 and $(|U|-1)\kappa$:
\begin{equation*}
\left\lbrace\mathtt{edge\_id}(i) \mid 1 \leq i \leq (|U|-1)\kappa \right\rbrace.
\end{equation*}
Note that $(|U|-1)\kappa$ is the maximum number of outgoing edges from a state: each state can have edges to $|U|-1$ different states and $\kappa$ edges are allowed from one state to another.

We use facts of the form $\mathtt{mapping}(u,v,e,e')$ to indicate that edge $e$ between $u$ and $v$ is mapped into $e'$. The mapping is enforced using the set of rules below. The first rule describes that an edge index $\mathtt{E}$ from $\mathtt{X}$ to $\mathtt{Y}$ is mapped into exactly one edge index $\mathtt{EE}$ in the range given by the $\mathtt{edge\_id}$ facts. The second rule enforces that two outgoing edges from a state $\mathtt{X}$ to two different states $\mathtt{Y}$ and $\mathtt{Z}$ must be mapped into different edge indices. The third rule enforces that two edge indices $\mathtt{E}$ and $\mathtt{EP}$ between the same pair of states $\mathtt{X}$ and $\mathtt{Y}$ must be mapped into different edge indices. The fourth rule indicates that if there are two edge indices $\mathtt{E}$ and $\mathtt{EP}$ between states $\mathtt{X}$ and $\mathtt{Y}$ such that $\mathtt{E\mathord{<}EP}$, then the indices to which they are mapped ($\mathtt{EE}$ and $\mathtt{EEP}$) must preserve the same ordering ($\mathtt{EE}\mathord{<}\mathtt{EEP}$).
\begin{equation*}
\begin{Bmatrix*}[l]
\mathtt{1\{mapping(X, Y, E, EE) : edge\_id(EE)\}1 \codeif ed(X, Y, E).}\\
\mathtt{\codeif mapping(X, Y, \_, EE), mapping(X, Z, \_, EE), Y\mathord{<}Z.}\\
\mathtt{\codeif mapping(X, Y, E, EE), mapping(X, Y, EP, EE), E\mathord{<}EP.}\\
\mathtt{\codeif ed(X, Y, E), ed(X, Y, EP), E\mathord{<}EP, mapping(X, Y, E, EE), mapping(X, Y, EP, EEP), EE\mathord{>}EEP.}
\end{Bmatrix*}
\end{equation*}

Given the mapping, it is straightforward to redefine the $\mathtt{ed}$ atoms, as well as the $\mathtt{pos}$ and $\mathtt{neg}$ facts used in the factual representation of the automata introduced in Section~\ref{sec:structural_properties}:
\begin{align*}
\begin{Bmatrix*}[l]
\mathtt{map\_ed(X, Y, EP) \codeif ed(X, Y, E), mapping(X, Y, E, EP).}\\
\mathtt{map\_pos(X, Y, EP, O) \codeif pos(X, Y, E, O), mapping(X, Y, E, EP).}\\
\mathtt{map\_neg(X, Y, EP, O) \codeif neg(X, Y, E, O), mapping(X, Y, E, EP).}
\end{Bmatrix*}
\end{align*}
The predicates $\mathtt{map\_ed}$, $\mathtt{map\_pos}$ and $\mathtt{map\_neg}$ are used in the ASP encoding of the symmetry breaking constraints explained later.

\paragraph{Mapping of Formulas into Label Sets.} The set of rules below uses the previously described observable-to-label and edge index mappings to transform formulas over a set of observables into label sets. The first rule sets $\mathtt{OID}$ as a label of edge $\mathtt{E}$ from $\mathtt{X}$ if the corresponding observable $\mathtt{O}$ appears positively in that edge. The second rule sets $\mathtt{OID\mathord{+}N}$ as a label of edge $\mathtt{E}$ from $\mathtt{X}$ if the corresponding observable $\mathtt{O}$ appears negatively in that edge and $\mathtt{N}$ is the number of observables.
\begin{equation*}
\begin{Bmatrix*}[l]
\mathtt{label(X, E, OID) \codeif map\_pos(X, Y, E, O), obs\_id(O, OID).}\\
\mathtt{label(X, E, OID\mathord{+}N) \codeif map\_neg(X, Y, E, O), obs\_id(O, OID), num\_obs(N).}
\end{Bmatrix*}
\end{equation*}

\paragraph{Symmetry Breaking Rules.} Similarly to the SAT encoding, we divide the resulting ASP rules into three sets. First, we introduce the set of rules enforcing the indexing given by the BFS traversal on the automaton. These rules are defined in terms of an auxiliary predicate $\mathtt{ed\_sb}(u,u',i)$ equivalent to $\mathtt{map\_ed}(u,u',i)$ but only defined for those states $u'$ that have a state index. Remember that the accepting ($u_A$) and rejecting ($u_R$) states are excluded from the traversal and, thus, they are the only states without an index. The following rule defines $\mathtt{ed\_sb}$ in terms of $\mathtt{map\_ed}$:
\begin{equation*}
\mathtt{ed\_sb(X, Y, E) \codeif map\_ed(X, Y, E), state\_id(Y, \_).}
\end{equation*}
The set of rules below that corresponds to Clauses 1-8 in the SAT encoding.
\begin{align*}
\begin{Bmatrix*}[l]
\mathtt{1\{pa(X, Y): state(X), state\_lt(X, Y)\} \codeif state(Y), state\_id(Y, YID), YID\mathord{>}0.}\\
\mathtt{\codeif pa(X, Y), pa(XP, Y), state\_lt(X, XP), state\_lt(XP, Y).}\\
\mathtt{1\{sm(X, Y, E) : edge\_id(E)\} \codeif pa(X, Y).}\\
\mathtt{\codeif pa(X, Y), ed\_sb(XP, YP, \_), state\_lt(XP, X), state\_leq(Y, YP).}\\
\mathtt{\codeif sm(X, Y, \_), not~pa(X, Y).}\\
\mathtt{\codeif sm(X, Y, E), sm(X, Y, EP), E\mathord{<}EP.}\\
\mathtt{\codeif sm(X, Y, E), not~ed\_sb(X, Y, E).}\\
\mathtt{\codeif sm(X, Y, E), ed\_sb(X, YP, EP), state\_lt(X, Y), state\_leq(Y, YP), EP\mathord{<}E.}
\end{Bmatrix*}
\end{align*}

The next set of rules encodes Clauses~9 and 10, which enforce that edge indices are unique between 1 and the number of outgoing edges from $\mathtt{X}$:
\begin{equation*}
\begin{Bmatrix*}[l]
\mathtt{\codeif map\_ed(X, Y, E), not~map\_ed(X, \_, E\mathord{-}1), E\mathord{>}1.}\\
\mathtt{\codeif map\_ed(X, Y, E), map\_ed(X, Z, E), Y\mathord{<}Z.}
\end{Bmatrix*}
\end{equation*}

The following rules encode SAT Clauses~11-14 in ASP. Note that Clauses~11 and 12 have been divided into two rules respectively to cover the different values of $\mathtt{L}$.
\begin{equation*}
\begin{Bmatrix*}[l]
\mathtt{\codeif lt(X, E\mathord{-}1, E, L), label(X, E\mathord{-}1, L), not~lt(X, E\mathord{-}1, E, L\mathord{-}1), E\mathord{>}1, L\mathord{>}1.}\\
\mathtt{\codeif lt(X, E\mathord{-}1, E, L), label(X, E\mathord{-}1, L), E\mathord{>}1, L\mathord{=}1.}\\
\mathtt{\codeif lt(X, E\mathord{-}1, E, L), not~label(X, E, L), not~lt(X, E\mathord{-}1, E, L\mathord{-}1), E\mathord{>}1, L\mathord{>}1.}\\
\mathtt{\codeif lt(X, E\mathord{-}1, E, L), not~label(X, E, L), E\mathord{>}1, L\mathord{=}1.}\\
\mathtt{1\{lt(X, E\mathord{-}1, E, L) : valid\_label(L)\} \codeif map\_ed(X, Y, E), E\mathord{>}1.}\\
\mathtt{\codeif not~lt(X, E\mathord{-}1, E, L), label(X, E\mathord{-}1, L), not~label(X, E, L), map\_ed(X, \_, E), E\mathord{>}1.}
\end{Bmatrix*}
\end{equation*}

\subsubsection{Alternative Encoding}
\label{app:alternative_asp_encoding}
The encoding we describe in this section is an alternative to the one presented before. Despite of the differences, it also enforces the graph indexing given by the BFS traversal described in Section~\ref{sec:structural_properties}.

Similarly to the SAT-based encoding, we need to address the fact that the mechanism uses an indexing for the edges different from the one we use to represent a subgoal automaton. In the SAT-based approach, we defined a mapping from the edge indices used by a subgoal automaton to those required by the symmetry breaking. In this approach, we do not use such mapping and directly operate on the edge indices used by the automata. The edge indexing used in the symmetry breaking is such that each outgoing edge from a given state has a different index. In other words, each edge is uniquely identified by an integer number. Here we preserve the same uniqueness principle by expressing the edges as $(u, (v,e))$ meaning that there is an edge from $u$ to $v$ with edge index $e$. Note that the tuple $(v,e)$ uniquely identifies each outgoing edge from $u$.

The rest of this section is divided into two parts. First, like in the SAT-based approach, we map the propositional formulas on the edges into sets of labels. Then, we describe the set of ASP rules we use for breaking symmetries.

\paragraph{Mapping of Formulas into Label Sets.}
We use the predicate $\mathtt{label}(u,(v,e),l)$ to express that label $l$ appears in the edge from state $u$ to state $v$ with index $e$. Similarly to the SAT-based encoding, the set of rules below transforms the formulas over a set of observables into label sets. The first rule sets $\mathtt{OID}$ as a label of edge $\mathtt{(Y,E)}$ from $\mathtt{X}$ if the corresponding observable $\mathtt{O}$ appears positively in that edge. Likewise, the second rule sets $\mathtt{OID\mathord{+}N}$ as a label of edge $\mathtt{(Y,E)}$ from $\mathtt{X}$ if the corresponding observable $\mathtt{O}$ appears negatively in that edge and $\mathtt{N}$ is the number of observables.
\begin{equation*}
\begin{matrix*}[l]
\begin{Bmatrix*}[l]
\mathtt{label(X, (Y, E), OID) \codeif pos(X, Y, E, O), obs\_id(O, OID).}\\
\mathtt{label(X, (Y, E), OID\mathord{+}N) \codeif neg(X, Y, E, O), obs\_id(O, OID), num\_obs(N).}
\end{Bmatrix*}
\end{matrix*}
\end{equation*}
Note that while the $\mathtt{label}$ predicate in the SAT encoding only used the edge index for referring to an outgoing edge, here we use a state-edge pair as explained at the beginning of the section.

\paragraph{Symmetry Breaking Rules.} We start by describing the rules which enforce outgoing edges from a given state to be ordered by their respective label sets. The predicate $\mathtt{ed\_lt(X, (Y, E), (YP, EP))}$ indicates that the edge from $\mathtt{X}$ to $\mathtt{Y}$ with edge index $\mathtt{E}$ is lower than the edge from $\mathtt{X}$ to $\mathtt{YP}$ with edge index $\mathtt{EP}$. The set of rules below describes how this ordering is determined and what constraints we impose on it. The first rule determines that given two outgoing edges from $\mathtt{X}$, $\mathtt{(Y,E)}$ and $\mathtt{(YP,EP)}$, either $\mathtt{(Y,E)}$ is lower than $\mathtt{(YP,EP)}$ or vice versa. Now, the order between outgoing edges from a state $\mathtt{X}$ must respect two constraints:
\begin{itemize}
	\item The second rule enforces transitivity. That is, if $\mathtt{Edge1}$ is lower than $\mathtt{Edge2}$, and $\mathtt{Edge2}$ is lower than $\mathtt{Edge3}$, then $\mathtt{Edge1}$ must be lower than $\mathtt{Edge3}$.
	\item The third rule enforces that two edges to the same state $\mathtt{Y}$ must be ordered according to their edge index. That is, given edges $\mathtt{(Y, E)}$ and $\mathtt{(Y,EP)}$ from $\mathtt{X}$ such that $\mathtt{E\mathord{<}EP}$, edge $\mathtt{(Y,E)}$ must be lower than $\mathtt{(Y,EP)}$.
\end{itemize}
\begin{equation*}
\begin{Bmatrix*}[l]
{
	\arraycolsep=1.4pt
	\begin{matrix*}[l]
	\mathtt{1 \{ ed\_lt(X, (Y, E), (YP, EP)) ; ed\_lt(X, (YP, EP), (Y, E)) \} 1} \codeif& \mathtt{ed(X, Y, E), ed(X, YP, EP),}\\ 
	&\mathtt{(Y, E) \mathord{<} (YP, EP).}
	\end{matrix*}
}\\
{
	\arraycolsep=1.4pt
	\begin{matrix*}[l]
	\codeif &\mathtt{ed\_lt(X, Edge1, Edge2), ed\_lt(X, Edge2, Edge3), not~ed\_lt(X, Edge1, Edge3),}\\ 
	&\mathtt{Edge1 !\mathord{=} Edge3.}
	\end{matrix*}
}\\
\mathtt{\codeif ed\_lt(X, (Y, E), (Y, EP)), ed(X, Y, E), ed(X, Y, EP), E\mathord{>}EP.}
\end{Bmatrix*}
\end{equation*}
Note that the previous set of rules guesses an ordering of outgoing edges from a given state. However, this ordering must comply with that of the label sets given in Definition~\ref{def:observation_ordering}. We use the predicate $\mathtt{label\_lt(X, Edge1, Edge2, L)}$ to indicate that there is label $\mathtt{L'}\leq \mathtt{L}$ that appears in $\mathtt{Edge2}$ and does not appear in a lower edge $\mathtt{Edge1}$, where both $\mathtt{Edge1}$ and $\mathtt{Edge2}$ are outgoing edges from $\mathtt{X}$. Note that this predicate encodes the first condition in Definition~\ref{def:observation_ordering} up to a specific label. The set of rules below prunes solutions where outgoing edges do not follow the established label ordering criteria. The first rule indicates that $\mathtt{label\_lt(X, Edge1, Edge2, L)}$ is true if $\mathtt{Edge1}$ is lower than $\mathtt{Edge2}$, and the label $\mathtt{L}$ does not appear in $\mathtt{Edge1}$ and appears in $\mathtt{Edge2}$. The second rule states that $\mathtt{label\_lt}$ is true for a valid label $\mathtt{L\mathord{+}1}$ if it is true for $\mathtt{L}$. The third rule states that if $\mathtt{Edge1}$ is lower than $\mathtt{Edge2}$, then the label set on $\mathtt{Edge1}$ must be lower than that on $\mathtt{Edge2}$. Note that the three last literals on the constraint enforce both conditions in Definition~\ref{def:observation_ordering}. 
\begin{equation*}
\begin{Bmatrix*}[l]
{
	\arraycolsep=1.4pt
	\begin{matrix*}[l]
	\mathtt{label\_lt(X, Edge1, Edge2, L)}&\codeif&\mathtt{ed\_lt(X, Edge1, Edge2), not~label(X, Edge1, L),}\\
	&&\mathtt{label(X, Edge2, L).}
	\end{matrix*}
}\\
\mathtt{label\_lt(X, Edge1, Edge2, L\mathord{+}1) \codeif label\_lt(X, Edge1, Edge2, L), valid\_label(L\mathord{+}1).}\\
{
	\arraycolsep=1.4pt
	\begin{matrix*}[l]
	\codeif&\mathtt{ed\_lt(X, Edge1, Edge2), label(X, Edge1, L), not~label(X, Edge2, L),}\\
	&\mathtt{not~label\_lt(X, Edge1, Edge2, L).}
	\end{matrix*}
}
\end{Bmatrix*}
\end{equation*}

The set of rules below imposes that lower edge indices cannot be left unused. First, we define a fact for each possible edge index between 1 and $\kappa$. Remember that $\kappa$ is the maximum number of edges from one state to another. Second, the constraint indicates that if there is an edge from $\mathtt{X}$ to $\mathtt{Y}$ with edge index $\mathtt{E}$ where $\mathtt{E}\mathord{>}1$, then there must be an edge between the same states but with edge index $\mathtt{E\mathord{-}1}$ as well.
\begin{equation*}
\begin{Bmatrix*}[l]
\mathtt{edge\_id(1..\kappa).}\\
\codeif \mathtt{ed(X, Y, E), not~ed(X, Y, E\mathord{-}1), edge\_id(E), E\mathord{>}1.}
\end{Bmatrix*}
\end{equation*}

Finally, we introduce a set of rules for enforcing a BFS traversal of the subgoal automaton. Firstly, like in the SAT-based approach, we use a predicate $\mathtt{ed\_sb(X,Y,E)}$ which is ground for all the edges except for those directed to a state without an index (the accepting and rejecting states):
\begin{equation*}
\mathtt{ed\_sb(X, Y, E) \codeif ed(X, Y, E), state\_id(Y, \_).}
\end{equation*}

We now introduce a predicate $\mathtt{pa(X,Y)}$ denoting that state $\mathtt{X}$ is the parent of $\mathtt{Y}$ in the BFS subtree. Note that it is equivalent to the variable $pa(i,j)$ from the SAT encoding. The set of rules below uses this predicate to enforce the BFS ordering. The first rule defines that state $\mathtt{X}$ is the parent of $\mathtt{Y}$ if there is an edge from $\mathtt{X}$ to $\mathtt{Y}$, $\mathtt{X}$ has a lower index than $\mathtt{Y}$, and there is no state $\mathtt{Z}$ whose index is lower than $\mathtt{X}$'s and has an edge to $\mathtt{Y}$.\footnote{Note that what follows $\mathtt{\#false:}$ must not hold in order to make the body of the rule true.} The second rule indicates that all states with an index except for the initial state must have a parent. The third rule imposes the BFS ordering similarly to Clause~4 in the SAT encoding. Note that the first two rules encode Clauses~1 and 2 of the SAT encoding.
\begin{equation*}
\begin{Bmatrix*}[l]
{	
	\arraycolsep=1.4pt%\def\arraystretch{2.2}
	\begin{matrix*}[l]
	\mathtt{pa(X, Y)} &\codeif& \mathtt{ed\_sb(X, Y, \_), state\_lt(X, Y),}\\ 
	&&\mathtt{\#false : ed\_sb(Z, Y, \_), state\_lt(Z, X).}
	\end{matrix*}
}\\
\mathtt{\codeif state\_id(Y, YID), YID\mathord{>}0, not~pa(\_, Y).}\\
\mathtt{\codeif pa(X, Y), ed\_sb(XP, YP, \_), state\_lt(XP, X), state\_leq(Y, YP).}
\end{Bmatrix*}
\end{equation*}

Now we need to enforce that the BFS children from a given state are correctly ordered; that is, those children pointed by lower edges should be identified by lower state indices. We use the $\mathtt{state\_ord(X)}$ predicate to indicate that state $\mathtt{X}$ is properly ordered with respect to its siblings (i.e., other states with the same parent state). The set of rules below enforces this ordering. The first rule defines that a state $\mathtt{Y}$ is correctly ordered with respect to its siblings if the edge from their parent \texttt{X} to \texttt{Y}, $\mathtt{(Y,E)}$, is lower than the edge to another state $\mathtt{(YP,EP)}$ if $\mathtt{Y\mathord{<}YP}$. That is, the edges must be ordered according to the order of the state indices. The second rule enforces that all states with a parent must be correctly ordered with respect to their siblings.
\begin{equation*}
\begin{Bmatrix*}[l]
{
	\arraycolsep=1.4pt
	\begin{matrix*}[l]
	\mathtt{state\_ord(Y)} &\codeif& \mathtt{ed\_sb(X, Y, E), pa(X, Y),} \\
	&&\mathtt{\#false : ed\_sb(X, YP, EP), state\_lt(Y, YP), ed\_lt(X, (YP, EP), (Y, E)).}
	\end{matrix*}
}\\
\mathtt{\codeif pa(\_, Y), not~state\_ord(Y).}
\end{Bmatrix*}
\end{equation*}

\section{Additional Experimental Details}
In this section we describe the experimental details omitted in Section~\ref{sec:experiments}. Firstly, we explain how some of the restrictions used in the experiments are implemented. Secondly, we make a thorough analysis of how enabling and disabling these restrictions (as well as other parameters) affects automaton learning and reinforcement learning.

\subsection{Automaton Learning Restrictions}
\label{app:automata_learning_restrictions}
In this section we provide details about the restrictions on automaton learning outlined in Section~\ref{sec:experimental_setting}. All the rules introduced below are added to the automaton learning task defined in Sections~\ref{sec:learn_subgoal_automata_from_traces} and \ref{sec:structural_properties}.

\paragraph{Avoid Learning Purely Negative Formulas.} This restriction consists in avoiding that any edge is labeled by a formula formed only by negated observables. The following constraint discards such solutions by enforcing an observable to occur positively whenever an observable appears negatively in a given edge:\footnote{Remember that the \texttt{pos} and \texttt{neg} facts were introduced in Section~\ref{sec:structural_properties}.}
\begin{equation*}
	\mathtt{\codeif neg(X,Y,E,\_), not~pos(X,Y,E,\_).}
\end{equation*}

\paragraph{Acyclicity.} The following set of rules enforces the automaton to be acyclic; that is, two automaton states cannot be reached from each other. The $\mathtt{path(X,Y)}$ predicate indicates there is a directed path (i.e., a sequence of directed edges) from $\mathtt{X}$ to $\mathtt{Y}$. The first rule states that there is a path from $\mathtt{X}$ to $\mathtt{Y}$ if there is an edge from $\mathtt{X}$ to $\mathtt{Y}$. The second rule indicates that there is a path from $\mathtt{X}$ to $\mathtt{Y}$ if there is an edge from $\mathtt{X}$ to an intermediate state $\mathtt{Z}$ from which there is a path to $\mathtt{Y}$. Finally, the third rule rules out those answer sets where $\mathtt{X}$ and $\mathtt{Y}$ can be reached from each other through directed edges.
\begin{equation*}
	\begin{Bmatrix*}[l]
		\mathtt{path(X, Y) \codeif ed(X, Y, \_).}\\
		\mathtt{path(X, Y) \codeif ed(X, Z, \_), path(Z, Y).}\\
		\mathtt{\codeif path(X, Y), path(Y, X).}
	\end{Bmatrix*}
\end{equation*}

\paragraph{Trace Compression.} The method we propose for shortening an observation trace is based on the following assumptions:
\begin{enumerate}
	\item Empty observations are irrelevant.
	\item Seeing the same observation twice or more in a row is equivalent to seeing it once.
\end{enumerate}
Given these two assumptions, we define a subtype of observation trace called \emph{compressed observation trace}.
\begin{definition}[Compressed observation trace]
	A compressed observation trace $\hat{\lambda}_{L,\mathcal{O}}=\langle \hat{O}_0, \ldots, \hat{O}_m \rangle$ is the result of removing empty observations and, thereafter, removing contiguous equal observations from an observation trace $\lambda_{L, \mathcal{O}}=\langle O_0, \ldots, O_n \rangle$.
	\label{def:compressed_obs_trace}
\end{definition}
\begin{example}
	The first trace below is a goal observation trace for the \textsc{OfficeWorld}'s \textsc{Coffee} task. The second trace is the resulting compressed observation trace. 
	\begin{align*}
		\lambda^G_{L,\mathcal{O}} &= \left\langle \{\}, \{\text{\Coffeecup}\}, \{\text{\Coffeecup}\}, \{\}, \{\}, \{o\} \right\rangle,\\
		\hat{\lambda}^G_{L,\mathcal{O}} &= \left\langle \{\text{\Coffeecup}\}, \{o\} \right\rangle.
	\end{align*}
\end{example}

As we will see experimentally (see Appendix~\ref{app:additional_experimental_results}), compressed observation traces are helpful to speed up automaton learning. However, their applicability is limited to tasks where the assumptions above hold. Thus, these traces should not be used to learn the automata for tasks where every single observation is important, such as ``observe {\Coffeecup} twice in a row''.

The automaton learning component of our method does not distinguish between observation traces and compressed observation traces. Therefore, we do not have to make any change in the encoding of the learning tasks. However, since the information about the number of performed steps is lost and there are no empty observations, unlabeled transitions are meaningless when these traces are used. Thus, we rule out automata with unlabeled edges using the following constraint when trace compression is enabled:
\begin{equation*}
	\mathtt{\codeif ed(X,Y,E), not~pos(X,Y,E,\_), not~ neg(X,Y,E,\_).}
\end{equation*}
This constraint rules out any inductive solution where an edge from $\mathtt{X}$ to $\mathtt{Y}$ with index $\mathtt{E}$ is not labeled by a positive or a negative literal.

In the case of the RL component, the use of compressed observation traces affects the way in which an automaton is traversed by the agent. Since empty observations and contiguous duplicated observations are ignored, the transition function $\delta_\varphi$ is only queried when the current observation (1) is not empty and (2) it is different from the last observation. If these two conditions do not hold,  the agent remains in the same automaton state. 

\subsection{Parameter Tuning Experiments}
\label{app:additional_experimental_results}
In this section we make a thorough analysis of the impact that the reinforcement learning and automaton learning parameters have on the overall performance of ISA using the \textsc{OfficeWorld} domain. Table~\ref{tab:officeworld_extra_experiments_params} shows the parameters used throughout these experiments. The parameter set at the top of the table remains unchanged for all experiments. In contrast, the parameters in the bottom part of the table can change between experiments.

\begin{table}[]
	\centering
	\begin{tabular}{ll}
		\toprule
		Learning rate ($\alpha$)                  & 0.1    \\
		Exploration rate ($\epsilon$)             & 0.1    \\
		Discount factor ($\gamma$)                & 0.99   \\
		Number of episodes                        & 10,000 \\
		Avoid learning purely negative formulas & \cmark \\
		RL algorithm                              & $\text{HRL}_\text{G}$    \\
		\midrule
		Number of tasks ($|\mathcal{D}|$)         & 50     \\
		Maximum episode length ($N$)              & 250    \\
		Trace compression                         & \cmark    \\
		Enforce acyclicity                        & \cmark    \\
		Number of disjuncts ($\kappa$)         & 1      \\
		Use restricted observable set             & \xmark \\
		\bottomrule
	\end{tabular}
	\caption{Parameters used in the \textsc{OfficeWorld} experiments. The top part of the table contains those parameters that remain unchanged, while the bottom part contains those that change across experiments.}
	\label{tab:officeworld_extra_experiments_params}
\end{table}

\subsubsection{POMDP Sets, Number of Tasks \& Number of Steps per Episode}
In these experiments we check the variability in the performance for two different sets of randomly generated POMDPs, different numbers of POMDPs in a POMDP set, and for different numbers of steps per episode. Let $\mathcal{D}_1=\lbrace\mathcal{M}^\Sigma_{1,1}, \ldots, \mathcal{M}^\Sigma_{1,100} \rbrace$ and $\mathcal{D}_2=\lbrace\mathcal{M}^\Sigma_{2,1}, \ldots, \mathcal{M}^\Sigma_{2,100} \rbrace$ be two sets of 100 POMDPs. Let $\mathcal{D}^{j}_i=\lbrace\mathcal{M}^\Sigma_{i,1}, \ldots, \mathcal{M}^\Sigma_{i,j} \rbrace \subseteq \mathcal{D}_i$ denote the subset of the first $j$ POMDPs from the $i$-th POMDP set, and $N$ be the maximum episode length (i.e., the maximum number of steps within an episode). These experiments are performed with $j\in \lbrace 10, 50, 100 \rbrace$ and $N \in \lbrace 100, 250, 500 \rbrace$. 

Given the set of POMDPs $\mathcal{D}_1$, Figure~\ref{fig:officeworld_num_tasks_steps} shows the learning curves for the different combinations of POMDP subsets and steps. Although all experiments were run for 10,000 episodes, the curves are just shown for some of them to clearly display the variances between different values of $N$. We make the following observations:
\begin{itemize}
	\item The lowest maximum episode length ($N=100$) works fine when a goal state is easy to achieve (i.e., the number of subgoals is low, like in \textsc{Coffee}). As the number of subgoals in the task increases, the maximum episode length needs to be increased to reach a task's final goal. If $N$ is not high enough, then there is only a low chance that the agent will observe the required counterexamples to refine an automaton. Remember that if the goal is not achieved at least once, the agent will not be able to exploit the knowledge given by an automaton since it will not learn any. For instance, in the case of \textsc{VisitABCD}, we observe that the learning curves barely converge when $N=100$ because no automaton is learned. Even when the number of POMDPs is high (100), a goal trace to start automaton learning is only found in 9 out of 20 runs.	
	\item Small sets of POMDPs are sufficient to learn an automaton and policies that achieve the goal in the \textsc{Coffee} task. However, in the tasks involving more subgoals, a small set of POMDPs (10) does not speed up convergence as much as bigger sets of POMDPs (50 or 100). For example, in \textsc{VisitABCD}, automata are rarely learned when the number of POMDPs is 10 for any maximum episode length. Increasing the number of POMDPs increases the chance that easier grids are included in the set and, thus, also increases the chance of observing counterexamples to start learning automata.
	\item Small values of $N$ and small number of POMDPs often cause automaton learning to occur through the entire interaction. In contrast, higher values of $N$ and bigger POMDP sets usually concentrate learning early in the interaction. Again, when $N$ is small, the agent has a lower chance of observing a counterexample trace to refine the automaton since the interaction with the environment is shorter. This is specially detrimental in the case of smaller POMDP sets where the goal is particularly difficult to achieve (e.g., if the observables are sparsely distributed in the grid).
\end{itemize}

\begin{figure}
	\centering
	\subfloat{
		\resizebox{0.32\columnwidth}{!}{
			\includegraphics{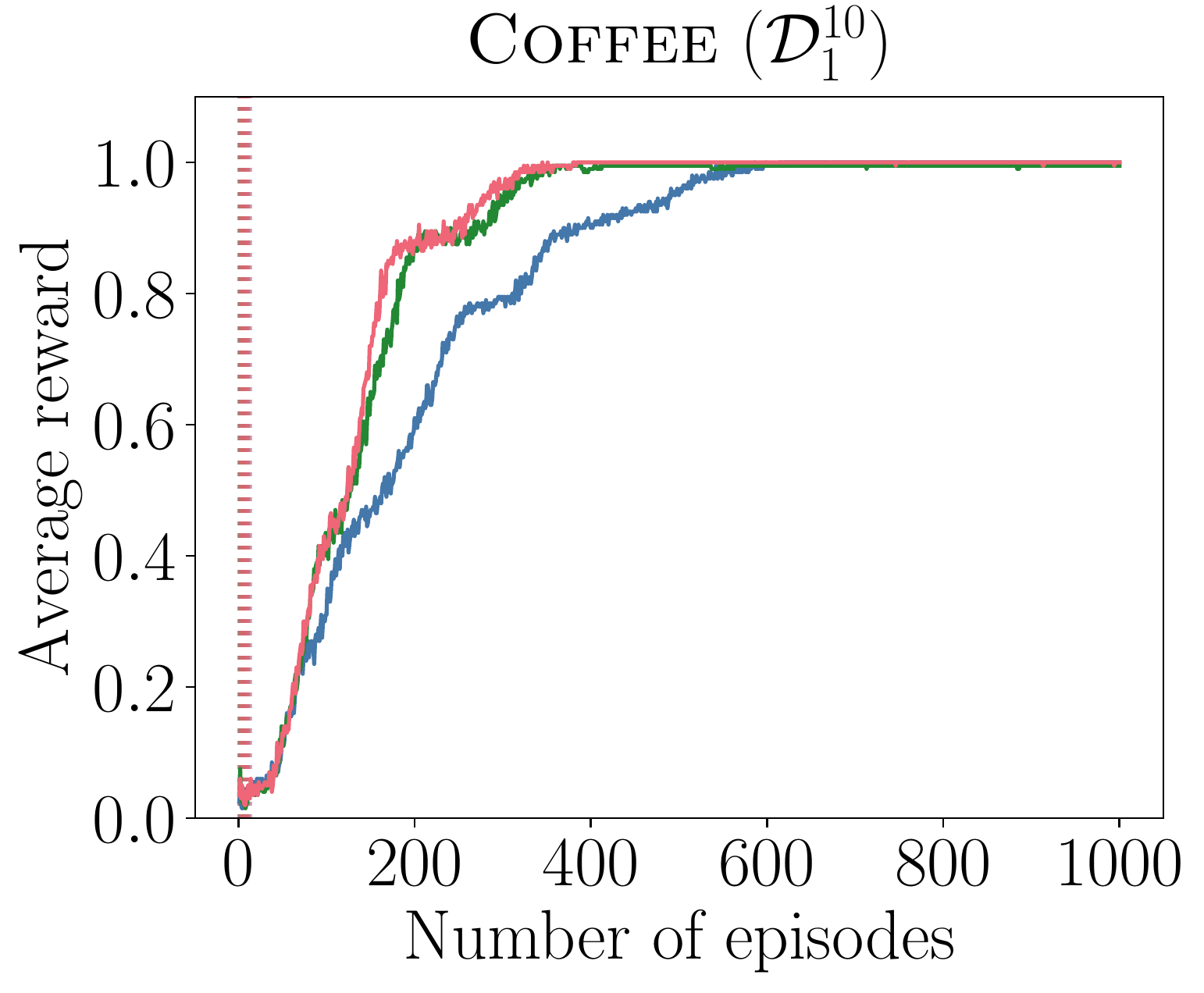}
		}
	}
	\subfloat{
		\resizebox{0.32\columnwidth}{!}{
			\includegraphics{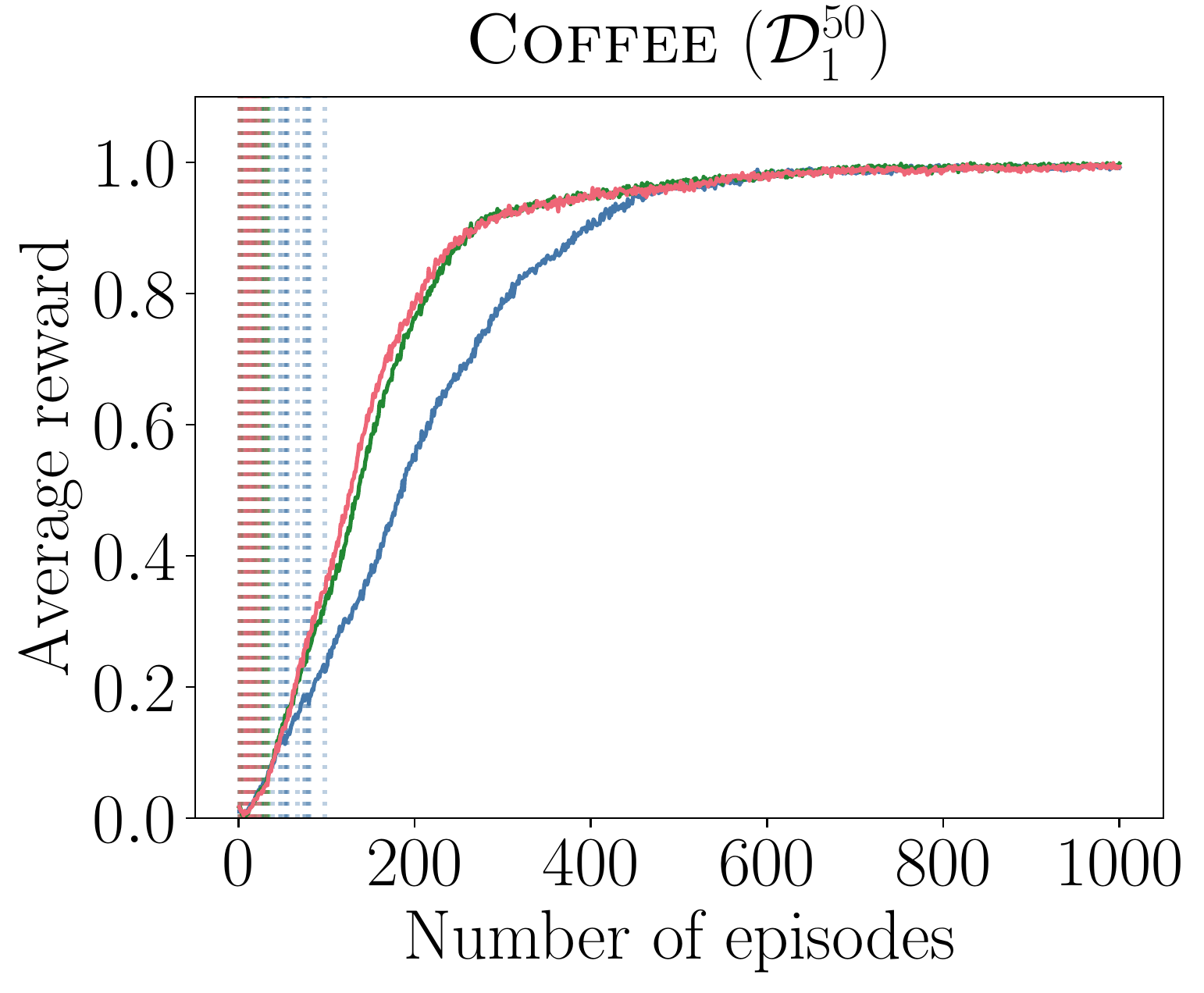}
		}
	}
	\subfloat{
		\resizebox{0.32\columnwidth}{!}{
			\includegraphics{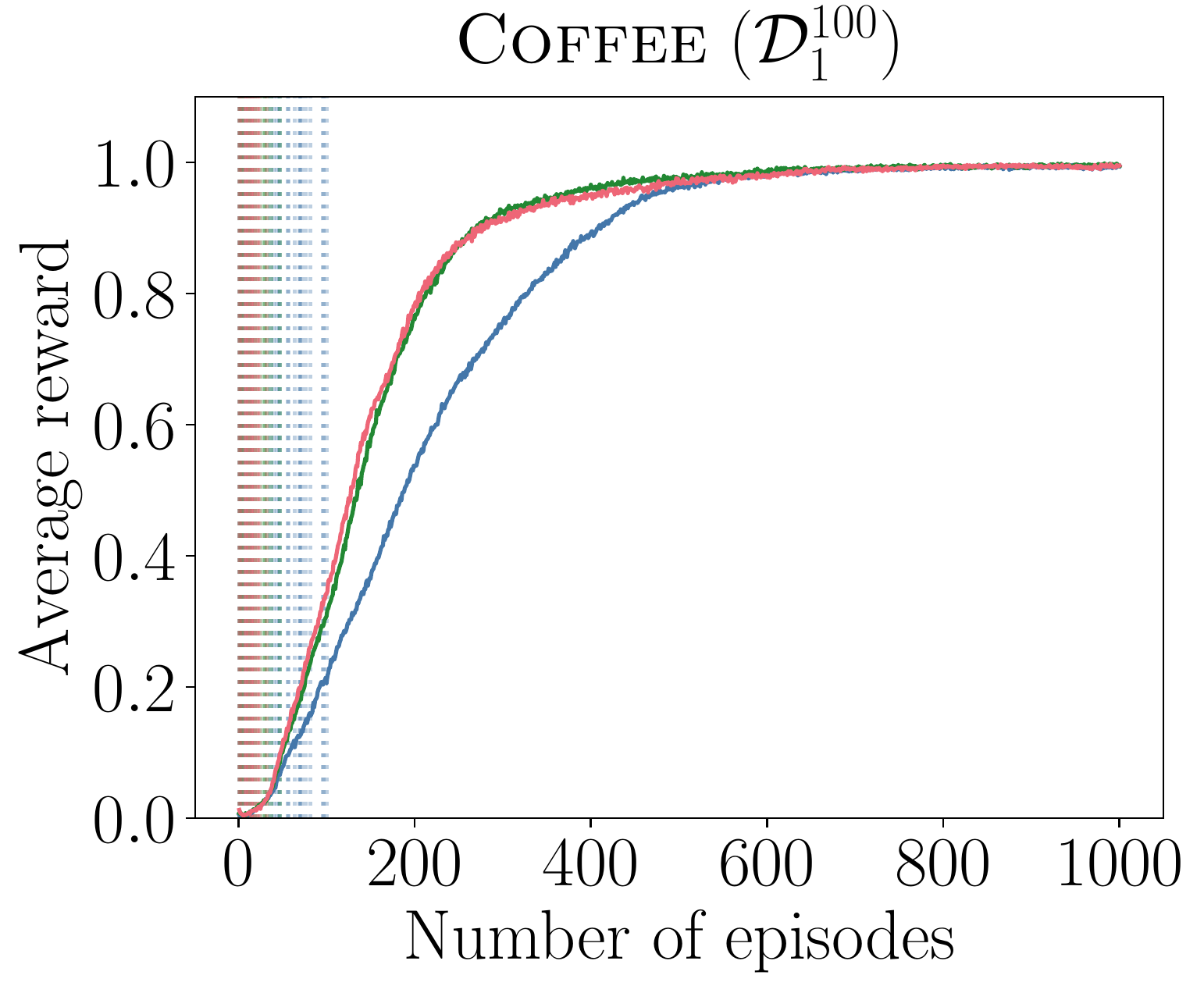}
		}
	}
	\newline
	\subfloat{
		\resizebox{0.32\columnwidth}{!}{
			\includegraphics{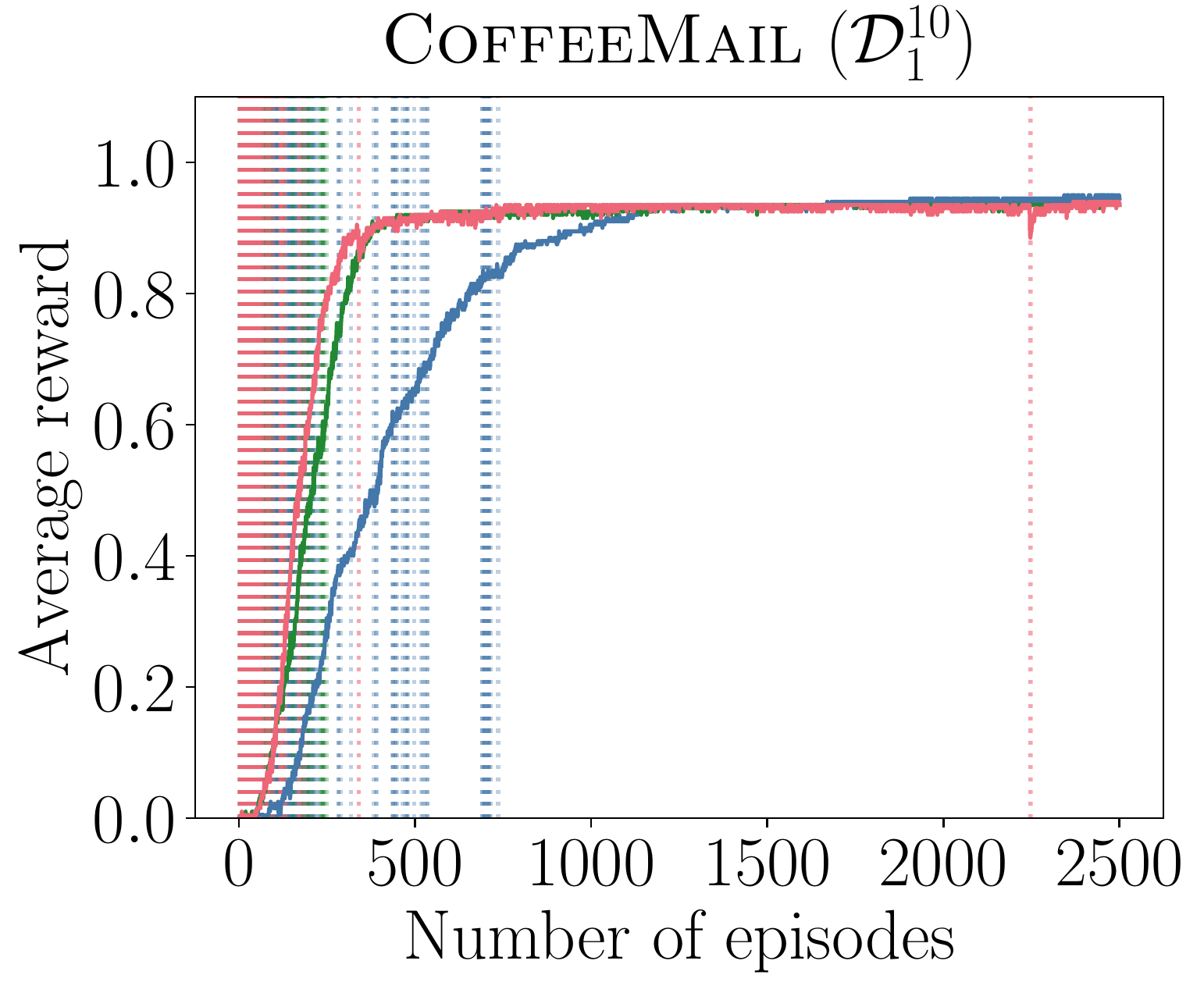}
		}
	}
	\subfloat{
		\resizebox{0.32\columnwidth}{!}{
			\includegraphics{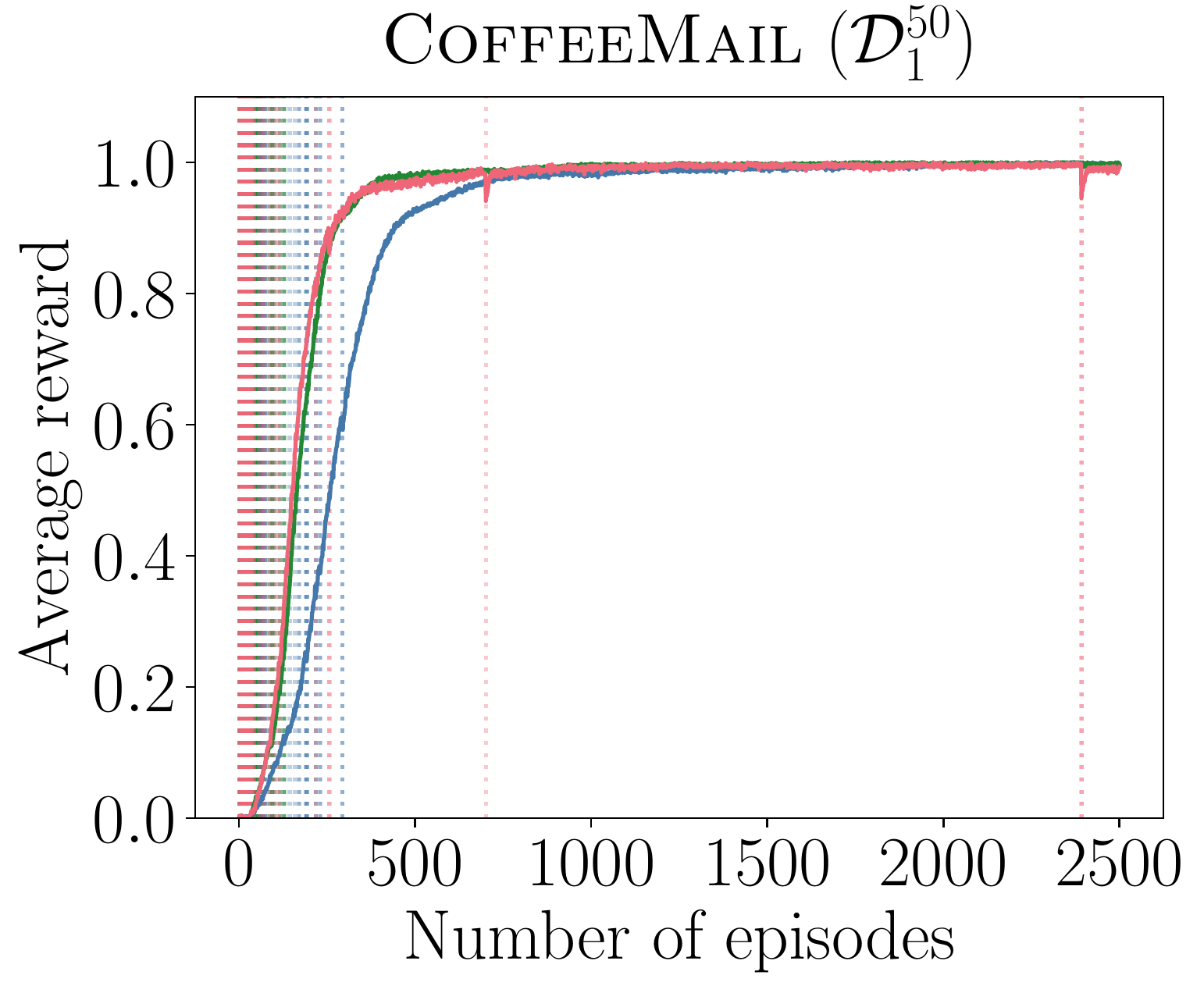}
		}
	}
	\subfloat{
		\resizebox{0.32\columnwidth}{!}{
			\includegraphics{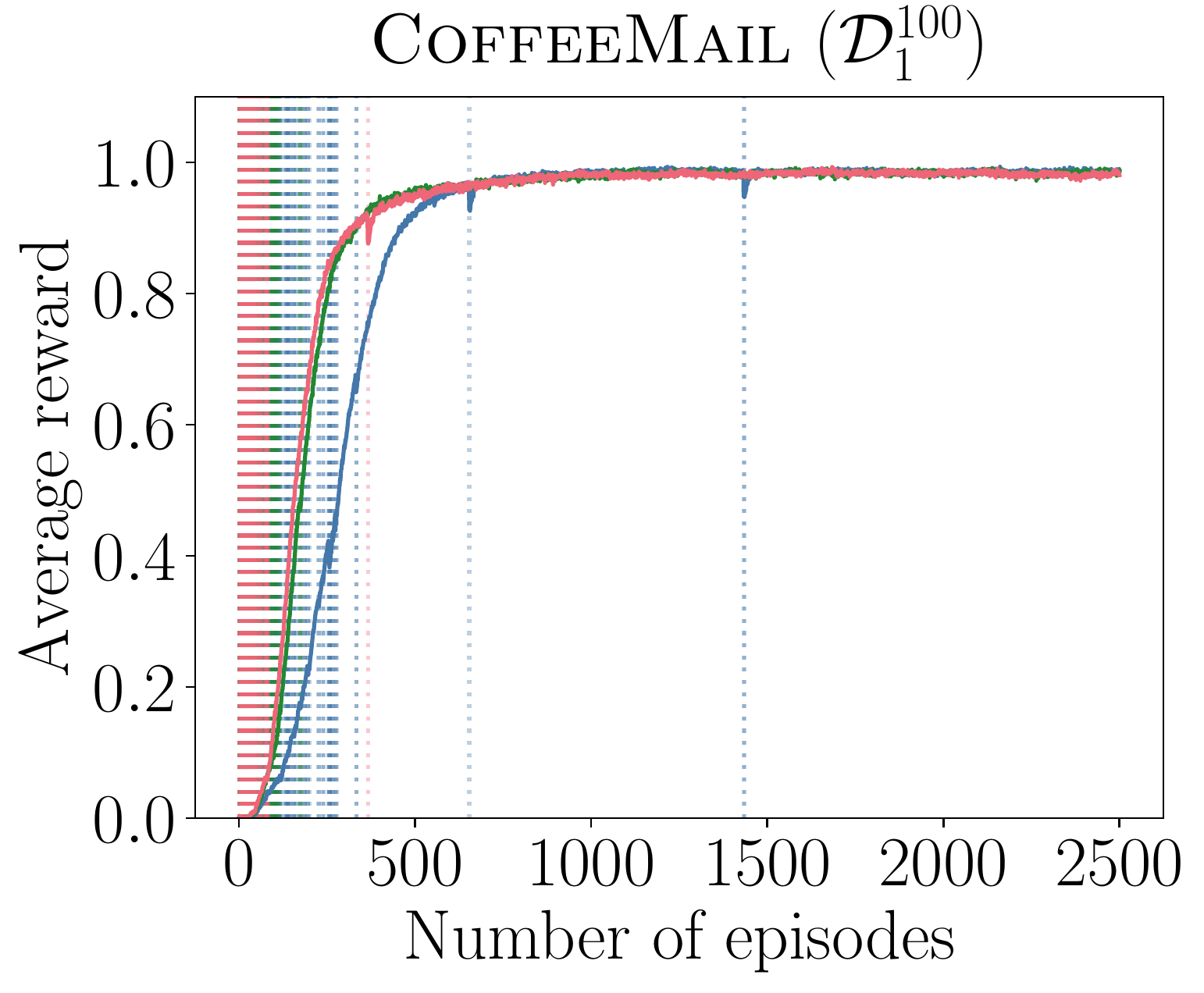}
		}
	}
	\newline
	\subfloat{
		\resizebox{0.32\columnwidth}{!}{
			\includegraphics{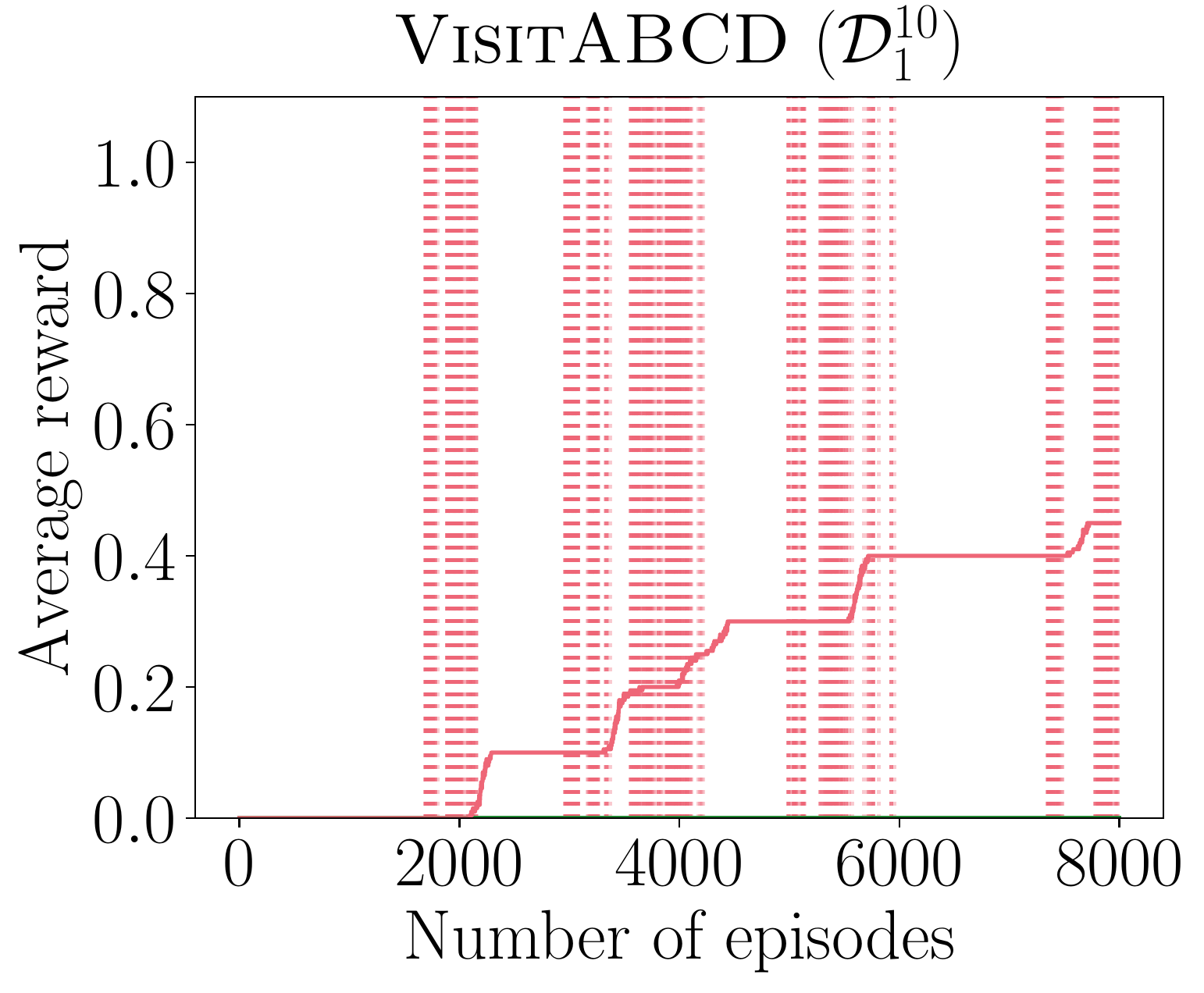}
		}
	}
	\subfloat{
		\resizebox{0.32\columnwidth}{!}{
			\includegraphics{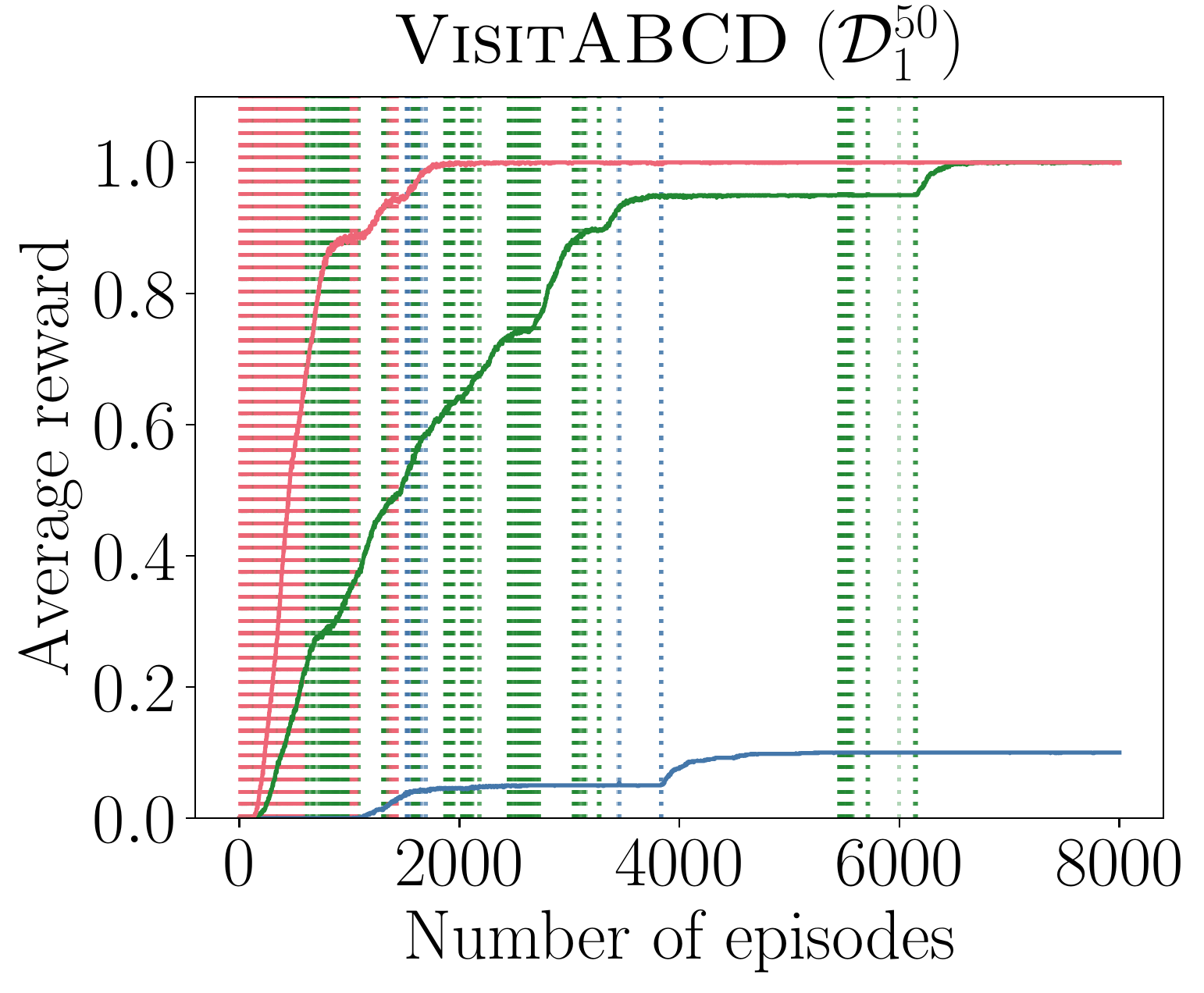}
		}
	}
	\subfloat{
		\resizebox{0.32\columnwidth}{!}{
			\includegraphics{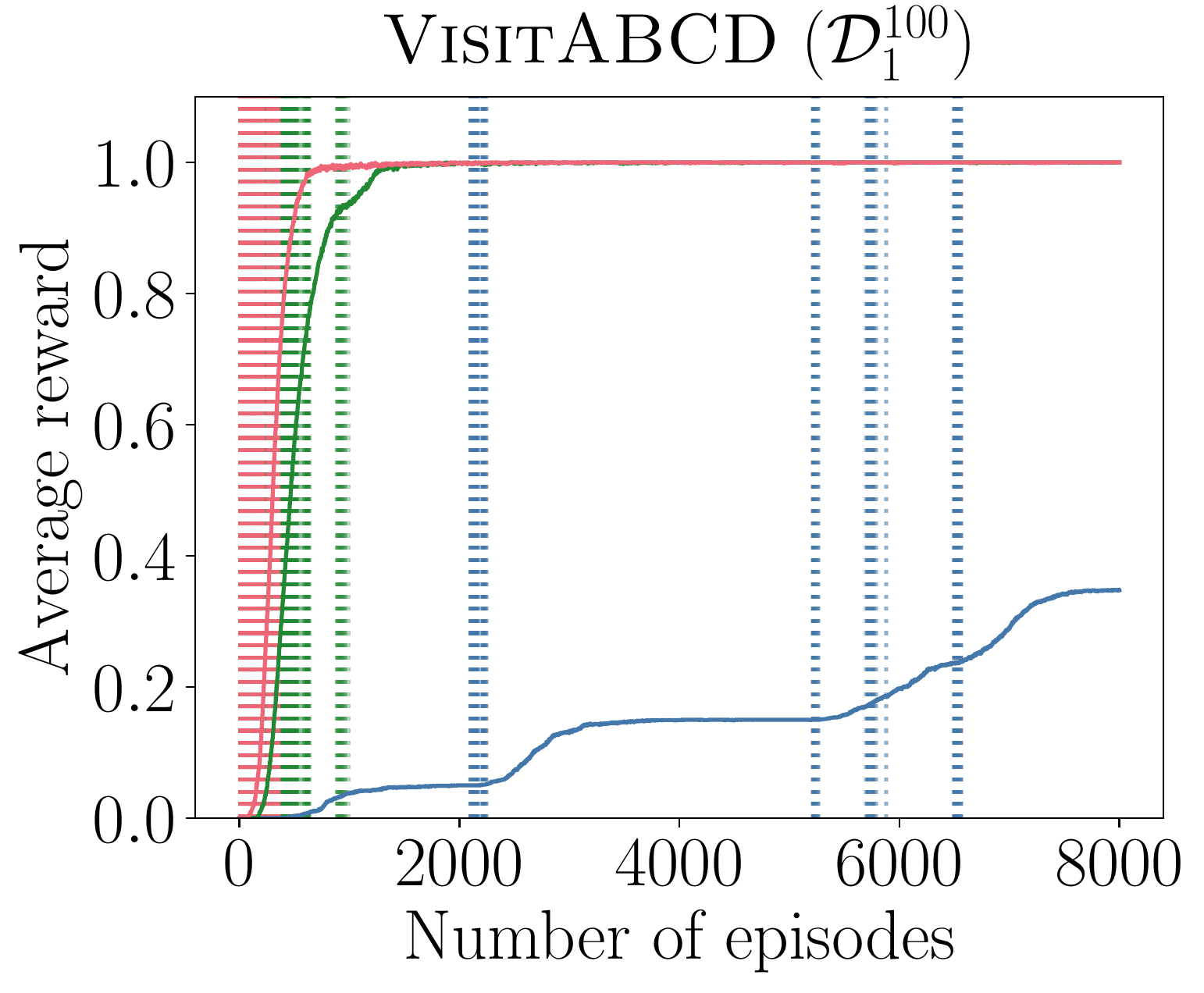}
		}
	}
	
	\begin{tikzpicture}
		\begin{customlegend}[legend columns=-1,legend style={column sep=1ex},legend cell align={left},legend entries={$N=100$,$N=250$,$N=500$}]
			\addlegendimage{pblue}
			\addlegendimage{pgreen}
			\addlegendimage{pred}
		\end{customlegend}
	\end{tikzpicture}
	\caption{Learning curves for different combinations of POMDP sets ($\mathcal{D}^{10}_1$, $\mathcal{D}^{50}_1$, $\mathcal{D}^{100}_1$) and maximum episode lengths (100, 250, 500).}
	\label{fig:officeworld_num_tasks_steps}
\end{figure}

A high maximum episode length seems to be the best choice to make sure that a policy that achieves the goal is learned. Intuitively, however, such choice can produce longer traces and can make automaton learning more complex since (1) the chance of observing irrelevant observables to the task at hand increases (i.e., the system has to learn they are not important) and (2) it is more difficult to figure out the order in which subgoals must be observed. Table~\ref{tab:automata_learning_running_times} shows the average total automaton learning time, while Table~\ref{tab:automata_learning_average_num_examples} shows the average number of examples needed to learn the last automaton in each run. The following is observed from these tables:
\begin{itemize}
	\item The running times generally increase with the maximum episode length. Moreover, Table~\ref{tab:average_example_length_specific_setting} contains the average example length for the $\mathcal{D}^{50}_1$ POMDP set, which shows that the longer the episode is, the longer the counterexamples become.
	\item The running times increase when the set of POMDPs becomes bigger, specifically when changing from $\mathcal{D}^{10}_1$ to $\mathcal{D}^{50}_1$. This is due to the presence of observations that occur within the latter and not in the former. For instance, observables {\Coffeecup} and $o$ only occur together in $\mathcal{D}^{50}_1$ and $\mathcal{D}^{100}_1$ but not in $\mathcal{D}^{10}_1$, so additional time is spent to learn the transition $\text{\Coffeecup} \land o$ for the \textsc{Coffee} and \textsc{CoffeeMail} tasks.
	\item The number of examples remains similar across different sets of POMDPs. There is only a noticeable difference between $\mathcal{D}^{10}_1$ to $\mathcal{D}^{50}_1$ because the latter includes observations that do not happen in the former, as explained before. Note that such changes do not occur in \textsc{VisitABCD} because there is a single path to the accepting state.
	\item As shown in Section~\ref{sec:officeworld_experiments}, the running time and the number of examples increases with the number of subgoals that characterizes the task. Besides, Table~\ref{tab:average_example_length_specific_setting} shows that the example length is longer for the tasks with more subgoals, which can definitely have an effect on the time needed to learn the automaton.
\end{itemize}

\begin{table}[]
	\centering
	\resizebox{\textwidth}{!}{%
		\begin{tabular}{lrrrrrrrrrrr}
			\toprule
			& \multicolumn{3}{c}{$\mathcal{D}^{10}_1$}                                                & & \multicolumn{3}{c}{$\mathcal{D}^{50}_1$}                                                & & \multicolumn{3}{c}{$\mathcal{D}^{100}_1$}                   \\
			\cmidrule{2-4} \cmidrule{6-8} \cmidrule{10-12}
			& \multicolumn{1}{c}{$N=100$} & \multicolumn{1}{c}{$N=250$} & \multicolumn{1}{c}{$N=500$} & & \multicolumn{1}{c}{$N=100$} & \multicolumn{1}{c}{$N=250$} & \multicolumn{1}{c}{$N=500$} & & \multicolumn{1}{c}{$N=100$} & \multicolumn{1}{c}{$N=250$} & \multicolumn{1}{c}{$N=500$}           \\
			\midrule
			\textsc{Coffee}     & 0.3 (0.0)                   & 0.3 (0.0)                   & 0.4 (0.0)                   & & 0.4 (0.0)                   & 0.4 (0.0)                   & 0.6 (0.0)                   & & 0.4 (0.0)                   & 0.5 (0.0)                   & 0.5 (0.0)                            \\
			\textsc{CoffeeMail} & 5.9 (1.5)                   & 4.2 (1.1)                   & 9.4 (3.3)                   & & 7.8 (1.5)                   & 18.9 (3.3)                  & 49.1 (12.0)                 & & 9.2 (1.2)                   & 29.1 (9.4)                  & 64.3 (15.3)                         \\
			\textsc{VisitABCD}  &  -                          & -                           & 2966.4 (1323.9)*             & & -                           & 163.2 (44.3)                & 311.9 (63.4)                & & -                           & 230.7 (99.9)                & 230.8 (48.2)                        \\
			\bottomrule       
		\end{tabular}%
	}
	\caption{Total automaton learning time in seconds for different combinations of POMDP sets ($\mathcal{D}^{10}_1$, $\mathcal{D}^{50}_1$, $\mathcal{D}^{100}_1$) and maximum episode lengths (100, 250, 500).}
	\label{tab:automata_learning_running_times}
\end{table}

\begin{table}[]
	\centering
	\resizebox{\textwidth}{!}{%
		\begin{tabular}{lrrrrrrrrrrr}
			\toprule
			& \multicolumn{3}{c}{$\mathcal{D}^{10}_1$}                                                & & \multicolumn{3}{c}{$\mathcal{D}^{50}_1$}                                                & & \multicolumn{3}{c}{$\mathcal{D}^{100}_1$}                   \\
			\cmidrule{2-4} \cmidrule{6-8} \cmidrule{10-12}
			& \multicolumn{1}{c}{$N=100$} & \multicolumn{1}{c}{$N=250$} & \multicolumn{1}{c}{$N=500$} & & \multicolumn{1}{c}{$N=100$} & \multicolumn{1}{c}{$N=250$} & \multicolumn{1}{c}{$N=500$} & & \multicolumn{1}{c}{$N=100$} & \multicolumn{1}{c}{$N=250$} & \multicolumn{1}{c}{$N=500$}           \\
			\midrule
			\textsc{Coffee}     & 5.8 (0.3)                   & 7.0 (0.3)                   & 7.4 (0.3)                   & & 8.4 (0.3)                   & 8.7 (0.4)                   & 10.7 (0.5)                  & & 8.6 (0.4)                   & 9.6 (0.5)                   & 9.4 (0.4)                            \\
			\textsc{CoffeeMail} & 24.2 (1.5)                  & 20.6 (1.5)                  & 24.4 (1.7)                  & & 24.8 (1.6)                  & 29.0 (1.5)                  & 33.0 (1.5)                  & & 29.6 (1.2)                  & 35.7 (1.2)                  & 35.9 (1.6)                         \\
			\textsc{VisitABCD}  &  -                          & -                           & 86.0 (12.0)*                & & -                           & 54.9 (3.8)                  & 50.6 (3.5)                  & & -                           & 55.2 (6.0)                  & 49.9 (2.5)                        \\
			\bottomrule       
		\end{tabular}%
	}
	\caption{Number of examples needed to learn the last automaton for different combinations of POMDP sets ($\mathcal{D}^{10}_1$, $\mathcal{D}^{50}_1$, $\mathcal{D}^{100}_1$) and maximum episode lengths (100, 250, 500).}
	\label{tab:automata_learning_average_num_examples}
\end{table}

We now briefly examine the results for the set $\mathcal{D}_2$. Figure~\ref{fig:officeworld_num_tasks_steps_dataset2} displays the average learning curves for different combinations of POMDP subsets and values of $N$. Table~\ref{tab:automata_learning_running_times_d2} shows the average automaton learning time, whereas Table~\ref{tab:automata_learning_average_num_examples_d2} contains the average number of examples needed to learn an automaton for different subsets of tasks and maximum episode lengths ($N$). Table~\ref{tab:average_example_length_specific_setting_d2} shows the average example length of each kind of trace for different values of $N$. Table~\ref{tab:average_number_of_examples_specific_setting_d2} contains the average number of examples needed to learn the last automaton in a specific setting. In qualitative terms, the main changes we observe with respect to $\mathcal{D}_1$ occur in the \textsc{VisitABCD} task. While $N=100$ was usually insufficient to learn an automaton in $\mathcal{D}^{50}_1$ across runs, it is enough in $\mathcal{D}^{50}_2$. By comparing the $\textsc{VisitABCD}$ curves, it is clear that $\mathcal{D}^{50}_2$ consists of POMDPs requiring less steps than those in $\mathcal{D}^{50}_1$. Furthermore, in the case of \textsc{CoffeeMail}, $\mathcal{D}^{100}_1$ has more types of joint events than $\mathcal{D}^{100}_2$, thus more examples are needed in that case and the final automaton is slightly different. This shows that the set of POMDPs has an impact on automaton learning.

\begin{table}[p]
	\centering
	\resizebox{\textwidth}{!}{
		\begin{tabular}{lrrrrrrrrrrr}
			\toprule
			& \multicolumn{3}{c}{$N=100$} & & \multicolumn{3}{c}{$N=250$} & & \multicolumn{3}{c}{$N=500$} \\
			\cmidrule{2-4} \cmidrule{6-8} \cmidrule{10-12}
			& \multicolumn{1}{c}{$G$} & \multicolumn{1}{c}{$D$} & \multicolumn{1}{c}{$I$} & &  \multicolumn{1}{c}{$G$} & \multicolumn{1}{c}{$D$} & \multicolumn{1}{c}{$I$} & & \multicolumn{1}{c}{$G$} & \multicolumn{1}{c}{$D$} & \multicolumn{1}{c}{$I$} \\
			\midrule
			\textsc{Coffee}     & 2.8 (1.0)               & 2.1 (1.1)               & 1.5 (0.9)               & & 3.6 (1.8)                & 3.0 (2.5)               & 1.9 (1.4)               & & 5.7 (4.6)               & 3.6 (2.8)               & 2.2 (1.4)    \\
			\textsc{CoffeeMail} & 4.4 (1.4)               & 3.3 (2.0)               & 3.1 (1.8)               & & 5.5 (2.7)                & 4.1 (2.8)               & 3.6 (2.2)               & & 7.2 (4.3)               & 4.9 (3.8)               & 3.5 (2.3)    \\
			\textsc{VisitABCD}  &  -                      &  -                      & -                       & &  9.2 (3.0)               & 5.1 (3.0)               & 5.4 (3.0)               & & 12.4 (4.2)              & 6.2 (4.0)               & 6.3 (4.2)    \\
			\bottomrule
		\end{tabular}
	}
	\caption{Example length of the goal, dead-end and incomplete examples used to learn the last automaton in the $\mathcal{D}^{50}_1$ setting.}
	\label{tab:average_example_length_specific_setting}
\end{table}

\begin{table}[p]
	\centering
	\resizebox{\textwidth}{!}{%
		\begin{tabular}{lrrrrrrrrrrr}
			\toprule
			& \multicolumn{3}{c}{$\mathcal{D}^{10}_2$}                                                & & \multicolumn{3}{c}{$\mathcal{D}^{50}_2$}                                                & & \multicolumn{3}{c}{$\mathcal{D}^{100}_2$}                   \\
			\cmidrule{2-4} \cmidrule{6-8} \cmidrule{10-12}
			& \multicolumn{1}{c}{$N=100$} & \multicolumn{1}{c}{$N=250$} & \multicolumn{1}{c}{$N=500$} & & \multicolumn{1}{c}{$N=100$} & \multicolumn{1}{c}{$N=250$} & \multicolumn{1}{c}{$N=500$} & & \multicolumn{1}{c}{$N=100$} & \multicolumn{1}{c}{$N=250$} & \multicolumn{1}{c}{$N=500$} \\
			\midrule
			\textsc{Coffee}     & 0.4 (0.0)                   & 0.4 (0.0)                   & 0.4 (0.0)                   & & 0.4 (0.0)                   & 0.4 (0.0)                   & 0.4 (0.0)                   & & 0.4 (0.0)                   & 0.4 (0.0)                   & 0.4 (0.0)                            \\
			\textsc{CoffeeMail} & 2.8 (0.4)                   & 3.9 (1.1)                   & 8.1 (2.2)                   & & 7.8 (3.1)                   & 10.3 (1.6)                  & 24.5 (7.5)                  & & 6.8 (1.4)                   & 30.7 (14.6)                 & 22.2 (5.3)                            \\
			\textsc{VisitABCD}  & -                           & -                           & 802.1 (297.1)               & & 130.1 (46.4)                & 290.2 (123.1)               &  481.6 (115.1)              & & 53.2 (15.6)                 & 269.2 (108.0)               & 355.8 (68.2)  \\
			\bottomrule       
		\end{tabular}%
	}
	\caption{Total automaton learning time in seconds for different combinations of POMDP sets ($\mathcal{D}^{10}_2$, $\mathcal{D}^{50}_2$, $\mathcal{D}^{100}_2$) and maximum episode lengths (100, 250, 500).}
	\label{tab:automata_learning_running_times_d2}
\end{table}

\begin{table}[p]
	\centering
	\resizebox{\textwidth}{!}{%
		\begin{tabular}{lrrrrrrrrrrr}
			\toprule
			& \multicolumn{3}{c}{$\mathcal{D}^{10}_2$}                                                & & \multicolumn{3}{c}{$\mathcal{D}^{50}_2$}                                                & & \multicolumn{3}{c}{$\mathcal{D}^{100}_2$}                   \\
			\cmidrule{2-4} \cmidrule{6-8} \cmidrule{10-12}
			& \multicolumn{1}{c}{$N=100$} & \multicolumn{1}{c}{$N=250$} & \multicolumn{1}{c}{$N=500$} & & \multicolumn{1}{c}{$N=100$} & \multicolumn{1}{c}{$N=250$} & \multicolumn{1}{c}{$N=500$} & & \multicolumn{1}{c}{$N=100$} & \multicolumn{1}{c}{$N=250$} & \multicolumn{1}{c}{$N=500$} \\
			\midrule
			\textsc{Coffee}     & 7.3 (0.3)                   & 7.6 (0.4)                   & 7.5 (0.5)                   & & 8.2 (0.3)                   & 8.3 (0.3)                   & 8.6 (0.5)                   & & 8.3 (0.3)                   & 8.0 (0.3)                   & 8.2 (0.3)     \\
			\textsc{CoffeeMail} & 17.8 (1.1)                  & 17.8 (0.8)                  & 20.6 (1.4)                  & & 24.4 (1.5)                  & 26.2 (1.2)                  & 27.2 (1.4)                  & & 24.9 (1.2)                  & 29.1 (1.4)                  & 27.4 (1.3)    \\
			\textsc{VisitABCD}  & -                           & -                           & 74.8 (8.2)                  & & 57.7 (5.6)                  & 53.8 (6.2)                  & 58.6 (3.9)                  & & 42.5 (3.3)                  & 51.9 (3.3)                  & 53.3 (3.3)                            \\
			\bottomrule       
		\end{tabular}
	}
	\caption{Number of examples needed to learn the last automaton for different combinations of POMDP sets ($\mathcal{D}^{10}_2$, $\mathcal{D}^{50}_2$, $\mathcal{D}^{100}_2$) and maximum episode lengths (100, 250, 500).}
	\label{tab:automata_learning_average_num_examples_d2}
\end{table}

\begin{table}[p]
	\centering
	\resizebox{\textwidth}{!}{
		\begin{tabular}{lrrrrrrrrrrr}
			\toprule
			& \multicolumn{3}{c}{$N=100$}                                                 & & \multicolumn{3}{c}{$N=250$}                                                  & & \multicolumn{3}{c}{$N=500$} \\
			\cmidrule{2-4} \cmidrule{6-8} \cmidrule{10-12}
			& \multicolumn{1}{c}{$G$} & \multicolumn{1}{c}{$D$} & \multicolumn{1}{c}{$I$} & &  \multicolumn{1}{c}{$G$} & \multicolumn{1}{c}{$D$} & \multicolumn{1}{c}{$I$} & & \multicolumn{1}{c}{$G$} & \multicolumn{1}{c}{$D$} & \multicolumn{1}{c}{$I$} \\
			\midrule
			\textsc{Coffee}     & 2.8 (1.0)               & 2.1 (1.1)               & 1.5 (0.9)               & & 3.6 (1.8)                & 3.0 (2.5)               & 1.9 (1.4)               & & 5.7 (4.6)               & 3.6 (2.8)               & 2.2 (1.4)             \\
			\textsc{CoffeeMail} & 3.5 (1.4)               & 3.2 (1.7)               & 2.7 (1.5)               & & 4.3 (2.8)                & 4.0 (2.6)               & 3.4 (2.6)               & & 5.1 (4.2)               & 4.1 (3.1)               & 3.5 (2.7)             \\
			\textsc{VisitABCD}  & 7.1 (1.9)               & 4.6 (2.3)               & 4.6 (1.9)               & & 8.9 (3.7)                & 5.5 (3.6)               & 5.1 (2.6)               & & 11.2 (4.6)              & 7.1 (4.8)               & 5.8 (3.3)             \\
			\bottomrule
		\end{tabular}
	}
	\caption{Example length of the goal, dead-end and incomplete examples used to learn the last automaton in the $\mathcal{D}^{50}_2$ setting.}
	\label{tab:average_example_length_specific_setting_d2}
\end{table}

\begin{table}[p]
	\centering
	\begin{tabular}{lrrrr}
		\toprule
		& \multicolumn{1}{c}{All} & \multicolumn{1}{c}{$G$} & \multicolumn{1}{c}{$D$} & \multicolumn{1}{c}{$I$} \\
		\midrule
		\textsc{Coffee}     & 8.7 (0.4)               & 2.4 (0.1)               & 3.0 (0.1)               & 3.2 (0.3)  \\
		\textsc{CoffeeMail} & 26.2 (1.2)              & 5.3 (0.3)               & 8.2 (0.5)               & 12.6 (0.9)    \\
		\textsc{VisitABCD}  & 53.8 (6.2)              & 1.6 (0.1)               & 16.2 (1.4)              & 36.0 (4.9)     \\
		\bottomrule
	\end{tabular}
	\caption{Number of examples (total, goal, dead-end and incomplete) needed to learn the last automaton in the $\mathcal{D}^{50}_2$ and $N=250$ setting.}
	\label{tab:average_number_of_examples_specific_setting_d2}
\end{table}

\begin{figure}[h]
	\centering
	\subfloat{
		\resizebox{0.32\columnwidth}{!}{
			\includegraphics{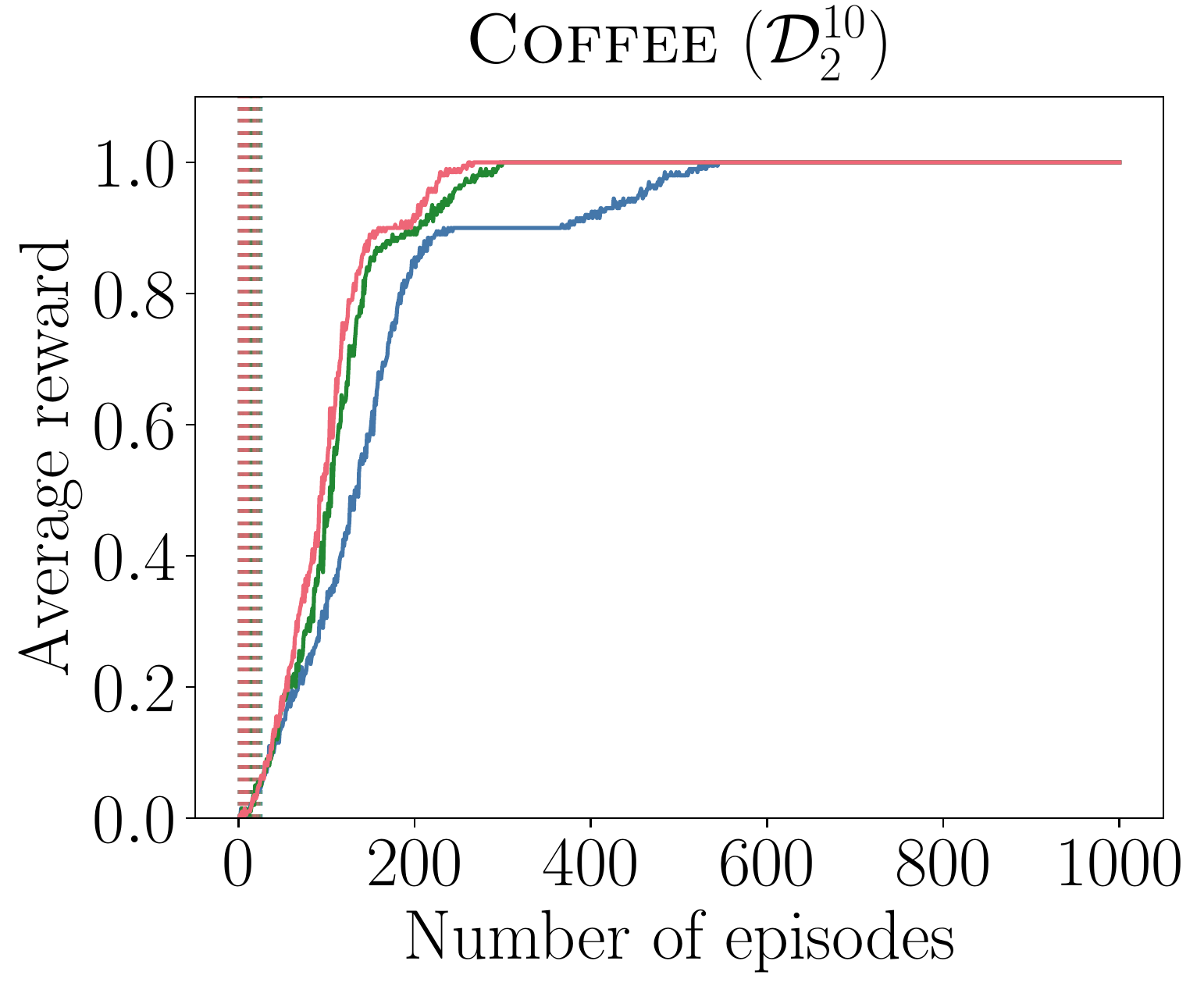}
		}
	}
	\subfloat{
		\resizebox{0.32\columnwidth}{!}{
			\includegraphics{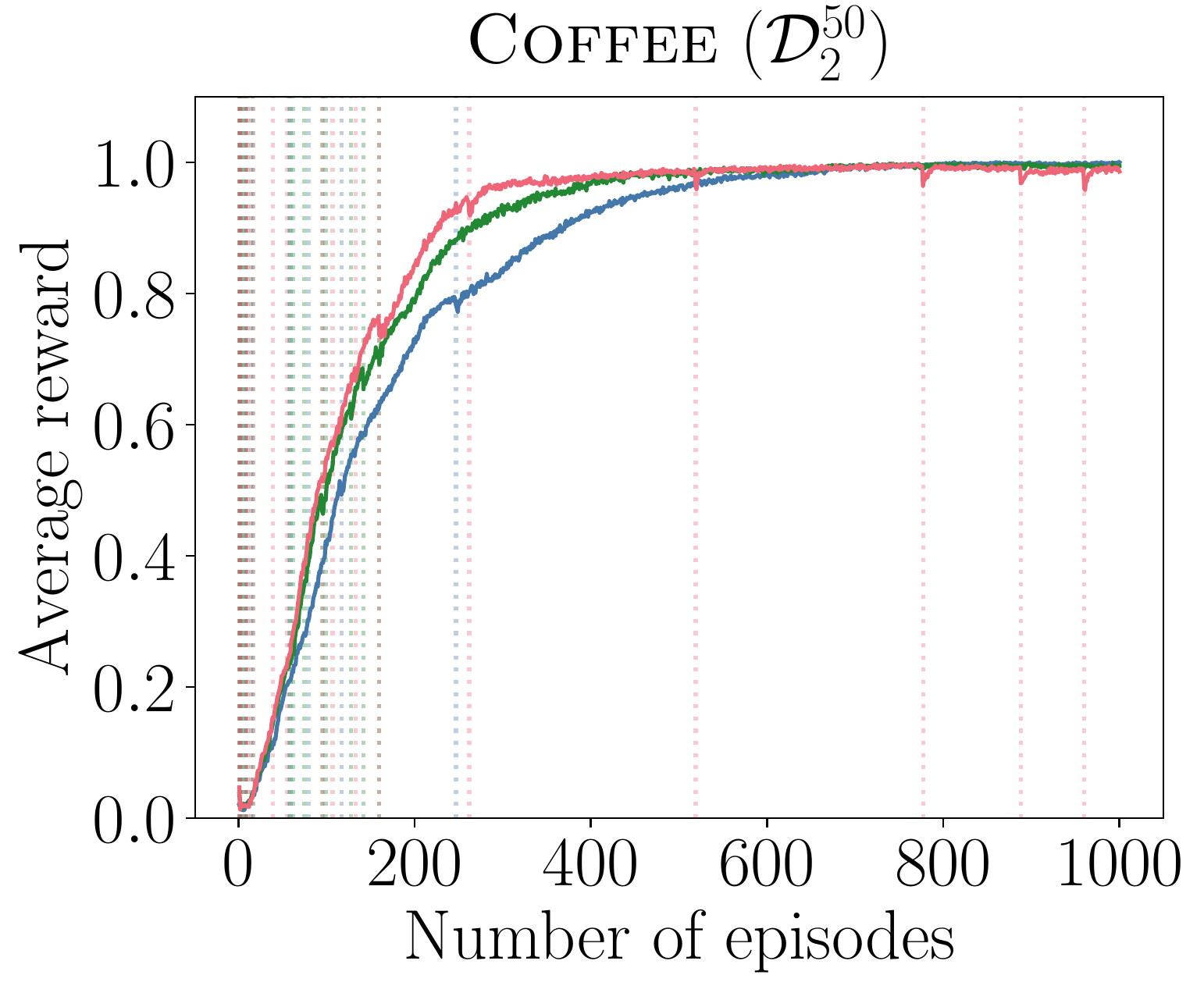}
		}
	}
	\subfloat{
		\resizebox{0.32\columnwidth}{!}{
			\includegraphics{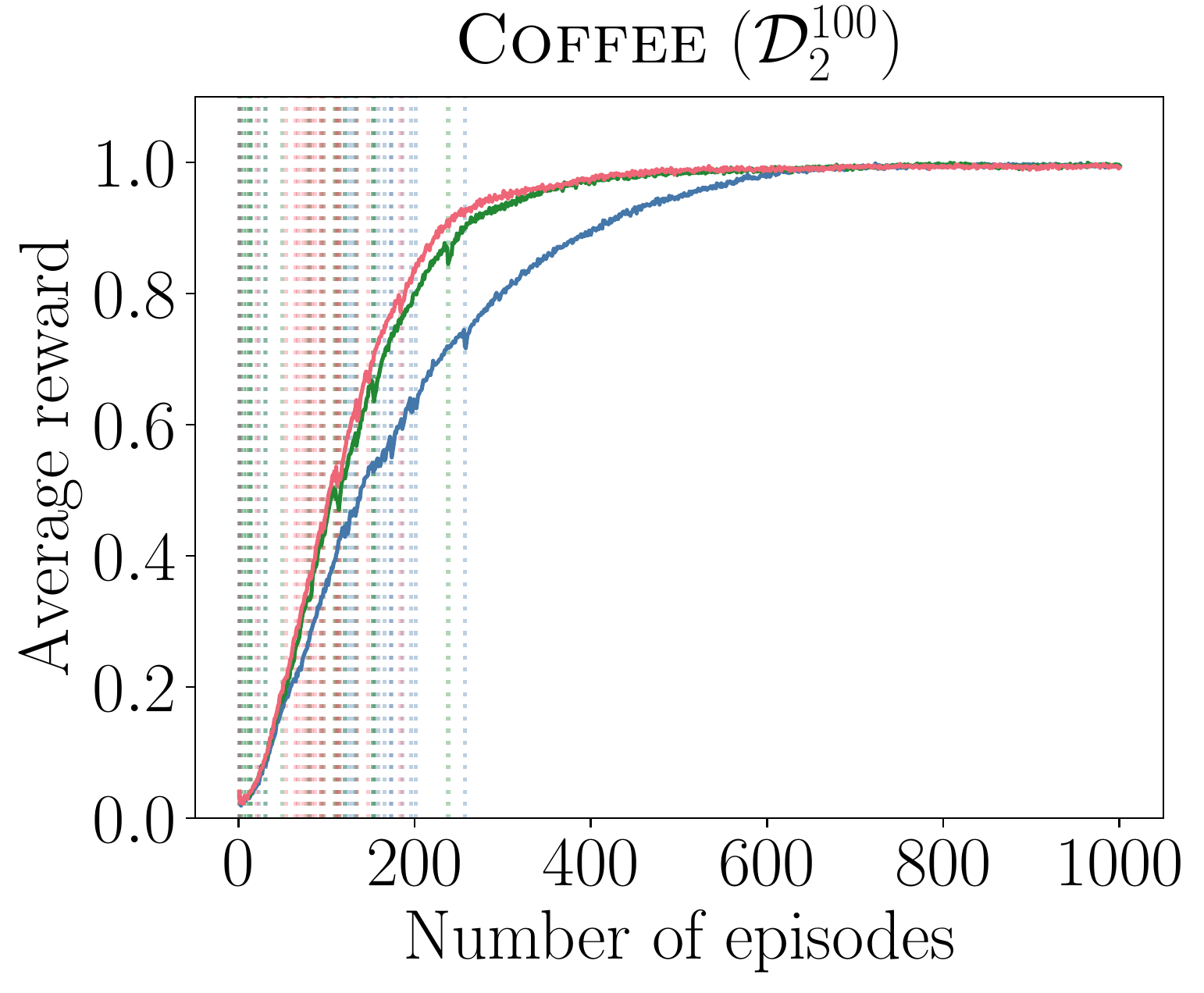}
		}
	}
	\newline
	\subfloat{
		\resizebox{0.32\columnwidth}{!}{
			\includegraphics{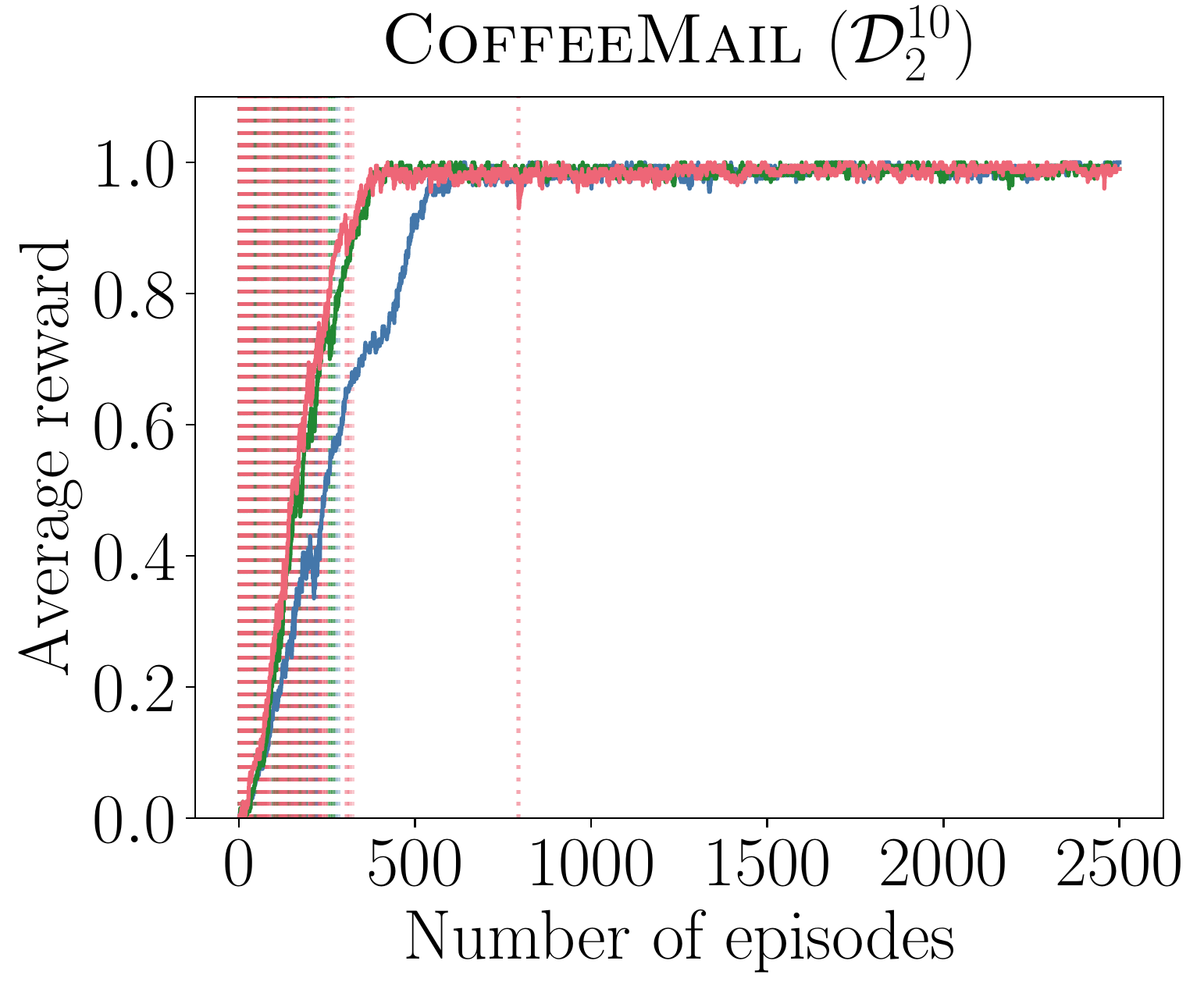}
		}
	}
	\subfloat{
		\resizebox{0.32\columnwidth}{!}{
			\includegraphics{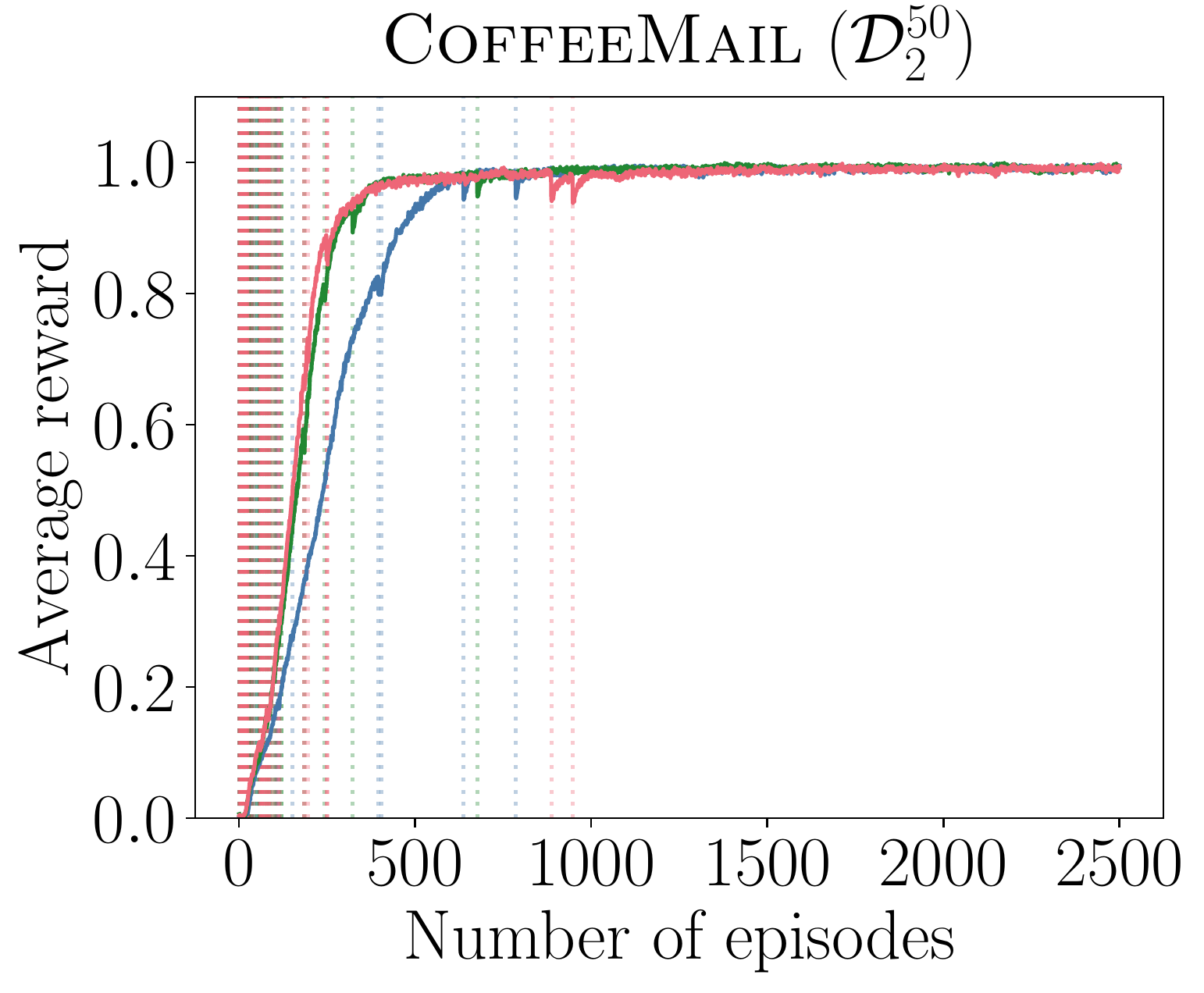}
		}
	}
	\subfloat{
		\resizebox{0.32\columnwidth}{!}{
			\includegraphics{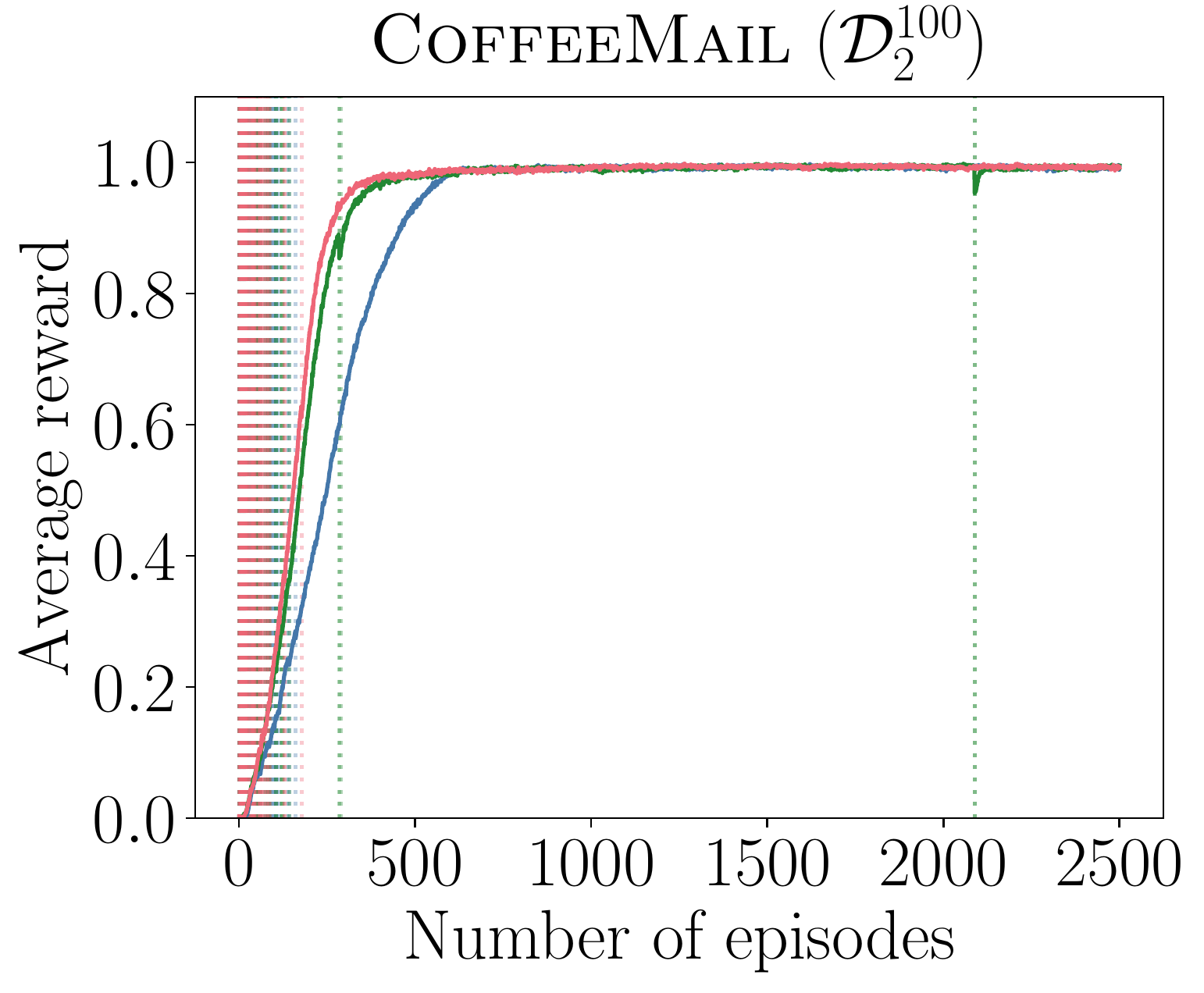}
		}
	}
	\newline
	\subfloat{
		\resizebox{0.32\columnwidth}{!}{
			\includegraphics{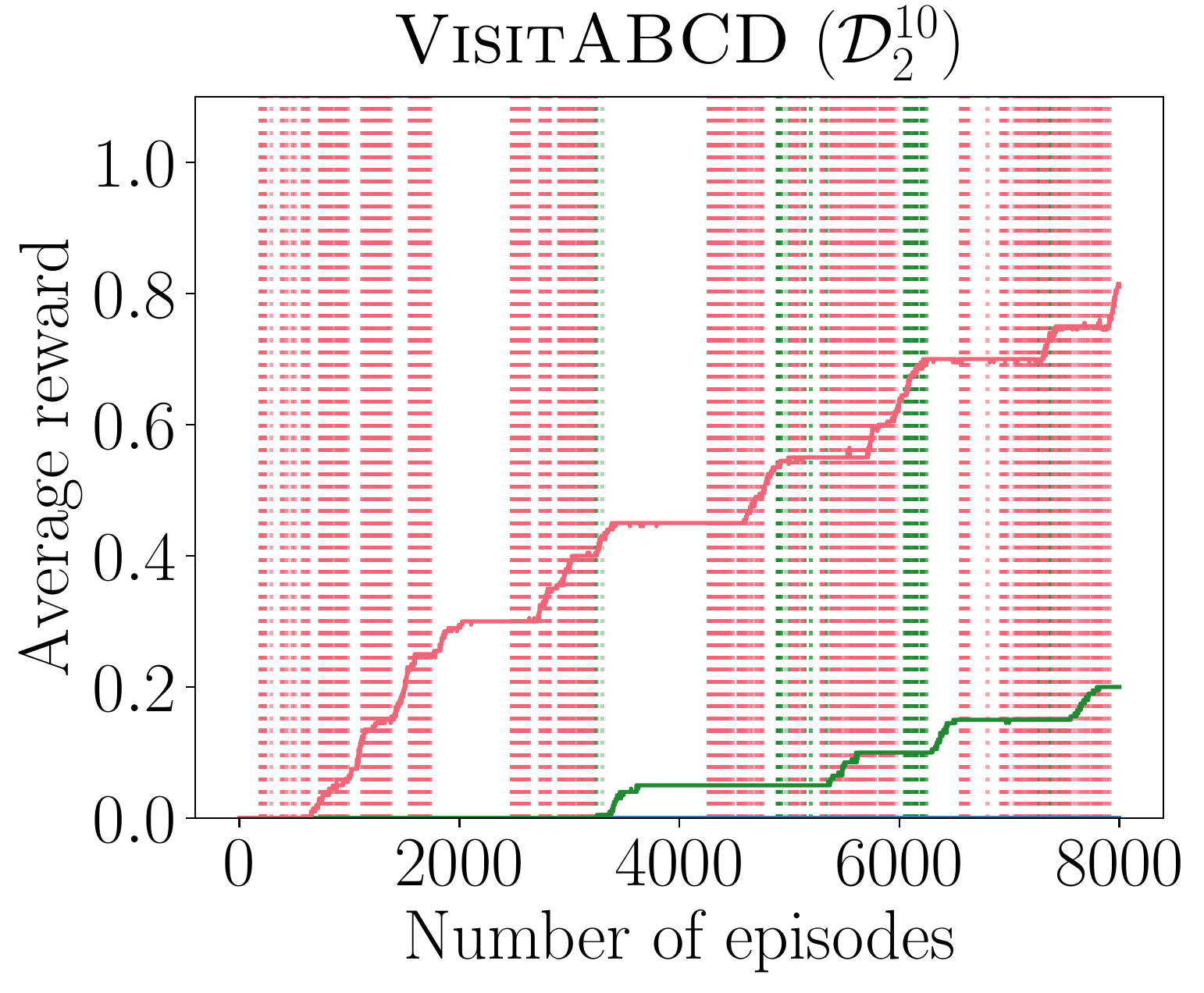}
		}
	}
	\subfloat{
		\resizebox{0.32\columnwidth}{!}{
			\includegraphics{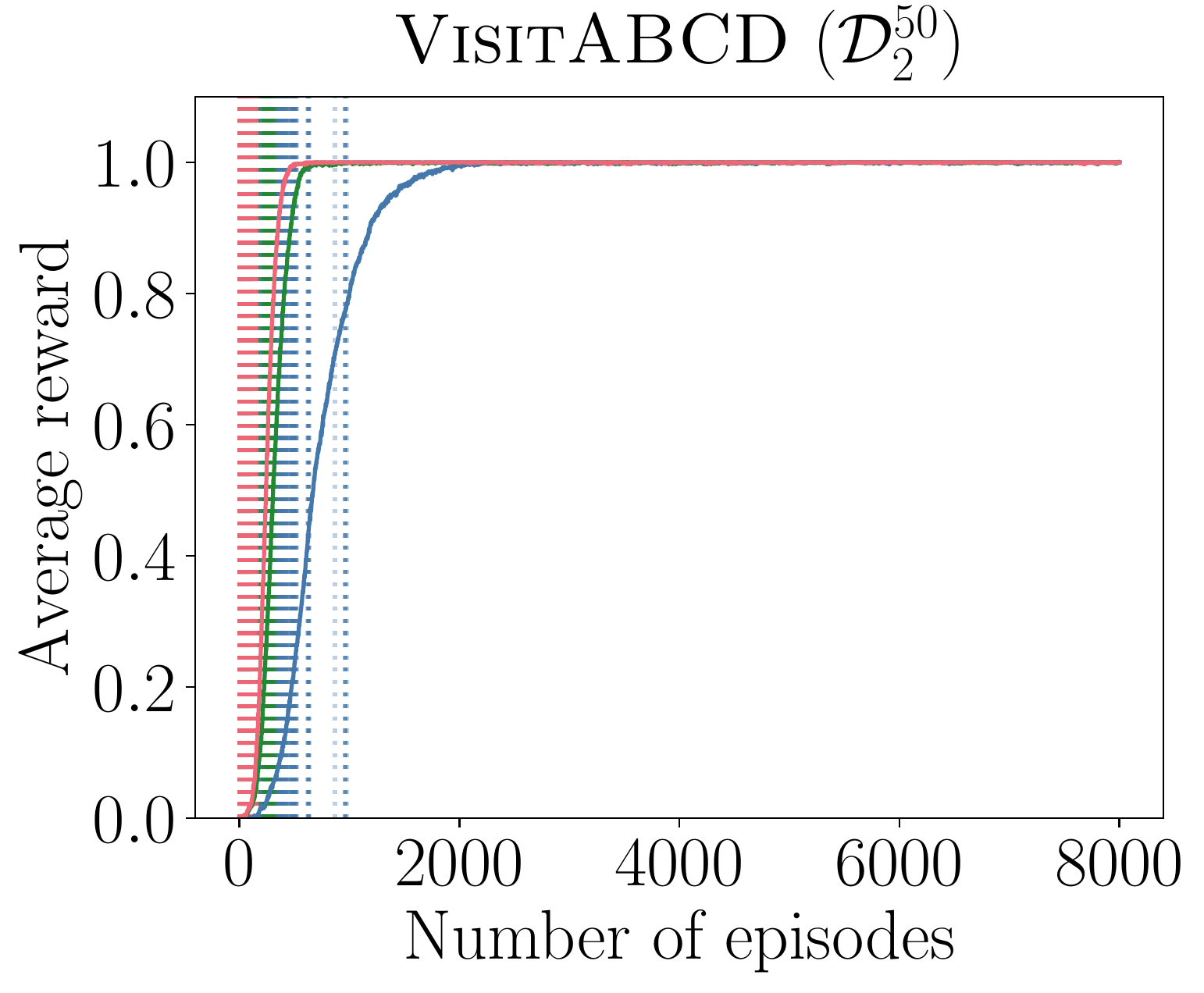}
		}
	}
	\subfloat{
		\resizebox{0.32\columnwidth}{!}{
			\includegraphics{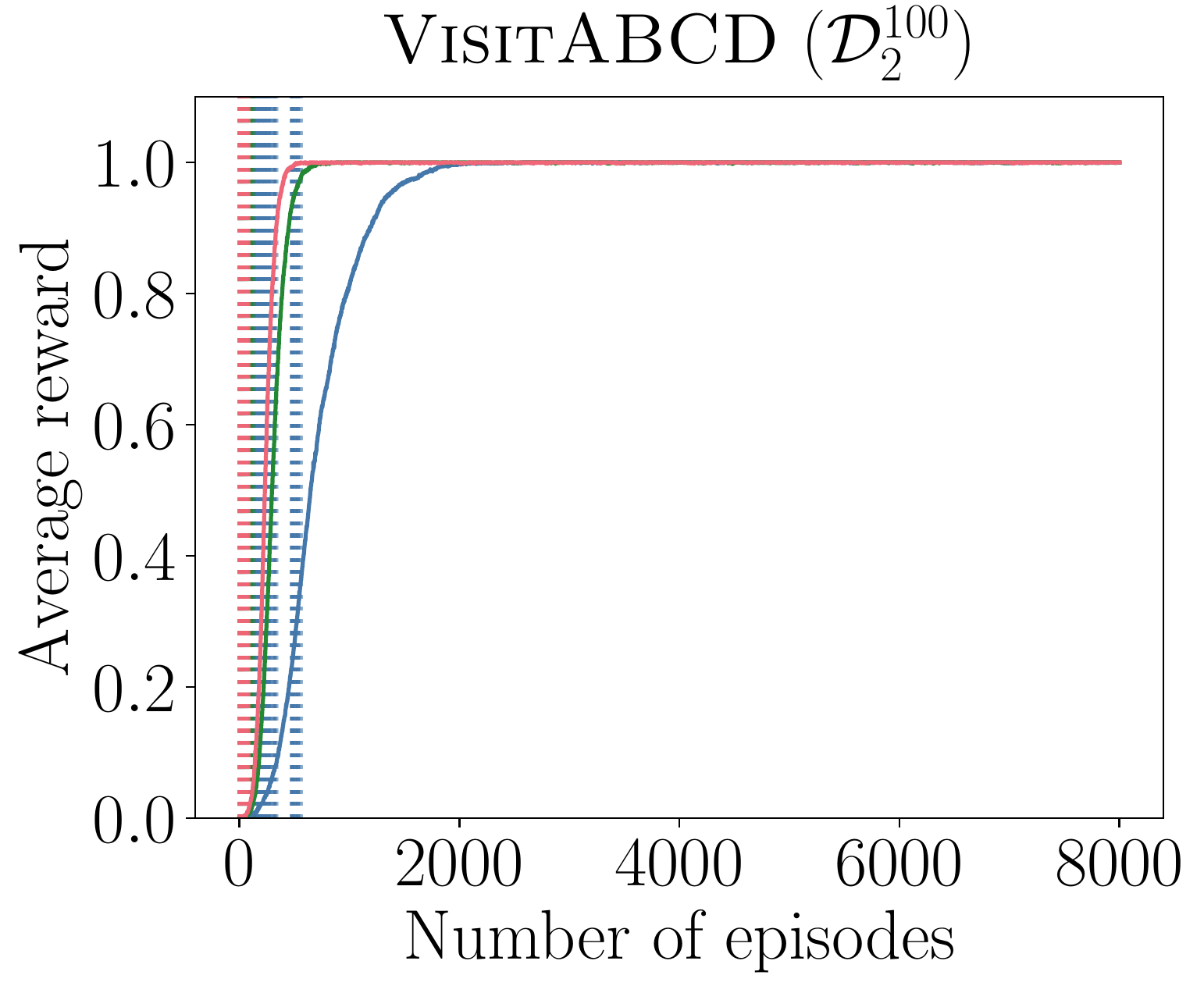}
		}
	}

	\begin{tikzpicture}
	\begin{customlegend}[legend columns=-1,legend style={column sep=1ex},legend cell align={left},legend entries={$N=100$,$N=250$,$N=500$}]
	\addlegendimage{pblue}
	\addlegendimage{pgreen}
	\addlegendimage{pred}
	\end{customlegend}
	\end{tikzpicture}
	\caption{Learning curves for different combinations of POMDP sets ($\mathcal{D}^{10}_2$, $\mathcal{D}^{50}_2$, $\mathcal{D}^{100}_2$) and maximum episode lengths (100, 250, 500).}
	\label{fig:officeworld_num_tasks_steps_dataset2}
\end{figure}

\subsubsection{Trace Compression}
Compressed traces are applicable to the \textsc{OfficeWorld} tasks since (1) empty observations are meaningless (i.e., the number of steps is unimportant), and (2) seeing an observation for more than one step in a row is equivalent to seeing it once. Table~\ref{tab:trace_compression} shows the impact of compressed observation traces in automaton learning. We omit the learning curves since the learned automata (if any) are similar for both types of traces and exhibit close performance. All runs using uncompressed traces finished on time for \textsc{Coffee} and \textsc{CoffeeMail}; in contrast, all such runs timed out for \textsc{VisitABCD}. Crucially, compressed traces allow to learn an automaton for \textsc{VisitABCD} on time. Besides, an automaton is also learned orders of magnitude faster in \textsc{CoffeeMail} using trace compression. Note that a compressed trace is considerably shorter than an uncompressed one even in the simplest task (\textsc{Coffee}).

\begin{table}
	\centering
	\resizebox{\textwidth}{!}{
		\begin{tabular}{lrrrrrrrr}
			\toprule
			& \multicolumn{2}{c}{Time (s.)}                 & & \multicolumn{2}{c}{\# Examples}               & &  \multicolumn{2}{c}{Example Length} \\
			\cmidrule{2-3} \cmidrule{5-6} \cmidrule{8-9}
			& \multicolumn{1}{c}{C} & \multicolumn{1}{c}{U} & & \multicolumn{1}{c}{C} & \multicolumn{1}{c}{U} & & \multicolumn{1}{c}{C} & \multicolumn{1}{c}{U} \\
			\midrule
			\textsc{Coffee}     & 0.4 (0.0)             & 1.5 (0.2)             & & 8.7 (0.4)             & 11.4 (0.4)            & & 2.8 (2.1)             & 58.1 (64.6) \\
			\textsc{CoffeeMail} & 18.9 (3.3)            & 9314.6 (1859.7)       & & 29.0 (1.5)            & 34.1 (1.4)            & & 4.0 (2.6)             & 78.1 (65.7) \\
			\textsc{VisitABCD}  & 163.2 (44.3)          & -                     & & 54.9 (3.8)            &  -                    & & 5.5 (3.1)             & - \\
			\bottomrule
		\end{tabular}
	}
	\caption{Automaton learning statistics when trace compression is on (C) and off (U).}
	\label{tab:trace_compression}
\end{table}

\subsubsection{Restricted Observable Set}
Table~\ref{tab:restricted_observable_set} shows how using the restricted observable set $\hat{\mathcal{O}}$ for each \textsc{OfficeWorld} task compares to using the full observable set $\mathcal{O}$. The learning curves are very similar for both cases, so we do not report them. The restricted observable set $\hat{\mathcal{O}}$ causes a sensible decrease in the automaton learning time, specially for the harder tasks. Similarly, fewer examples are needed to learn a helpful automaton and the average example length is also reduced. Intuitively, using only the observables that describe the subgoals eases the learning: the hypothesis space is smaller and no examples to discard irrelevant observables are needed.
\begin{table}
	\centering
	\resizebox{\textwidth}{!}{
		\begin{tabular}{lrrrrrrrr}
			\toprule
			& \multicolumn{2}{c}{Time (s.)}                                               & & \multicolumn{2}{c}{\# Examples}                                             & &  \multicolumn{2}{c}{Example Length} \\
			\cmidrule{2-3} \cmidrule{5-6} \cmidrule{8-9}
			& \multicolumn{1}{c}{$\mathcal{O}$} & \multicolumn{1}{c}{$\hat{\mathcal{O}}$} & & \multicolumn{1}{c}{$\mathcal{O}$} & \multicolumn{1}{c}{$\hat{\mathcal{O}}$} & & \multicolumn{1}{c}{$\mathcal{O}$} & \multicolumn{1}{c}{$\hat{\mathcal{O}}$} \\
			\midrule
			\textsc{Coffee}     & 0.4 (0.0)                         & 0.3 (0.0)                               & & 8.7 (0.4)                         & 6.4 (0.2)                               & & 2.8 (2.1)                         & 1.5 (0.6)                                \\
			\textsc{CoffeeMail} & 18.9 (3.3)                        & 1.5 (0.1)                               & & 29.0 (1.5)                        & 16.1 (0.6)                              & & 4.0 (2.6)                         & 2.7 (1.4)                                \\
			\textsc{VisitABCD}  & 163.2 (44.3)                      & 9.2 (1.6)                               & & 54.9 (3.8)                        & 30.9 (2.3)                              & & 5.5 (3.1)                         & 3.7 (1.8)                                \\
			\bottomrule
		\end{tabular}
	}
	\caption{Automaton learning statistics when the set of observables is unrestricted ($\mathcal{O}$) or restricted to a particular task ($\hat{\mathcal{O}}$).}
	\label{tab:restricted_observable_set}
\end{table}

\subsubsection{Cyclicity}
Table~\ref{tab:cyclic_vs_acylic_default} shows the effect that enforcing the automaton to be acyclic has on automaton learning. The tasks that we have considered so far do not require cycles. Therefore, we introduce two other tasks, \textsc{CoffeeDrop} and \textsc{CoffeeMailDrop}, whose minimal automata involve cycles to show that our approach can learn such structures. These tasks modify the \textsc{OfficeWorld} dynamics such that when the agent steps on a decoration ($\ast$), the coffee is dropped if the agent holds it. In such case, the agent must go back to the coffee location. Note that in these tasks there are not dead-end states and, thus, the learned automata do not have rejecting states.

The results show a considerable increase in the running time for \textsc{CoffeeMail} and \textsc{VisitABCD} when the automaton is allowed to have cycles. The fact that the hypothesis space is bigger does not only cause an increase in the running times, but also in the number of examples, which must rule out solutions with cycles. In contrast, the average example length remains approximately the same.

Regarding the performance in the additional tasks, the results in the cyclic setting for \textsc{CoffeeDrop} and \textsc{CoffeeMailDrop} are similar to the ones for \textsc{Coffee} and \textsc{CoffeeMail} respectively. Note that the average running time is lower in \textsc{CoffeeMailDrop} than in \textsc{CoffeeMail} since the former has less states than the latter (it does not have a rejecting state). Besides, the length of the examples is longer since there are no dead-end states to actively avoid. In contrast, in the acyclic setting an automaton is only found in 10/20 and 2/20 runs for \textsc{CoffeeDrop} and \textsc{CoffeeMailDrop} respectively. Note that the number of automaton states depends on the trace with the maximum number of times where the coffee has been picked and dropped. Clearly, these tasks are not well-suited to be expressed by an acyclic automaton.

\begin{table}
	\centering
	\resizebox{\textwidth}{!}{
		\begin{tabular}{lrrrrrrrr}
			\toprule
			& \multicolumn{2}{c}{Time (s.)}                                               & & \multicolumn{2}{c}{\# Examples}                                             & &  \multicolumn{2}{c}{Example Length} \\
			\cmidrule{2-3} \cmidrule{5-6} \cmidrule{8-9}
			& \multicolumn{1}{c}{Acyclic}       & \multicolumn{1}{c}{Cyclic}              & & \multicolumn{1}{c}{Acyclic}       & \multicolumn{1}{c}{Cyclic}              & & \multicolumn{1}{c}{Acyclic}       & \multicolumn{1}{c}{Cyclic} \\
			\midrule
			\textsc{Coffee}         & 0.4 (0.0)                         & 0.5 (0.0)                               & & 8.7 (0.4)                         & 9.6 (0.4)                               & & 2.8 (2.1)                         & 2.7 (2.2)                   \\
			\textsc{CoffeeMail}     & 18.9 (3.3)                        & 774.7 (434.4)                           & & 29.0 (1.5)                        & 33.8 (1.7)                              & & 4.0 (2.6)                         & 4.0 (2.6)                   \\
			\textsc{VisitABCD}      & 163.2 (44.3)                      & 1961.7 (1123.8)                         & & 54.9 (3.8)                        & 81.0 (6.6)                              & & 5.5 (3.1)                         & 5.5 (3.3)                   \\
			\midrule
			\textsc{CoffeeDrop}     & 13.9 (8.6)*                       & 0.6 (0.0)                               & & 15.0 (3.1)*                       & 9.9 (0.5)                               & & 5.4 (3.5)*                        & 5.3 (3.6)                            \\
			\textsc{CoffeeMailDrop} & -                                 & 312.2 (145.9)                           & & -                                 & 37.8 (1.7)                              & & -                                 & 7.0 (5.3)                            \\
			\bottomrule
		\end{tabular}
	}
	\caption{Comparison of different automaton learning statistics for the cases where automata must be acyclic and where automata can have cycles.}
	\label{tab:cyclic_vs_acylic_default}
\end{table}	

\subsubsection{Maximum Number of Edges between States}
Table~\ref{tab:number_of_disjunctions} shows the effect that increasing the maximum number of edges between two states from $\kappa=1$ to $\kappa=2$ has on automaton learning. Since the \textsc{OfficeWorld} tasks we consider have at most one edge from a state to another, we add a new task that can only be learned with $\kappa>1$ to show {\methodname} can learn such automata. The \textsc{CoffeeOrMail} task consists in going to the coffee or mail location (it does not matter which) and then go to the office while avoiding the decorations.

The results show that the automaton learner does not scale well to higher values of $\kappa$. The case of \textsc{VisitABCD} is the most notable one: only one run has not timed out with $\kappa=2$. The number of successful runs has also decreased in \textsc{CoffeeMail} from 20 to 15; besides, the running time is orders of magnitude higher. While the increase on the hypothesis space size has caused the running time to vastly increase, the average number of examples and the average example lengths remain similar with respect to $\kappa=1$.

Regarding the \textsc{CoffeeOrMail} task, note that it is not much harder than the \textsc{Coffee} task: a minimal automaton consists of 4 and 5 states for $\kappa=2$ and $\kappa=1$, respectively. Remember that a minimal automaton for the \textsc{Coffee} task consists of 4 states (regardless of $\kappa$). Despite of the increase on the number of states from $\kappa=1$ to $\kappa=2$, the running time and the other statistics remain almost the same. However, while $\kappa=1$ is effective in this case (although it cannot return a minimal automaton), it would not be enough to learn an automaton for any task requiring an edge with a disjunction to the accepting state (e.g., ``observe {\Coffeecup} or {\Letter}'').

\begin{table}
	\centering
	\resizebox{\textwidth}{!}{
		\begin{tabular}{lrrrrrrrr}
			\toprule
			& \multicolumn{2}{c}{Time (s.)}                                    & & \multicolumn{2}{c}{\# Examples}                                 & &  \multicolumn{2}{c}{Example Length} \\
			\cmidrule{2-3} \cmidrule{5-6} \cmidrule{8-9}
			& \multicolumn{1}{c}{$\kappa=1$} & \multicolumn{1}{c}{$\kappa=2$}  & & \multicolumn{1}{c}{$\kappa=1$} & \multicolumn{1}{c}{$\kappa=2$} & & \multicolumn{1}{c}{$\kappa=1$} & \multicolumn{1}{c}{$\kappa=2$} \\
			\midrule
			\textsc{Coffee}       & 0.4 (0.0)                      & 1.0 (0.1)                       & & 8.7 (0.4)                      & 11.7 (0.6)                     & & 2.8 (2.1)                      & 3.1 (2.4) \\
			\textsc{CoffeeMail}   & 18.9 (3.3)                     & 2252.7 (623.2)*                 & & 29.0 (1.5)                     & 32.5 (1.6)*                    & & 4.0 (2.6)                      & 4.0 (2.5)*\\
			\textsc{VisitABCD}    & 163.2 (44.3)                   & -                               & & 54.9 (3.8)                     & -                              & & 5.5 (3.1)                      & - \\
			\midrule
			\textsc{CoffeeOrMail} & 0.9 (0.1)                      & 1.0 (0.1)                       & & 11.7 (0.6)                     & 11.2 (0.4)                     & & 2.7 (1.8)                      &  2.4 (2.1) \\
			\bottomrule
		\end{tabular}
	}
	\caption{Automaton learning statistics for different maximum number of edges from one state to another ($\kappa$).}
	\label{tab:number_of_disjunctions}
\end{table}

\subsubsection{Symmetry Breaking}	
Table~\ref{tab:symmetry_breaking_comparison} shows the effect that symmetry breaking constraints have on the time required to learn an automaton. The average number of examples and example length barely change when symmetry breaking is used, so we do not report the results. The symmetry breaking constraints speed up automaton learning by an order of magnitude in \textsc{CoffeeMail} (acyclic, cyclic) and \textsc{VisitABCD} (acyclic).  Furthermore, while two runs have timed out for the case where automata can contain cycles and no symmetry breaking is used (one for \textsc{CoffeeMail} and one for \textsc{VisitABCD}), no run has timed out when the symmetry breaking is used.

\begin{table}
	\centering
	\begin{tabular}{lrrrrr}
		\toprule
		& \multicolumn{2}{c}{Acyclic}                        & & \multicolumn{2}{c}{Cyclic} \\
		\cmidrule{2-3} \cmidrule{5-6}
		& \multicolumn{1}{c}{No SB} & \multicolumn{1}{c}{SB} & & \multicolumn{1}{c}{No SB} & \multicolumn{1}{c}{SB} \\
		\midrule
		\textsc{Coffee}     & 0.5 (0.0)                 & 0.4 (0.0)              & & 0.5 (0.0)                 & 0.5 (0.0)              \\
		\textsc{CoffeeMail} & 277.4 (70.2)              & 18.9 (3.3)             & & 4204.3 (1334.4)*          & 774.7 (434.4)          \\
		\textsc{VisitABCD}  & 1070.0 (725.6)            & 163.2 (44.3)           & & 3293.5 (1199.2)*          & 1961.7 (1123.8)        \\
		\bottomrule
	\end{tabular}
	\caption{Total automaton learning time when symmetry breaking is disabled (No SB) and enabled (SB).}
	\label{tab:symmetry_breaking_comparison}
\end{table}	

\vskip 0.2in
\bibliography{journal_paper}

\begin{thebibliography}{}

\bibitem[\protect\BCAY{Andreas, Klein,\ \BBA\ Levine}{Andreas
  et~al.}{2017}]{AndreasKL17}
Andreas, J., Klein, D., \BBA\ Levine, S. \BBOP2017\BBCP.
\newblock \BBOQ {Modular Multitask Reinforcement Learning with Policy
  Sketches}\BBCQ\
\newblock In {\Bem Proceedings of the International Conference on Machine
  Learning {(ICML)}}, \BPGS\ 166--175.

\bibitem[\protect\BCAY{Angluin}{Angluin}{1980}]{Angluin80}
Angluin, D. \BBOP1980\BBCP.
\newblock \BBOQ {Inductive Inference of Formal Languages from Positive
  Data}\BBCQ\
\newblock {\Bem Inf. Control.}, {\Bem 45\/}(2), 117--135.

\bibitem[\protect\BCAY{Angluin}{Angluin}{1987}]{Angluin87}
Angluin, D. \BBOP1987\BBCP.
\newblock \BBOQ {Learning Regular Sets from Queries and Counterexamples}\BBCQ\
\newblock {\Bem Inf. Comput.}, {\Bem 75\/}(2), 87--106.

\bibitem[\protect\BCAY{Barto\ \BBA\ Mahadevan}{Barto\ \BBA\
  Mahadevan}{2003}]{BartoM03a}
Barto, A.~G.\BBACOMMA\  \BBA\ Mahadevan, S. \BBOP2003\BBCP.
\newblock \BBOQ {Recent Advances in Hierarchical Reinforcement Learning}\BBCQ\
\newblock {\Bem Discrete Event Dynamic Systems}, {\Bem 13\/}(4), 341--379.

\bibitem[\protect\BCAY{Bonet, Palacios,\ \BBA\ Geffner}{Bonet
  et~al.}{2009}]{BonetPG09}
Bonet, B., Palacios, H., \BBA\ Geffner, H. \BBOP2009\BBCP.
\newblock \BBOQ {Automatic Derivation of Memoryless Policies and Finite-State
  Controllers Using Classical Planners}\BBCQ\
\newblock In {\Bem Proceedings of the International Conference on Automated
  Planning and Scheduling ({ICAPS})}.

\bibitem[\protect\BCAY{Bradtke\ \BBA\ Duff}{Bradtke\ \BBA\
  Duff}{1994}]{BradtkeD94}
Bradtke, S.~J.\BBACOMMA\  \BBA\ Duff, M.~O. \BBOP1994\BBCP.
\newblock \BBOQ {Reinforcement Learning Methods for Continuous-Time Markov
  Decision Problems}\BBCQ\
\newblock In {\Bem Proceedings of the Advances in Neural Information Processing
  Systems ({NeurIPS}) Conference}, \BPGS\ 393--400.

\bibitem[\protect\BCAY{Brafman\ \BBA\ Tennenholtz}{Brafman\ \BBA\
  Tennenholtz}{2002}]{BrafmanT02}
Brafman, R.~I.\BBACOMMA\  \BBA\ Tennenholtz, M. \BBOP2002\BBCP.
\newblock \BBOQ {{R-MAX} - {A} General Polynomial Time Algorithm for
  Near-Optimal Reinforcement Learning}\BBCQ\
\newblock {\Bem J. Mach. Learn. Res.}, {\Bem 3}, 213--231.

\bibitem[\protect\BCAY{Brooks}{Brooks}{1989}]{Brooks89}
Brooks, R.~A. \BBOP1989\BBCP.
\newblock \BBOQ {A Robot that Walks; Emergent Behaviors from a Carefully
  Evolved Network}\BBCQ\
\newblock {\Bem Neural Computation}, {\Bem 1\/}(2), 253--262.

\bibitem[\protect\BCAY{Buckland}{Buckland}{2004}]{Buckland04}
Buckland, M. \BBOP2004\BBCP.
\newblock {\Bem {AI Game Programming by Example}}.
\newblock Wordware Publishing Inc.

\bibitem[\protect\BCAY{Calimeri, Faber, Gebser, Ianni, Kaminski, Krennwallner,
  Leone, Maratea, Ricca,\ \BBA\ Schaub}{Calimeri
  et~al.}{2020}]{CalimeriFGIKKLM20}
Calimeri, F., Faber, W., Gebser, M., Ianni, G., Kaminski, R., Krennwallner, T.,
  Leone, N., Maratea, M., Ricca, F., \BBA\ Schaub, T. \BBOP2020\BBCP.
\newblock \BBOQ {ASP-Core-2 Input Language Format}\BBCQ\
\newblock {\Bem Theory Pract. Log. Program.}, {\Bem 20\/}(2), 294--309.

\bibitem[\protect\BCAY{Camacho, Chen, Sanner,\ \BBA\ McIlraith}{Camacho
  et~al.}{2017}]{CamachoCSM17}
Camacho, A., Chen, O., Sanner, S., \BBA\ McIlraith, S.~A. \BBOP2017\BBCP.
\newblock \BBOQ {Non-Markovian Rewards Expressed in {LTL:} Guiding Search Via
  Reward Shaping}\BBCQ\
\newblock In {\Bem Proceedings of the International Symposium on Combinatorial
  Search ({SOCS})}, \BPGS\ 159--160.

\bibitem[\protect\BCAY{Camacho, Icarte, Klassen, Valenzano,\ \BBA\
  McIlraith}{Camacho et~al.}{2019}]{CamachoIKVM19}
Camacho, A., Icarte, R.~T., Klassen, T.~Q., Valenzano, R.~A., \BBA\ McIlraith,
  S.~A. \BBOP2019\BBCP.
\newblock \BBOQ {{LTL} and Beyond: Formal Languages for Reward Function
  Specification in Reinforcement Learning}\BBCQ\
\newblock In {\Bem Proceedings of the International Joint Conference on
  Artificial Intelligence ({IJCAI})}, \BPGS\ 6065--6073.

\bibitem[\protect\BCAY{Codish, Miller, Prosser,\ \BBA\ Stuckey}{Codish
  et~al.}{2019}]{CodishMPS19}
Codish, M., Miller, A., Prosser, P., \BBA\ Stuckey, P.~J. \BBOP2019\BBCP.
\newblock \BBOQ {Constraints for symmetry breaking in graph
  representation}\BBCQ\
\newblock {\Bem Constraints}, {\Bem 24\/}(1), 1--24.

\bibitem[\protect\BCAY{de~la Higuera}{de~la Higuera}{2010}]{DeLaHiguera10}
de~la Higuera, C. \BBOP2010\BBCP.
\newblock {\Bem {Grammatical Inference: Learning Automata and Grammars}}.
\newblock Cambridge University Press.

\bibitem[\protect\BCAY{Dietterich}{Dietterich}{2000}]{Dietterich00}
Dietterich, T.~G. \BBOP2000\BBCP.
\newblock \BBOQ {Hierarchical Reinforcement Learning with the {MAXQ} Value
  Function Decomposition}\BBCQ\
\newblock {\Bem J. Artif. Intell. Res.}, {\Bem 13}, 227--303.

\bibitem[\protect\BCAY{Drescher, Tifrea,\ \BBA\ Walsh}{Drescher
  et~al.}{2011}]{DrescherTW11}
Drescher, C., Tifrea, O., \BBA\ Walsh, T. \BBOP2011\BBCP.
\newblock \BBOQ {Symmetry-breaking Answer Set Solving}\BBCQ\
\newblock {\Bem {AI} Commun.}, {\Bem 24\/}(2), 177--194.

\bibitem[\protect\BCAY{Furelos{-}Blanco, Law, Russo, Broda,\ \BBA\
  Jonsson}{Furelos{-}Blanco et~al.}{2020}]{furelosblanco2020aaai}
Furelos{-}Blanco, D., Law, M., Russo, A., Broda, K., \BBA\ Jonsson, A.
  \BBOP2020\BBCP.
\newblock \BBOQ {Induction of Subgoal Automata for Reinforcement
  Learning}\BBCQ\
\newblock In {\Bem Proceedings of the {AAAI} Conference on Artificial
  Intelligence ({AAAI})}, \BPGS\ 3890--3897.

\bibitem[\protect\BCAY{Gaon\ \BBA\ Brafman}{Gaon\ \BBA\
  Brafman}{2020}]{GaonB20}
Gaon, M.\BBACOMMA\  \BBA\ Brafman, R.~I. \BBOP2020\BBCP.
\newblock \BBOQ {Reinforcement Learning with Non-Markovian Rewards}\BBCQ\
\newblock In {\Bem Proceedings of the {AAAI} Conference on Artificial
  Intelligence ({AAAI})}, \BPGS\ 3980--3987.

\bibitem[\protect\BCAY{Gelfond\ \BBA\ Kahl}{Gelfond\ \BBA\
  Kahl}{2014}]{GelfondK14}
Gelfond, M.\BBACOMMA\  \BBA\ Kahl, Y. \BBOP2014\BBCP.
\newblock {\Bem {Knowledge Representation, Reasoning, and the Design of
  Intelligent Agents: The Answer-Set Programming Approach}}.
\newblock Cambridge University Press.

\bibitem[\protect\BCAY{Gelfond\ \BBA\ Lifschitz}{Gelfond\ \BBA\
  Lifschitz}{1988}]{GelfondL88}
Gelfond, M.\BBACOMMA\  \BBA\ Lifschitz, V. \BBOP1988\BBCP.
\newblock \BBOQ {The Stable Model Semantics for Logic Programming}\BBCQ\
\newblock In {\Bem Proceedings of the International Conference and Symposium on
  Logic Programming}, \BPGS\ 1070--1080.

\bibitem[\protect\BCAY{Gold}{Gold}{1978}]{Gold78}
Gold, E.~M. \BBOP1978\BBCP.
\newblock \BBOQ {Complexity of Automaton Identification from Given Data}\BBCQ\
\newblock {\Bem Information and Control}, {\Bem 37\/}(3), 302--320.

\bibitem[\protect\BCAY{Heule\ \BBA\ Verwer}{Heule\ \BBA\
  Verwer}{2010}]{HeuleV10}
Heule, M.\BBACOMMA\  \BBA\ Verwer, S. \BBOP2010\BBCP.
\newblock \BBOQ {Exact {DFA} Identification Using {SAT} Solvers}\BBCQ\
\newblock In {\Bem Proceedings of the International Colloquium on Grammatical
  Inference: Algorithms and Applications ({ICGI})}, \BPGS\ 66--79.

\bibitem[\protect\BCAY{Ho, Abel, Griffiths,\ \BBA\ Littman}{Ho
  et~al.}{2019}]{HoAGL19}
Ho, M.~K., Abel, D., Griffiths, T.~L., \BBA\ Littman, M.~L. \BBOP2019\BBCP.
\newblock \BBOQ The value of abstraction\BBCQ\
\newblock {\Bem Current Opinion in Behavioral Sciences}, {\Bem 29}, 111 -- 116.

\bibitem[\protect\BCAY{Hu\ \BBA\ {De Giacomo}}{Hu\ \BBA\ {De
  Giacomo}}{2011}]{HuG11}
Hu, Y.\BBACOMMA\  \BBA\ {De Giacomo}, G. \BBOP2011\BBCP.
\newblock \BBOQ {Generalized Planning: Synthesizing Plans that Work for
  Multiple Environments}\BBCQ\
\newblock In {\Bem Proceedings of the International Joint Conference on
  Artificial Intelligence ({IJCAI})}, \BPGS\ 918--923.

\bibitem[\protect\BCAY{Kaelbling}{Kaelbling}{1993}]{Kaelbling93b}
Kaelbling, L.~P. \BBOP1993\BBCP.
\newblock \BBOQ {Hierarchical Learning in Stochastic Domains: Preliminary
  Results}\BBCQ\
\newblock In {\Bem Proceedings of the International Conference on Machine
  Learning (ICML)}, \BPGS\ 167--173.

\bibitem[\protect\BCAY{Kaelbling, Littman,\ \BBA\ Cassandra}{Kaelbling
  et~al.}{1998}]{KaelblingLC98}
Kaelbling, L.~P., Littman, M.~L., \BBA\ Cassandra, A.~R. \BBOP1998\BBCP.
\newblock \BBOQ {Planning and Acting in Partially Observable Stochastic
  Domains}\BBCQ\
\newblock {\Bem Artif. Intell.}, {\Bem 101\/}(1-2), 99--134.

\bibitem[\protect\BCAY{Kingma\ \BBA\ Ba}{Kingma\ \BBA\ Ba}{2015}]{KingmaB15}
Kingma, D.~P.\BBACOMMA\  \BBA\ Ba, J. \BBOP2015\BBCP.
\newblock \BBOQ {Adam: {A} Method for Stochastic Optimization}\BBCQ\
\newblock In {\Bem Proceedings of the International Conference on Learning
  Representations ({ICLR})}.

\bibitem[\protect\BCAY{Konidaris}{Konidaris}{2019}]{Konidaris19}
Konidaris, G. \BBOP2019\BBCP.
\newblock \BBOQ On the necessity of abstraction\BBCQ\
\newblock {\Bem Current Opinion in Behavioral Sciences}, {\Bem 29}, 1 -- 7.

\bibitem[\protect\BCAY{Koul, Fern,\ \BBA\ Greydanus}{Koul
  et~al.}{2019}]{KoulFG19}
Koul, A., Fern, A., \BBA\ Greydanus, S. \BBOP2019\BBCP.
\newblock \BBOQ {Learning Finite State Representations of Recurrent Policy
  Networks}\BBCQ\
\newblock In {\Bem Proceedings of the International Conference on Learning
  Representations ({ICLR})}.

\bibitem[\protect\BCAY{Kulkarni, Gupta, Ionescu, Borgeaud, Reynolds,
  Zisserman,\ \BBA\ Mnih}{Kulkarni et~al.}{2019}]{KulkarniGIBRZM19}
Kulkarni, T.~D., Gupta, A., Ionescu, C., Borgeaud, S., Reynolds, M., Zisserman,
  A., \BBA\ Mnih, V. \BBOP2019\BBCP.
\newblock \BBOQ {Unsupervised Learning of Object Keypoints for Perception and
  Control}\BBCQ\
\newblock In {\Bem Proceedings of the Advances in Neural Information Processing
  Systems ({NeurIPS}) Conference}, \BPGS\ 10723--10733.

\bibitem[\protect\BCAY{Lambeau, Damas,\ \BBA\ Dupont}{Lambeau
  et~al.}{2008}]{LambeauDD08}
Lambeau, B., Damas, C., \BBA\ Dupont, P. \BBOP2008\BBCP.
\newblock \BBOQ {State-Merging {DFA} Induction Algorithms with Mandatory Merge
  Constraints}\BBCQ\
\newblock In {\Bem Proceedings of the International Colloquium on Grammatical
  Inference: Algorithms and Applications ({ICGI})}, \BPGS\ 139--153.

\bibitem[\protect\BCAY{Lang, Pearlmutter,\ \BBA\ Price}{Lang
  et~al.}{1998}]{LangPP98}
Lang, K.~J., Pearlmutter, B.~A., \BBA\ Price, R.~A. \BBOP1998\BBCP.
\newblock \BBOQ {Results of the Abbadingo One {DFA} Learning Competition and a
  New Evidence-Driven State Merging Algorithm}\BBCQ\
\newblock In {\Bem Proceedings of the International Colloquium on Grammatical
  Inference: Algorithms and Applications ({ICGI})}, \BPGS\ 1--12.

\bibitem[\protect\BCAY{Lange\ \BBA\ Faisal}{Lange\ \BBA\
  Faisal}{2019}]{LangeF19}
Lange, R.~T.\BBACOMMA\  \BBA\ Faisal, A. \BBOP2019\BBCP.
\newblock \BBOQ {Semantic RL with Action Grammars: Data-Efficient Learning of
  Hierarchical Task Abstractions}\BBCQ\
\newblock {\Bem CoRR}, {\Bem abs/1907.12477}.

\bibitem[\protect\BCAY{Law, Russo,\ \BBA\ Broda}{Law et~al.}{2015a}]{LawRB15}
Law, M., Russo, A., \BBA\ Broda, K. \BBOP2015a\BBCP.
\newblock \BBOQ {Simplified Reduct for Choice Rules in ASP}\BBCQ\
\newblock \BTR, DTR2015-2, Imperial College of Science, Technology and
  Medicine, Department of Computing.

\bibitem[\protect\BCAY{Law, Russo,\ \BBA\ Broda}{Law
  et~al.}{2015b}]{ILASP_system}
Law, M., Russo, A., \BBA\ Broda, K. \BBOP2015b\BBCP.
\newblock \BBOQ {T}he {ILASP} {S}ystem for {L}earning {A}nswer {S}et
  {P}rograms\BBCQ.

\bibitem[\protect\BCAY{Law, Russo,\ \BBA\ Broda}{Law et~al.}{2016}]{LawRB16}
Law, M., Russo, A., \BBA\ Broda, K. \BBOP2016\BBCP.
\newblock \BBOQ {Iterative Learning of Answer Set Programs from Context
  Dependent Examples}\BBCQ\
\newblock {\Bem Theory Pract. Log. Program.}, {\Bem 16\/}(5-6), 834--848.

\bibitem[\protect\BCAY{Law, Russo,\ \BBA\ Broda}{Law et~al.}{2018}]{LawRB18}
Law, M., Russo, A., \BBA\ Broda, K. \BBOP2018\BBCP.
\newblock \BBOQ {The Meta-program Injection Feature in ILASP}\BBCQ\
\newblock \BTR.

\bibitem[\protect\BCAY{Leonetti, Iocchi,\ \BBA\ Patrizi}{Leonetti
  et~al.}{2012}]{LeonettiIP12}
Leonetti, M., Iocchi, L., \BBA\ Patrizi, F. \BBOP2012\BBCP.
\newblock \BBOQ {Automatic Generation and Learning of Finite-State
  Controllers}\BBCQ\
\newblock In {\Bem Proceedings of the International Conference on Artificial
  Intelligence: Methodology, Systems, Applications ({AIMSA})}, \BPGS\ 135--144.

\bibitem[\protect\BCAY{Machado, Bellemare,\ \BBA\ Bowling}{Machado
  et~al.}{2017}]{MachadoBB17}
Machado, M.~C., Bellemare, M.~G., \BBA\ Bowling, M.~H. \BBOP2017\BBCP.
\newblock \BBOQ {A Laplacian Framework for Option Discovery in Reinforcement
  Learning}\BBCQ\
\newblock In {\Bem Proceedings of the International Conference on Machine
  Learning ({ICML})}, \BPGS\ 2295--2304.

\bibitem[\protect\BCAY{McGovern\ \BBA\ Barto}{McGovern\ \BBA\
  Barto}{2001}]{McGovernB01}
McGovern, A.\BBACOMMA\  \BBA\ Barto, A.~G. \BBOP2001\BBCP.
\newblock \BBOQ {Automatic Discovery of Subgoals in Reinforcement Learning
  using Diverse Density}\BBCQ\
\newblock In {\Bem Proceedings of the International Conference on Machine
  Learning {(ICML)}}, \BPGS\ 361--368.

\bibitem[\protect\BCAY{Mehta, Ray, Tadepalli,\ \BBA\ Dietterich}{Mehta
  et~al.}{2008}]{MehtaRTD08}
Mehta, N., Ray, S., Tadepalli, P., \BBA\ Dietterich, T.~G. \BBOP2008\BBCP.
\newblock \BBOQ {Automatic Discovery and Transfer of {MAXQ} Hierarchies}\BBCQ\
\newblock In {\Bem Proceedings of the International Conference on Machine
  Learning {(ICML)}}, \BPGS\ 648--655.

\bibitem[\protect\BCAY{Menache, Mannor,\ \BBA\ Shimkin}{Menache
  et~al.}{2002}]{MenacheMS02}
Menache, I., Mannor, S., \BBA\ Shimkin, N. \BBOP2002\BBCP.
\newblock \BBOQ {Q-Cut - Dynamic Discovery of Sub-goals in Reinforcement
  Learning}\BBCQ\
\newblock In {\Bem Proceedings of the European Conference on Machine Learning
  ({ECML})}, \BPGS\ 295--306.

\bibitem[\protect\BCAY{Meuleau, Peshkin, Kim,\ \BBA\ Kaelbling}{Meuleau
  et~al.}{1999}]{MeuleauPKK99}
Meuleau, N., Peshkin, L., Kim, K., \BBA\ Kaelbling, L.~P. \BBOP1999\BBCP.
\newblock \BBOQ {Learning Finite-State Controllers for Partially Observable
  Environments}\BBCQ\
\newblock In {\Bem Proceedings of the Conference on Uncertainty in Artificial
  Intelligence ({UAI})}, \BPGS\ 427--436.

\bibitem[\protect\BCAY{Michalenko, Shah, Verma, Baraniuk, Chaudhuri,\ \BBA\
  Patel}{Michalenko et~al.}{2019}]{MichalenkoSVBCP19}
Michalenko, J.~J., Shah, A., Verma, A., Baraniuk, R.~G., Chaudhuri, S., \BBA\
  Patel, A.~B. \BBOP2019\BBCP.
\newblock \BBOQ {Representing Formal Languages: {A} Comparison Between Finite
  Automata and Recurrent Neural Networks}\BBCQ\
\newblock In {\Bem Proceedings of the International Conference on Learning
  Representations ({ICLR})}.

\bibitem[\protect\BCAY{Mnih, Kavukcuoglu, Silver, Rusu, Veness, Bellemare,
  Graves, Riedmiller, Fidjeland, Ostrovski, Petersen, Beattie, Sadik,
  Antonoglou, King, Kumaran, Wierstra, Legg,\ \BBA\ Hassabis}{Mnih
  et~al.}{2015}]{MnihKSRVBGRFOPB15}
Mnih, V., Kavukcuoglu, K., Silver, D., Rusu, A.~A., Veness, J., Bellemare,
  M.~G., Graves, A., Riedmiller, M.~A., Fidjeland, A., Ostrovski, G., Petersen,
  S., Beattie, C., Sadik, A., Antonoglou, I., King, H., Kumaran, D., Wierstra,
  D., Legg, S., \BBA\ Hassabis, D. \BBOP2015\BBCP.
\newblock \BBOQ {Human-level control through deep reinforcement learning}\BBCQ\
\newblock {\Bem Nature}, {\Bem 518\/}(7540), 529--533.

\bibitem[\protect\BCAY{Ng, Harada,\ \BBA\ Russell}{Ng et~al.}{1999}]{NgHR99}
Ng, A.~Y., Harada, D., \BBA\ Russell, S.~J. \BBOP1999\BBCP.
\newblock \BBOQ {Policy Invariance Under Reward Transformations: Theory and
  Application to Reward Shaping}\BBCQ\
\newblock In {\Bem Proceedings of the International Conference on Machine
  Learning {(ICML)}}, \BPGS\ 278--287.

\bibitem[\protect\BCAY{Parr\ \BBA\ Russell}{Parr\ \BBA\
  Russell}{1997}]{ParrR97}
Parr, R.\BBACOMMA\  \BBA\ Russell, S.~J. \BBOP1997\BBCP.
\newblock \BBOQ {Reinforcement Learning with Hierarchies of Machines}\BBCQ\
\newblock In {\Bem Proceedings of the Advances in Neural Information Processing
  Systems ({NeurIPS}) Conference}, \BPGS\ 1043--1049.

\bibitem[\protect\BCAY{{Segovia Aguas}, Jim{\'{e}}nez,\ \BBA\ Jonsson}{{Segovia
  Aguas} et~al.}{2018}]{Aguas0J18}
{Segovia Aguas}, J., Jim{\'{e}}nez, S., \BBA\ Jonsson, A. \BBOP2018\BBCP.
\newblock \BBOQ {Computing Hierarchical Finite State Controllers With Classical
  Planning}\BBCQ\
\newblock {\Bem J. Artif. Intell. Res.}, {\Bem 62}, 755--797.

\bibitem[\protect\BCAY{Silver, Hubert, Schrittwieser, Antonoglou, Lai, Guez,
  Lanctot, Sifre, Kumaran, Graepel, Lillicrap, Simonyan,\ \BBA\
  Hassabis}{Silver et~al.}{2018}]{Silver18}
Silver, D., Hubert, T., Schrittwieser, J., Antonoglou, I., Lai, M., Guez, A.,
  Lanctot, M., Sifre, L., Kumaran, D., Graepel, T., Lillicrap, T., Simonyan,
  K., \BBA\ Hassabis, D. \BBOP2018\BBCP.
\newblock \BBOQ {A general reinforcement learning algorithm that masters chess,
  shogi, and Go through self-play}\BBCQ\
\newblock {\Bem Science}, {\Bem 362\/}(6419), 1140--1144.

\bibitem[\protect\BCAY{Simsek\ \BBA\ Barto}{Simsek\ \BBA\
  Barto}{2004}]{SimsekB04}
Simsek, {\"{O}}.\BBACOMMA\  \BBA\ Barto, A.~G. \BBOP2004\BBCP.
\newblock \BBOQ {Using Relative Novelty to Identify Useful Temporal
  Abstractions in Reinforcement Learning}\BBCQ\
\newblock In {\Bem Proceedings of the International Conference on Machine
  Learning {(ICML)}}.

\bibitem[\protect\BCAY{Simsek, Wolfe,\ \BBA\ Barto}{Simsek
  et~al.}{2005}]{SimsekWB05}
Simsek, {\"{O}}., Wolfe, A.~P., \BBA\ Barto, A.~G. \BBOP2005\BBCP.
\newblock \BBOQ {Identifying Useful Subgoals in Reinforcement Learning by Local
  Graph Partitioning}\BBCQ\
\newblock In {\Bem Proceedings of the International Conference on Machine
  Learning {(ICML)}}, \BPGS\ 816--823.

\bibitem[\protect\BCAY{Srinivasan}{Srinivasan}{2001}]{Srinivasan01}
Srinivasan, A. \BBOP2001\BBCP.
\newblock \BBOQ {The Aleph Manual}\BBCQ.

\bibitem[\protect\BCAY{Stolle\ \BBA\ Precup}{Stolle\ \BBA\
  Precup}{2002}]{StolleP02}
Stolle, M.\BBACOMMA\  \BBA\ Precup, D. \BBOP2002\BBCP.
\newblock \BBOQ {Learning Options in Reinforcement Learning}\BBCQ\
\newblock In {\Bem Proceedings of the International Symposium on Abstraction,
  Reformulation and Approximation ({SARA})}, \BPGS\ 212--223.

\bibitem[\protect\BCAY{Sutton\ \BBA\ Barto}{Sutton\ \BBA\
  Barto}{1998}]{SuttonB98}
Sutton, R.~S.\BBACOMMA\  \BBA\ Barto, A.~G. \BBOP1998\BBCP.
\newblock {\Bem Reinforcement Learning: An Introduction}.
\newblock {MIT} Press.

\bibitem[\protect\BCAY{Sutton, Precup,\ \BBA\ Singh}{Sutton
  et~al.}{1998}]{SuttonPS98}
Sutton, R.~S., Precup, D., \BBA\ Singh, S.~P. \BBOP1998\BBCP.
\newblock \BBOQ {Intra-Option Learning about Temporally Abstract Actions}\BBCQ\
\newblock In {\Bem Proceedings of the International Conference on Machine
  Learning {(ICML)}}, \BPGS\ 556--564.

\bibitem[\protect\BCAY{Sutton, Precup,\ \BBA\ Singh}{Sutton
  et~al.}{1999}]{SuttonPS99}
Sutton, R.~S., Precup, D., \BBA\ Singh, S.~P. \BBOP1999\BBCP.
\newblock \BBOQ {Between MDPs and Semi-MDPs: {A} Framework for Temporal
  Abstraction in Reinforcement Learning}\BBCQ\
\newblock {\Bem Artif. Intell.}, {\Bem 112\/}(1-2), 181--211.

\bibitem[\protect\BCAY{{Toro Icarte}, Klassen, Valenzano,\ \BBA\
  McIlraith}{{Toro Icarte} et~al.}{2018}]{IcarteKVM18}
{Toro Icarte}, R., Klassen, T.~Q., Valenzano, R.~A., \BBA\ McIlraith, S.~A.
  \BBOP2018\BBCP.
\newblock \BBOQ {Using Reward Machines for High-Level Task Specification and
  Decomposition in Reinforcement Learning}\BBCQ\
\newblock In {\Bem Proceedings of the International Conference on Machine
  Learning ({ICML})}, \BPGS\ 2112--2121.

\bibitem[\protect\BCAY{{Toro Icarte}, Waldie, Klassen, Valenzano, Castro,\
  \BBA\ McIlraith}{{Toro Icarte} et~al.}{2019}]{IcarteWKVCM19}
{Toro Icarte}, R., Waldie, E., Klassen, T.~Q., Valenzano, R.~A., Castro, M.~P.,
  \BBA\ McIlraith, S.~A. \BBOP2019\BBCP.
\newblock \BBOQ {Learning Reward Machines for Partially Observable
  Reinforcement Learning}\BBCQ\
\newblock In {\Bem Proceedings of the Advances in Neural Information Processing
  Systems ({NeurIPS}) Conference}, \BPGS\ 15497--15508.

\bibitem[\protect\BCAY{Torrey, Shavlik, Walker,\ \BBA\ Maclin}{Torrey
  et~al.}{2007}]{TorreySWM07}
Torrey, L., Shavlik, J.~W., Walker, T., \BBA\ Maclin, R. \BBOP2007\BBCP.
\newblock \BBOQ {Relational Macros for Transfer in Reinforcement
  Learning}\BBCQ\
\newblock In {\Bem Proceedings of the International Conference on Inductive
  Logic Programming ({ILP})}, \BPGS\ 254--268.

\bibitem[\protect\BCAY{Ulyantsev, Zakirzyanov,\ \BBA\ Shalyto}{Ulyantsev
  et~al.}{2015}]{UlyantsevZS15}
Ulyantsev, V., Zakirzyanov, I., \BBA\ Shalyto, A. \BBOP2015\BBCP.
\newblock \BBOQ {BFS-Based Symmetry Breaking Predicates for {DFA}
  Identification}\BBCQ\
\newblock In {\Bem Proceedings of the International Conference on Language and
  Automata Theory and Applications ({LATA})}, \BPGS\ 611--622.

\bibitem[\protect\BCAY{Ulyantsev, Zakirzyanov,\ \BBA\ Shalyto}{Ulyantsev
  et~al.}{2016}]{UlyantsevZS16}
Ulyantsev, V., Zakirzyanov, I., \BBA\ Shalyto, A. \BBOP2016\BBCP.
\newblock \BBOQ {Symmetry Breaking Predicates for SAT-based {DFA}
  Identification}\BBCQ\
\newblock {\Bem CoRR}, {\Bem abs/1602.05028}.

\bibitem[\protect\BCAY{van Hasselt, Guez,\ \BBA\ Silver}{van Hasselt
  et~al.}{2016}]{HasseltGS16}
van Hasselt, H., Guez, A., \BBA\ Silver, D. \BBOP2016\BBCP.
\newblock \BBOQ {Deep Reinforcement Learning with Double Q-Learning}\BBCQ\
\newblock In {\Bem Proceedings of the {AAAI} Conference on Artificial
  Intelligence ({AAAI})}, \BPGS\ 2094--2100.

\bibitem[\protect\BCAY{Watkins}{Watkins}{1989}]{Watkins89}
Watkins, C. \BBOP1989\BBCP.
\newblock {\Bem Learning from Delayed Rewards}.
\newblock Ph.D.\ thesis, King's College, Cambridge, UK.

\bibitem[\protect\BCAY{Weiss, Goldberg,\ \BBA\ Yahav}{Weiss
  et~al.}{2018}]{WeissGY18}
Weiss, G., Goldberg, Y., \BBA\ Yahav, E. \BBOP2018\BBCP.
\newblock \BBOQ {Extracting Automata from Recurrent Neural Networks Using
  Queries and Counterexamples}\BBCQ\
\newblock In {\Bem Proceedings of the International Conference on Machine
  Learning ({ICML})}, \BPGS\ 5244--5253.

\bibitem[\protect\BCAY{Xu, Gavran, Ahmad, Majumdar, Neider, Topcu,\ \BBA\
  Wu}{Xu et~al.}{2020}]{XuGAMNTW20}
Xu, Z., Gavran, I., Ahmad, Y., Majumdar, R., Neider, D., Topcu, U., \BBA\ Wu,
  B. \BBOP2020\BBCP.
\newblock \BBOQ {Joint Inference of Reward Machines and Policies for
  Reinforcement Learning}\BBCQ\
\newblock In {\Bem Proceedings of the International Conference on Automated
  Planning and Scheduling ({ICAPS})}, \BPGS\ 590--598.

\bibitem[\protect\BCAY{Zakirzyanov, Morgado, Ignatiev, Ulyantsev,\ \BBA\
  Marques{-}Silva}{Zakirzyanov et~al.}{2019}]{ZakirzyanovMIUM19}
Zakirzyanov, I., Morgado, A., Ignatiev, A., Ulyantsev, V., \BBA\
  Marques{-}Silva, J. \BBOP2019\BBCP.
\newblock \BBOQ {Efficient Symmetry Breaking for SAT-Based Minimum {DFA}
  Inference}\BBCQ\
\newblock In {\Bem Proceedings of the International Conference on Language and
  Automata Theory and Applications ({LATA})}, \BPGS\ 159--173.

\end{thebibliography}
\bibliographystyle{theapa}

\end{document}